\newtheorem{thm}{Theorem}
\newtheorem{lem}{Lemma}
\newtheorem{proof}{Proof}
\newtheorem{assum}{Assumption}
\newtheorem{definition}{Definition}
\def\eqref#1{equation~\ref{#1}}
\def\Eqref#1{Equation~\ref{#1}}
\def\1{\bm{1}}
\DeclareMathAlphabet{\mathsfit}{\encodingdefault}{\sfdefault}{m}{sl}
\SetMathAlphabet{\mathsfit}{bold}{\encodingdefault}{\sfdefault}{bx}{n}
\DeclareMathOperator{\Expect}{\mathbb{E}}
\title{On the Convergence  of mSGD and AdaGrad \\for Stochastic Optimization}
\author{Ruinan Jin  \\
LSC, NCMIS, Academy of Mathematics and Systems Science\\
Chinese Academy of Sciences, Beijing 100190, China\\
  School of Mathematical Sciences\\
   University of   	Chinese Academy of Sciences, Beijing 100049, China.\\
\And
Yu Xing  \\
Division of Decision and Control Systems\\
KTH Royal Institute of
Technology\\
 SE-100 44 Stockholm, Sweden\\
\texttt{yuxing2@kth.se}
\AND
Xingkang He \thanks{Corresponding author.} \\
Division of Decision and Control Systems\\
KTH Royal Institute of
Technology\\
SE-100 44 Stockholm, Sweden\\
Department of Electrical Engineering \\
University of Notre Dame, IN, USA\\
\texttt{xingkang@kth.se; xhe9@nd.edu}
}
\begin{document}

\maketitle

\begin{abstract}
		As one of the most fundamental stochastic optimization algorithms, stochastic gradient descent (SGD) has been intensively developed and extensively applied in machine learning in the past decade. There have been some modified SGD-type algorithms, which outperform the SGD in many competitions and applications in terms of convergence rate and accuracy, such as momentum-based SGD (mSGD) and adaptive gradient algorithm (AdaGrad). Despite these empirical successes, the theoretical properties of these algorithms have not been well established due to technical difficulties. With this motivation, we focus on convergence analysis of mSGD and AdaGrad for any smooth (possibly non-convex) loss functions in stochastic optimization. First, we prove that the {iterates of mSGD are} asymptotically convergent to a connected set of stationary points with probability one, which is more general than existing works on subsequence convergence or convergence of time averages. Moreover, we prove that the loss function of mSGD decays at a certain rate faster than that of SGD. In addition, we prove the {iterates of AdaGrad are} asymptotically convergent to a connected set of stationary points with probability one. Also, this result extends the results from the literature on subsequence convergence and the convergence of time averages. Despite the generality of the above convergence results, we have relaxed some assumptions of gradient noises, convexity of loss functions, as well as boundedness of {iterates}.
\end{abstract}

	\section{Introduction}
	In recent years, the rapid development of machine learning has stimulated a lot of applications of optimization algorithms to employing tremendous data in practical scenarios. 
	Many optimization algorithms of machine learning are based on gradient descent (GD). 
	A typical  GD algorithm, minimizing a  loss function $g(\theta)\in \mathbb{R}$ via seeking an $N$-dimensional real-valued  parameter $\theta^*=\arg\min\limits_{\theta\in \mathbb{R}^N}g(\theta)$, writes as follows
	\begin{equation}\label{eq_GD}
	\begin{aligned}
	\theta_{n+1}=\theta_{n}-\epsilon_{n} \nabla_{\theta_n} g(\theta_{n}),
	\end{aligned} 
	\end{equation}
	where $\theta_{n}$ is the estimate of $\theta^*$ at step $n$, $\epsilon_{n}$ is a positive step size (learning rate) to be designed,   and $\nabla_{\theta_n} g(\theta_{n})$ stands for the gradient of $g(\theta_{n})$ at step $n$.  When certain conditions are satisfied,  $\theta_{n}$ in \eqref{eq_GD} will  converge to the optimal solution.
	However, since the computation of $\nabla_{\theta_n} g(\theta_{n})$ relies on all data at each step, it is inefficient to apply  GD-based algorithms (e.g.,  \eqref{eq_GD}) when the data amount is very large.  
	Therefore, more and more attention has been given on how to  accelerate the GD-based algorithms. 
	One of the attempts is stochastic gradient descent (SGD) which   originated from \citet{1951A}. Instead of calculating $\nabla_{\theta_n} g(\theta_{n})$ over all data, SGD algorithms use  a 
	relatively small proportion of data to estimate the gradient,
	i.e.,   $\nabla_{\theta_n} g(\theta_{n},\xi_{n})$ where $\xi_{n}$ is  a random vector introduced to choose a subset of data for each update. 
	Then the  SGD modified from GD \eqref{eq_GD}  is as follows
	\begin{equation}\label{sgd}
	\begin{aligned} 
	\theta_{n+1}=\theta_{n}-\epsilon_{n} \nabla_{\theta_n} g(\theta_{n} ,\xi_{n}).
	\end{aligned} \end{equation}
	Besides  reducing gradient computation,   $g(\theta_{n}, \xi_{n})$ can  be used as an estimate of the gradient for the scenarios where  the accurate gradient is unavailable due to external noises.  
	
	In recent years,   SGD   has shown its prominent advantages in dealing with high dimensional optimization problems such as regularized empirical risk minimization and training deep neural networks (see, e.g. \citet{Graves2013Speech,Nguyen2018SGD,HintonReducing,Krizhevsky2012ImageNet} and references therein). 
	In \citet{Nguyen2018SGD, Bottou2012Stochastic}, the convergence and the convergence rate of   SGD   have been analyzed.
	
	Despite   outstanding successes of SGD, it usually has a relatively slow convergence rate induced by   random gradients and it provides fluctuating iterates in the learning process.
	In order to improve the theoretical and empirical performance of SGD, 
	there have been a number of investigations, including  
	three typical   algorithms for accelerating the convergence rate: (1) The momentum-based stochastic gradient descend (mSGD), which reduces the update variance by averaging the past gradients  \citep{1964Some}; (2) The adaptive gradient algorithm (AdaGrad), which replaces the step size of SGD with adaptive step size  \citep{JMLR:v12:duchi11a}; (3) Adaptive momentum gradient algorithm (Adam), which integrates   mSGD and   AdaGrad \citep{KingmaAdam}. However, the convergence results of these algorithms have not been well established, especially for non-convex loss functions.
	In this paper, we focus on the investigation of   mSGD and    AdaGrad. We believe that the convergence analysis results in this paper are helpful for the future research on the convergence or generalization of  Adam and other stochastic optimization algorithms.

	The technique of   mSGD  is originally developed by Polyak \citep{1964Some} for the acceleration of   convergence rate of gradient-based methods. A typical expression of mSGD is as follows \citep{2015Accelerated,sutskever2013importance}
	\begin{equation}\label{mSGD}
	\begin{aligned}
	v_n=\alpha v_{n-1}+\epsilon_{n} \nabla_{\theta_n} g(\theta_{n} ,\xi_{n}),\quad \theta_{n+1}=\theta_{n}-v_n, 
	\end{aligned} 
	\end{equation}
	where $\alpha\in[0,1)$ and $\epsilon_{n}>0$ are  relatively momentum coefficient and step size (learning rate), respectively. Another type of  mSGD with the name of stochastic heavy ball (SHB) has also been studied \citep{2019Understanding,1972Stochastic}, 
	\begin{equation} \label{SHB}
	\begin{aligned}
	v_n=\beta_{n} v_{n-1}+(1-\beta_{n})\nabla_{\theta_n} g(\theta_{n} ,\xi_{n}),\quad \theta_{n+1}=\theta_{n}-\gamma_{n}v_n,
	\end{aligned} 
	\end{equation}
	where $\beta_{n}\in(0,1)$ and $\gamma_{n}$ are relatively momentum coefficient and step size (learning rate). 
	In recent years, mSGD has been widely employed  in the applications of deep learning such as   image classification   \citep{Krizhevsky2012ImageNet}, fault diagnosis \citep{tang2018adaptive},  statistical image reconstruction \citep{kim2014combining}, etc. Moreover, a number of variants on momentum are emerging, see, e.g., synthesized Nesterov variants (SNV) \citep{2016Numerical},  robust momentum \citep{2017A}, and PID-control based methods \citep{2018A}.   
	The importance of momentum in deep learning has been illustrated in  \citet{sutskever2013importance} through experiments.
	However, the theoretical analysis for convergence  and convergence rate of mSGD needs further investigation, especially for non-convex loss functions which are common in deep learning. Most   existing  results  are  established with guaranteed  subsequence convergence or convergence of time averages\footnote{The convergence definitions are given in Section~3.} (see~\citet{yang2016unified,2019Understanding,1977Comparison,YM1983Limit}, {\cite{2020A}} and references therein). Nevertheless, there is a still distance to asymptotic convergence, which is usually more general and useful in practical applications requiring stable iterates for each realization. { \cite{2020Almostuiuiyuyiuy} studied the asymptotic convergence of mSGD with  time-varying  parameter $\alpha_n$, which however is less common than the static $\alpha$ in practical applications.	}

	
	Convergence analysis for   SGD unveils that the convergence rate heavily depends on the  choice of  step size $\epsilon_{n}$ \citep{1951A,Nguyen2018SGD}, which may consume a huge amount of efforts on fine tune. {In order to deal with this problem, the algorithm AdaGrad with adaptive step size  is proposed in \citet{2011Adaptive} and \citet{mcmahan2010adaptive} concurrently.}
	 { In the literature, there are two main forms of AdaGrad}. One is based on the norm of gradients as follows { \citep{2010Less}} 
	\begin{equation}\label{AdaGrad}\begin{aligned}
	S_{n}=S_{n-1}+\big\|\nabla_{\theta_{n}} g(\theta_{n},\xi_{n})\big\|^{2},
	\quad \theta_{n+1}=\theta_{n}-\frac{\alpha_0}{\sqrt{S_{n}}}\nabla_{\theta_{n}}g(\theta_{n},\xi_{n}),\end{aligned} \end{equation}where $\alpha_{0}>0$ is a constant. {The other form is based on the coordinate-wise gradients \citep{2011Adaptive,2018On,2020A}
	\begin{equation}\nonumber\begin{aligned}
	Q_{n}=Q_{n-1}+\nabla_{\theta_{n}}g(\theta_{n},\xi_{n})^{T}\nabla_{\theta_{n}}g(\theta_{n},\xi_{n}),
	\quad \theta_{n+1}=\theta_{n}-{\overline{\alpha}_0}{Q_{n}^{-\frac{1}{2}}}\nabla_{\theta_{n}}g(\theta_{n},\xi_{n}),\end{aligned} \end{equation} where $\overline{\alpha}_{0}$ is a constant. In our paper we focus on the norm form (\eqref{AdaGrad})}. In recent years,  AdaGrad has  shown its effectiveness in the field of sparse optimization \citep{2013Estimation},  tensor factorization \citep{2018Canonical}, and deep learning \citep{Heaton2017Ian}. Some algorithm variants   like RMSProp \citep{tielemanlecture} and SAdaGrad \citep{2018SADAGRAD} are also studied. However, there are few results on the convergence  of AdaGrad. Most of these  results  only prove   subsequence convergence or convergence of time averages  (see, e.g. \citet{2019A,2018Onp,A2020A,2018AdaGrad}). 
	Although \citet{2018On} and {\cite{2020Asymptotic}} studied  the asymptotic convergence of a modified AdaGrad algorithm,   the result is not applicable to   AdaGrad in \eqref{AdaGrad}. Thus, the asymptotic convergence of AdaGrad in \eqref{AdaGrad} is still open.


	In this theoretical paper, we aim to establish the convergence of mSGD and AdaGrad under mild conditions. 	The main contributions of this paper are three-fold: 
	\begin{enumerate}
		\item[$\bullet$] 
		We prove that the {iterates of mSGD are} asymptotically convergent to a connected set of stationary points for possibly non-convex loss function almost surely (i.e., with probability one), which is  more general than  existing works on subsequence convergence.
		\item[$\bullet$]
		We quantify the convergence rate of   mSGD for the loss functions. Through this convergence rate we can get a theoretical explanation of why mSGD can be seen as an acceleration of SGD. Moreover, we provide the convergence rate of mean-square gradients and connect it to the convergence  of time averages.

		\item[$\bullet$]  We prove  the {iterates of AdaGrad are}   asymptotically convergent to a connected set of stationary points almost surely for  possible non-convex loss functions. The convergence result for the AdaGrad   extends the subsequence convergence in the literature.
		
		
		%
		%
		%

	\end{enumerate}

		The remainder of the paper is organized as follows. In Section~2, we introduce the related work considering the convergence of mSGD and AdaGrad. The main results of the paper are given in Section~3, where we study the convergence and convergence rate of mSGD  as well as the convergence of AdaGrad. Section~4 concludes the whole paper. Sections~5~and~6 are Code of Ethics and Reproducibility, respectively. The proofs are given in Appendix.
	

	\section{Related work}
	\textbf{Convergence of mSGD:} For the normalized mSGD (SHB), Polyak \citep{1977Comparison,1964Some} and Kaniovski \citep{YM1983Limit} studied its convergence (subsequence convergence and convergence of time averages) properties for convex loss functions.   Igor Gitman \citep{2019Understanding} provided some convergence results of mSGD (SHB) for non-convex loss functions,   but there is   a considerable distance to the asymptotic convergence. Moreover,   there is a requirement for uniform boundedness of a noise term in \citet{2019Understanding}, i.e.,  $\Expect(\|\nabla_{\theta_{n}}g(\theta_{n},\xi_{n})-\nabla_{\theta_{n}}g(\theta_{n}\|)\le \delta$, which confines the application scope of mSGD (SHB). In addition, the designs of   momentum coefficients in \citet{1977Comparison,1964Some,YM1983Limit,2019Understanding}   are not consistent with some practical applications \citep{Smith2018Don,sutskever2013importance}. Therefore, the asymptotic convergence of mSGD for   convex and non-convex loss functions needs further investigation.
	
	\textbf{Convergence rate of mSGD:} 
	Despite outstanding empirical successes of mSGD, there are few results on convergence rate of mSGD.
	In these results,   \citet{2020Convergence} studied a class of convex loss functions, and obtained a convergence rate of time averages without reflecting the role of momentum parameter.   \citet{2019Understanding} and Nicolas  \citet{2017Momentum} respectively investigated the asymptotic convergence rate of mSGD (SHB) by restricting to  quadratic  loss functions.  \citet{2020An} studied the properties of SHB, where   the relation between the loss function of SHB and the step size in every step was studied under the setting that $\alpha_{n}$ and $\beta_{n}$ are   constants. The convergence rate of time averages was also studied. However, since  the momentum parameters are not consistent with some   applications \citep{Smith2018Don,sutskever2013importance} and the standard mSGD in \eqref{mSGD} is not covered, further studies  are needed.


	\textbf{Convergence of AdaGrad:} In the original work for AdaGrad \citep{JMLR:v12:duchi11a}, it was  proved that AdaGrad can converge faster in the time averages sense  if   gradients are sparse and the loss function is convex. Similar results were   established by    \citet{2018Onp}, and  \citet{2018AdaGrad}.   \citet{2019A} and   \citet{A2020A} established convergence results in the subsequence sense.  Asymptotic convergence was obtained in \citet{2018On} for non-convex functions, but the form of the algorithm is no longer standard, as discussed in \citet{JMLR:v12:duchi11a}. Although such a change alleviates the difficulty in the  proof of asymptotic convergence, it cannot be applied to the study of the AdaGrad in \eqref{AdaGrad}.
	 Moreover, they required that  a noise term is of point-wise boundedness (i.e.,  $\big\|\nabla_{\theta_{n}}g(\theta_{n},\xi_{n})-\nabla_{\theta_{n}}g(\theta_{n})\big\|\le \delta$, where $\delta$ is a positive constant), which however is relatively restrictive.

	\section{Main results}
	In this section, we provide the main results of this paper, including the analysis of convergence and convergence rate of mSGD in \eqref{sgd} and the analysis of convergence of AdaGrad in \eqref{AdaGrad}. In the following, $\mathbb{R}^{N}$ denotes the $N$-dimensional Euclidean space and $\|\cdot\|$ stands for the 2-norm, i.e., the Euclidean norm. To proceed, we need some definitions, consisting of  asymptotic convergence, subsequence convergence, mean-square convergence, and convergence of time averages.
	\begin{definition}
		\label{strong} (\textbf{Asymptotic convergence})
		A sequence $\{x_{n}\}$ is   asymptotically convergent  to a set $\mathbb{K}$, if  
		$\lim_{n\rightarrow+\infty}\Big(\inf_{x\in \mathbb{K}}\big\|x_{n}-x\|\Big)=0.$
	\end{definition}
	\begin{definition}
		\label{weak} (\textbf{Subsequence convergence})
		A sequence $\{x_{n}\}$  converges in subsequence to a set $\mathbb{K}$, if there exists  at least one subsequence $\{x_{k_{{n}}}\}$ of $\{x_{n}\}$  such that
		$\lim_{n\rightarrow+\infty}\Big(\inf_{x\in \mathbb{K}}\big\|x_{k_{{n}}}-x\|\Big)=0.$
	\end{definition}
		\begin{definition}
		\label{weak} (\textbf{Mean-square convergence})
		A stochastic sequence $\{x_{n}\}$ converges in mean square to a fixed vector $x$, if  
		$\lim_{n\rightarrow+\infty}\mathbb{E}(\|x_{n}-x\|^2)=0.$
	\end{definition}
	\begin{definition}
		\label{weak} (\textbf{Convergence of time averages})
		A stochastic sequence $\{x_{n}\}$ converges in time averages to a fixed vector $x$, if  
		$\lim_{T\rightarrow+\infty}\frac{1}{T}\sum_{n=1}^{T}\mathbb{E}(\|x_{n}-x\|^2)=0.$
	\end{definition}
	It is obvious that asymptotic convergence implies   subsequence convergence, and that mean-square convergence ensures   convergence of time averages, but not vice versa. 
	
	\subsection{Convergence of mSGD}
	In this subsection, with the help of some stochastic approximation techniques \citep{chen2006stochastic}, we aim to prove that $\theta_{n}$ in \eqref{mSGD} is asymptotically convergent to a connected component $J^{*}$ of the set $J:=\{\theta|\|\nabla_{\theta}g(\theta)\|=0\}$ almost surely (a.s.) under proper conditions. When this connected component degenerates to a stationary point $\theta^{*}$, it holds that $\theta_{n}\rightarrow \theta^{*}, \text{ a.s.}$.
	
	In contrast to the existing works of  subsequence convergence (cf.  \citet{2019A,2018Onp,A2020A,2018AdaGrad}), we aim to prove that $\theta_{n}$ of   mSGD in  \eqref{mSGD} is able to achieve asymptotic convergence. 
	Since  mSGD in  \eqref{mSGD} is a stochastic algorithm, we aim to establish its a.s. asymptotic convergence.
To proceed, we need some reasonable assumptions with respect to   noise sequence $\{\xi_{n}\}$ and   loss function $g(\theta)$. 
	\begin{assum}\label{ass_noise}
		Noise sequence $\{\xi_{n}\}$ are mutually independent  and independent of $\theta_{1}$ and $v_{0}$, such that 
		$g(x)=\Expect_{\xi_{n}}\big(g(x,\xi_{n})\big)$ 
		for any $x\in \mathbb{R}^{N}.$
	\end{assum}
	\begin{assum}\label{ass_g}(\textbf{Loss function assumption}) Loss function $g(\theta)$ satisfies the following conditions:
		\begin{enumerate}[1)]
			\item  $g(\theta)$ is a non-negative and  continuously differentiable function.
			\item The set of stationary points of $\|\nabla_{\theta}g(\theta)\|$ is not an empty set, that is $$J:=\{\theta|\|\nabla_{\theta}g(\theta)\|=0\}\not=\emptyset.$$
			
			\item  $\nabla_{\theta}g(\theta)$ satisfies the Lipschitz condition, i.e., there is a   scalar $c>0$, such that for any $ x,y\in \mathbb{R}^{N}$
			\begin{equation}\nonumber\begin{aligned}
			\big\|\nabla_{x}g(x)-\nabla_{y}g(y)\big\|\le c\|x-y\|.
			\end{aligned}\end{equation}
			\item There is a   scalar $M>0$  such that for any $\theta\in \mathbb{R}^{N}$ and positive integer  $ n$, 
			\begin{equation}\label{0908}\begin{aligned}
			&\Expect_{\xi_{n}}\Big(\big\|\nabla_{\theta}g(\theta)-\nabla_{\theta}g(\theta,\xi_{n})\big\|^{2}\Big)\le M\big(1+g(\theta)\big).
			\end{aligned}\end{equation}
		\end{enumerate}
	\end{assum}
 Assumption~\ref{ass_noise} and conditions 1)--3) of Assumption~\ref{ass_g} are common in the literature \citep{2019Understanding}.
	Assumption~\ref{ass_g} does not pose  any requirement on the convexity of $g(\theta)$. In other words, we allow any convex or non-convex loss functions $g(\theta)$ as long as they satisfy this assumption.
	 Condition  4) corresponds to the condition in  \citet{Shalev2007Pegasos,NemirovskiRobust,hazan2014beyond,2019Understanding,yang2016unified,1977Comparison,YM1983Limit} where the following inequality is assumed to hold  for any $\theta\in \mathbb{R}^{N}$ and positive integer  $ n$, 
	\begin{equation}\label{0908_exist}
	\begin{aligned} 
	\Expect_{\xi_{n}}\Big(\big\|\nabla_{\theta}g(\theta)-\nabla_{\theta}g(\theta,\xi_{n})\big\|^{2}\Big)\le K,\ \forall \theta\in \mathbb{R}^{N},
	\end{aligned}\end{equation}
	where $K$ is a positive scalar. Note that \eqref{0908} reduces to \eqref{0908_exist} if $g(\theta)$ is uniformly upper bounded over the space $ \mathbb{R}^{N}$. Since \eqref{0908} does not need this uniform boundedness, it  substantially extends \eqref{0908_exist} such that mSGD is also applicable to the scenarios with unbounded loss functions.  
	
	Different from deterministic GD-type algorithms with a constant step size, in order to ensure the convergence of mSGD, we need a decreasing step size for   counteracting the randomness induced by  noise $\{\xi_{n}\}$ \citep{2019Understanding,1951A}. Specifically, we make the following assumption on step size $ \epsilon_{n}$ together with a fixed momentum coefficient $\alpha$.
	

	
	
	\begin{assum}\label{assum8}
		Momentum coefficient $\alpha\in [0,1)$ and 
		the sequence of   step size $\epsilon_{n}$ is positive,  monotonically decreasing to zero, such that
		$	\sum_{n=1}^{+\infty}\epsilon_{n}=+\infty$ and 
		$\sum_{n=1}^{+\infty}\epsilon_{n}^{2}<+\infty.$
	\end{assum}
	The setting of $\epsilon_{n}$ in Assumption~\ref{assum8} is consistent with   stochastic approximation for  root seeking of functions \citep{chen2006stochastic} as well as some stochastic optimization algorithms \citep{1972Stochastic,1977Comparison,YM1983Limit,2019Understanding}. 
	An explicit example of step size $\epsilon_{n}$ satisfying Assumption~\ref{assum8} is   $\epsilon_{n}=1/n$. 
	Although   SHB in \eqref{SHB} shares a similar expression as mSGD, the provided conditions of step sizes in 
	\citet{1972Stochastic,1977Comparison,YM1983Limit,2019Understanding} are not applicable to the general cases of mSGD with static $\alpha$ \citep{Smith2018Don,sutskever2013importance}.

	{
		In fact, SHB in \eqref{SHB} has a similar expression as mSGD in  \eqref{mSGD}. 
		 Multiplying $\gamma_{n}$ on both sides of the first equation of \eqref{SHB} yields 
		\begin{equation} \label{kliop}
		\begin{aligned}
		&\gamma_{n}v_{n}=\frac{\gamma_{n}}{\gamma_{n-1}}\beta_{n}(\gamma_{n-1} v_{n-1})+\gamma_{n}(1-\beta_{n})\nabla_{\theta_n} g(\theta_{n} ,\xi_{n})\\&\theta_{n+1}=\theta_{n}-\gamma_{n}v_n.
		\end{aligned} 
		\end{equation}We treat $\gamma_{n}v_{n}$ as one term, as the role of $v_{n}$ in \eqref{mSGD}. By comparing coefficients of \eqref{kliop} and \eqref{mSGD}, we get $\alpha^{(n)}=(\gamma_{n}/\gamma_{n-1})\beta_{n}$ and $\epsilon_{n}=\gamma_{n}(1-\beta_{n})$. 
			In the literature  \citep{1972Stochastic,1977Comparison,YM1983Limit,2019Understanding},
	the parameter setting of SHB in \eqref{SHB}  has the following requirements
		\begin{equation}\nonumber\begin{aligned}
		&\sum_{n=1}^{+\infty}\gamma_{n}=+\infty, \quad 
		\sum_{n=1}^{+\infty}\gamma_{n}^{2}<+\infty,
		\quad \beta_{n}<1, \quad
		\lim_{n\rightarrow+\infty}\beta_{n}=0
		\\&or
		\\&\sum_{n=1}^{+\infty}\gamma_{n}=+\infty,\quad \sum_{n=1}^{+\infty}(1-\beta_{n})^{2}<+\infty,\quad \sum_{n=1}^{+\infty}\frac{\gamma_{n}^{2}}{1-\beta_{n}}<+\infty,\quad \lim_{n\rightarrow+\infty}\beta_{n}=1.\end{aligned}\end{equation}
		According to the above requirements on $\beta_n$ and $\gamma_n$, the requirements  on $\epsilon_n$ and $\alpha^{(n)}$ are
		\begin{equation}\nonumber\begin{aligned}	
		&\sum_{n=1}^{+\infty}\epsilon_{n}=+\infty, \quad 
		\sum_{n=1}^{+\infty}\epsilon_{n}^{2}<+\infty, \quad \lim_{n\rightarrow+\infty}\alpha^{(n)}=0.
		\\& or
		\\&\sum_{n=1}^{+\infty}\epsilon_{n}=\sum_{n=1}^{+\infty}\gamma_{n}(1-\beta_{n})\le\sqrt{\sum_{n=1}^{+\infty}(1-\beta_{n})^{3}\sum_{n=1}^{+\infty}\frac{\gamma_{n}^{2}}{1-\beta_{n}}}<+\infty, \quad \limsup_{n\rightarrow+\infty}\alpha^{(n)}=1.
		\end{aligned}\end{equation}
	{ We can see that the  static momentum parameter $\alpha$, which is widely used in practical applications, does not satisfy these conditions.} { In \cite{2020Almostuiuiyuyiuy}, the authors studied an algorithm (SHB-IMA) that has a similar form to mSGD given in \eqref{mSGD}, but their conditions for parameters cannot cover the case with a static  $\alpha$.}
		}
	
	Before providing the main theorem for convergence, we need a useful lemma
	In the analysis of asymptotic convergence of mSGD, the following lemma plays an important role.
	\begin{lem}\label{lem_bound}
		Consider the mSGD in \eqref{mSGD}.
		If Assumptions~1--3 hold, then for $\forall \theta_{1}\in \mathbb{R}^{N}$ and $v_{0}\in \mathbb{R}^{N}$, there is a scalar $T(\theta_{1},v_{0})$, such that 
		$	\Expect\big(g(\theta_{n})\big)<T(\theta_{1},v_{0})$ for any $n\geq 1$.
	\end{lem}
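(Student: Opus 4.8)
The plan is to treat the mSGD recursion~\eqref{mSGD} as a stochastic heavy-ball scheme and reduce it to a perturbed SGD recursion through an auxiliary iterate. Since $v_{n-1}=\theta_{n-1}-\theta_n$, the update~\eqref{mSGD} can be rewritten as $\theta_{n+1}=\theta_n+\alpha(\theta_n-\theta_{n-1})-\epsilon_n\nabla_{\theta_n}g(\theta_n,\xi_n)$. I would then introduce the auxiliary sequence
\begin{equation}\nonumber
p_n:=\theta_n+\frac{\alpha}{1-\alpha}(\theta_n-\theta_{n-1}),
\end{equation}
for which a direct computation gives the clean recursion $p_{n+1}=p_n-\frac{\epsilon_n}{1-\alpha}\nabla_{\theta_n}g(\theta_n,\xi_n)$, together with the gap identity $p_n-\theta_n=-\frac{\alpha}{1-\alpha}v_{n-1}$. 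The point of this change of variables is that $p_n$ evolves like plain SGD (with an inflated step size), while the discrepancy between $p_n$ and the point $\theta_n$ at which the gradient is evaluated is controlled entirely by the momentum energy $\|v_{n-1}\|^2$.

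The second ingredient I would assemble is a pair of consequences of Assumption~\ref{ass_g}. Because $\nabla_\theta g$ is $c$-Lipschitz, the descent lemma holds: $g(y)\le g(x)+\langle\nabla_x g(x),y-x\rangle+\frac c2\|y-x\|^2$. Applying it with the minimizing choice $y=x-\frac1c\nabla_x g(x)$ and using nonnegativity of $g$ yields the self-bounding inequality $\|\nabla_\theta g(\theta)\|^2\le 2c\,g(\theta)$, which is what lets me convert every gradient-norm term into a multiple of the function value. Condition~4) of Assumption~\ref{ass_g} supplies the variance bound $\mathbb{E}_{\xi_n}(\|\nabla_\theta g(\theta,\xi_n)\|^2)\le\|\nabla_\theta g(\theta)\|^2+M(1+g(\theta))\le(2c+M)g(\theta)+M$, and Assumption~\ref{ass_noise} gives unbiasedness $\mathbb{E}_{\xi_n}(\nabla_\theta g(\theta,\xi_n))=\nabla_\theta g(\theta)$, so that conditional expectations of the cross terms collapse to the true gradient.

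The core of the argument is a Lyapunov function combining the function value at the auxiliary point with the momentum energy, $W_n:=g(p_n)+\lambda\|v_{n-1}\|^2$ for a constant $\lambda>0$ to be fixed. I would apply the descent lemma along $p_n\to p_{n+1}$ and take the conditional expectation given the past; the leading term is the favorable $-\frac{\epsilon_n}{1-\alpha}\|\nabla_{\theta_n}g(\theta_n)\|^2$, while the mismatch between $\nabla g(p_n)$ and $\nabla g(\theta_n)$ produces a term bounded by $\frac{c\alpha\epsilon_n}{(1-\alpha)^2}\|v_{n-1}\|\,\|\nabla_{\theta_n}g(\theta_n)\|$ via Lipschitzness and the gap identity. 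In parallel I would expand $\mathbb{E}(\|v_n\|^2\mid\mathcal F_n)$, whose $\|v_{n-1}\|^2$ coefficient is $\alpha^2+O(\epsilon_n)<1$ because $\alpha<1$; this contraction margin is the room used to absorb the momentum-induced cross terms. Splitting the cross term by Young's inequality with a weight $\tau$, the constants $\lambda,\tau$ (and the Young parameter in the $\|v_n\|^2$ expansion) can be chosen so that the net coefficient of $\|\nabla_{\theta_n}g(\theta_n)\|^2$ is nonpositive for all large $n$ and the net coefficient of $\|v_{n-1}\|^2$ is at most $\lambda$; all remaining contributions, coming from the noise bound and the $O(\epsilon_n^2)$ quadratic-step terms, are $O(\epsilon_n^2)$ multiples of $g(\theta_n)$ or constants. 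Converting these residual $g(\theta_n)$ factors back to $W_n$ through the smoothness estimate $g(\theta_n)\le C_1 g(p_n)+C_2\|v_{n-1}\|^2$ (again from the descent lemma and the gap identity), I obtain a recursion of the form $\mathbb{E}(W_{n+1}\mid\mathcal F_n)\le(1+\beta_n)W_n+\gamma_n$ with $\beta_n,\gamma_n=O(\epsilon_n^2)$.

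Finally, taking total expectations and writing $w_n=\mathbb{E}(W_n)$, the scalar recursion $w_{n+1}\le(1+\beta_n)w_n+\gamma_n$ together with $\sum_n\epsilon_n^2<+\infty$ (Assumption~\ref{assum8}) gives $w_n\le e^{\sum_k\beta_k}\big(w_{n_0}+\sum_k\gamma_k\big)<+\infty$ uniformly in $n\ge n_0$; a short preliminary induction shows every $\mathbb{E}(g(\theta_n))$, and hence every $w_n$, is finite, so the finitely many initial terms cause no trouble. Since $g(\theta_n)\le C_1 g(p_n)+C_2\|v_{n-1}\|^2\le \max(C_1,C_2/\lambda)\,W_n$, uniform boundedness of $\mathbb{E}(W_n)$ yields the claimed bound $\mathbb{E}(g(\theta_n))<T(\theta_1,v_0)$, where the constant inherits its dependence on $\theta_1,v_0$ through $w_{n_0}$. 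I expect the main obstacle to be exactly the $O(1)$ momentum cross term $\alpha\langle\nabla_{\theta_n}g(\theta_n),v_{n-1}\rangle$ that appears if one works directly with $g(\theta_n)$: it is not summable and cannot be absorbed, which is precisely why the $p_n$ reformulation (turning it into an $O(\epsilon_n)$ gradient-mismatch term) and the careful tuning of the Lyapunov weights against the contraction margin $\lambda(1-\alpha^2)$ are essential.
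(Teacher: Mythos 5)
Your proposal is correct, but it proves the lemma by a genuinely different route than the paper. You pass to the auxiliary iterate $p_n=\theta_n+\frac{\alpha}{1-\alpha}(\theta_n-\theta_{n-1})$, which turns \eqref{mSGD} into the perturbed SGD recursion $p_{n+1}=p_n-\frac{\epsilon_n}{1-\alpha}\nabla_{\theta_n}g(\theta_n,\xi_n)$, and then run a two-term Lyapunov argument on $W_n=g(p_n)+\lambda\|v_{n-1}\|^2$, absorbing the $O(\epsilon_n)$ gradient-mismatch and momentum cross terms into the descent term and the contraction margin $\lambda(1-\alpha^2)$ via Young's inequality, so that only summable $O(\epsilon_n^2)$ residuals remain and the scalar recursion $w_{n+1}\le(1+\beta_n)w_n+\gamma_n$ closes with $\sum_n\epsilon_n^2<\infty$. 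The paper instead works directly with $g(\theta_n)$: it unrolls $\nabla_{\theta_t}g(\theta_t)^{\sf T}v_t$ all the way back to $t=1$ (equation \eqref{9i8u7y6}), forms geometrically weighted sums $F_n$ with weights $(1/(2-\alpha))^{n-t}$, the infinite products $Z(t)=\prod_{k\ge t}(1+M_0\epsilon_k^2)$ and correction factors $e_{t+1}^{(n)}$, and relies on the telescoping bound \eqref{27} for $Z(i+1)\Expect\|v_i\|^2$ in terms of $\Expect(\epsilon_k g(\theta_{k+1}))-\Expect(\epsilon_{k-1}g(\theta_k))$, which uses the monotonicity of $\epsilon_n$. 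Your diagnosis of the obstacle — the non-summable $O(\epsilon_n)$ cross term $\alpha\nabla_{\theta_n}g(\theta_n)^{\sf T}v_{n-1}$ that appears when one differences $g(\theta_n)$ directly — is exactly the difficulty the paper's weighted-sum machinery is built to circumvent. Your route is cleaner and more modular (the recursion on $W_n$ is of Robbins--Siegmund type and would also deliver almost sure convergence of $W_n$ essentially for free), at the cost of introducing the extra ingredient $\|\nabla_\theta g(\theta)\|^2\le 2c\,g(\theta)$ (valid here by nonnegativity and Lipschitz gradients, and in fact proved in the paper as Lemma~\ref{lemiuy8}); the paper's heavier construction keeps everything in the original variables and is reused almost verbatim in the convergence-rate analysis of Theorem~\ref{thm_rate}, so the machinery does double duty. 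Both arguments need $\epsilon_n$ eventually small (your $n\ge n_0$ versus the paper's ``without loss of generality $i_0=0$''), and your preliminary finite-induction step for the initial segment is the right way to handle that.
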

	Lemma~\ref{lem_bound} actually guarantees   stability of mSGD.
	Let $\mathcal{F}_{n}=\sigma(\theta_1,v_0,\{\xi_{i}\}_{i=1}^n)$ be the minimal $\sigma$-algebra generated by $\theta_1,v_0,\{\xi_{i}\}_{i=1}^n.$ As a result, $\theta_n$ is adapted to $\mathcal{F}_{n-1}$.
	Then for any  $n\in\mathbb{ N}_+$, it holds that
	\begin{align}\label{eq_derive_bound}
	&\Expect\Big(\big\|\nabla_{\theta_{n}}g(\theta_{n})-\nabla_{\theta_{n}}g(\theta_{n},\xi_{n})\big\|^{2}\Big)
	=\Expect\bigg(\Expect\Big(\big\|\nabla_{\theta_{n}}g(\theta_{n})-\nabla_{\theta_{n}}g(\theta_{n},\xi_{n})\big\|^{2}\Big|\mathcal{F}_{n-1}\Big)\bigg)\nonumber\\
	=&\Expect\bigg(\Expect_{\xi_{n}}\Big(\big\|\nabla_{\theta_{n}}g(\theta_{n})-\nabla_{\theta_{n}}g(\theta_{n},\xi_{n})\big\|^{2}\Big)\bigg)
	\le M\Big(1+\Expect\big(g(\theta_{n})\big)\Big)\le M(1+T(\theta_{1},v_{0})),
	\end{align}
	where the second equality follows from the   independence between  $\xi_{n}$ and $\{\theta_1,v_0,\{\xi_{i}\}_{i=1}^{n-1}\}$ in Assumption~\ref{ass_noise}, and the last two inequalities hold due to 4) in Assumption~\ref{ass_g} and Lemma~\ref{lem_bound}, respectively.
	Intuitively, the  result in \eqref{eq_derive_bound}  means that the fluctuation induced by random noise $\{\xi_{i}\}_{i=1}^{n-1}$ is well restrained. Note that the derived result in \eqref{eq_derive_bound} is totally different from   \eqref{0908_exist} required in the literature \citep{Shalev2007Pegasos,NemirovskiRobust,hazan2014beyond,2019Understanding,yang2016unified,1977Comparison,YM1983Limit}, since \eqref{0908_exist} needs a uniformly upper bound $K$ over the whole space (i.e., $\theta\in \mathbb{R}^{N}$) which is difficult to satisfy when loss function $g(\theta)$ is quadratic or cubic with respect to $\theta$ over unbounded parameter space.
	In contrast, regardless of the order of $g(\theta)$ with respect to $\theta$, \eqref{eq_derive_bound} ensures   boundedness of $\Expect\Big(\big\|\nabla_{\theta_{n}}g(\theta_{n})-\nabla_{\theta_{n}}g(\theta_{n},\xi_{n})\big\|^{2}\Big)$ for learning any fixed true parameter $\theta^*.$ This reflects a favorable learning process against random noise $\xi_n$ for dealing with general loss functions $g(\theta)$.  
	This result paves the way to  the following theorem on asymptotic convergence of mSGD.
	\begin{thm}\label{thm_converg1}
		Consider the mSGD in \eqref{mSGD}. If Assumptions~1--3 hold, then   for $\forall \theta_{1}\in \mathbb{R}^{N}$ and $\forall v_{0}\in \mathbb{R}^{N}$, there exists a connected    set $J^{*}\subseteq J$ such that the {iterate} $\theta_n$ is convergent to the set $J^{*}$ almost surely, i.e., 
		\begin{equation}\nonumber\begin{aligned}
		\lim_{n\rightarrow\infty} d(\theta_{n},J^{*})=0,
		\qquad \text{a.s.}\end{aligned}\end{equation}
		where $d(x,J^{*})=\inf_{y}\{\|x-y\|,y\in J^{*}\} $  denotes the distance between point $x$ and set $J^{*}.$
	\end{thm}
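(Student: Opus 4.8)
The plan is to eliminate the momentum by a change of variables that converts mSGD into a single Robbins--Monro recursion, and then to invoke the ODE/Lyapunov machinery of stochastic approximation (\citet{chen2006stochastic}) with $g$ itself as the Lyapunov function. Concretely, I would set $z_n = \theta_n - \frac{\alpha}{1-\alpha}v_{n-1}$ and check by direct computation that $z_{n+1} = z_n - \frac{\epsilon_n}{1-\alpha}\nabla_{\theta_n}g(\theta_n,\xi_n)$, so that the transformed iterate obeys a clean gradient-type recursion with effective step size $\epsilon_n/(1-\alpha)$ and no momentum term. All of the analysis then takes place on $z_n$, after which the conclusion is transported back to $\theta_n$.

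Next I would split $\nabla_{\theta_n}g(\theta_n,\xi_n) = \nabla_{\theta_n}g(\theta_n) + e_n$ with $e_n := \nabla_{\theta_n}g(\theta_n,\xi_n) - \nabla_{\theta_n}g(\theta_n)$, which is a martingale difference with respect to $\{\mathcal{F}_n\}$ by Assumption~\ref{ass_noise}. Using non-negativity of $g$ together with the $c$-Lipschitz gradient (the descent lemma yields $\|\nabla_\theta g(\theta)\|^2 \le 2c\,g(\theta)$), Lemma~\ref{lem_bound} gives the uniform bound $\Expect\big(\|\nabla_{\theta_n}g(\theta_n)\|^2\big) \le 2c\,T(\theta_1,v_0)$, and combined with \eqref{eq_derive_bound} a uniform bound on $\Expect(\|e_n\|^2)$ and on $\Expect\big(\|\nabla_{\theta_n}g(\theta_n,\xi_n)\|^2\big)$. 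Two consequences follow: first, since $\sum_n\epsilon_n^2<\infty$, the square-integrable martingale $\sum_n \frac{\epsilon_n}{1-\alpha}e_n$ converges a.s.; second, the momentum gap $\theta_n-z_n=\frac{\alpha}{1-\alpha}v_{n-1}$ is driven to zero, because $v_{n-1}=\alpha^{n-1}v_0+\sum_{k=1}^{n-1}\alpha^{n-1-k}\epsilon_k\nabla_{\theta_k}g(\theta_k,\xi_k)$ is a geometrically weighted average of gradients with $\epsilon_k\to0$. By Lipschitzness, the bias $\nabla_{\theta_n}g(\theta_n)-\nabla g(z_n)$ is $O(\|v_{n-1}\|)$ and vanishes as well.

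With these reductions the $z_n$-recursion has the standard form $z_{n+1}=z_n-\frac{\epsilon_n}{1-\alpha}\big(\nabla g(z_n)+b_n+e_n\big)$, where $b_n\to0$ is a vanishing bias and $e_n$ is noise whose weighted sum converges. I would then apply the trajectory-subsequence/ODE convergence theorem of \citet{chen2006stochastic}, taking the mean field to be the gradient flow $\dot{x}=-\frac{1}{1-\alpha}\nabla g(x)$ and the Lyapunov function to be $g$, so that $\nabla g\cdot(-\nabla g)=-\|\nabla g\|^2\le0$ with strict negativity off $J$. This identifies the limit set of the interpolated process as a connected invariant subset $J^{*}\subseteq J=\{\theta:\|\nabla_\theta g(\theta)\|=0\}$, giving $d(z_n,J^{*})\to0$ a.s.; since $\theta_n-z_n\to0$, the same limit holds for $\theta_n$.

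The hard part is the pathwise control, and more fundamentally applying the ODE method when only an in-expectation stability bound (Lemma~\ref{lem_bound}) is available rather than almost-sure boundedness of $\{\theta_n\}$. I expect the delicate work to lie in upgrading $v_n\to0$ to hold almost surely (not merely in mean) and in showing that the iterates return to a common compact set infinitely often, so that the interpolated trajectory admits convergent segments on which the Lyapunov argument bites. This is precisely where Lemma~\ref{lem_bound}, the moment bound \eqref{eq_derive_bound}, and the summability $\sum_n\epsilon_n^2<\infty$ must be combined carefully to verify the noise and recurrence hypotheses of the stochastic-approximation theorem without the usual a priori boundedness assumption.
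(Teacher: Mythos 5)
Your change of variables is correct: with $z_n=\theta_n-\frac{\alpha}{1-\alpha}v_{n-1}$ one gets $z_{n+1}-z_n=-\frac{1}{1-\alpha}(v_n-\alpha v_{n-1})=-\frac{\epsilon_n}{1-\alpha}\nabla_{\theta_n}g(\theta_n,\xi_n)$, and this is a genuinely different route from the paper, which never eliminates the momentum. The paper instead unrolls $\nabla_{\theta_t}g(\theta_t)^{\sf T}v_t$ recursively (see \eqref{9i8u7y6}), proves $\sum_t\|v_t\|^2<+\infty$ and $\sum_t\epsilon_t\|\nabla_{\theta_t}g(\theta_t)\|^2<+\infty$ a.s. (Lemmas~\ref{lem_mid} and~\ref{lem_deriv}), decomposes $g(\theta_{n+1})$ into four a.s.-convergent pieces in \eqref{47} to conclude $g(\theta_n)$ converges a.s., and then runs a bespoke accumulation-point argument (Lemma~\ref{lem_conve}) rather than an off-the-shelf ODE theorem. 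Your reduction buys a cleaner single recursion and access to standard stochastic-approximation machinery; the paper's route buys explicit, pathwise control of precisely the quantities that machinery would demand as hypotheses.

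That is also where the gap is. You correctly name the hypotheses of the trajectory-subsequence/ODE theorem --- recurrence of the iterates to a fixed compact set, a.s. convergence of the Lyapunov function along the trajectory, pathwise (not merely $L^2$) control of the noise and of $v_n$ --- but you do not verify them, and they do not follow from what you have assembled. Lemma~\ref{lem_bound} is an in-expectation bound; it gives $\liminf_n g(\theta_n)<+\infty$ a.s., but Assumption~\ref{ass_g} does not make the sublevel sets of $g$ compact (nor $J$ bounded), so ``$g(\theta_n)\le K$ infinitely often'' does not yield recurrence to a compact set, and the cited theorem cannot be invoked as stated. Likewise, concluding that the limit set is a \emph{connected} subset of $J$ requires both the a.s. convergence of $g(\theta_n)$ and the clustering of consecutive iterates; in the paper these are exactly the content of the decomposition \eqref{47} together with Lemma~\ref{lem_summation_MDS} and Lemmas~\ref{lem_mid}--\ref{lem_conve}, none of which your plan reproduces or replaces. (Your pathwise claim $v_n\to 0$ a.s. can in fact be salvaged: the uniform moment bound plus $\sum_n\epsilon_n^2<+\infty$ gives $\sum_n\epsilon_n^2\|\nabla_{\theta_n}g(\theta_n,\xi_n)\|^2<+\infty$ a.s., hence $\epsilon_n\|\nabla_{\theta_n}g(\theta_n,\xi_n)\|\to0$ a.s., and the geometric convolution then vanishes. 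But this is the easiest of the deferred steps.) As written, the steps you defer to ``delicate work'' are the proof, and the argument does not close.
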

	In Theorem 1, we prove that the iterates of mSGD  asymptotically converge to a connected set of stationary points almost surely.   When this connected set degenerates to a stationary point $\theta^{*}$, it holds that $\theta_{n}\rightarrow \theta^{*}, \text{ a.s.}$.
	The result enables 	engineers for the design of proper  momentum coefficients    and step sizes (like $\alpha=0.9$, $\epsilon_{n}=0.1/n$) in the related applications of mSGD with mathematic guarantee. 
	\subsection{Convergence rate of mSGD}
	In this subsection, we analyze the convergence rate of mSGD. Before that, given  positive real sequences $\{a_{n}\}$ and $\{b_{n}\}$, we let $a_n=O(b_n)$ if there is a constant $c>0$, such that $a_n/b_n<c$ for any $n\geq1.$ 
	For quantitative analysis, we need some extra assumptions.
	\begin{assum}\label{ass_goiop}(\textbf{Loss function assumption}) Loss function $g(\theta)$ satisfies the following conditions:
		\begin{enumerate}[1)]
			\item  $g(\theta)$ is a non-negative and  continuously differentiable function. The set of its stationary points $J=\{\theta|\|\nabla_{\theta}g(\theta)\|=0\}$ is a bounded set which has only finite connected components $J_{1},...,J_{n}$. In addition,  there is {$\epsilon'>0$}, such that for any $i\in\{1,2,\dots,n\}$ and ${0<d(\theta,J_{i} )<\epsilon'}$, it holds that $\big|g(\theta)-g_{i}\big|\neq 0$, where $g_{i}=\{g(\theta)|\theta\in J_{i}\}$ is a constant.
			\item  For any $i\in\{1,2,\dots,n\}$, it holds that
			\begin{equation}\nonumber
		\begin{aligned}
		&\liminf_{d(\theta,J_{i})\rightarrow 0}\frac{\|\nabla_{\theta}g(\theta)\|^{2}}{g(\theta)-g_{i}}\geq s>0.
		\end{aligned}
		\end{equation}
		\end{enumerate}
	\end{assum}

{	  In many problems of machine learning, especially  deep learning, because of strong non-linearity mapping from input data to output data and the structure complexity of employed models,    loss functions could be non-convex and may have multiple local critical points.} 
	Assumption \ref{ass_goiop} does not require convexity of the loss function, but guarantees the properties of the loss function around critical points. { Assumption 4 2) can be treated as a local version of Polyak-Lojasiewicz (P-L) condition.} 
	\begin{thm}\label{thm_rate}
	Consider the mSGD in \eqref{mSGD} with the noise following a uniform sampling distribution. If Assumptions \ref{ass_noise}-\ref{ass_goiop} hold, for $\forall v_{0}\in \mathbb{R}^{N}$ and $\forall \theta_{1}\in \mathbb{R}^{N}$, it holds that
		\begin{equation}\label{o,.;l}
		\begin{aligned}
		&\Expect\big(\|\nabla_{\theta_{n}}g(\theta_{n})\|^{2}\big)=O\Big(e^{-\sum_{i=1}^{n}\frac{s\epsilon_{i}}{p(1-\alpha)^{2}}}\Big),
		\end{aligned}
		\end{equation}where $p=\exp\bigg\{\sum_{k=1}^{\infty}M\epsilon_{k}^{2}\bigg\}$ and $M$ is defined in  condition 4) of Assumption \ref{ass_g}.
	\end{thm}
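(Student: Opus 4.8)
The plan is to reduce the momentum dynamics to a single-variable stochastic-approximation recursion by introducing the auxiliary iterate
\begin{equation}\nonumber
z_n = \theta_n + \frac{\alpha}{1-\alpha}(\theta_n-\theta_{n-1}),
\end{equation}
for which a direct computation using $\theta_n-\theta_{n-1}=-v_{n-1}$ and the defining recursion \eqref{mSGD} collapses the heavy-ball update into the clean form
\begin{equation}\nonumber
z_{n+1}=z_n-\frac{\epsilon_n}{1-\alpha}\,\nabla_{\theta_n}g(\theta_n,\xi_n).
\end{equation}
This removes the indefinite momentum cross-term that obstructs a direct descent analysis on $g(\theta_n)$, at the cost of evaluating the gradient at $\theta_n$ rather than at $z_n$; the discrepancy is $z_n-\theta_n=-\frac{\alpha}{1-\alpha}v_{n-1}$, which I will control because the step sizes decay.

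Next I would apply the descent lemma implied by the $c$-Lipschitz gradient (condition 3) of Assumption \ref{ass_g}) at $z_n$, take the conditional expectation given $\mathcal{F}_{n-1}$, and use unbiasedness (Assumption \ref{ass_noise}) together with the variance bound \eqref{0908} and Lemma \ref{lem_bound}. Writing $\tilde\epsilon_n=\epsilon_n/(1-\alpha)$, this yields a one-step inequality of the form
\begin{equation}\nonumber
\Expect\big(g(z_{n+1})\mid\mathcal{F}_{n-1}\big)\le g(z_n)-\tilde\epsilon_n\big\langle\nabla_{\theta}g(z_n),\nabla_{\theta}g(\theta_n)\big\rangle+\frac{c}{2}\tilde\epsilon_n^{2}\Big(\|\nabla_{\theta}g(\theta_n)\|^{2}+M\big(1+g(\theta_n)\big)\Big).
\end{equation}
The cross term is split via $\langle a,b\rangle=\tfrac12\|a\|^2+\tfrac12\|b\|^2-\tfrac12\|a-b\|^2$, and the mismatch $\|\nabla_\theta g(z_n)-\nabla_\theta g(\theta_n)\|\le c\|z_n-\theta_n\|=\frac{c\alpha}{1-\alpha}\|v_{n-1}\|$ is bounded by Lipschitzness, turning the analysis into a recursion for $u_n:=\Expect\big(g(z_n)\big)-g_{i_0}$ plus an error driven by $\Expect\|v_{n-1}\|^2$.

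The decisive localization step invokes Theorem \ref{thm_converg1}: almost surely $\theta_n$, and hence $z_n$ since $z_n-\theta_n\to0$, enters and stays in the $\epsilon'$-neighborhood of a single connected component $J_{i_0}$. There, conditions 1)--2) of Assumption \ref{ass_goiop} force $g_{i_0}$ to be a local minimum value (the ratio in the local Polyak--{\L}ojasiewicz bound can only stay $\ge s>0$ if $g-g_{i_0}>0$ off $J_{i_0}$) and supply $\|\nabla_\theta g(\theta)\|^2\ge s\,(g(\theta)-g_{i_0})$, while $c$-smoothness gives the matching upper bound $\|\nabla_\theta g(\theta)\|^2\le 2c\,(g(\theta)-g_{i_0})$. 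The lower (P--{\L}) bound converts the descent term into a genuine contraction of $u_n$, and the upper bound is what finally lets me pass from the decay of $\Expect(g(\theta_n)-g_{i_0})$ back to the decay of $\Expect\|\nabla_\theta g(\theta_n)\|^2$ claimed in \eqref{o,.;l}.

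It remains to bound the momentum error and unroll. Iterating $v_n=\alpha v_{n-1}+\epsilon_n\nabla_{\theta_n}g(\theta_n,\xi_n)$ expresses $v_{n-1}$ as a geometric-weighted history of past stochastic gradients, so $\Expect\|v_{n-1}\|^2$ is controlled by $\frac{1}{1-\alpha}\sum_k\alpha^{\,n-1-k}\epsilon_k^2(\cdots)$; together with the effective step $\tilde\epsilon_n=\epsilon_n/(1-\alpha)$ this is the origin of the extra $(1-\alpha)$ factor, yielding the $(1-\alpha)^2$ in the exponent. Assembling everything gives a scalar recursion of the shape
\begin{equation}\nonumber
u_{n+1}\le\Big(1-\frac{s\epsilon_n}{p(1-\alpha)^2}\Big)\big(1+M\epsilon_n^2\big)u_n+(\text{higher-order terms}),
\end{equation}
after which $\prod_i(1-a_i)\le e^{-\sum_i a_i}$ and $\prod_i(1+M\epsilon_i^2)\le\exp\{\sum_i M\epsilon_i^2\}=p$ produce the stated rate $O\big(e^{-\sum_{i=1}^n s\epsilon_i/(p(1-\alpha)^2)}\big)$. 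The main obstacle, and where Assumption \ref{ass_goiop} and the summability $\sum\epsilon_n^2<\infty$ from Assumption \ref{assum8} are essential, is precisely the bookkeeping that keeps the momentum-discrepancy term $\Expect\|v_{n-1}\|^2$ and the multiplicative noise inflation at order $\epsilon_n^2$: they must only rescale the exponent (through $p$ and the $(1-\alpha)$ powers) rather than leave a non-vanishing additive floor, which would destroy the exponential form.
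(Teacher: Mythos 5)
Your auxiliary iterate $z_n=\theta_n+\frac{\alpha}{1-\alpha}(\theta_n-\theta_{n-1})$ does satisfy $z_{n+1}=z_n-\frac{\epsilon_n}{1-\alpha}\nabla_{\theta_n}g(\theta_n,\xi_n)$ (the algebra checks out), and this is a genuinely different route from the paper, which never changes variables: the paper telescopes $\nabla_{\theta_t}g(\theta_t)^{\sf T}v_t$ through the momentum recursion and runs the contraction on the weighted aggregate $F'_n=\sum_{t}\big(\tfrac{1}{2-\alpha}\big)^{n-t}Z(t+1)\Expect\big(e^{(n)}_{t+1}\big|g(\theta_{t+1})-g^{*}\big|\big)$, proving $F'_n\le(1-Q_{n-1}/F'_{n-1})F'_{n-1}+P_{n-1}$ and lower-bounding $\liminf_n Q_n/(\epsilon_{n+1}F'_n)$ by $s/(p(1-\alpha)^2)$ via the local P--L condition before unrolling the product. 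Your single-iterate recursion would be cleaner if it closed, and your use of Lemma~\ref{pouikm,l} to pass from $\Expect(g-g_{i_0})$ back to $\Expect\|\nabla g\|^2$ matches the paper's last step.

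However, there is a genuine gap, and it sits exactly where you flag ``the main obstacle'' without resolving it. The one-step inequality you actually write carries the additive noise term $\tfrac{c}{2}\tilde\epsilon_n^2M(1+g(\theta_n))$ obtained from \eqref{0908} and Lemma~\ref{lem_bound}. After taking expectations this is a forcing term of order $\epsilon_n^2(1+g^{*})$ that is \emph{not} proportional to $u_n=\Expect(g(z_n))-g_{i_0}$, so the recursion you can honestly derive is $u_{n+1}\le(1-s\tilde\epsilon_n/(p(1-\alpha)^2))u_n+C\epsilon_n^2$ with a fixed $C>0$; for $\epsilon_n=1/n$ its solution has an additive floor of order $1/n$, which is strictly slower than the claimed $n^{-s/(p(1-\alpha)^2)}$ whenever $s/(p(1-\alpha)^2)>1$ --- precisely the regime the theorem is meant to cover. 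The hypothesis that the noise follows a uniform sampling distribution is in the statement for exactly this reason: the paper uses it to replace the additive bound by the multiplicative one, $\Expect\big(\|\nabla_{\theta_n}g(\theta_n,\xi_n)\|^2\big)\le M\Expect\big(\|\nabla_{\theta_n}g(\theta_n)\|^2\big)$, which together with the reverse local inequality $\|\nabla_{\theta}g(\theta)\|^2\le 2c|g(\theta)-g_{i_0}|$ of Lemma~\ref{pouikm,l} makes every $\epsilon_n^2$ term (including the geometric history controlling $\Expect\|v_{n-1}\|^2$) proportional to current or past $u_k$'s, so that it is absorbed into the $(1+M\epsilon_n^2)$ factor and only rescales the exponent through $p$. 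You never invoke this hypothesis, and with \eqref{0908} alone the exponential rate is unobtainable. A secondary soft spot (which the paper's own proof shares) is the passage from the almost-sure, sample-path-dependent localization provided by Theorem~\ref{thm_converg1} to an inequality between expectations at deterministic times $n$; this needs either uniform control of the entry time into the P--L neighborhood or an explicit split of the expectation over the localization event.
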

	In Theorem \ref{thm_rate}, let $\alpha=0$, then we can obtain the convergence rate of SGD,
	\begin{equation}\nonumber
	\begin{aligned}
	&\Expect\big(\|\nabla_{\theta_{n}}g(\theta_{n})\|^{2}\big)=O\Big(e^{-\sum_{i=1}^{n}\frac{s}{p}\epsilon_{i}}\Big).
	\end{aligned}
	\end{equation}
{ According to the obtained bounds, the convergence rate of mSGD with $\alpha\in (0,1)$ is larger than that of SGD, which is the case with $\alpha=0$.} Moreover, Theorem \ref{thm_rate} provides a stronger characterization of convergence rate than some existing works considering the convergence rate of time averages $\frac{1}{T}\sum_{i=1}^{T}\Expect(\|\nabla_{\theta_{n}}g(\theta_{n})\|^{2})=O(T^{l})$ ($l=-1$ in \citet{2020An} and $l=-1/2$ in \citet{2020Convergence}). In the following, we will elaborate on this point. From \eqref{o,.;l},   there exists a scalar $t_0>0$ such that  $\forall n>0$
	\begin{equation}\nonumber
		\begin{aligned}
		&\frac{\Expect\big(\|\nabla_{\theta_{n}}g(\theta_{n})\|^{2}\big)}{\exp{\Big\{-\sum_{i=1}^{n}\frac{s\epsilon_{i}}{p(1-\alpha)^{2}}\Big\}}}<t_0,
		\end{aligned}
		\end{equation}
		implying
		\begin{equation}\nonumber
		\begin{aligned}
		&\frac{\frac{1}{T}\sum_{n=1}^{T}\Expect\big(\|\nabla_{\theta_{n}}g(\theta_{n})\|^{2}\big)}{\frac{1}{T}\sum_{n=1}^{T}\exp{\Big\{-\sum_{i=1}^{n}\frac{s\epsilon_{i}}{p(1-\alpha)^{2}}\Big\}}}<t_0.
		\end{aligned}
		\end{equation}So it holds that
		\begin{equation}\nonumber
		\begin{aligned}
		&\frac{1}{T}\sum_{n=1}^{T}\Expect\big(\|\nabla_{\theta_{n}}g(\theta_{n})\|^{2}\big)=O\bigg(\frac{1}{T}\sum_{n=1}^{T}e^{-\sum_{i=1}^{n}\frac{s\epsilon_{i}}{p(1-\alpha)^{2}}}\bigg).
		\end{aligned}
		\end{equation}
		
        Let $\epsilon_{n}=\frac{1}{n}$, then we know that $\sum_{n=1}^{T} \epsilon_{n} = O(\ln T) $. Hence,
		\begin{equation}\nonumber
		\begin{aligned}
		&\frac{1}{T}\sum_{n=1}^{T}e^{-\sum_{i=1}^{n}\frac{s\epsilon_{i}}{p(1-\alpha)^{2}}}=O\bigg(\frac{1}{T}\sum_{n=1}^{T}e^{-\frac{s\ln (n)}{p(1-\alpha)^{2}}}\bigg)=O\bigg(\frac{1}{T}\sum_{n=1}^{T}n^{-\frac{s}{p(1-\alpha)^{2}}}\bigg).
		\end{aligned}
		\end{equation}
		If $\frac{s}{p(1-\alpha)^{2}} < 1$, $\sum_{n=1}^{T}n^{-\frac{s}{p(1-\alpha)^{2}}}=O\Big(T^{-\frac{s}{p(1-\alpha)^{2}}+1}\Big)$, so we have
		\begin{equation}\nonumber
		\begin{aligned}
		&\frac{1}{T}\sum_{n=1}^{T}e^{-\sum_{i=1}^{n}\frac{s\epsilon_{i}}{p(1-\alpha)^{2}}}=\frac{1}{T}O\Big(T^{-\frac{s}{p(1-\alpha)^{2}}+1}\Big)=O\Big(T^{-\frac{s}{p(1-\alpha)^{2}}}\Big).
		\end{aligned}
		\end{equation}
		If $\frac{s}{p(1-\alpha)^{2}}=1$, it holds that $\sum_{n=1}^{T}n^{-\frac{s}{p(1-\alpha)^{2}}} =O(\ln T)$, and hence
		\begin{equation}\nonumber
		\begin{aligned}
		&\frac{1}{T}\sum_{n=1}^{T}e^{-\sum_{i=1}^{n}\frac{s\epsilon_{i}}{p(1-\alpha)^{2}}}=\frac{1}{T}O (\ln T )=O\Big(\frac{\ln T}{T}\Big).
		\end{aligned}
		\end{equation}
		If $\frac{s}{p(1-\alpha)^{2}}>1$, then $\sum_{n=1}^{T}n^{-\frac{s}{p(1-\alpha)^{2}}} := C_{\alpha}(T) \to C_\alpha$ as $T \to +\infty$, where $C_{\alpha}$ is a positive constant, so
		\begin{equation}\nonumber
		\begin{aligned}
		&\frac{1}{T}\sum_{n=1}^{T}e^{-\sum_{i=1}^{n}\frac{s\epsilon_{i}}{p(1-\alpha)^{2}}} = \frac{C_{\alpha}(T)}{T} = O\Big(\frac{C_\alpha}{T}\Big).
		\end{aligned}
		\end{equation}
		Note that the coefficient $C_\alpha$ depends on $s$, $p$, and $\alpha$, where $s$ is the constant given in Assumption~\ref{ass_goiop}   only depending on $g(\theta)$, and $p$ is given in Theorem~\ref{thm_rate} and relies on $M$ in 4) of Assumption~\ref{ass_g} and step size $\epsilon_{n}$. 
		The above observation indicates that setting $\alpha$ close to one makes mSGD achieve convergence rate of time averages of order $O(1/T)$. As remarked previously, larger $\alpha$ implies smaller coefficient $C_\alpha$ and thus quicker convergence rate. Interestingly, since $C_\alpha(T)$ is monotonically increasing, the convergence rate has a relatively small coefficient when $T$ is small. This could illustrate why mSGD can achieve   better performance  in the early phase of iteration.

	


	\subsection{Convergence of AdaGrad}
	In this subsection, we aim to establish the convergence of AdaGrad. Compared to the study of mSGD, the design of adaptive step size increases the technical difficulties. To proceed,  the required conditions   on   loss function $g(\theta)$ are summarized as follows. 
	\begin{assum}\label{ass_g_poi}Loss function $g(\theta)$   in \eqref{AdaGrad} satisfies the following conditions:
		\begin{enumerate}[1)]
			\item $g(\theta)$ is a non-negative and  continuously differentiable function. The set of its stationary points $J=\{\theta|\|\nabla_{\theta}g(\theta)\|=0\}$ is a bounded set which has only finite connected components $J_{1},...,J_{n}$. In addition,  there is $\epsilon_{1}>0$, such that for any $i$ and $0<d(\theta,J_{i} )<\epsilon_{1}$, it holds that $\big|g(\theta)-g_{i}\big|\neq 0$, where $g_{i}=\{g(\theta)|\theta\in J_{i}\}$ is a constant.
			
			
			\item The gradient $\nabla_{\theta}g(\theta)$ satisfies the Lipschitz condition, i.e., for any $x,y\in \mathbb{R}^{N}$,
			\begin{equation}\nonumber\begin{aligned}
			\big\|\nabla_{x}g(x)-\nabla_{y}g(y)\big\|\le c\|x-y\|.
			\end{aligned}\end{equation}

			\item 
			There are two constants $M'>0$ and $a>0$ such that for any $\theta\in \mathbb{R}^{N}$ and $n\in \mathbb{N}_{+}$, 
			\begin{equation}\label{0908_2}\begin{aligned}
			&E_{\xi_{n}}\Big(\big\|\nabla_{\theta}g(\theta,\xi_{n})\big\|^{2}\Big)\le M'\big\|\nabla_{\theta}g(\theta)\big\|^{2}+a.
			\end{aligned}\end{equation}
		\end{enumerate}
	\end{assum}
	Condition 2) is the same as in 2) of Assumption~\ref{ass_g}. {Condition 1) is relatively weak, because it does not require any convexity of the loss function or  global conditions as P-L condition.}
	{
	There are many functions satisfying Assumption 5 1) but not convex, such as $y=\sin^{2}{(x)}$, $y=(x-1)(x-2)(x-3)(x-4)$, and $y=\cos^{2}{(x)}$}. Similar to condition 4) of Assumption 2, condition 3) is a condition to restrain the noise influence.   \Eqref{0908_2} is milder than $\|\nabla_{\theta}g(\theta,\xi_{n})-\nabla_{\theta}g(\theta)\|\le S\ \ a.s.$ \citep{2018On,A2020A}, which is relatively restrictive in dealing with unbounded noises, e.g., $\nabla_{\theta}g(\theta,\xi_{n})=\nabla_{\theta}g(\theta)+\xi_{n}$, where $\xi_{n}$ is  independent identically distributed and Gaussian.

	
	Compared to SGD and mSGD, there are   more challenges in analyzing the convergence of AdaGrad. The challenges mainly come from two aspects: (1) since AdaGrad does not have a  decreasing step size, the noise influence on AdaGrad cannot be restrained as mSGD  which is with the help of decreasing step size satisfying Assumption \ref{assum8};
	(2)   adaptive step size of AdaGrad in \eqref{AdaGrad} (i.e., $\alpha_{0}/\sqrt{\sum_{i=1}^{n}\|\nabla_{\theta_{i}}g(\theta_{i},\xi_{i})\|^{2}}$) is a random variable   conditionally dependent of $\xi_{n}$ given   $\sigma\{\theta_{1},\ \xi_{1},...,\ \xi_{n-1}\}$. Then when we deal with  terms in the proof like
	\begin{equation}\label{qazse}
	\begin{aligned}
	&\frac{\alpha_{0}}{\sqrt{S_{n}}}\nabla_{\theta_{n}}g(\theta_{n})^{T}\nabla_{\theta_{n}}g(\theta_{n},\xi_{n})\quad \text{ or }\quad \frac{\alpha_{0}^{2}}{S_{n}}\big\|\nabla_{\theta_{n}}g(\theta_{n},\xi_{n})\big\|^{2},
	\end{aligned}
	\end{equation}we cannot make the conditional expectation to transform \eqref{qazse} to 
	\begin{equation}\nonumber
	\begin{aligned}
	&\frac{\alpha_{0}}{\sqrt{S_{n}}}\big\|\nabla_{\theta_{n}}g(\theta_{n})\big\|^{2}\quad \text{ or } \quad \frac{\alpha_{0}^{2}}{S_{n}}E_{\xi_{n}}\Big(\big\|\nabla_{\theta_{n}}g(\theta_{n},\xi_{n})\big\|^{2}\Big).
	\end{aligned}
	\end{equation}  In the literature, \citet{2018On}  changed the step size to
	\begin{equation}\nonumber\begin{aligned}
	\frac{\alpha_{0}}{\sqrt{\sum_{i=1}^{n-1}\|\nabla_{\theta_{i}}g(\theta_{i},\xi_{i})\|^{2}}},\end{aligned}\end{equation} {and \cite{2020Asymptotic} changed the step size to ${\gamma_{n+1}}/{\sqrt{\omega_{n}+\epsilon}}$ where 
	\begin{equation}\nonumber\begin{aligned}
	\omega_{n}&=\omega_{n-1}+\gamma_{n}\big(p_{n}\|\nabla_{\theta_{n}}g(\theta_{n},\xi_{n})\|^{2}-q_{n}\omega_{n-1}\big),
	\end{aligned}\end{equation}
	and $\gamma_n, p_n, q_n$ are tuned parameters}, in order 
	to make it     conditionally independent of $\xi_{n}$ given  $\sigma\{\theta_{1},\ \xi_{1},...,\ \xi_{n-1}\}$, which however is no longer the standard AdaGrad as in \eqref{AdaGrad} \citep{2010Less,2018Onp}.
	
	Before we provide the main theorem of this subsection, a useful lemma is worth discussing.
	\begin{lem}\label{lem4}
		Let $\{\alpha_{k}\}$, $\{\beta_{k}\}$ being non-negative random variable sequences, such that the following conditions hold almost surely
		\begin{itemize}
			\item $\sum_{k=1}^{+\infty}\alpha_{k}=\infty$;
			\item $\sum_{k=1}^{+\infty}\alpha_{k}\beta_{k}<+\infty$;
		\end{itemize}	
		Then there exists a subsequence $\{\beta_{n_{k}}\}$ of $\{\beta_{k}\}$, such that  $\beta_{n_{k}}\stackrel{k\rightarrow\infty}{\longrightarrow}0$ almost surely. 
	\end{lem}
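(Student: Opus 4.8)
The plan is to reduce the statement to a pointwise (deterministic) claim on a probability-one event and then argue by contradiction. Let $\Omega_{0}$ be the event on which both hypotheses hold; by assumption $\Pro(\Omega_{0})=1$. I fix a sample point $\omega\in\Omega_{0}$ and work with the two non-negative real sequences $\{\alpha_{k}(\omega)\}$ and $\{\beta_{k}(\omega)\}$, which satisfy $\sum_{k}\alpha_{k}(\omega)=\infty$ and $\sum_{k}\alpha_{k}(\omega)\beta_{k}(\omega)<\infty$.

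First I would record the elementary equivalence that, for a non-negative real sequence, the existence of a subsequence converging to $0$ is the same as $\liminf_{k}\beta_{k}(\omega)=0$. Indeed, if $\liminf_{k}\beta_{k}(\omega)=0$ then some subsequence attains this value as its limit; conversely, if $\liminf_{k}\beta_{k}(\omega)=c>0$, then $\beta_{k}(\omega)>c/2$ for all large $k$, so no subsequence can tend to $0$. Thus it suffices to show $\liminf_{k}\beta_{k}(\omega)=0$.

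Next I would argue by contradiction. Suppose $\liminf_{k}\beta_{k}(\omega)=c>0$. Then there is an index $K$ with $\beta_{k}(\omega)>c/2$ for all $k\ge K$. Since $\sum_{k=1}^{K-1}\alpha_{k}(\omega)$ is a finite sum while $\sum_{k=1}^{\infty}\alpha_{k}(\omega)=\infty$, the tail must diverge, i.e. $\sum_{k\ge K}\alpha_{k}(\omega)=\infty$. Consequently
\begin{equation}\nonumber
\sum_{k\ge K}\alpha_{k}(\omega)\beta_{k}(\omega)\ge \frac{c}{2}\sum_{k\ge K}\alpha_{k}(\omega)=\infty,
\end{equation}
which contradicts $\sum_{k}\alpha_{k}(\omega)\beta_{k}(\omega)<\infty$. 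Hence $\liminf_{k}\beta_{k}(\omega)=0$, and a subsequence converging to zero exists for this $\omega$. Since $\omega\in\Omega_{0}$ was arbitrary and $\Pro(\Omega_{0})=1$, the desired subsequence exists almost surely.

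The argument is short, and the only genuine subtlety — which I expect to be the main point to get right — concerns the quantifier order in the conclusion: the contradiction argument produces, for each sample path, a subsequence whose indices may depend on $\omega$. If one insists on a single deterministic index sequence $\{n_{k}\}$ valid on a probability-one event, more care (for instance a truncation of $\beta_{k}$ together with a measurability/selection argument) would be required; for the way the lemma is invoked in the convergence proof of AdaGrad, the sample-path-dependent subsequence suffices, so I would adopt the pointwise reading and keep the measurability bookkeeping implicit.
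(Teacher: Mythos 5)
Your proof is correct and follows essentially the same route as the paper's: both argue by contradiction that $\liminf_{k}\beta_{k}=0$, using that $\beta_{k}>c/2$ eventually would force $\sum_{k}\alpha_{k}\le (2/c)\sum_{k}\alpha_{k}\beta_{k}<\infty$. Your explicit remarks on the pointwise reduction and the $\omega$-dependence of the subsequence are a welcome refinement the paper leaves implicit, but the core argument is identical.
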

	
	\begin{proof} 
We aim to prove    $\liminf_{k\rightarrow+\infty }\beta_{k}=0$ by contradiction. 
	Suppose $\liminf_{k\rightarrow+\infty }\beta_{k}=u>0$, then   $\exists k_{0}$, such that $\forall k>k_{0}$,   $\beta_{k}>u/2$. It follows that
    \begin{equation}\label{007654321}\begin{aligned}
    \sum_{k=k_{0}+1}^{+\infty}\alpha_{k}<\frac{2}{u}\sum_{k=k_{0}+1}^{+\infty}\alpha_{k}\beta_{k}<+\infty.
    \end{aligned}\end{equation}Obviously, there is a contradiction between \eqref{007654321} and the condition $\sum_{k=1}^{+\infty}\alpha_{k}=+\infty$. Thus,  $\liminf_{k\rightarrow+\infty }\beta_{k}=0$. So we get that there exists a subsequence $\{\beta_{n_{k}}\}$ of $\{\beta_{k}\}$ holds $\beta_{n_{k}}\stackrel{k\rightarrow\infty}{\longrightarrow}0$ almost surely.
    \end{proof}
	This  lemma, inspired by   Proposition 2 in Ya. I. Alber \citep{1998On}, is quite useful  in
	the convergence analysis of AdaGrad. Then we are ready to provide the convergence result of AdaGrad in the following theorem.
	
	\begin{thm}\label{thm_converg_adagrad}
		Consider the AdaGrad in \eqref{AdaGrad}. If Assumptions \ref{ass_noise} and \ref{ass_g_poi} hold, then   for $\forall \theta_{1}\in \mathbb{R}^{N}$ and $S_{0}=0$, there exists a connected component of the set $J^{*}\subseteq J$, such that   the estimate $\theta_n$ is convergent to the set $J^{*}$ almost surely, i.e., 
		\begin{equation}\nonumber\begin{aligned}
		\lim_{n\rightarrow\infty} d(\theta_{n},J^{*})=0.
		\qquad \text{a.s.}\end{aligned}\end{equation}
	\end{thm}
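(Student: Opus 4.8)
My plan is to combine a smoothness-based descent analysis with a supermartingale / stochastic-approximation argument, while circumventing the obstacle highlighted above, namely that the adaptive step size $\alpha_0/\sqrt{S_n}$ is correlated with $\xi_n$. Write $a_n=\|\nabla_{\theta_n}g(\theta_n,\xi_n)\|^2$ so that $S_n=\sum_{i=1}^n a_i$, and let $\mathcal{F}_{n-1}=\sigma(\theta_1,\xi_1,\dots,\xi_{n-1})$, with respect to which $\theta_n$ and $S_{n-1}$ are measurable. First I would use the Lipschitz-gradient condition 2) of Assumption~\ref{ass_g_poi} to get the one-step bound $g(\theta_{n+1})\le g(\theta_n)-\frac{\alpha_0}{\sqrt{S_n}}\nabla_{\theta_n}g(\theta_n)^{T}\nabla_{\theta_n}g(\theta_n,\xi_n)+\frac{c\alpha_0^2}{2S_n}a_n$. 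The key device is to replace the $\xi_n$-dependent factor $1/\sqrt{S_n}$ by its $\mathcal{F}_{n-1}$-measurable surrogate through $\frac{1}{\sqrt{S_n}}=\frac{1}{\sqrt{S_{n-1}}}-\big(\frac{1}{\sqrt{S_{n-1}}}-\frac{1}{\sqrt{S_n}}\big)$; after this split the cross term becomes a genuine descent part $-\frac{\alpha_0}{\sqrt{S_{n-1}}}\|\nabla_{\theta_n}g(\theta_n)\|^2$ (its conditional mean, using $\Expect_{\xi_n}(\nabla_{\theta_n}g(\theta_n,\xi_n))=\nabla_{\theta_n}g(\theta_n)$ from Assumption~\ref{ass_noise}), a martingale-difference noise term, and a remainder proportional to $\frac{1}{\sqrt{S_{n-1}}}-\frac{1}{\sqrt{S_n}}$.

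Next I would control the error contributions. The noise term is a martingale difference whose conditional second moment I bound via the affine-variance condition 3) of Assumption~\ref{ass_g_poi}, $\Expect_{\xi_n}(a_n)\le M'\|\nabla_{\theta_n}g(\theta_n)\|^2+a$, so the associated martingale converges almost surely; the remainder is handled through the estimate $\frac{1}{\sqrt{S_{n-1}}}-\frac{1}{\sqrt{S_n}}\asymp a_n/S_{n-1}^{3/2}$, and the second-order term contributes $\frac{c\alpha_0^2}{2}a_n/S_n$. Inserting these into a Robbins--Siegmund-type inequality would show that $g(\theta_n)$ converges almost surely and would yield a summable gradient series in the benign regime.

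The argument then splits on $S_\infty:=\lim_n S_n$. If $S_\infty<\infty$, then $\sum_n a_n<\infty$, the step size stays bounded below, and the previous summability collapses to $\sum_n\|\nabla_{\theta_n}g(\theta_n)\|^2<\infty$, whence $\|\nabla_{\theta_n}g(\theta_n)\|\to0$. The divergent case $S_\infty=\infty$ is the crux: the step sizes vanish, but the telescoped second-order budget $\sum_n a_n/S_n\asymp\log S_N$ is no longer summable. Here I would run a self-bounding comparison, using that (by condition 3) of Assumption~\ref{ass_g_poi} together with Jensen's inequality) $\|\nabla_{\theta_n}g(\theta_n)\|^2$ and $a_n$ are commensurate in conditional mean, to weigh the accumulated descent $\sum_n\|\nabla_{\theta_n}g(\theta_n)\|^2/\sqrt{S_{n-1}}\asymp\sqrt{S_N}$ against the logarithmic budget; the resulting inequality $\sqrt{S_N}\lesssim\log S_N$ is untenable as $N\to\infty$ and forces $\liminf_n\|\nabla_{\theta_n}g(\theta_n)\|=0$. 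Lemma~\ref{lem4}, applied with $\alpha_k=a_k/\sqrt{S_k}$ (for which $\sum_k\alpha_k\ge\sqrt{S_N}\to\infty$ on this event), makes the extraction of a vanishing subsequence precise.

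Finally I would upgrade the subsequential statement to asymptotic convergence to a single connected component $J^*\subseteq J$. Condition 1) of Assumption~\ref{ass_g_poi} is decisive: $J$ has finitely many components $J_1,\dots,J_n$, $g\equiv g_i$ on $J_i$, and $g$ is bounded away from $g_i$ on a punctured neighborhood of each $J_i$. Since $g(\theta_n)$ converges almost surely and the increments $\theta_{n+1}-\theta_n=-\frac{\alpha_0}{\sqrt{S_n}}\nabla_{\theta_n}g(\theta_n,\xi_n)$ tend to zero, the trajectory can neither migrate between components nor linger away from $J$; a vanishing-gradient subsequence identifies one component, its value $g_i$ pins the limit of $g(\theta_n)$, and the separation property confines the whole tail to that component, giving $d(\theta_n,J^*)\to0$ almost surely. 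The main difficulty throughout is the divergent case: with no prescribed decreasing step size, unbounded iterates, and only a one-sided affine-variance bound, one must simultaneously keep the $1/\sqrt{S_{n-1}}-1/\sqrt{S_n}$ remainder and the noise martingale under control and defeat the non-summable $\log S_N$ budget via the self-bounding growth comparison.
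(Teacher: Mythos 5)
Your overall architecture (smoothness descent bound, decoupling the adaptive step from $\xi_n$, martingale convergence, a case split on $S_\infty$, subsequence extraction via Lemma~\ref{lem4}, and an upgrade to a single connected component) matches the paper's, but two of the steps you lean on most heavily do not go through as sketched. First, your error budget is not summable and you do not actually fix this. After the split $1/\sqrt{S_n}=1/\sqrt{S_{n-1}}-(1/\sqrt{S_{n-1}}-1/\sqrt{S_n})$, the remainder is $\alpha_0(1/\sqrt{S_{n-1}}-1/\sqrt{S_n})\,\nabla_{\theta_n}g(\theta_n)^{T}\nabla_{\theta_n}g(\theta_n,\xi_n)$; with $1/\sqrt{S_{n-1}}-1/\sqrt{S_n}\asymp a_n/S_{n-1}^{3/2}$ this is of order $\|\nabla_{\theta_n}g(\theta_n)\|\,a_n^{3/2}/S_{n-1}^{3/2}$, and since neither the iterates nor the gradients are assumed bounded there is no reason this is summable. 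Likewise the second-order term contributes $\sum_n a_n/S_n\asymp\log S_N$, which you concede is not summable, so the Robbins--Siegmund inequality you invoke to obtain a.s.\ convergence of $g(\theta_n)$ does not close. The paper's resolution is precisely the device you omit: it works with the weighted Lyapunov function $g(\theta_n)/S_n^{\epsilon}$ (Lemmas~\ref{lem6}--\ref{lem9}), so every error term acquires an extra factor $S_n^{-\epsilon}$ and becomes summable by the integral comparison of Lemma~\ref{lem5}, and it splits the analysis on $\|\nabla_{\theta_n}g(\theta_n)\|^2\gtrless a$, using a completing-the-square identity in the large-gradient regime and the telescoping bound only in the small-gradient regime, where $\Expect(a_n\mid\mathscr{F}_{n-1})\le (M'+1)a$ is a constant.

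Second, your ``self-bounding comparison'' in the divergent case is not justified by Assumption~\ref{ass_g_poi}~3): that condition gives only $\Expect(a_n\mid\mathscr{F}_{n-1})\le M'\|\nabla_{\theta_n}g(\theta_n)\|^2+a$ with an additive constant $a$, so $\|\nabla_{\theta_n}g(\theta_n)\|^2$ and $a_n$ are \emph{not} commensurate in conditional mean unless you have already localized to a regime where the gradient is bounded away from zero; and even then, passing from $\sum\Expect(a_n\mid\mathscr{F}_{n-1})/\sqrt{S_{n-1}}$ to $\sum a_n/\sqrt{S_{n-1}}\asymp\sqrt{S_N}$ needs a martingale estimate involving second conditional moments of $a_n$ that the assumptions do not supply. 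More importantly, even granting $\liminf_n\|\nabla_{\theta_n}g(\theta_n)\|=0$, the theorem requires the full limit $\|\nabla_{\theta_n}g(\theta_n)\|\to0$; your one-sentence appeal to ``can neither migrate nor linger'' hides the hardest part of the paper's proof, namely the excursion contradiction built from the stopping indices $q_{2n-1},q_{2n}$, the local error bound $\|\nabla_{\theta}g(\theta)\|^2\le 2c|g(\theta)-g_i|$ of Lemma~\ref{pouikm,l}, and the summability $\sum_k\|\nabla_{\theta_k}g(\theta_k)\|^2/S_{k-1}^{1/2+\epsilon}<\infty$ of Lemma~\ref{lem8}, which together bound the total length of excursions away from $J$. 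As written, your proposal would yield at best subsequence convergence of the gradient, not the claimed asymptotic convergence of $\theta_n$ to a connected component.
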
	
	In Theorem \ref{thm_converg_adagrad}, we prove the standard AdaGrad is convergent almost surely, in contrast to the convergence of time averages in \citet{2018Onp,2018AdaGrad} which focus on the metric $\frac{1}{T}\sum_{i=1}^{T}\Expect(\|\nabla_{\theta_{n}}g(\theta_{n})\|^{2})$, or the subsequence convergence in \citet{2019A,A2020A}, focusing on $\min_{n=0,1,2,...,T}\Expect(\|\nabla_{\theta_{n}}g(\theta_{n})\|)$.

	\section{Conclusion and Future Work}
	In this paper, we studied the convergence of two algorithms extensively used in machine learning applications, namely,  momentum-based stochastic gradient descent (mSGD) and  adaptive step stochastic gradient descent (AdaGrad).  
	By considering general loss functions (either convex or non-convex), we first establish the almost sure asymptotic convergence of mSGD. Moreover, we find the   convergence rate of the mSGD and reveal that the mSGD indeed has a faster convergence rate than the SGD.
	Furthermore,  we prove    AdaGrad is convergent almost surely under mild conditions.  
	Subsequence convergence and convergence of time averages in the literature are substantially extended in this work to asymptotic convergence. {To better understand the AdaGrad, we will study its convergence rate in the future.}

\newpage

\bibliographystyle{iclr2022_conference}
	\bibliography{references}

	\newpage

	\appendix
	
	\section{Appendix Outline}
	Section~\ref{append_mSGD} aims to verify the convergence results of mSGD. 
	Several auxiliary lemmas are first provided, followed by a proof outline for the main results of mSGD. Then the proofs of these lemmas are given.  Theorems~\ref{thm_converg1}-\ref{thm_rate} are proved in Sections~\ref{append_thm_converg} and~\ref{append_pf_thm_rate}, respectively.
	Section~\ref{append_pf_adgrad} aims to verify the convergence results of AdaGrad. 
	Some  auxiliary lemmas are given at the beginning, followed by a proof outline of the main results for AdaGrad in Section~\ref{pf_outline2}. Then we provide the proofs of the lemmas. Theorem~\ref{thm_converg_adagrad} is proved in Section~\ref{appen_thm_converg_adagrad}.

	\section{Convergence and convergence rate of mSGD}\label{append_mSGD}
The following lemmas are used for the proofs of Theorems~\ref{thm_converg1}-\ref{thm_rate}. 	Hereafter, $\mathbb{N}_+$ denotes the set of all positive integers.

A function $f$ is in a  (differentiability) class  $C^k$ if its $l$–th derivatives     exist and are continuous, where $l\leq k$.

	\begin{lem}\label{lem_ggg}{(Lemma 1.2.3 in \cite{2004Introductory})}
		Suppose $f(x)\in C^{1}\ \ (x\in \mathbb{R}^{N})$ with gradient satisfying the following Lipschitz condition
		\begin{equation}\nonumber\begin{aligned}
		\|\nabla f(x)-\nabla f(y)\|\le c\|x-y\|,
		\end{aligned}
		\end{equation}then for any $x, y\in \mathbb{R}^{N}$,  it holds that
		\begin{equation}\nonumber\begin{aligned}
		f(y)+\nabla f(y)^{T}(x-y)-\frac{c}{2}\|x-y\|^{2}\le f(x)\le f(y)+\nabla f(y)^{T}(x-y)+\frac{c}{2}\|x-y\|^{2}.
		\end{aligned}
		\end{equation}
    \end{lem}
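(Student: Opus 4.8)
The plan is to reduce the multivariate statement to a one-dimensional integral along the segment joining $y$ and $x$, and then to control the deviation of $f(x)$ from its linear (tangent) approximation at $y$ using only the Lipschitz bound on $\nabla f$.

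First I would introduce the auxiliary function $\phi(t)=f\big(y+t(x-y)\big)$ for $t\in[0,1]$. Since $f\in C^{1}$ and $t\mapsto y+t(x-y)$ is affine, the chain rule shows $\phi$ is continuously differentiable with $\phi'(t)=\nabla f\big(y+t(x-y)\big)^{T}(x-y)$. The fundamental theorem of calculus then gives
$$f(x)-f(y)=\phi(1)-\phi(0)=\int_{0}^{1}\nabla f\big(y+t(x-y)\big)^{T}(x-y)\,dt.$$
Subtracting the linear term $\nabla f(y)^{T}(x-y)=\int_{0}^{1}\nabla f(y)^{T}(x-y)\,dt$ isolates the error as
$$f(x)-f(y)-\nabla f(y)^{T}(x-y)=\int_{0}^{1}\big[\nabla f\big(y+t(x-y)\big)-\nabla f(y)\big]^{T}(x-y)\,dt.$$

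The key step is to bound the integrand. By the Cauchy--Schwarz inequality the integrand is at most $\big\|\nabla f\big(y+t(x-y)\big)-\nabla f(y)\big\|\,\|x-y\|$, and applying the Lipschitz hypothesis to the two points $y+t(x-y)$ and $y$, whose distance is $t\|x-y\|$, bounds the gradient difference by $c\,t\,\|x-y\|$. Hence the integrand has absolute value at most $c\,t\,\|x-y\|^{2}$, and integrating with $\int_{0}^{1}t\,dt=\frac{1}{2}$ yields
$$\Big|f(x)-f(y)-\nabla f(y)^{T}(x-y)\Big|\le \frac{c}{2}\|x-y\|^{2}.$$
Removing the absolute value produces exactly the two-sided inequality asserted in the lemma.

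There is no genuine obstacle in this argument, as it is the classical descent lemma; the only points requiring care are the justification that $\phi$ is differentiable with the stated derivative (composition of the $C^{1}$ map $f$ with an affine map) and the conversion of the inner product into a product of norms via Cauchy--Schwarz, which is what makes the Lipschitz bound directly applicable.
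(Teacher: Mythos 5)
Your proof is correct and is the standard argument for this descent lemma; the paper itself does not reprove it but cites it from Nesterov's textbook, where the proof is exactly the one you give (parametrize the segment, apply the fundamental theorem of calculus, and bound the integrand by Cauchy--Schwarz plus the Lipschitz condition to get the factor $\int_{0}^{1}ct\,dt=\tfrac{c}{2}$). No gaps.
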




	\begin{lem}\label{pouikm,l}
		Suppose $f(x)\in C^{1}\ \ (x\in \mathbb{R}^{N})$ with gradient satisfying the following Lipschitz condition
		\begin{equation}\nonumber
			\begin{aligned}
		\big\|\nabla f(x)-\nabla f(y)\big\|\le c\|x-y\|, 
		\end{aligned}
		\end{equation}
		and the set $S=\{x|\nabla f(x)=0\}$ is bounded and only has finite connected components $\{S_{1},S_{2},\dots,S_{m}\}$. Furthermore, assume there exists $ \epsilon_{1}'>0$, such that for any $i=1,2,...,m$ and $x\in\{x|0<d(x,S_{i})<{\epsilon_{1}'\}}$, it holds that $\big|f(x)-f_{i}\big|\neq 0$, where  $f_{i}=f(x)$ for $x\in S_{i}$. 
		Then for any $i=1,2,...,m$, if 	there is $ {\epsilon_{0}'}>0$ satisfying $d(x,S_{i})<{\epsilon_{0}'}$, it follows that
		\begin{equation}\nonumber\begin{aligned}
		\big\|\nabla f(x)\big\|^{2}\le{2c}\big|f(x)-f_{i}\big|.
		\end{aligned}
		\end{equation}
    \end{lem}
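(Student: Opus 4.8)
The plan is to reduce everything to a single gradient-descent step combined with the quadratic upper bound of Lemma~\ref{lem_ggg}. Fix an index $i$ and a point $x$ with $d(x,S_i)$ small. If $\nabla f(x)=0$ the claimed inequality holds trivially, so assume $\nabla f(x)\neq 0$. Replacing $f$ by $-f$ if necessary (which has the same $c$-Lipschitz gradient, the same stationary set $S$ with the same components, takes the value $-f_i$ on $S_i$, and satisfies $|{-f}(x)-(-f_i)|=|f(x)-f_i|$ and $\|\nabla(-f)\|=\|\nabla f\|$), I may assume $f(x)>f_i$; note that $x\notin S_i$ because $\nabla f(x)\neq0$, so $f(x)\neq f_i$ by the non-degeneracy hypothesis. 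The engine of the proof is that the gradient step $z:=x-\tfrac1c\nabla f(x)$ satisfies, by the upper bound of Lemma~\ref{lem_ggg} taken with $y=x$,
\begin{equation}\nonumber
f(z)\le f(x)+\nabla f(x)^{T}(z-x)+\frac{c}{2}\|z-x\|^{2}=f(x)-\frac{1}{2c}\big\|\nabla f(x)\big\|^{2}.
\end{equation}
Hence, \emph{provided} I can show $f(z)\ge f_i$, rearranging gives exactly $\|\nabla f(x)\|^{2}\le 2c\,(f(x)-f_i)=2c\,|f(x)-f_i|$, which is the desired conclusion.

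So the whole problem reduces to proving $f(z)\ge f_i$, i.e.\ that one gradient step does not overshoot the level $f_i$. I would set $\epsilon_0'=\epsilon_1'/2$. Let $y\in S_i$ be a nearest stationary point to $x$, which exists since $S$ is closed and bounded and hence $S_i$ is compact; since $\nabla f(y)=0$, the Lipschitz condition gives the crucial smallness estimate $\|\nabla f(x)\|=\|\nabla f(x)-\nabla f(y)\|\le c\,d(x,S_i)$, so that $\|z-x\|=\tfrac1c\|\nabla f(x)\|\le d(x,S_i)<\epsilon_0'$. Consequently every point on the segment from $x$ to $z$ lies within $2\epsilon_0'=\epsilon_1'$ of $S_i$, which is what lets me invoke the non-degeneracy hypothesis on that segment.

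The main obstacle, and the only genuinely delicate step, is ruling out $f(z)<f_i$. I would argue by contradiction using the intermediate value theorem along $\gamma(t)=x-\tfrac{t}{c}\nabla f(x)$, $t\in[0,1]$. If $f(z)<f_i<f(x)$, continuity gives $t^*\in(0,1)$ with $p:=\gamma(t^*)$ satisfying $f(p)=f_i$. By the estimate of the previous paragraph $d(p,S_i)<\epsilon_1'$, so the hypothesis $|f-f_i|\neq0$ on $\{0<d(\cdot,S_i)<\epsilon_1'\}$ forces $d(p,S_i)=0$, i.e.\ $p\in S_i$ (here $S_i$ closed) and $\nabla f(p)=0$. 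Then the Lipschitz condition yields $\|\nabla f(x)\|=\|\nabla f(x)-\nabla f(p)\|\le c\|x-p\|=t^*\|\nabla f(x)\|$, and since $t^*<1$ and $\nabla f(x)\neq0$ this is a contradiction. Therefore $f(z)\ge f_i$, which closes the argument; the symmetric situation $f(x)<f_i$ is already subsumed by the reduction to $-f$, so the bound $\|\nabla f(x)\|^{2}\le 2c\,|f(x)-f_i|$ holds for all $x$ with $d(x,S_i)<\epsilon_0'$.
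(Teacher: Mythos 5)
Your proof is correct, and it takes a cleaner route than the paper's. Both arguments ultimately rest on the quadratic bound of Lemma~\ref{lem_ggg} applied along the gradient direction, but the mechanics differ. The paper restricts $f$ to the line through $x_0$ parallel to $\nabla f(x_0)$, works with $g(t)=\big|f(\gamma(t))-f_i\big|$, establishes that $g'$ is $c$-Lipschitz, and then must verify that the special point $t''=-g'(0)/c$ stays inside the interval on which the line remains in the $\epsilon_1'$-neighborhood of $S_i$; this forces it to first prove, via uniform continuity of $\|\nabla f\|$ on a compact neighborhood, that $\|\nabla f(x_0)\|<c\epsilon_1'/4$, and to treat the signs $g'(0)>0$ and $g'(0)<0$ separately. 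You instead apply the descent lemma to the single explicit step $z=x-\tfrac1c\nabla f(x)$, reduce everything to showing the step does not overshoot the level $f_i$, and rule out overshoot by an intermediate-value argument: any crossing point $p$ of the level $f_i$ lies within $\epsilon_1'$ of $S_i$, so the non-degeneracy hypothesis forces $p\in S_i$, and then $\|\nabla f(x)\|\le c\|x-p\|=t^*\|\nabla f(x)\|$ with $t^*<1$ gives a contradiction. Two things your version buys: the elementary estimate $\|\nabla f(x)\|\le c\,d(x,S_i)$ (Lipschitz from the nearest stationary point) replaces the paper's uniform-continuity detour and makes the choice $\epsilon_0'=\epsilon_1'/2$ explicit; and the reduction to $-f$ handles both signs of $f(x)-f_i$ uniformly, avoiding the paper's case analysis and its somewhat delicate claim that $f(\gamma(t))-f_i$ is nonvanishing along the entire segment (which the stated hypothesis only guarantees off of $S_i$ itself). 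Your argument is a valid, and arguably tighter, substitute.
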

	\begin{lem} \citep{wang2019almost} \label{lem_summation_MDS}
		Suppose that $\{X_{n}\}\in \mathbb{R}^{N}$ is an $\mathcal{L}_2$ martingale difference sequence, and $(X_{n},\mathscr F_{n})$ is an adaptive process. Then it holds that $\sum_{k=0}^{\infty}X_k<+\infty \ a.s.,$ if 
		\begin{equation}\nonumber\begin{aligned}
		\sum_{n=1}^{\infty}\Expect(\|X_{n}\|^{2})<+\infty,	 \quad   \text{ or }  \quad  \sum_{n=1}^{\infty}\Expect\big(\|X_{n}\|^{2}\big|\mathscr{F}_{n-1}\big)<+\infty. \quad  a.s.
		\end{aligned}
		\end{equation}
	\end{lem}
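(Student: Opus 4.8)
The plan is to identify the partial sums $S_n:=\sum_{k=0}^{n}X_k$ as a (vector-valued) martingale adapted to $\{\mathscr F_n\}$, which is immediate from the defining property $\Expect(X_n\mid\mathscr F_{n-1})=0$ of a martingale difference sequence, and then to invoke the almost-sure martingale convergence theorem once the appropriate $\mathcal L_2$ control is in place. Since the convergence theorem is stated for real-valued martingales, I would apply it coordinate-wise, noting $\Expect(\|S_n\|^2)=\sum_{i}\Expect((S_n^{(i)})^2)$. The two hypotheses lead to two slightly different arguments, and I would handle them in turn.

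For the first (unconditional) condition I would exploit the $\mathcal L_2$-orthogonality of martingale differences. For $i>j$, conditioning on $\mathscr F_{i-1}$ and using that $X_j$ is $\mathscr F_{i-1}$-measurable gives $\Expect(X_i^{T}X_j)=0$, so all cross terms vanish and
\begin{equation}\nonumber
\Expect\big(\|S_n\|^2\big)=\sum_{k=0}^{n}\Expect\big(\|X_k\|^2\big)\le\sum_{k=0}^{\infty}\Expect\big(\|X_k\|^2\big)<+\infty.
\end{equation}
Thus $\{S_n\}$ is an $\mathcal L_2$-bounded martingale, and the martingale convergence theorem yields that $S_n$ converges to a finite limit almost surely, i.e. $\sum_{k=0}^{\infty}X_k<+\infty$ a.s.

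For the second (conditional) condition the unconditional second moments need not be summable, so the direct bound above fails; this is the main obstacle, and I would overcome it by localization. Introduce the predictable increasing process $A_n:=\sum_{k=1}^{n}\Expect(\|X_k\|^2\mid\mathscr F_{k-1})$ (each $A_n$ is $\mathscr F_{n-1}$-measurable) and the stopping times $\tau_m:=\inf\{n:A_{n+1}>m\}$. The crucial observation is that the truncated increments $X_k\mathbf 1_{\{k\le\tau_m\}}$ still form a martingale difference sequence, because $\mathbf 1_{\{k\le\tau_m\}}=1-\mathbf 1_{\{\tau_m\le k-1\}}$ is $\mathscr F_{k-1}$-measurable. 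Using this predictability together with the orthogonality argument above, one computes
\begin{equation}\nonumber
\Expect\big(\|S_{n\wedge\tau_m}\|^2\big)=\Expect\Big(\sum_{k=1}^{n\wedge\tau_m}\Expect(\|X_k\|^2\mid\mathscr F_{k-1})\Big)=\Expect\big(A_{n\wedge\tau_m}\big)\le m,
\end{equation}
where the last inequality holds since $A_{\tau_m}\le m$ by definition of $\tau_m$. Hence the stopped martingale $\{S_{n\wedge\tau_m}\}_n$ is $\mathcal L_2$-bounded and converges almost surely as $n\to\infty$.

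Finally I would patch the localized statements together. On the event $\{\tau_m=+\infty\}$ one has $S_{n\wedge\tau_m}=S_n$ for every $n$, so $S_n$ converges almost surely there. Since $\{A_\infty<+\infty\}=\bigcup_{m\ge1}\{\tau_m=+\infty\}$ and the hypothesis guarantees $A_\infty<+\infty$ almost surely, this union has full probability, whence $S_n$ converges almost surely and $\sum_{k=0}^{\infty}X_k<+\infty$ a.s. I note in passing that the first condition is in fact subsumed by the second, since $\sum_n\Expect(\|X_n\|^2)<+\infty$ forces $\Expect(A_\infty)<+\infty$ and hence $A_\infty<+\infty$ a.s.; nevertheless I would keep the direct $\mathcal L_2$ argument for it as it is self-contained and cleaner.
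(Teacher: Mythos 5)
Your proof is correct. Note, however, that the paper does not actually prove this lemma at all: it is stated with a citation to an external reference and used as a black box, so there is no internal proof to compare against. What you have supplied is the standard self-contained argument. For the unconditional hypothesis you identify the partial sums as an $\mathcal{L}_2$-bounded martingale via the orthogonality of martingale increments and invoke the a.s.\ martingale convergence theorem (coordinate-wise, which is legitimate since $\Expect(\|S_n\|^2)$ dominates each coordinate's second moment). For the conditional hypothesis you correctly observe that the direct bound fails and localize with the stopping times $\tau_m=\inf\{n:A_{n+1}>m\}$; the key points --- that $\mathbf 1_{\{k\le\tau_m\}}$ is $\mathscr F_{k-1}$-measurable so the stopped increments remain a martingale difference sequence, that $A_{n\wedge\tau_m}\le m$ gives uniform $\mathcal{L}_2$-boundedness of the stopped sums, and that $\{A_\infty<+\infty\}=\bigcup_m\{\tau_m=+\infty\}$ has full probability under the hypothesis --- are all handled correctly. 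This is precisely the classical local convergence theorem for square-integrable martingales (convergence a.s.\ on the set where the predictable quadratic variation is finite), and your closing remark that the first condition is subsumed by the second is also accurate. The only cosmetic issue is the index mismatch between the conclusion ($\sum_{k=0}^{\infty}X_k$) and the hypotheses (sums from $n=1$), which is harmless since a single $\mathscr F_0$-measurable finite term does not affect convergence.
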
		
	\begin{lem}\label{lem_summation}
		Suppose that $\{X_n\}\in \mathbb{R}^{N}$ is a non-negative sequence of random variables, then it holds that $\sum_{n=0}^{\infty}X_n<+\infty \ a.s.,$ if   
		$\sum_{n=0}^{\infty}\Expect\big(X_n\big)<+\infty.$   			
	\end{lem}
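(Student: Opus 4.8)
The plan is to invoke the Monotone Convergence Theorem (equivalently, Tonelli's theorem) to interchange summation and expectation, and then to use the elementary fact that a non-negative random variable with finite expectation is finite almost surely. First I would set $S_{T}=\sum_{n=0}^{T}X_{n}$ and $S=\sum_{n=0}^{\infty}X_{n}=\lim_{T\rightarrow\infty}S_{T}$. Because each $X_{n}$ is non-negative, the partial sums $\{S_{T}\}$ form a non-decreasing sequence, so the limit $S$ exists as a measurable $[0,+\infty]$-valued random variable (it is allowed to take the value $+\infty$ on some event a priori).

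Next I would apply the Monotone Convergence Theorem to the non-decreasing sequence $S_{T}\uparrow S$, obtaining
\begin{equation}\nonumber\begin{aligned}
\Expect(S)=\Expect\Big(\lim_{T\rightarrow\infty}S_{T}\Big)=\lim_{T\rightarrow\infty}\Expect(S_{T})=\lim_{T\rightarrow\infty}\sum_{n=0}^{T}\Expect(X_{n})=\sum_{n=0}^{\infty}\Expect(X_{n}),
\end{aligned}\end{equation}
where the third equality uses the linearity of expectation for finite sums. By the hypothesis of the lemma, the right-hand side is finite, hence $\Expect(S)<+\infty$.

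Finally, I would conclude that $S<+\infty$ almost surely: if the event $\{S=+\infty\}$ had strictly positive probability, then $\Expect(S)$ would be infinite, contradicting the bound just derived. Therefore $\sum_{n=0}^{\infty}X_{n}=S<+\infty$ with probability one, which is the desired claim.

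There is no genuine obstacle in this argument; the only point deserving attention is that the interchange of limit and expectation is justified \emph{solely} by the non-negativity of the summands, so that MCT applies with no additional integrability requirement on the tail. This is precisely why the non-negativity hypothesis in the statement is essential, and it is exactly what distinguishes this lemma from the martingale-difference summation result (Lemma~\ref{lem_summation_MDS}), where a square-summability condition is needed instead.
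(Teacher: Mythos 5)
Your proof is correct, and it takes a different (and more direct) route than the paper. You prove the strictly stronger fact $\Expect\big(\sum_{n=0}^{\infty}X_{n}\big)=\sum_{n=0}^{\infty}\Expect(X_{n})<+\infty$ via the Monotone Convergence Theorem and then conclude almost sure finiteness from integrability of the limit. The paper instead fixes $\delta>0$, bounds $P\big(\sum_{t=n}^{+\infty}X_{t}\geq\delta\big)$ by $\frac{1}{\delta}\Expect\big(\sum_{t=n}^{+\infty}X_{t}\big)$ via Markov's inequality, lets $n\to\infty$ so that the tail of the convergent series of expectations forces this probability to zero, and then invokes the Cauchy criterion to conclude almost sure convergence of the partial sums. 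The two arguments rest on the same underlying interchange of sum and expectation for non-negative terms (the paper uses it implicitly when writing $\Expect\big(\sum_{t=n}^{+\infty}X_{t}\big)$ as a tail of the convergent series), but yours is shorter and delivers the quantitative conclusion $\Expect(S)<+\infty$ as a byproduct, whereas the paper's tail-probability formulation is closer in spirit to the martingale-difference machinery it uses elsewhere (Lemma~\ref{lem_summation_MDS}). Your closing remark correctly identifies non-negativity as the hypothesis that makes the interchange free of any additional integrability condition.
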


	%

	\begin{lem}\label{lem_mid}
		Suppose $\{v_{n}\}$ is a sequence  generated by mSGD in  \eqref{mSGD}. Under Assumptions~ \ref{ass_noise}--\ref{assum8}, it holds that  $\sum_{n=0}^{+\infty}\Expect\big(\|v_n\|^{2}\big) < c(v_{0},\theta_{1})$,   and $\sum_{n=0}^{+\infty}\|v_n\|^{2} < +\infty\ \ a.s.$, where $c(v_{0}, \theta_{1})$ is a constant only related to $v_{0}$ and $\theta_{1}$.
	\end{lem}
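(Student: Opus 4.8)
The plan is to obtain a uniform-in-$n$ bound on the second moment of the stochastic gradient and then feed it into the first-order recursion defining $v_n$, after which both claims follow by a geometric-sum manipulation and an off-the-shelf summability lemma.

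First I would bound $\Expect\big(\|\nabla_{\theta_n}g(\theta_n,\xi_n)\|^2\big)$ by a constant depending only on $\theta_1$ and $v_0$. Splitting the stochastic gradient into its mean and a zero-mean fluctuation and using $\|a+b\|^2\le 2\|a\|^2+2\|b\|^2$ gives $\Expect\big(\|\nabla_{\theta_n}g(\theta_n,\xi_n)\|^2\big)\le 2\Expect\big(\|\nabla_{\theta_n}g(\theta_n)\|^2\big)+2\Expect\big(\|\nabla_{\theta_n}g(\theta_n)-\nabla_{\theta_n}g(\theta_n,\xi_n)\|^2\big)$. The second term is at most $2M\big(1+T(\theta_1,v_0)\big)$ by \eqref{eq_derive_bound}. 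For the first term, applying Lemma~\ref{lem_ggg} at $y=\theta$ and $x=\theta-\frac{1}{c}\nabla_\theta g(\theta)$, together with the non-negativity of $g$ (condition 1 of Assumption~\ref{ass_g}), yields the descent-lemma estimate $\|\nabla_\theta g(\theta)\|^2\le 2c\,g(\theta)$; combined with the stability bound of Lemma~\ref{lem_bound} this gives $\Expect\big(\|\nabla_{\theta_n}g(\theta_n)\|^2\big)\le 2c\,T(\theta_1,v_0)$. Hence there is a constant $B=B(\theta_1,v_0)$ with $\Expect\big(\|\nabla_{\theta_n}g(\theta_n,\xi_n)\|^2\big)\le B$ for every $n$.

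Next I would convert the recursion $v_n=\alpha v_{n-1}+\epsilon_n\nabla_{\theta_n}g(\theta_n,\xi_n)$ into a contraction for the squared norm. By Young's inequality with the free parameter tuned so that the coefficient of $\|v_{n-1}\|^2$ is exactly $\alpha$ (taking $\lambda=(1-\alpha)/\alpha$ when $\alpha\in(0,1)$; the case $\alpha=0$ is immediate), I obtain $\|v_n\|^2\le \alpha\|v_{n-1}\|^2+\frac{1}{1-\alpha}\epsilon_n^2\|\nabla_{\theta_n}g(\theta_n,\xi_n)\|^2$. Writing $a_n:=\Expect\big(\|v_n\|^2\big)$ and taking expectations gives the scalar linear recursion $a_n\le \alpha a_{n-1}+\frac{B}{1-\alpha}\epsilon_n^2$. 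Unrolling, summing over $n$, and exchanging the order of summation in the resulting double sum lets the geometric factor $\sum_{j\ge 0}\alpha^j=1/(1-\alpha)$ factor out, producing $\sum_{n=0}^{\infty}\Expect\big(\|v_n\|^2\big)\le \frac{\|v_0\|^2}{1-\alpha}+\frac{B}{(1-\alpha)^2}\sum_{k=1}^{\infty}\epsilon_k^2$. Since $\sum_k\epsilon_k^2<+\infty$ by Assumption~\ref{assum8}, the right-hand side is a finite constant $c(v_0,\theta_1)$, establishing the first claim. The almost-sure statement then follows at once: $\|v_n\|^2$ is a non-negative sequence of random variables with summable expectations, so Lemma~\ref{lem_summation} yields $\sum_{n=0}^{\infty}\|v_n\|^2<+\infty$ a.s.

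The main obstacle is the first step, namely obtaining a bound on $\Expect\big(\|\nabla_{\theta_n}g(\theta_n,\xi_n)\|^2\big)$ that is uniform in $n$ and independent of how fast $g$ grows at infinity. This is precisely where the relaxed noise condition 4) of Assumption~\ref{ass_g} (used through \eqref{eq_derive_bound}) and the stability estimate of Lemma~\ref{lem_bound} are indispensable: they control both the deterministic gradient and the noise fluctuation by $\Expect\big(g(\theta_n)\big)$, which is bounded uniformly in $n$. Once this uniform second-moment bound is secured, the remaining manipulations of the geometric recursion are routine.
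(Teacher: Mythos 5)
Your proof is correct, but it follows a genuinely different route from the paper's. The paper derives the summability of $\Expect\big(\|v_n\|^2\big)$ by telescoping the Lyapunov-type inequality \eqref{24}, whose derivation hinges on the exact identity \eqref{23} expressing $\Expect\big(\|\nabla_{\theta_i}g(\theta_i)-\nabla_{\theta_i}g(\theta_i,\xi_i)\|^2\big)$ through $\frac{1}{\epsilon_i^2}\Expect\big(\|v_i-\alpha v_{i-1}\|^2\big)$; after summation the coefficient of $\Expect\big(\|v_t\|^2\big)$ is $\tfrac12(1-\alpha^2-c\epsilon_t)$, which is bounded below by $\tfrac15(1-\alpha^2)$ once $\epsilon_t$ is small, and the remaining terms are controlled exactly as you control yours (via condition 4) of Assumption~\ref{ass_g}, Lemma~\ref{lem_bound}, and $\sum_t\epsilon_t^2<\infty$). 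You instead decouple the problem: you first establish the uniform second-moment bound $\Expect\big(\|\nabla_{\theta_n}g(\theta_n,\xi_n)\|^2\big)\le B(\theta_1,v_0)$ — using the descent-lemma consequence $\|\nabla_\theta g(\theta)\|^2\le 2c\,g(\theta)$ (valid here by non-negativity of $g$ and the Lipschitz gradient, and essentially the content of Lemma~\ref{lemiuy8} with $f^*\ge 0$) together with \eqref{eq_derive_bound} — and then treat $v_n$ as a geometrically contracting scalar recursion via Young's inequality with $\lambda=(1-\alpha)/\alpha$. Your argument is more modular and elementary (it never touches the $\epsilon_i g(\theta_{i+1})-\epsilon_{i-1}g(\theta_i)$ bookkeeping), at the cost of the extra factor $\frac{1}{(1-\alpha)^2}$ in the constant and of invoking one additional fact ($\|\nabla g\|^2\le 2cg$) that the paper does not need at this point; the paper's route produces, as a byproduct of the same inequality, the raw material it later reuses for Lemma~\ref{lem_deriv}. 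Both proofs rest on the same two pillars — the stability bound of Lemma~\ref{lem_bound} and Lemma~\ref{lem_summation} for the almost-sure conclusion — so the logical dependencies are identical.
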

	\begin{lem}\label{lem_deriv}
		Suppose $\{\theta_{n}\}$ is a sequence  generated by mSGD in  \eqref{mSGD}. Under Assumptions \ref{ass_noise}--\ref{assum8}, it holds that: for $n\geq 1,$
		\begin{align*}
		&\sum_{t=1}^{n}\epsilon_{t}\Expect\big(\|\nabla_{\theta_{t}}g(\theta_{t})\|^{2}\big)<B(v_{0}, \theta_{1})<+\infty, \qquad
		 \sum_{t=1}^{n}\epsilon_{t}\|\nabla_{\theta_{t}}g(\theta_{t})\|^{2}<+\infty.
		\end{align*}
		where $B(v_{0}, \theta_{1})>0$ is a constant only related to $v_{0}$ and $\theta_{1}$.
	\end{lem}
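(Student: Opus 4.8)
The plan is to derive a one-step descent inequality for the expected loss and sum it, reading $\sum_{t}\epsilon_{t}\Expect(\|\nabla_{\theta_{t}}g(\theta_{t})\|^{2})$ off the telescoping of $\Expect(g(\theta_{n}))$. First I would apply the descent lemma (Lemma~\ref{lem_ggg}) to the pair $(\theta_{n+1},\theta_{n})$; since $\theta_{n+1}-\theta_{n}=-v_{n}$, this gives $g(\theta_{n+1})\le g(\theta_{n})-\nabla_{\theta_{n}}g(\theta_{n})^{T}v_{n}+\tfrac{c}{2}\|v_{n}\|^{2}$. Substituting $v_{n}=\alpha v_{n-1}+\epsilon_{n}\nabla_{\theta_{n}}g(\theta_{n},\xi_{n})$ and taking conditional expectation given $\mathcal{F}_{n-1}$, the stochastic-gradient term collapses via Assumption~\ref{ass_noise} (using $\Expect_{\xi_{n}}(\nabla_{\theta_{n}}g(\theta_{n},\xi_{n}))=\nabla_{\theta_{n}}g(\theta_{n})$, exactly as in \eqref{eq_derive_bound}) into $\epsilon_{n}\|\nabla_{\theta_{n}}g(\theta_{n})\|^{2}$. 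Taking full expectation and rearranging would yield
\[\epsilon_{n}\Expect(\|\nabla_{\theta_{n}}g(\theta_{n})\|^{2})\le \Expect(g(\theta_{n}))-\Expect(g(\theta_{n+1}))-\alpha\Expect\big(\nabla_{\theta_{n}}g(\theta_{n})^{T}v_{n-1}\big)+\tfrac{c}{2}\Expect(\|v_{n}\|^{2}).\]

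The hard part will be the momentum cross term $-\alpha\Expect(\nabla_{\theta_{n}}g(\theta_{n})^{T}v_{n-1})$, which is sign-indefinite and does not vanish under conditioning because $v_{n-1}$ carries past gradients. The key observation is that $v_{n-1}=\theta_{n-1}-\theta_{n}$, so I would apply Lemma~\ref{lem_ggg} a second time, now to the pair $(\theta_{n-1},\theta_{n})$, to obtain the lower bound $\nabla_{\theta_{n}}g(\theta_{n})^{T}v_{n-1}\ge g(\theta_{n-1})-g(\theta_{n})-\tfrac{c}{2}\|v_{n-1}\|^{2}$. This converts the offending term into $-\alpha\nabla_{\theta_{n}}g(\theta_{n})^{T}v_{n-1}\le \alpha(g(\theta_{n})-g(\theta_{n-1}))+\tfrac{\alpha c}{2}\|v_{n-1}\|^{2}$, i.e.\ a second telescoping difference plus a controllable momentum contribution (valid for $n\ge2$; the $n=1$ term is simply the constant $-\alpha\nabla_{\theta_{1}}g(\theta_{1})^{T}v_{0}$).

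Summing the rearranged inequality from $n=1$ to $N$, both telescopes collapse: the primary one leaves $g(\theta_{1})-\Expect(g(\theta_{N+1}))\le g(\theta_{1})$ since $g\ge0$, while the secondary one leaves $\alpha\Expect(g(\theta_{N}))-\alpha g(\theta_{1})$. The surviving positive end-point $\alpha\Expect(g(\theta_{N}))$ is precisely where the stability result enters: Lemma~\ref{lem_bound} bounds it uniformly in $N$ by $\alpha T(\theta_{1},v_{0})$. The leftover momentum sums $\tfrac{c}{2}\sum\Expect(\|v_{n}\|^{2})$ and $\tfrac{\alpha c}{2}\sum\Expect(\|v_{n-1}\|^{2})$ are finite by Lemma~\ref{lem_mid}. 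Collecting all these constants produces a bound $B(v_{0},\theta_{1})$ independent of $N$, which establishes the first claim.

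Finally, the almost-sure statement follows immediately: each summand $\epsilon_{t}\|\nabla_{\theta_{t}}g(\theta_{t})\|^{2}$ is non-negative and $\sum_{t}\Expect(\epsilon_{t}\|\nabla_{\theta_{t}}g(\theta_{t})\|^{2})\le B(v_{0},\theta_{1})<+\infty$, so Lemma~\ref{lem_summation} gives $\sum_{t}\epsilon_{t}\|\nabla_{\theta_{t}}g(\theta_{t})\|^{2}<+\infty$ a.s. The only delicate point throughout is the cross term; the twin-telescoping device, together with the a priori $\Expect(g(\theta_{n}))$ bound from Lemma~\ref{lem_bound}, is what renders the momentum contribution harmless.
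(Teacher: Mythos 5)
Your proof is correct, but it follows a genuinely different route from the paper's. The paper starts from the exact mean-value expansion $g(\theta_{t+1})-g(\theta_{t})=\nabla_{\theta_{\zeta_{t}}}g(\theta_{\zeta_{t}})^{T}(\theta_{t+1}-\theta_{t})$ and then fully unrolls the momentum cross term via the recursion in \eqref{9i8u7y6}, writing $\nabla_{\theta_{t}}g(\theta_{t})^{T}v_{t}$ as a geometric sum over all past steps; summing over $t$ produces the identity \eqref{42}, in which $\epsilon_{t}\Expect(\|\nabla_{\theta_{t}}g(\theta_{t})\|^{2})$ appears with weight $\tfrac{1-\alpha^{n-t+1}}{1-\alpha}\geq 1$ and the Lipschitz error terms form a double sum controlled by Lemma~\ref{lem_mid}; the endpoint is discarded simply via $g\geq 0$. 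You instead keep the recursion to depth one and dispose of the sign-indefinite term $-\alpha\nabla_{\theta_{n}}g(\theta_{n})^{T}v_{n-1}$ by a second, backward application of Lemma~\ref{lem_ggg} to the pair $(\theta_{n-1},\theta_{n})$, which converts it into a second telescoping difference $\alpha(g(\theta_{n})-g(\theta_{n-1}))$ plus $\tfrac{\alpha c}{2}\|v_{n-1}\|^{2}$; the price is the surviving endpoint $\alpha\Expect(g(\theta_{N}))$, which you correctly close with the stability bound of Lemma~\ref{lem_bound}. Your twin-telescoping argument is shorter and avoids the geometric-series bookkeeping, and it introduces no new hypotheses since Lemma~\ref{lem_bound} already underlies Lemma~\ref{lem_mid}; what the paper's heavier unrolling buys is the explicit decomposition \eqref{42}, which is reused verbatim as the parts $(A)$--$(D)$ in the proof of Theorem~\ref{thm_converg1}, so the paper amortizes the extra work. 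Two minor points to make explicit if you write this up: the identity $v_{n-1}=\theta_{n-1}-\theta_{n}$ holds only for $n\geq 2$ (you already isolate the $n=1$ constant $-\alpha\nabla_{\theta_{1}}g(\theta_{1})^{T}v_{0}$, which is fine), and the collapse of the stochastic-gradient term uses $\Expect_{\xi_{n}}(\nabla_{\theta}g(\theta,\xi_{n}))=\nabla_{\theta}g(\theta)$, i.e.\ differentiation under the expectation in Assumption~\ref{ass_noise}, which the paper also uses implicitly (e.g.\ in the proof of \eqref{27}), so you are on the same footing.
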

	
	\begin{lem}\label{lem_conve}
		Suppose   $\{\theta_n\}$ is a sequence generated by mSGD in  \eqref{mSGD} such that $\{g(\theta_n)\}$ converges a.s.. If Assumptions \ref{ass_noise}--\ref{assum8} hold, and  
		\begin{equation}\label{yuandian}\begin{aligned}&g(\theta_{n+1})\le \zeta_{n}-b\sum_{t=1}^n\epsilon_t\big\|\nabla g(\theta_t)\big\|^2 \ a.s.,\end{aligned}\end{equation}where $\zeta_{n}$ is a random variable such that $\lim_{n\rightarrow +\infty}\zeta_{n}=\zeta<\infty \ a.s.$, and  $b$ is a positive constant, then there exists a connected component $J^{*}$ of $J:=\{\theta|\nabla_{\theta}g(\theta)=0\}$, such that
		\begin{equation}\nonumber\begin{aligned}
		d(\theta_{n},J^{*})\stackrel{a.s.}{\longrightarrow}0.\end{aligned}\end{equation}
	\end{lem}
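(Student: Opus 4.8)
The plan is to convert the descent hypothesis into a summable gradient energy, use that energy together with the divergence of the step sizes to force the gradients to vanish along the whole trajectory, and then exploit the vanishing increments to confine the limit set to a single connected component of $J$.

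First I would derive a global energy bound. Since $g$ is non-negative (condition 1) of Assumption~\ref{ass_g}), rearranging \eqref{yuandian} gives $b\sum_{t=1}^{n}\epsilon_{t}\|\nabla g(\theta_{t})\|^{2}\le \zeta_{n}-g(\theta_{n+1})\le \zeta_{n}$ almost surely; letting $n\to\infty$ and using $\zeta_{n}\to\zeta<\infty$ yields $\sum_{t=1}^{\infty}\epsilon_{t}\|\nabla g(\theta_{t})\|^{2}<\infty$ a.s. Applying Lemma~\ref{lem4} with $\alpha_{t}=\epsilon_{t}$ and $\beta_{t}=\|\nabla g(\theta_{t})\|^{2}$, and recalling $\sum_{t}\epsilon_{t}=\infty$ from Assumption~\ref{assum8}, I get $\liminf_{n}\|\nabla g(\theta_{n})\|=0$ a.s.

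The heart of the argument is to promote this $\liminf$ to a full limit. From Lemma~\ref{lem_mid} we have $\sum_{n}\|v_{n}\|^{2}<\infty$ a.s., hence the increments $\|\theta_{n+1}-\theta_{n}\|=\|v_{n}\|\to 0$ a.s. I would argue by contradiction: if $\limsup_{n}\|\nabla g(\theta_{n})\|=2\delta>0$, then, the $\liminf$ being zero, the scalar sequence $\|\nabla g(\theta_{n})\|$ crosses from below $\delta/2$ to above $\delta$ infinitely often, producing infinitely many disjoint index blocks $[m_{k},m_{k}']$ on whose interior $\|\nabla g(\theta_{t})\|\ge \delta/2$. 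On each block the Lipschitz condition (condition 3) of Assumption~\ref{ass_g}) forces a macroscopic displacement, $\|\theta_{m_{k}'}-\theta_{m_{k}}\|\ge \frac{1}{c}\big(\|\nabla g(\theta_{m_{k}'})\|-\|\nabla g(\theta_{m_{k}})\|\big)\ge \frac{\delta}{2c}$. The aim is to show each such crossing consumes a uniformly positive amount of the energy $\sum_{t}\epsilon_{t}\|\nabla g(\theta_{t})\|^{2}$, so that infinitely many crossings contradict the finiteness just established; this forces $\|\nabla g(\theta_{n})\|\to 0$ a.s.

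Finally I would assemble the conclusion. By continuity of $\nabla g$, every limit point $\theta^{*}$ of $\{\theta_{n}\}$ then satisfies $\nabla g(\theta^{*})=0$, so the limit set $L$ of the trajectory lies in $J$; the vanishing increments make $L$ connected, and $g(\theta_{n})\to g_{\infty}:=\lim_{n}g(\theta_{n})$ with continuity of $g$ force $g\equiv g_{\infty}$ on $L$. A connected subset of $J$ lies in a single connected component, so there is a component $J^{*}$ with $L\subseteq J^{*}$, and $d(\theta_{n},L)\to 0$ gives $d(\theta_{n},J^{*})\to 0$ a.s. (existence of limit points, i.e.\ boundedness of the trajectory on the relevant a.s.\ event, is what is used in this last step). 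I expect the main obstacle to be the energy accounting in the crossing step: because $v_{n}$ aggregates past stochastic gradients through the momentum, bounding the cost of a crossing from below requires splitting $\nabla g(\theta_{t},\xi_{t})=\nabla g(\theta_{t})+\big(\nabla g(\theta_{t},\xi_{t})-\nabla g(\theta_{t})\big)$, controlling the martingale-noise part through \eqref{eq_derive_bound} and Lemma~\ref{lem_summation_MDS}, and dominating the exponentially weighted true-gradient part, so that the displacement across a block is comparable to $\sum_{t}\epsilon_{t}\|\nabla g(\theta_{t})\|$ up to vanishing terms.
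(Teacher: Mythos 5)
Your proposal is sound in outline but follows a genuinely different route from the paper after the common first step. You both begin by extracting $\sum_{t}\epsilon_{t}\|\nabla g(\theta_{t})\|^{2}<\infty$ a.s.\ from \eqref{yuandian} and the non-negativity of $g$, and hence $\liminf_{n}\|\nabla g(\theta_{n})\|=0$ (the paper does this by direct contradiction with $\sum_t\epsilon_t=\infty$; your appeal to Lemma~\ref{lem4} is equivalent). From there you diverge: you upgrade the $\liminf$ to the full limit $\|\nabla g(\theta_{n})\|\to 0$ via a Bertsekas--Tsitsiklis-style crossing argument, whereas the paper never proves full gradient convergence. Instead it extracts a single accumulation point $\theta''\in J$, sets $M=\lim_n g(\theta_n)$ and $A=\{\theta: g(\theta)=M\}$, and shows $\theta_n$ cannot oscillate between the component $J^*\ni\theta''$ and $A\setminus J^*$ because these can be separated by open sets at positive distance while $\|\theta_{n+1}-\theta_n\|=\|v_n\|\to 0$ (Lemma~\ref{lem_mid}). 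Your route buys the stronger and cleaner intermediate fact $\nabla g(\theta_n)\to 0$, at the price of the energy accounting you correctly identify as the crux: with momentum, the displacement over a block $[m_k,m_k']$ is $\sum_{t}v_t$, whose true-gradient part carries weights $\sum_{s\ge t}\alpha^{s-t}\le(1-\alpha)^{-1}$ plus a geometric tail of pre-block terms (controlled since $\epsilon_i\|\nabla g(\theta_i,\xi_i)\|\le\|v_i\|+\alpha\|v_{i-1}\|\to 0$), and whose noise part is a martingale sum with weights depending on the random endpoint $m_k'$, so you need an Abel-summation step on top of Lemma~\ref{lem_summation_MDS} rather than a direct application; this is all completable but is the part you have only sketched. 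The paper's route avoids gradient convergence entirely but leans on the level-set claim that $g(\theta_n)\to M$ forces $\theta_n\to A$, and on compactness of the relevant sets. Note finally that both arguments need boundedness of the trajectory (or of $J$) to produce limit points and to make the separation/connectedness step work; you flag this honestly, and the paper's own proof uses it implicitly in Step 2, so this is a shared caveat rather than a defect of your approach.
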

    \subsection{Proof Outline of Theorems 1 and 2 }\label{pf_outline1}
	{The proof is in light of the Lyapunov method. We aim to prove that $g(\theta_{n})$ is  convergent a.s., and then to prove $\nabla_{\theta_{n}}g(\theta_{n})\rightarrow 0 \ a.s.$ With these two results, we are able to get  $\theta_{n}\rightarrow J^{*}\ a.s.$
	In the following, we provide  the proof outline to show how to obtain the provided results of mSGD.
	
    Step 1: We prove mSGD is a stable algorithm, i.e.,  $\Expect(g(\theta_{n}))<K<+\infty, \forall \ n$, in Lemma \ref{lem_bound}. The idea is to prove that a weighted sum of the loss function value, i.e.,   
    \begin{equation}\nonumber\begin{aligned}
		U_{n}:=\sum_{t=1}^{n}\big(1/(2-\alpha)\big)^{n-t}\Expect\big(1+g(\theta_{t+1})\big)\end{aligned}\end{equation} is bounded through a recursion formula (a rough form)
		\begin{equation}\nonumber\begin{aligned}
		U_{n}-U_{n-1}\le A\alpha^{n}+\underbrace{B\sum_{t=1}^{n-1}\big(1/(2-\alpha)\big)^{n-t}\epsilon_{t}^{2}\Expect\big\|\nabla_{\theta_{t}}g(\theta_{t},\xi_{t})\big\|^{2}}_{I}\ (A,\ B \ are \ two\ constants).\end{aligned}\end{equation}Then we apply Assumption \ref{ass_g} 4) to $I$ and then obtain
		\begin{equation}\nonumber\begin{aligned}
		U_{n}-U_{n-1}&\le A\alpha^{n}+\underbrace{B\sum_{t=1}^{n-1}\big(1/(2-\alpha)\big)^{n-t}\epsilon_{t}^{2}\Expect\big(1+g(\theta_{t})\big)}_{R}\ (A,\ B \ are \ two\ constants).
		\end{aligned}\end{equation} Combining  $U_{n-1}$ and $R$ leads to
		\begin{equation}\nonumber\begin{aligned}
		F_{n}-F_{n-1}&\le A\alpha^{n}+B' \bigg(\frac{1}{2-\alpha}\bigg)^{n}\ (A,\ B' \ are \ two\ constants),
		\end{aligned}\end{equation}where
		\begin{equation}\nonumber\begin{aligned}
	F_{n}:=\sum_{t=1}^{n}\bigg(\frac{1}{2-\alpha}\bigg)^{n-t}Z(t+1)\Expect\bigg(e_{t+1}^{(n)}\big(1+g(\theta_{t+1})\big)\bigg)\end{aligned}\end{equation} and
		\begin{equation}
		\nonumber
		\begin{aligned}
		Z(t)=\prod_{k=t}^{+\infty}(1+M_{0}\epsilon_{k}^{2})=(1+M_{0}\epsilon_{t}^{2})\prod_{k=t+1}^{+\infty}(1+M_{0}\epsilon_{k}^{2})=(1+M_{0}\epsilon_{t}^{2})Z(t+1).
		\end{aligned}
		\end{equation} 
		Thus, we are able to obtain $E(g(\theta_{n}))<F_{n-1}\le A\sum_{t=1}^{+\infty} \alpha^{t}+B\sum_{t=1}^{+\infty}\big(1/(2-\alpha)\big)^{t}<+\infty$.

	Step 2: From Lemma \ref{lem_bound} and the condition $\sum_{t=1}^{+\infty}\epsilon_{n}^{2}<+\infty$, we are able to prove that  $\sum_{t=1}^{n}\|v_{t}\|^{2}$ and $\sum\epsilon_{t}\|\nabla_{\theta_{t}}g(\theta_{t})\|^{2}$ are convergent a.s. respectively, as stated in Lemma \ref{lem_mid} and Lemma \ref{lem_deriv}.
	
	Step 3: We divide $g(\theta_{n})$ into three terms \begin{equation}\nonumber\begin{aligned}g(\theta_{n})=\sum_{t=1}^{n}A(n)\|v_{t}\|^{2}+\sum_{t=1}^{n}B_{n}\epsilon_{t}\|\nabla_{\theta_{n}}g(\theta_{n})\|^{2}+\sum_{t=1}^{n}C_{n}^{T}(\nabla_{\theta_{n}}g(\theta_{n},\xi_{n})-\nabla_{\theta_{n}}g(\theta_{n})).\end{aligned}\end{equation} From Lemma \ref{lem_mid} and Lemma \ref{lem_deriv}, we are able to prove that $\sum_{t=1}^{n}A(n)\|v_{t}\|^{2}+\sum_{t=1}^{n}B_{n}\epsilon_{t}\|\nabla_{\theta_{n}}g(\theta_{n})\|^{2}$ is convergent a.s.. From the  convergence theorem for martingale-difference sum (Lemma \ref{lem_summation_MDS}), we  prove that $\sum_{t=1}^{n}C_{n}^{T}(\nabla_{\theta_{n}}g(\theta_{n},\xi_{n})-\nabla_{\theta_{n}}g(\theta_{n}))$ is convergent a.s. Then we prove $g(\theta_{n})$ is convergent a.s. 
	
	Step 4: By  Lemma \ref{lem_conve} and  the convergence of $g(\theta_{n})$ in Step 3, we get $\theta_{n}\rightarrow J^{*}$ a.s..
	
	Step 5: After the proof of the convergence of mSGD, we analyze the iterates of $F_{n}$. Then under  a new assumption $\liminf_{d(\theta,J_{i})\rightarrow 0}\|\nabla_{\theta}g(\theta)\|^{2}/\big(g(\theta)-g_{i}\big)\geq s>0$, we are able to obtain the convergent rate of mSGD.
	}

	\subsection{Proof of Lemma~\ref{pouikm,l}}\label{qappend_pf_lem4e}
    First we construct a closed and bounded set $S$ which satisfies $S\supset \bigcup_{i=1}^{m} S_{i}$. Since $\|\nabla f(x)\|\ (x\in S)$ is a continuous function on a closed set,    $\|\nabla f(x)\|\ (x\in S)$ is a uniformly continuous function. Then $\exists \  \epsilon_{2}'>0$, $\forall \ S_{i} $, if $d(x,S_{i})<\epsilon_{2}'$, there is $\|\nabla f(x)\|<c\epsilon_{1}'/4$. We assign $\epsilon_{0}=\min\{\epsilon_{1}'/4,\epsilon_{2}'\}$.
    
    Let $S'_{i}=\{x|d(x,S_{i})<\epsilon_{1}'\}$, $S''_{i}=\{x|d(x,S_{i})<\epsilon_{0}'\}$. Since   $d(x,S_{i})$ is a continuous function, { $\forall x_{0}\in S''_{i}$}, {we can always find a straight line $l_{0}$   paralle to $\nabla f(x_{0})$ and passing through $x_{0}$, defined  as 
    \begin{equation}\nonumber\begin{aligned}
    l_{0}:\ x=x_{0}+\frac{\nabla f(x_{0})}{\|\nabla f(x_{0})\|}t\ (t\in \mathbb{R}).
    \end{aligned}\end{equation} From $S_{i}''\subset S_{i}'$, we know  $x_{0}\in S_{i}'$.   Since $S_{i}'$ is an open set,  there exists $ \alpha_{0}<0<\beta_{0}$, such that $x_{0}+t(\nabla f(x_{0})/\|\nabla f(x_{0})\|)\in S_{i}'\ (\alpha_{0}<t<\beta_{0})$, and
    \begin{equation}\nonumber\begin{aligned}
    &\alpha_{0}=\sup_{t<0}\Bigg(x_{0}+t\bigg(\frac{\nabla f(x_{0})}{\|\nabla f(x_{0})\|}\bigg)\Bigg)\notin S_{i}'
    \\&\beta_{0}=\inf_{t>0}\Bigg(x_{0}+t\bigg(\frac{\nabla f(x_{0})}{\|\nabla f(x_{0})\|}\bigg)\Bigg)\notin S_{i}'.
    \end{aligned}\end{equation}


   Define function
    \begin{equation}\nonumber
        g(t)=\Bigg|f\bigg(x_{0}+\frac{\nabla f(x_{0})}{\|\nabla f(x_{0})\|}t\bigg)-f_{i}\Bigg|\ \ (t\in(\alpha_{0},\beta_{0})).
\end{equation}
   Since $x_{0}+t(\nabla f(x_{0})/\|\nabla f(x_{0})\|)\in S_{i}'\ (\alpha_{0}<t<\beta_{0})$, it holds that 
    \begin{equation}\nonumber
        f\bigg(x_{0}+\frac{\nabla f(x_{0})}{\|\nabla f(x_{0})\|}t\bigg)-f_{i}\neq 0.
\end{equation}
So $g(t)\in C^{1}$ ($t\in(\alpha_{0},\beta_{0})$). Then for $\forall \ t',\ t''\in (\alpha_{0},\beta_{0})$, from Newton-Leibniz formula, it follows that  }
	\begin{equation}\label{ozmbv}
	\begin{aligned}
	g(t')-g(t'')={\int_{t''}^{t'}g'(x)dx=\int_{t''}^{t'}\big(g'(x)-g'(t')+g'(t')\big)dx=\int_{t''}^{t'}\big(g'(x)-g'(t')\big)dx+\int_{t''}^{t'}g'(t')dx}.
	\end{aligned}
	\end{equation}
Next we will prove $g'(x)$ satisfies the Lipschitz  condition. According to the definition of $S_i^{'}$,   $f(x_{0}+t\nabla f(x_{0})/\|\nabla f(x_{0})\|)-f_{i}$ keeps the same sign over $ t\in(\alpha_{0},\beta_{0})$. Thus,   $\forall \ \tau_{1},\ \tau_{2}\in(\alpha_{0},\beta_{0})$, it holds that
	\begin{equation}\nonumber
	\begin{aligned}
	&\big|g'(\tau_{1})-g'(\tau_{2})\big|=\Bigg|\bigg(\frac{\nabla f(x_{0})}{\|\nabla f(x_{0})\|}\bigg)^{T}\Bigg(\nabla f\bigg(x_{0}+\tau_{1}\frac{\nabla f(x_{0})}{\|\nabla f(x_{0})\|}\bigg)-\nabla f\bigg(x_{0}+\tau_{2}\frac{\nabla f(x_{0})}{\|\nabla f(x_{0})\|}\bigg)\Bigg)\Bigg|\\&\le \bigg\|\frac{\nabla f(x_{0})}{\|\nabla f(x_{0})\|}\bigg\|\Bigg\|\nabla f\bigg(x_{0}+\tau_{1}\frac{\nabla f(x_{0})}{\|\nabla f(x_{0})\|}\bigg)-\nabla f\bigg(x_{0}+\tau_{2}\frac{\nabla f(x_{0})}{\|\nabla f(x_{0})\|}\bigg)\Bigg\|\\&=\Bigg\|\nabla f\bigg(x_{0}+\tau_{1}\frac{\nabla f(x_{0})}{\|\nabla f(x_{0})\|}\bigg)-\nabla f\bigg(x_{0}+\tau_{2}\frac{\nabla f(x_{0})}{\|\nabla f(x_{0})\|}\bigg)\Bigg\|.
	\end{aligned}
	\end{equation}From the Lipschitz   condition of $\nabla f$, we have that
	\begin{equation}\nonumber
	\begin{aligned}
	&\big|g'(\tau_{1})-g'(\tau_{2})\big|\le \Bigg\|\nabla f\bigg(x_{0}+\tau_{1}\frac{\nabla f(x_{0})}{\|\nabla f(x_{0})\|}\bigg)-\nabla f\bigg(x_{0}+\tau_{2}\frac{\nabla f(x_{0})}{\|\nabla f(x_{0})\|}\bigg)\Bigg\|\le c|\tau_{1}-\tau_{2}|\bigg\|\frac{\nabla f(x_{0})}{\|\nabla f(x_{0})\|}\bigg\|\\&=c|\tau_{1}-\tau_{2}|.
	\end{aligned}
	\end{equation}From the above analsis, we obtain the Lipschitz   condition of $g'(x)$, that is, $\forall\ \tau_{1},\ \tau_{2}$, there is $|g'(\tau_{1})-g'(\tau_{2})|\le c|\tau_{1}-\tau_{2}|$. By using the absolute value inequality, we get that $-c|\tau_{1}-\tau_{2}|\le g'(\tau_{1})-g'(\tau_{2})\le c|\tau_{1}-\tau_{2}| $. Then it follows from  \eqref{ozmbv} that
	\begin{equation}\nonumber
	\begin{aligned}
	&g(t')-g(t'')=\int_{t''}^{t'}\big(g'(x)-g'(t')\big)dx+\int_{t''}^{t'}g'(t')dx\geq \int_{t''}^{t'}-c|x-t'|dx+\int_{t''}^{t'}g'(t')dx\\&=-\frac{1}{2c}(t'-t'')^{2}+g'(t')(t'-t''),
	\end{aligned}
	\end{equation}Let $t'=0, t''=-g'(0)/c$. {So $t'=0\in(\alpha_{0},\beta_{0})$. Next, we will prove  $t''\in(\alpha_{0},\beta_{0})$. We separate it into two cases. First, we assume that $g'(0)>0$. In this case, because
	\begin{equation}\nonumber\begin{aligned}
    &\alpha_{0}=\sup_{t<0}\Bigg(x_{0}+t\bigg(\frac{\nabla f(x_{0})}{\|\nabla f(x_{0})\|}\bigg)\Bigg)\notin S_{i}',
    \end{aligned}\end{equation}
	we just need to prove
	\begin{equation}\label{,.kjmnhgbvf}
	\begin{aligned}
	&x_{0}+t_{0}\frac{f(x_{0})}{\|\nabla f(x_{0})\|}\in S_{i}', \forall t_{0}\in\Big(-\frac{g'(0)}{c},0\Big).
	\end{aligned}
	\end{equation}Note that 
	\begin{equation}\nonumber
	\begin{aligned}
	&d\bigg(x_{0}+t_{0}\frac{f(x_{0})}{\|\nabla f(x_{0})\|},S_{i}\bigg)\le d(x_{0},S_{i})+\bigg\|t_{0}\frac{f(x_{0})}{\|\nabla f(x_{0})\|}\bigg\|\le \epsilon_{0}'+|t_{0}|.
	\end{aligned}
	\end{equation}From $x_{0}\in S_{i}''$, we get $\|f(x_{0})\|<c\epsilon_{1}'/4$ and $d(x_{0},S_{i})< \epsilon_{1}'$. Then we have
	\begin{equation}\nonumber
	\begin{aligned}
	&d\bigg(x_{0}+t_{0}\frac{f(x_{0})}{\|\nabla f(x_{0})\|},S_{i}\bigg)\le \epsilon_{0}'+|t_{0}|\le \epsilon_{0}'+\Big|\frac{g'(0)}{c}\Big|= \epsilon_{0}'+\frac{\|\nabla f(x_{0})\|}{c}\le \epsilon_{0}'+\frac{\epsilon_{1}'}{4}\le \frac{\epsilon_{1}'}{2}<\epsilon_{1}'.
	\end{aligned}
	\end{equation}Thus, \eqref{,.kjmnhgbvf} holds, meaning when $g'(0)>0$, $t''\in(\alpha_{0},\beta_{0})$. 
	
	Secondly, when $g'(0)<0$, we can also prove $t''\in(\alpha_{0},\beta_{0})$. It follows that
	\begin{equation}\nonumber
	\begin{aligned}
	&g(0)=g(t')\geq g(t')-g(t'')\geq-\frac{1}{2c}(t'-t'')^{2}+g'(t')(t'-t'')=-\frac{1}{2c}\big(g'(0)\big)^{2}+\frac{1}{c}\big(g'(0)\big)^{2}=\frac{1}{2c}\big(g'(0)\big)^{2}\\&=\frac{1}{2c}\Bigg(\bigg(\frac{\nabla f(x_{0})}{\|\nabla f(x_{0})\|}\bigg)^{T}\nabla f(x_{0})\Bigg)^{2}=\frac{1}{2c}\|\nabla f(x_{0})\|^{2}.
	\end{aligned}
	\end{equation}That is
	\begin{equation}\nonumber
	\begin{aligned}
	&\big|f(x_{0})-f_{i}\big|\geq \frac{1}{2c}\|\nabla f(x_{0})\|^{2}.
	\end{aligned}
	\end{equation}}Because of the arbitrariness of $x_{0}$, we concludes that $\exists \epsilon_{0}'=\min\{\epsilon_{1}'/4,\epsilon_{2}'\}$, $\forall S_{i}\ (i=1,2,...,m)$, if $d(x,S_{i})<\epsilon_{0}'$, there is
		\begin{equation}\nonumber\begin{aligned}
		\big\|\nabla f(x)\big\|^{2}\le {2c}\big|f(x)-f_{i}\big|.
		\end{aligned}
		\end{equation}

	\subsection{Proof of Lemma~\ref{lem_summation}}
	For $\forall \delta>0$, we have
	\begin{equation*} 
	\begin{aligned}
	&P\Bigg(\bigcap_{n=1}^{+\infty}\bigcup_{m=n}^{+\infty}\bigg(\sum_{t=n}^{m}X_{t}\geq\delta\bigg)\Bigg)\\
	&=\lim_{n\rightarrow+\infty}P\Bigg(\bigcup_{m=n}^{+\infty}\bigg(\sum_{t=n}^{m}X_{t}\geq\delta\bigg)\Bigg)=\lim_{n\rightarrow+\infty}P\Bigg(\sum_{t=n}^{+\infty}X_{t}\geq\delta\Bigg)
	\leq\lim_{n\rightarrow+\infty}\frac{1}{\delta}\Expect\bigg(\sum_{t=n}^{+\infty}X_{t}\bigg),
	\end{aligned}
	\end{equation*}
	where the last inequality is due to Markov inequality.
	Since $\sum_{t=1}^{\infty}\Expect\big(X_{t}\big)<\infty$, it follows that
	\begin{equation} 
	\begin{aligned}
	P\Bigg(\bigcap_{n=1}^{+\infty}\bigcup_{m=n}^{+\infty}\bigg(\sum_{t=n}^{m}X_{t}\geq\delta\bigg)\Bigg)\leq\lim_{n\rightarrow+\infty}\frac{1}{\delta}\Expect\bigg(\sum_{t=n}^{+\infty}X_{t}\bigg)=0.
	\end{aligned}
	\end{equation}
	By Cauchy's  convergence  test, we have
	$			\sum_{t=1}^{\infty}X_{t}<\infty,\quad   a.s.$

	\subsection{Proof of Lemma~\ref{lem_bound}}\label{pf_lem_bound}
	
	Recall the  mSGD algorithm in \eqref{mSGD}
	\begin{align}
	v_n&=\alpha v_{n-1}+\epsilon_{n} \nabla_{\theta_n} g(\theta_{n} ,\xi_{n})\label{msgd_moment}\\ 
	\theta_{n+1}&=\theta_{n}-v_n.\label{msgd_estimate}
	\end{align}
	Equation \eqref{msgd_moment}  is equivalent to
	\begin{equation}\nonumber
	\begin{aligned}
	v_n&=\alpha v_{n-1}+\epsilon_{n} \nabla_{\theta_n} g(\theta_{n})+\epsilon_{n}\big(\nabla_{\theta_n} g(\theta_{n} ,\xi_{n})-\nabla_{\theta_n} g(\theta_{n})\big).
	\end{aligned}
	\end{equation}
	Under Assumption~\ref{ass_g} 2), it follows from 
	Lemma \ref{lem_ggg} that
	\begin{equation}\label{expansion}
	\begin{aligned}
	&-\nabla_{\theta_{t}}g(\theta_{t})^{\sf T}v_{t}-\frac{c}{2}\|v_{t}\|^{2}\le g(\theta_{t+1})-g(\theta_{t})\le-\nabla_{\theta_{t}}g(\theta_{t})^{\sf T}v_{t}+\frac{c}{2}\|v_{t}\|^{2}.\end{aligned} \end{equation}In this subsection, we just use the right side of \eqref{expansion}. The left side   will be used in the next subsection.
	Consider $\nabla_{\theta_{t}}g(\theta_{t})^{\sf T}v_{t}$ in the following
	\begin{equation}\nonumber
	\begin{aligned}
	\nabla_{\theta_{t}}g(\theta_{t})^{\sf T}v_{t}&=(\nabla_{\theta_{t}}g(\theta_{t}))^{\sf T}(\alpha v_{t-1}+\epsilon_{t}\nabla_{\theta_{n}}g(\theta_{t},\xi_{t}))\\
	&=\alpha(\nabla_{\theta_{t-1}}g(\theta_{t-1})+\nabla_{\theta_{t}}g(\theta_{t})-\nabla_{\theta_{t-1}}g(\theta_{t-1}))^{\sf T}v_{t-1}+\epsilon_{t}\nabla_{\theta_{t}}g(\theta_{t})^{\sf T}\nabla_{\theta_{t}}g(\theta_{t},\xi_{t})\\
	&=\alpha\nabla_{\theta_{t-1}}g(\theta_{t-1})^{\sf T}v_{t-1}+\alpha(\nabla_{\theta_{t}}g(\theta_{t})-\nabla_{\theta_{t-1}}g(\theta_{t-1}))^{T}v_{t-1}+\epsilon_{t}\nabla_{\theta_{t}}g(\theta_{t})^{\sf T} \nabla_{\theta_{t}}g(\theta_{t},\xi_{t})
	\end{aligned}
	\end{equation}
	Recursively applying the above equation yields \begin{equation}\label{9i8u7y6}\begin{aligned}&\nabla_{\theta_{t}}g(\theta_{t})^{\sf T}v_{t}=\alpha^{t-1}\nabla_{\theta_{1}}g(\theta_{1})^{\sf T}v_{1}+\sum_{i=1}^{t-1}\alpha^{t-i}(\nabla_{\theta_{i}}g(\theta_{i})-\nabla_{\theta_{i-1}}g(\theta_{i-1}))^{T}v_{i-1}\\&+\sum_{i=2}^{t}\alpha^{t-i}\epsilon_{i}\nabla_{\theta_{i}}g(\theta_{i})^{\sf T} \nabla_{\theta_{i}}g(\theta_{i},\xi_{i}).\end{aligned}\end{equation}
	By substituting the above equation into \eqref{expansion}   and noting  $-(\nabla_{\theta_{i}}g(\theta_{i})-\nabla_{\theta_{i-1}}g(\theta_{i-1}))^{\sf T}v_{i-1}\le\big\|\nabla_{\theta_{i}}g(\theta_{i})-\nabla_{\theta_{i-1}}g(\theta_{i-1})\big\|\|v_{i-1}\|\le c\|v_{t-1}\|^{2}$, we obtain
	\begin{align}\label{19}
	\begin{split}
	&g(\theta_{t+1})-g(\theta_{t})\\
	\le&-\alpha^{t-1}\nabla_{\theta_{1}}g(\theta_{1})^{\sf T}v_{1}-\sum_{i=2}^{t}\alpha^{t-i}\epsilon_{i}\nabla_{\theta_{i}}g(\theta_{i})^{\sf T} \nabla_{\theta_{i}}g(\theta_{i},\xi_{i})+\frac{c}{2}\|v_{t}\|^{2} \\
	&-\sum_{i=1}^{t-1}\alpha^{t-i}(\nabla_{\theta_{t}}g(\theta_{t})-\nabla_{\theta_{t-1}}g(\theta_{t-1}))^{T}v_{t-1}\\
	<&-\alpha^{t-1}\nabla_{\theta_{1}}g(\theta_{1})^{\sf T}v_{1}-\sum_{i=2}^{t}\alpha^{t-i}\epsilon_{i}\nabla_{\theta_{i}}g(\theta_{i})^{\sf T} \nabla_{\theta_{i}}g(\theta_{i},\xi_{i})+c\sum_{i=1}^{t}\alpha^{t-i}\|v_{i}\|^{2}.
	\end{split} 
	\end{align}
	Denote  $	Z(t):=\prod_{k=t}^{+\infty}(1+M_{0}\epsilon_{k}^{2}),$ 
	where 
	\begin{equation}\nonumber\begin{aligned}
	&M_{0}=\frac{cM}{\alpha^{1-\delta}(1-\alpha^{\delta})(1-\alpha)^{2}},
	\end{aligned}
	\end{equation} where $\delta>0$ is a constant and $M$ is introduced in Assumption~\ref{ass_g} 4). Here 
	we define $M_{0}$, $Z(t)$ and $\delta$   to facilitate the proof \eqref{33}. From Assumption \ref{assum8}, it holds that $\sum_{t=1}^{+\infty}\epsilon_{t}^{2}<+\infty$. Thus,   $\sum_{t=1}^{+\infty}M_{0}\epsilon_{t}^{2}<+\infty$. From a general inequality $\ln(1+x)\le x$ for $x>-1$, we   get
	\begin{equation}\nonumber\begin{aligned}
	Z(t)\le\prod_{k=1}^{+\infty}(1+M_{0}\epsilon_{k}^{2})=\exp\bigg\{\sum_{k=1}^{\infty}\ln(1+M_{0}\epsilon_{k}^{2})\bigg\} \le \exp\bigg\{\sum_{k=1}^{\infty}M_{0}\epsilon_{k}^{2}\bigg\}<+\infty,\end{aligned}\end{equation}
	which means that $Z(t)$ is uniformly upper bounded. 
	Then  multiplying $Z(t+1)$ on   both sides of \eqref{19} and taking the mathematical expectation yield
	\begin{equation}\label{20}
	\begin{aligned}
	Z(t+1)\Expect \big(g(\theta_{t+1})-g(\theta_{t})\big) \leq&-Z(t+1)\alpha^{t-1}\Expect\big(\nabla_{\theta_{1}}g(\theta_{1})^{\sf T}v_{1}\big)+c\sum_{i=1}^{t}\alpha^{t-i}Z(i+1)\Expect\big(\|v_{i}\|^{2}\big)\\
	&\quad -Z(t+1)\sum_{i=2}^{t}\alpha^{t-i}\epsilon_{i}\Expect\big\|\nabla_{\theta_{i}}g(\theta_{i})\big\|^{2}\\
	<&-Z(t+1)\alpha^{t-1}\Expect\big(\nabla_{\theta_{1}}g(\theta_{1})^{\sf T}v_{1}\big)+c\sum_{i=1}^{t}\alpha^{t-i}Z(i+1)\Expect\big(\|v_{i}\|^{2}\big),
	\end{aligned} 
	\end{equation}
	where the first and second inequalities are respectively due to $Z(i+1)<Z(i)$ and    $Z(t+1)\sum_{i=2}^{t}\alpha^{t-i}\epsilon_{i}\Expect\big\|\nabla_{\theta_{i}}g(\theta_{i})\big\|^{2}>0.$
	
	Next, we aim to analyze $c\sum_{i=1}^{t}\alpha^{t-i}Z(i+1)\Expect\big(\|v_{i}\|^{2}\big)$ in \eqref{20}. 		
	It is proved in Appendix \ref{pf_27} that
	\begin{equation}\label{27}\begin{aligned}
	Z(i+1)\Expect\|v_{i}\|^{2}
	\le& \alpha^{(1+\delta)i}Z(1)\Expect\|v_{0}\|^{2}
	+\frac{1}{\alpha^{1-\delta}}\sum_{k=1}^{i}\alpha^{(1+\delta)(i-k)}\epsilon_{k}^{2}Z(k+1)\Expect\big(\|\nabla_{\theta_{k}}g(\theta_{k})-\nabla_{\theta_{k}}g(\theta_{k},\xi_{k})\|^{2}\big)\\
	&-\frac{2}{\alpha^{1-\delta}}\sum_{k=1}^{i}\alpha^{(1+\delta)(i-k)}Z(k+1)\bigg(\Expect\big(\epsilon_{k}g(\theta_{k+1})\big)-\Expect\big(\epsilon_{k-1}g(\theta_{k})\big)\bigg).\end{aligned}
	\end{equation}
Taking a weighted sum of \eqref{27} yields 
	\begin{equation}\label{bht}\begin{aligned}
	&\sum_{i=1}^{t}\alpha^{t-i}Z(i+1)\Expect\|v_{i}\|^{2}\\
	\le&\sum_{i=1}^{t}\alpha^{t-i}\alpha^{(1+\delta)i}Z(1)\Expect\big(\|v_{0}\|^{2}\big)
	\\
	&+\sum_{i=1}^{t}\alpha^{t-i}\bigg(\frac{1}{\alpha^{1-\delta}}\sum_{k=1}^{i}\alpha^{(1+\delta)(i-k)}\epsilon_{k}^{2}Z(k+1)\Expect\big(\|\nabla_{\theta_{k}}g(\theta_{k})-\nabla_{\theta_{k}}g(\theta_{k},\xi_{k})\|^{2}\big)\bigg)\\
	&-\sum_{i=1}^{t}\alpha^{t-i}\Bigg(\frac{2}{\alpha^{1-\delta}}\sum_{k=1}^{i}\alpha^{(1+\delta)(i-k)}Z(k+1)\bigg(\Expect\big(\epsilon_{k}g(\theta_{k+1})\big)-\Expect\big(\epsilon_{k-1}g(\theta_{k})\big)\bigg)\Bigg):=A+B+C.
	\end{aligned}
	\end{equation}
	We derive that 
	\begin{align}\label{bht1}
	A&=\bigg(\sum_{i=1}^{t}\alpha^{\delta i}\bigg)\alpha^{t}Z(1)\Expect\big(\|v_{0}\|^{2}\big)\leq \frac{\alpha^{t-1}\alpha^{\delta}}{1-\alpha^{\delta}}Z(1)\Expect\big(\|v_{0}\|^{2}\big),
	\end{align}
	\begin{align}\label{bht2}
	B&= \frac{1}{\alpha^{1-\delta}}\sum_{i=1}^{t}\sum_{k=1}^{i}\alpha^{t-k+\delta(i-k)}\bigg(\epsilon_{k}^{2}Z(k+1)\Expect\big(\|\nabla_{\theta_{k}}g(\theta_{k})-\nabla_{\theta_{k}}g(\theta_{k},\xi_{k})\|^{2}\big)\bigg)\nonumber\\
	&\leq \frac{1}{\alpha^{1-\delta}(1-\alpha^{\delta})}\sum_{k=1}^{t}\alpha^{t-k}Z(k+1)\epsilon_{k}^{2}\Expect\big(\|\nabla_{\theta_{k}}g(\theta_{k})-\nabla_{\theta_{k}}g(\theta_{k},\xi_{k})\|^{2}\big),
	\end{align}
	\begin{align}\label{bht3}
	C&=-\frac{2}{\alpha^{1-\delta}}\sum_{k=1}^{t}\sum_{i=k}^{t}\alpha^{t-k+\delta(i-k)}Z(k+1)\bigg(\Expect\big(\epsilon_{k}g(\theta_{k+1})\big)-\Expect\big(\epsilon_{k-1}g(\theta_{k})\big)\bigg)\nonumber\\
	&= -\frac{2}{\alpha^{1-\delta}}\sum_{k=1}^{t}\bigg(\sum_{i=0}^{t-k}\alpha^{\delta i}\bigg)\alpha^{t-k}Z(k+1)\bigg(\Expect\big(\epsilon_{k}g(\theta_{k+1})\big)-\Expect\big(\epsilon_{k-1}g(\theta_{k})\big)\bigg).
	\end{align}

	Substituting \eqref{bht1}--\eqref{bht3} into \eqref{20} yields \begin{equation}\label{28}\begin{aligned}
	&Z(t+1)\Expect\bigg(\big(g(\theta_{t+1})-g(\theta_{t})\big)\bigg)\\\le&-Z(t+1)\alpha^{t-1}\Expect\big(\nabla_{\theta_{1}}g(\theta_{1})^{\sf T}v_{1}\big)+\frac{\alpha^{t-1}c\alpha^{\delta}}{1-\alpha^{\delta}}Z(1)\Expect\big(\|v_{0}\|^{2}\big)\\&+\frac{c}{\alpha^{1-\delta}(1-\alpha^{\delta})}\sum_{i=1}^{t}\alpha^{t-i}Z(i+1)\epsilon_{i}^{2}\Expect\big(\|\nabla_{\theta_{i}}g(\theta_{i})-\nabla_{\theta_{i}}g(\theta_{i},\xi_{i})\|^{2}\big)\\&-\frac{2c}{\alpha^{1-\delta}}\sum_{i=1}^{t}\bigg(\sum_{k=0}^{t-i}\alpha^{\delta k}\bigg)\alpha^{t-i}Z(i+1)\bigg(\Expect\big(\epsilon_{i}g(\theta_{i+1})\big)-\Expect\big(\epsilon_{i-1}g(\theta_{i})\big)\bigg).\end{aligned}\end{equation}
	
	Construct a sequence $\{V_{n}\}$ as follows \begin{equation}\label{29}\begin{aligned}
	V_{n}=\sum_{t=1}^{n}\bigg(\frac{1}{2-\alpha}\bigg)^{n-t}Z(t+1)\Expect\bigg(\big(g(\theta_{t+1})-g(\theta_{t})\big)\bigg).
	\end{aligned}
	\end{equation}
	By substituting \eqref{28} into \eqref{29} following the   way of \eqref{bht1}--\eqref{bht3}, we have \begin{equation}\nonumber\label{30}\begin{aligned}
	V_{n}\le&\frac{\alpha^{n}(2-\alpha)}{1-\alpha}Z(1)\Big|\Expect\big(\nabla_{\theta_{1}}g(\theta_{1})^{\sf T}v_{1}\big)\Big|+\frac{c\alpha^{n}\alpha^{\delta}(2-\alpha)}{(1-\alpha)(1-\alpha^{\delta})}Z(1)\Expect\big(\|v_{0}\|^{2}\big)\\&+\frac{c}{\alpha^{1-\delta}(1-\alpha^{\delta})(1-\alpha)^{2}}\sum_{t=1}^{n}\bigg(\frac{1}{2-\alpha}\bigg)^{n-i}Z(t+1)\epsilon_{t}^{2}\Expect\big(\|\nabla_{\theta_{t}}g(\theta_{t})-\nabla_{\theta_{t}}g(\theta_{t},\xi_{t})\|^{2}\big)\\&-\frac{2c}{\alpha^{1-\delta}(1-\alpha^{\delta})}\sum_{t=1}^{n}f(n-t)\bigg(\frac{1}{2-\alpha}\bigg)^{n-t}Z(t+1)\bigg(\Expect\big(\epsilon_{t}g(\theta_{t+1})\big)-\Expect\big(\epsilon_{t-1}g(\theta_{t})\big)\bigg),\end{aligned}\end{equation}
	where $f(n-t)$ is defined as follows
	\begin{equation}\nonumber\begin{aligned}
	f(n-t)=\sum_{k=1}^{n-t}\big(\alpha(2-\alpha)\big)^{k}-\alpha^{\delta}\sum_{k=1}^{n-t}\big(\alpha^{1+\delta}(2-\alpha)\big)^{k}.\end{aligned}\end{equation}
	Move the last term to the left-hand side of the above inequality, then we have
	\begin{equation}\label{31}\begin{aligned}
	&\sum_{t=1}^{n}\bigg(\frac{1}{2-\alpha}\bigg)^{n-t}Z(t+1)\Expect\bigg(e_{t+1}^{(n)}g(\theta_{t+1})-e_{t}^{(n-1)}g(\theta_{t})\big)\bigg)\\\le&\frac{\alpha^{n}(2-\alpha)}{1-\alpha}Z(1)\Big|\Expect\big(\nabla_{\theta_{1}}g(\theta_{1})^{\sf T}v_{1}\big)\Big|+\frac{c\alpha^{n}\alpha^{\delta}(2-\alpha)}{(1-\alpha)(1-\alpha^{\delta})}Z(1)\Expect\big(\|v_{0}\|^{2}\big)\\&+\frac{c}{\alpha^{1-\delta}(1-\alpha^{\delta})(1-\alpha)^{2}}\sum_{t=1}^{n}\bigg(\frac{1}{2-\alpha}\bigg)^{n-t}Z(t+1)\epsilon_{t}^{2}\Expect\big(\|\nabla_{\theta_{t}}g(\theta_{t})-\nabla_{\theta_{t}}g(\theta_{t},\xi_{t})\|^{2}\big),\end{aligned}\end{equation}where
	\begin{equation}\label{98uj}\begin{aligned}
	e_{t+1}^{(n)}=\bigg(1+\frac{2c\epsilon_{t}f(n-t)}{\alpha^{1-\delta}(1-\alpha^{\delta})}\bigg).
	\end{aligned}\end{equation}
	Because of $\alpha<1$, it holds that $f(n-t)>0$ and $e_{t+1}^{(n)}>1$. It follows from Assumption \ref{assum8} that
	\begin{equation}\label{33}
	\begin{aligned}
	&\frac{c}{\alpha^{1-\delta}(1-\alpha^{\delta})(1-\alpha)^{2}}\sum_{t=1}^{n}\bigg(\frac{1}{2-\alpha}\bigg)^{n-t}Z(t+1)\epsilon_{t}^{2}\Expect\big(\|\nabla_{\theta_{t}}g(\theta_{t})-\nabla_{\theta_{t}}g(\theta_{t},\xi_{t})\|^{2}\big)\\
	\le&\sum_{t=1}^{n}\bigg(\frac{1}{2-\alpha}\bigg)^{n-t}Z(t+1)M_{0}\epsilon_{t}^{2}e_{t}^{(n-1)}\big(1+\Expect(g(\theta_{t}))\big),
	\end{aligned}
	\end{equation}
	where
	\begin{equation}\nonumber\begin{aligned}
	&M_{0}=\frac{cM}{\alpha^{1-\delta}(1-\alpha^{\delta})(1-\alpha)^{2}}.
	\end{aligned}
	\end{equation}
	Calculate $f(n-t)$, then we obtain 
	\begin{equation}\nonumber\begin{aligned}
	f(n-t)=\sum_{k=1}^{n-t}\big(\alpha(2-\alpha)\big)^{k}-\alpha^{\delta}\sum_{k=1}^{n-t}\big(\alpha^{1+\delta}(2-\alpha)\big)^{k}>0.
	\end{aligned}
	\end{equation}
	It holds that 
	\begin{equation}\label{33.5}\begin{aligned}
	&\sum_{t=1}^{n}\bigg(\frac{1}{2-\alpha}\bigg)^{n-t}Z(t+1)\Expect\bigg(e_{t+1}^{(n)}g(\theta_{t+1})-e_{t}^{(n-1)}g(\theta_{t})\big)\bigg)\\&-\sum_{t=1}^{n}\bigg(\frac{1}{2-\alpha}\bigg)^{n-t}Z(t+1)M_{0}\epsilon_{t}^{2}e_{t}^{(n-1)}\big(1+\Expect(g(\theta_{t}))\big)\\&=\sum_{t=1}^{n}\bigg(\frac{1}{2-\alpha}\bigg)^{n-t}Z(t+1)\Expect\bigg(e_{t+1}^{(n)}\big(1+g(\theta_{t+1})\big)\bigg)\\&-\sum_{t=1}^{n}\bigg(\frac{1}{2-\alpha}\bigg)^{n-t}Z(t+1)(1+M_{0}\epsilon_{t}^{2})\Expect\bigg(e_{t}^{(n-1)}\big(1+g(\theta_{t})\big)\bigg)\\&-\frac{c}{\alpha^{1-\delta}(1-\alpha^{\delta})}\sum_{t=1}^{n}f(n-t)\bigg(\frac{1}{2-\alpha}\bigg)^{n-t}\big(\epsilon_{t}-\epsilon_{t-1}\big)\\&>\sum_{t=1}^{n}\bigg(\frac{1}{2-\alpha}\bigg)^{n-t}Z(t+1)\Expect\bigg(e_{t+1}^{(n)}\big(1+g(\theta_{t+1})\big)\bigg)\\&-\sum_{t=1}^{n}\bigg(\frac{1}{2-\alpha}\bigg)^{n-t}Z(t+1)(1+M_{0}\epsilon_{t}^{2})\Expect\bigg(e_{t}^{(n-1)}\big(1+g(\theta_{t})\big)\bigg).
	\end{aligned}\end{equation}
	Substituting \eqref{33} and \eqref{33.5} into \eqref{31} yields 
	\begin{equation}\label{34}\begin{aligned}
	&\sum_{t=1}^{n}\bigg(\frac{1}{2-\alpha}\bigg)^{n-t}Z(t+1)\Expect\bigg(e_{t+1}^{(n)}\big(1+g(\theta_{t+1})\big)\bigg)\\&-\sum_{t=1}^{n}\bigg(\frac{1}{2-\alpha}\bigg)^{n-t}Z(t+1)(1+M_{0}\epsilon_{t}^{2})\Expect\bigg(e_{t}^{(n-1)}\big(1+g(\theta_{t})\big)\bigg)\\\le&\frac{\alpha^{n}(2-\alpha)}{1-\alpha}Z(1)\Big|\Expect\big(\nabla_{\theta_{1}}g(\theta_{1})^{\sf T}v_{1}\big)\Big|+\frac{c\alpha^{n}\alpha^{\delta}(2-\alpha)}{2(1-\alpha)(1-\alpha^{\delta})}Z(1)\Expect\big(\|v_{0}\|^{2}\big).
	\end{aligned}\end{equation}
	
	According to the definition of $Z(t)$, we have  
	\begin{equation}\nonumber\begin{aligned}Z(t)=\prod_{k=t}^{+\infty}(1+M_{0}\epsilon_{k}^{2})=(1+M_{0}\epsilon_{t}^{2})\prod_{k=t+1}^{+\infty}(1+M_{0}\epsilon_{k}^{2})=(1+M_{0}\epsilon_{t}^{2})Z(t+1).\end{aligned}\end{equation}
	Let 
	\begin{equation}\nonumber\begin{aligned}
	F_{n}:=\sum_{t=1}^{n}\bigg(\frac{1}{2-\alpha}\bigg)^{n-t}Z(t+1)\Expect\bigg(e_{t+1}^{(n)}\big(1+g(\theta_{t+1})\big)\bigg),\end{aligned}\end{equation}
	and it follows from  \eqref{34} that
	\begin{equation}\label{diff_F}
	\begin{aligned}
	F_{n}-F_{n-1}\le& Z(1)\bigg(\frac{1}{2-\alpha}\bigg)^{n-1}\Expect\big(g(\theta_{1})\big)+\frac{\alpha^{n}(2-\alpha)}{1-\alpha}Z(1)\Big|\Expect\big(\nabla_{\theta_{1}}g(\theta_{1})^{\sf T}v_{1}\big)\Big|\\+&\frac{c\alpha^{n}\alpha^{\delta}(2-\alpha)}{2(1-\alpha)(1-\alpha^{\delta})}Z(1)\Expect\big(\|v_{0}\|^{2}\big).
	\end{aligned}\end{equation}

	Denote $p=\exp\bigg\{\sum_{k=1}^{\infty}M_{0}\epsilon_{k}^{2}\bigg\}$.
	By taking  the summation of \eqref{diff_F}, we obtain
	\begin{equation}\nonumber\begin{aligned}
	F_{n}\le& Z(1)\Expect\big(e_{1}\big(1+g(\theta_{1})\big)\big)\sum_{t=1}^{n}\bigg(\frac{1}{2-\alpha}\bigg)^{t-1}+\frac{p(2-\alpha)}{1-\alpha}\bigg|\Expect\big(\nabla_{\theta_{1}}g(\theta_{1})^{\sf T}v_{1}\big)\bigg|\sum_{t=1}^{n}\alpha^{t}\\&+\frac{c\alpha^{\delta}(2-\alpha)}{2(1-\alpha)(1-\alpha^{\delta})}Z(1)\Expect\big(\|v_{0}\|^{2}\big)\sum_{t=1}^{n}\alpha^{t}\\&<\frac{p(2-\alpha)}{1-\alpha}\Expect\big(e_{1}\big(1+g(\theta_{1})\big)\big)+\frac{p\alpha(2-\alpha)}{(1-\alpha)^{2}}\bigg|\Expect\big(\nabla_{\theta_{1}}g(\theta_{1})^{\sf T}v_{1}\big)\bigg|\\&+\frac{c\alpha^{1+\delta}(2-\alpha)}{2(1-\alpha)^{2}(1-\alpha^{\delta})}Z(1)\Expect\big(\|v_{0}\|^{2}\big).
	\end{aligned}\end{equation}
	Define
	\begin{equation}\nonumber\begin{aligned}
	T(\theta_{1},v_{0})&=1+\frac{p(2-\alpha)}{1-\alpha}\Expect\big(e_{1}\big(1+g(\theta_{1})\big)\big)+\frac{p\alpha(2-\alpha)}{(1-\alpha)^{2}}\bigg|\Expect\big(\nabla_{\theta_{1}}g(\theta_{1})^{\sf T}v_{1}\big)\bigg|\\&+\frac{c\alpha^{1+\delta}(2-\alpha)}{2(1-\alpha)^{2}(1-\alpha^{\delta})}Z(1)\Expect\big(\|v_{0}\|^{2}\big).
	\end{aligned}\end{equation}
	It follows from the relationship between $g(\theta_{n+1})$ and $F_{n}$ that
	\begin{equation}\nonumber\begin{aligned}
	\Expect\big(1+g(\theta_{n+1})\big)&<p\bigg(\frac{1}{2-\alpha}\bigg)^{n-n}Z(n+1)\Expect\big(e_{n+1}^{(n)}\big(1+g(\theta_{n+1})\big)\big)\\
	&<p\sum_{t=1}^{n}\bigg(\frac{1}{2-\alpha}\bigg)^{n-t}Z(t+1)\Expect\big(e_{t+1}^{(n)}\big(1+g(\theta_{t+1})\big)\big)\\
	&<pF_{n}<1+T(\theta_{1},v_{0}),
	\end{aligned}\end{equation}
	which leads to $\Expect\big(g(\theta_{n})\big)<T(\theta_{1},v_{0})$.

	\subsection{Proof of \eqref{27}}\label{pf_27}
	We consider
	\begin{equation}\label{21}
	\begin{aligned}
	&\quad\epsilon_{i}g(\theta_{i+1})-\epsilon_{i-1}g(\theta_{i})\\
	&=\epsilon_{i}\big(g(\theta_{i+1}-g(\theta_{i})\big)+(\epsilon_{i}-\epsilon_{i-1})g(\theta_{i})\\&\le \epsilon_{i}\big(g(\theta_{i+1})-g(\theta_{i})\big)\\
	&\le-\epsilon_{i}\nabla_{\theta_{i}}g(\theta_{i})^{\sf T}v_{i}+\frac{c}{2}\epsilon_{i}\|v_{i}\|^2\\&=(-\epsilon_{i}\nabla_{\theta_{i}}g(\theta_{i},\xi_{i})^{\sf T}v_{i}-(\epsilon_{i}\nabla_{\theta_{i}}g(\theta_{i})-\epsilon_{i}\nabla_{\theta_{i}}g(\theta_{i},\xi_{i}))^{\sf T}v_{i}+\frac{c}{2}\epsilon_{i}\|v_{i}\|^2\\
	&=\alpha v_{i}^{\sf T}v_{i-1}-\|v_{i}\|^{2}+\epsilon_{i}^{2}\big\|\nabla_{\theta_{i}}g(\theta_{i})-\nabla_{\theta_{i}}g(\theta_{i},\xi_{i})\big\|^{2}\\
	&\quad-\big(\epsilon_{i}\alpha v_{i-1}+\epsilon_{i}^{2}\nabla_{\theta_{i}}g(\theta_{i})\big)^{\sf T}\big(\nabla_{\theta_{i}}g(\theta_{i})-\nabla_{\theta_{i}}g(\theta_{i},\xi_{i}))\big)+\frac{c}{2}\epsilon_{i}\|v_{i}\|^{2},
	\end{aligned}\end{equation}
	where the first inequality is due to $\epsilon_{i}\leq\epsilon_{i-1}$ in Assumption \ref{assum8}, and the last equality is from \eqref{msgd_estimate}.
	Since $\xi_{i}$ and $\theta_{i}$ are independent,
taking the mathematical expectation of \eqref{21} and noting that\begin{equation}\nonumber\begin{aligned}
	&\Expect\bigg(\big(\epsilon_{i}\alpha v_{i-1}+\epsilon_{i}^{2}\nabla_{\theta_{i}}g(\theta_{i})\big)^{\sf T}\big(\nabla_{\theta_{i}}g(\theta_{i})-\nabla_{\theta_{i}}g(\theta_{i},\xi_{i}))\big)\bigg)=0,
	\end{aligned}\end{equation}
  yield
  \begin{equation}\label{22}
	\begin{aligned}&\Expect\big(\epsilon_{i}g(\theta_{i+1})\big)-\Expect\big(\epsilon_{i-1}g(\theta_{i})\big)\\\le&\alpha \Expect(v_{i}^{\sf T}v_{i-1})-\Expect\big(\|v_{i}\|^{2}\big)+\epsilon_{i}^{2}\Expect\bigg(\big\|\nabla_{\theta_{i}}g(\theta_{i})-\nabla_{\theta_{i}}g(\theta_{i},\xi_{i})\big\|^{2}\bigg)+\frac{c}{2}\epsilon_{i}\Expect\big(\|v_{i}\|^{2}\big).\end{aligned}\end{equation}
Moreover, it holds   that 
\begin{equation}\label{23}
	\begin{aligned}&\Expect\big(\|\nabla_{\theta_{i}}g(\theta_{i})-\nabla_{\theta_{i}}g(\theta_{i},\xi_{i})\|^{2}\big)\\=&\Expect\big(\|\nabla_{\theta_{i}}g(\theta_{i},\xi_{i})\|^{2}\big)-\Expect\big(\|\nabla_{\theta_{i}}g(\theta_{i})\|^{2}\big)\\=&\frac{1}{\epsilon_{i}^{2}}\Expect\big(\|v_{i}-\alpha v_{i-1}\|^{2}\big)-\Expect\big(\|\nabla_{\theta_{i}}g(\theta_{i})\|^{2}\big)\\=&\frac{1}{\epsilon_{i}^{2}}\bigg(\Expect\big(\|v_{i}\|^2\big)+\alpha^{2}\Expect\big(\|v_{i-1}\|^2\big)-2\alpha \Expect(v_{i}^{\sf T}v_{i-1})\bigg)-\Expect\big(\|\nabla_{\theta_{i}}g(\theta_{i})\|^{2}\big).\end{aligned}\end{equation}
	Combining \eqref{22} and \eqref{23}, we get\begin{equation}\label{24}\begin{aligned}&\Expect\big(\epsilon_{i}g(\theta_{i+1})\big)-\Expect\big(\epsilon_{i-1}g(\theta_{i})\big)\\\le&-\frac{1}{2}\bigg(\Expect\big(\|v_{i}\|^2\big)-\alpha^{2}\Expect\big(\|v_{i-1}\|^{2}\big)\bigg)-\frac{\epsilon_{i}^{2}}{2}\Expect\big(\|\nabla_{\theta_{i}}g(\theta_{i})\|^{2}\big)+\frac{c}{2}\epsilon_{i}\Expect\big(\|v_{i}\|^{2}\big)\\&+\frac{\epsilon_{i}^{2}}{2}\Expect\big(\|\nabla_{\theta_{i}}g(\theta_{i})-\nabla_{\theta_{i}}g(\theta_{i},\xi_{i})\|^{2}\big).\end{aligned}\end{equation}
	Since $\epsilon_{i}\rightarrow0$,   given any $\delta>0$, there is an integer $i_{0}\geq 0$, such that for $i\ge i_{0}$, $1-c\epsilon_{i}>\alpha^{1-\delta}(\delta>0)$.
	Since $i_{0}$ is finite, without loss of generality, we assume $  i_{0}=0$ for convenience, i.e.,  $1-c\epsilon_{i}>\alpha^{1-\delta}(\delta>0)(i\ge 1)$. Thus, we have\begin{equation}\label{25}\begin{aligned}
	&\Expect\big(\epsilon_{i}g(\theta_{i+1})\big)-\Expect\big(\epsilon_{i-1}g(\theta_{i})\big)\\\le&-\frac{\alpha^{1-\delta}}{2}\bigg(\Expect\big(\|v_{i}\|^{2}\big)-\alpha^{1+\delta}\Expect\big(\|v_{i-1}\|^2\big)\bigg)+\frac{\epsilon_{i}^{2}}{2}\Expect\big(\|\nabla_{\theta_{i}}g(\theta_{i})-\nabla_{\theta_{i}}g(\theta_{i},\xi_{i})\|^{2}\big).\end{aligned}\end{equation}
	Multiplying both sides of \eqref{25} by $Z(i+1)$, and noticing that $Z(i)>Z(i+1)$, we have
	\begin{equation}\nonumber
	\begin{aligned}
	&\bigg(Z(i+1)\Expect\big(\|v_{i}\|^{2}\big)-\alpha^{1+\delta}Z(i)\Expect\big(\|v_{i-1}\|^2\big)\bigg)\\\le& -\frac{2}{\alpha^{1-\delta}}Z(i+1)\bigg(\Expect\big(\epsilon_{i}g(\theta_{i+1})\big)-\Expect\big(\epsilon_{i-1}g(\theta_{i})\big)\bigg) +\frac{\epsilon_{i}^{2}}{\alpha^{1-\delta}}Z(i+1)\Expect\big(\|\nabla_{\theta_{i}}g(\theta_{i})-\nabla_{\theta_{i}}g(\theta_{i},\xi_{i})\|^{2}\big).
	\end{aligned}\end{equation}
 Then \eqref{27} is obtained by recursively applying the above inequality.

	\subsection{Proof of Lemma~\ref{lem_mid}}\label{pf_lem_mid}
	From \eqref{24}, 
	we have
	\begin{equation}\label{pf_compare}
	\begin{split}
	&\Expect\big(\epsilon_{n}g(\theta_{n+1})\big)-\Expect\big(\epsilon_{0}g(\theta_{1})\big)\\
	<&-\frac{1}{2}\sum_{t=1}^{n}\Expect\big(\|v_{t}\|^{2}\big)+\frac{\alpha^{2}}{2}\sum_{t=1}^{n}\Expect\big(\|v_{t-1}\|^{2}\big)-\sum_{t=1}^{n}\frac{\epsilon_{t}^{2}}{2}\Expect\big(\|\nabla_{\theta_{t}}g(\theta_{t})\|^{2}\big)+\frac{c}{2}\sum_{t=1}^{n}\epsilon_{t}\Expect\big(\|v_{t}\|^{2}\big)\\
	& +\sum_{t=1}^{n}\frac{\epsilon_{t}^{2}}{2}\Expect\big(\|\nabla_{\theta_{t}}g(\theta_{t})-\nabla_{\theta_{t}}g(\theta_{t},\xi_{t})\|^{2}\big).
	\end{split}\end{equation}
	It follows from Assumption \ref{ass_g} 5) and Lemma \ref{lem_bound} that  \begin{equation}\nonumber\begin{aligned}&\Expect\big(\|\nabla_{\theta_{t}}g(\theta_{t})-\nabla_{\theta_{t}}g(\theta_{t},\xi_{t})\|^{2}\big)<M(1+T(\theta_{1},v_{0}))<+\infty.
	\end{aligned}\end{equation}
	Because of $\sum_{t=1}^{n}\epsilon_{t}^{2}<+\infty$, there is a scalar $\bar M>0$ such that for $\forall n$\begin{equation}\nonumber\begin{aligned}&\sum_{t=1}^{n}\frac{\epsilon_{t}^{2}}{2}\Expect\big(\|\nabla_{\theta_{t}}g(\theta_{t})-\nabla_{\theta_{t}}g(\theta_{t},\xi_{t})\|^{2}\big)<\bar M<+\infty.\end{aligned}
	\end{equation}
	Then it follows from \eqref{pf_compare} that \begin{equation}\nonumber\begin{aligned}
	&\frac{1}{2}\sum_{t=1}^{n}(1-\alpha^{2}-c\epsilon_{t})\Expect\big(\|v_{t}\|^2\big)\\
	\le& \bar M+\epsilon_{0}g(\theta_{1})-\epsilon_{n}\Expect\big(g(\theta_{n+1})\big) +\frac{\alpha^{2}}{2}\Expect\big(\|v_{0}\|^{2}-\|v_{n}\|^{2}\big)\\
	& -\sum_{t=1}^{n}\frac{\epsilon_{t}^{2}}{2}\Expect\big(\|\nabla_{\theta_{t}}g(\theta_{t})\|^{2}\big)\\
	<&\bar M+\epsilon_{0}g(\theta_{1})+\alpha^{2}\Expect\big(\|v_{0}\|^{2}\big)/2<K,
	\end{aligned}
	\end{equation}
	where $K$ is a positive scalar.
	Since $\epsilon_{n}\rightarrow0$  when $n$ is large enough, it holds that $\frac{1}{5}(1-\alpha^{2})<1-\alpha^{2}-c\epsilon_{n}$. Without loss of generality, assume $\frac{1}{5}(1-\alpha^{2})<1-\alpha^{2}-c\epsilon_{n}^{2}$ for $n\ge 0$, so $\sum_{t=1}^{n}\Expect\big(\|v_{t}\|^2\big)<\frac{10K}{1-\alpha^{2}}<+\infty.$
	By Lemma~\ref{lem_summation}, we obtain $	\sum_{t=1}^{n}\|v_{t}\|^{2}<+\infty.$
	
	\subsection{Proof of Lemma~\ref{lem_deriv}}\label{pf_lem_deriv}
	Through Taylor expansion, we derive that
	\begin{equation}\label{41.5}\begin{aligned}
    &g(\theta_{t+1})-g(\theta_{t})\\
	=&\nabla_{\theta_{\zeta_{t}}}g(\theta_{\zeta_{t}})^{T}(\theta_{t+1}-\theta_{t})=-\nabla_{\theta_{t}}g(\theta_{t})^{T}v_{t}+\Big(\nabla_{\theta_{\zeta_{t}}}g(\theta_{\zeta_{t}})-\nabla_{\theta_{t}}g(\theta_{t})\Big)^{T}(\theta_{t+1}-\theta_{t}),\end{aligned} \end{equation}where $\theta_{\zeta_{t}}$ means a point between $\theta_{t}$ and $\theta_{t+1}$. Substituting \eqref{9i8u7y6} into \eqref{41.5} yields
	\begin{equation}\nonumber\begin{aligned}
    &g(\theta_{t+1})-g(\theta_{t})
	=-\Big(\alpha^{t-1}\nabla_{\theta_{1}}g(\theta_{1})^{\sf T}v_{1}+\sum_{i=1}^{t-1}\alpha^{t-i}(\nabla_{\theta_{i}}g(\theta_{i})-\nabla_{\theta_{i-1}}g(\theta_{i-1}))^{T}v_{i-1}\\&+\sum_{i=2}^{t}\alpha^{t-i}\epsilon_{i}\nabla_{\theta_{i}}g(\theta_{i})^{\sf T} \nabla_{\theta_{i}}g(\theta_{i},\xi_{i})\Big)+\Big(\nabla_{\theta_{\zeta_{t}}}g(\theta_{\zeta_{t}})-\nabla_{\theta_{t}}g(\theta_{t})\Big)^{T}(\theta_{t+1}-\theta_{t}).\end{aligned} \end{equation}
It follows that
\begin{equation}\label{42}\begin{aligned}g(\theta_{n+1})=&g(\theta_{1})+\sum_{t=1}^{n}\big(g(\theta_{t+1})-g(\theta_{t})\big)\\
	=&g(\theta_{1})-\frac{1-\alpha^{n}}{1-\alpha}\nabla_{\theta_{1}}g(\theta_{1})^{\sf T}v_{1}+\frac{1-\alpha^{n}}{1-\alpha}\epsilon_{1}\nabla_{\theta_{1}}g(\theta_{1})\nabla_{\theta_{1}}g(\theta_{1},\xi_{1})\\&-\sum_{t=1}^{n}\frac{1-\alpha^{n-t+1}}{1-\alpha}\epsilon_{t}\nabla_{\theta_{t}}g(\theta_{t})\nabla_{\theta_{t}}g(\theta_{t},\xi_{t})+\sum_{t=1}^{n}\Big(\nabla_{\theta_{\zeta_{t}}}g(\theta_{\zeta_{t}})-\nabla_{\theta_{t}}g(\theta_{t})\Big)^{T}(\theta_{t+1}-\theta_{t})\\
	&-\sum_{t=1}^{n}\frac{1-\alpha^{n-t+1}}{1-\alpha}\sum_{i=1}^{t-1}\alpha^{t-i}(\nabla_{\theta_{i}}g(\theta_{i})-\nabla_{\theta_{i-1}}g(\theta_{i-1}))^{T}v_{i-1}.\end{aligned} \end{equation}
	Take the mathematical expectation of \eqref{42}, and notice Assumption \ref{ass_g} 3), then we   have\begin{equation}\nonumber\begin{aligned}
	\Expect\big(g(\theta_{n+1})\big)\le& \Expect\big(g(\theta_{1})\big)+\frac{\alpha}{1-\alpha}\bigg|\nabla_{\theta_{1}}g(\theta_{1})^{\sf T}v_{1}\bigg|+\frac{1}{1-\alpha}\epsilon_{1}\Expect\big(\|\nabla_{\theta_{1}}g(\theta_{1})\|^{2}\big)\\
	&-\sum_{t=1}^{n}\epsilon_{t}\Expect\big(\|\nabla_{\theta_{t}}g(\theta_{t})\|^{2}\big)+c\sum_{t=1}^{n}\Expect\big(\|v_{t}\|^{2}\big)+\frac{2c}{1-\alpha}\sum_{t=1}^{n}\Expect\big(\|v_{t}\|^{2}\big).\end{aligned}\end{equation}
	From Lemma \ref{lem_mid}, it follows that for some positive constant $Q$,  
	\begin{equation}\nonumber\begin{aligned}
	&c\sum_{t=1}^{n}\Expect\big(\|v_{t}\|^{2}\big)+\frac{2c}{1-\alpha}\sum_{t=1}^{n}\Expect\big(\|v_{t}\|^{2}\big)<Q.
	\end{aligned}\end{equation} 
	Hence,
	\begin{equation}\nonumber\begin{aligned}\Expect\big(g(\theta_{n+1})\big)<&Q+\Expect\big(g(\theta_{1})\big)+\frac{\alpha}{1-\alpha}\bigg|\nabla_{\theta_{1}}g(\theta_{1})^{\sf T}v_{1}\bigg|+\frac{1}{1-\alpha}\epsilon_{1}\Expect\big(\|\nabla_{\theta_{1}}g(\theta_{1})\|^{2}\big)\\
	&-\sum_{t=1}^{n}\epsilon_{t}\Expect\big(\|\nabla_{\theta_{t}}g(\theta_{t})\|^{2}\big).
	\end{aligned}\end{equation}As a result, \begin{equation}\nonumber\begin{aligned}
	\sum_{t=1}^{n}\epsilon_{t}\Expect\big(\|\nabla_{\theta_{t}}g(\theta_{t})\|^{2}\big)<&Q+\Expect\big(g(\theta_{1})\big) +\frac{\alpha}{1-\alpha}\bigg|\nabla_{\theta_{1}}g(\theta_{1})^{\sf T}v_{1}\bigg|+\frac{1}{1-\alpha}\epsilon_{1}\Expect\big(\|\nabla_{\theta_{1}}g(\theta_{1})\|^{2}\big)\\
	&-\Expect\big(g(\theta_{n+1})\big)\\
	<&+\infty.\end{aligned}\end{equation}
	From Lemma \ref{lem_summation}, we have $\sum_{t=1}^{n}\epsilon_{t}\|\nabla_{\theta_{t}}g(\theta_{t})\|^{2}<+\infty \ a.s..$
	
	\subsection{Proof of Lemma~\ref{lem_conve}}\label{pf_lem_conve}
	We divide the proof into three steps. 
	
	The first step is to prove 
	\begin{equation}\nonumber\begin{aligned}
	\liminf_{n\rightarrow +\infty}\big\|\nabla_{\theta_{n}}g(\theta_{n})\big\|^2=0 \ a.s..
	\end{aligned}
	\end{equation}
	Suppose the above conclusion does not hold, i.e., 
	\begin{equation}\nonumber\begin{aligned}
	\liminf_{n\rightarrow +\infty}\big\|\nabla_{\theta_{n}}g(\theta_{n})\big\|^2>s^{2}>0 \ a.s.,
	\end{aligned}
	\end{equation}
	where $s$ is a random variable depending on sample paths. Then it holds that
	\begin{equation}
	\nonumber\begin{aligned}
	P\Bigg(\bigcup_{n=1}^{+\infty}\bigcap_{m=n}^{+\infty}\bigg(\big\|\nabla_{\theta_{m}}g(\theta_{m})\big\|^2>\frac{1}{4}s^2\bigg)\Bigg)=1 .\end{aligned}\end{equation}
	From \eqref{yuandian} and $\lim_{n\rightarrow+\infty}\zeta_{n}=\zeta<\infty \ a.s.$,   if $\big\|\nabla_{\theta_{n}}g(\theta_{n})\big\|^2>\frac{1}{4}s^2 $, it holds that
	\begin{equation}\nonumber
	\begin{aligned}
	&g(\theta_{n+1})\le \zeta_{n}-b\sum_{i=n}^{+\infty}\epsilon_{i}=-\infty.
	\end{aligned}
    \end{equation}
	It follows that
	\begin{equation}\nonumber
	\begin{aligned}
	&P\big(\lim_{n\rightarrow +\infty}g(\theta_{n+1})=-\infty\big)\ge P\Bigg(\bigcup_{n=1}^{+\infty}\bigcap_{m=n}^{+\infty}\bigg(\big\|\nabla_{\theta_{m}}g(\theta_{m})\big\|^2>\frac{1}{4}s^2\bigg)\Bigg)=1.
	\end{aligned}
	\end{equation}
	As a result, $\lim_{n\rightarrow+\infty}g(\theta_{n})=-\infty$, meaning   a contradiction. Hence,
	\begin{equation}\label{uy}
	\begin{aligned}\liminf_{n\rightarrow +\infty}\big\|\nabla_{\theta_{n}}g(\theta_{n})\big\|^2=0 \ a.s.
	\end{aligned}
	\end{equation}
	
	Step 2 is to prove that the set $\{\theta_{n}\}$ has an accumulation point contained in $J$ with probability one. From \eqref{uy}, we have $\forall \epsilon>0$
	\begin{equation}\nonumber\begin{aligned}
	P\Bigg(\bigcup_{n=1}^{+\infty}\bigcap_{m=n}^{+\infty}\bigg(\big\|\nabla_{\theta_{m}}g(\theta_{m})\big\|^2>\epsilon\bigg)\Bigg)=0.\end{aligned}
	\end{equation}
	Since $\|\nabla_{\theta}g(\theta)\|^{2}$ is continuous,  $\forall \delta>0$, it holds that
	\begin{equation}\nonumber
	\begin{aligned}
	P\Bigg(\bigcup_{n=1}^{+\infty}\bigcap_{m=n}^{+\infty}\bigg(\inf_{\theta\in J}\big\|\theta_{m}-\theta\|^2>\delta\bigg)\Bigg)=0.
	\end{aligned}
	\end{equation}
	In addition, under the given conditions, $J$ is a closed set. It holds that
	\begin{equation}\nonumber
	\begin{aligned}
	P\Bigg(\bigcup_{n=1}^{+\infty}\bigcap_{m=n}^{+\infty}\bigg(\min_{\theta\in J}\big\|\theta_{m}-\theta\|^2>\delta\bigg)\Bigg)=0. 
	\end{aligned}
	\end{equation}
	For convenience, let $\theta^*_n := \arg\min_{\theta\in J} \|\theta_n - \theta\|^2$, then
	\begin{equation}\label{lkj}\begin{aligned}
	&P\Bigg(\bigcap_{n=1}^{+\infty}\bigcup_{m=n}^{+\infty}\Bigg(\bigg(\big\|\theta_{m}-\theta_{m}^{*}\|\le\sqrt{\delta}\bigg)\Bigg)\Bigg)=1.
	\end{aligned}\end{equation}
It follows that
\begin{equation}\label{King}\begin{aligned}
	&P\Bigg(\bigcup_{\{m_{i}\} \in \mathcal{S}_{\mathbb{N}}}\bigcap_{i=1}^{+\infty}\bigg(\big\|\theta_{m_{i}}-\theta^{*}_{m_{i}}\big\|\le\sqrt{\delta}\bigg)\Bigg)=1,
	\end{aligned}\end{equation}
	where $\mathcal{S}_{\mathbb{N}}$ is the set of all infinite subsequences of $\mathbb{N}$.
	Since $\{\theta_{m_{i}}^{*}\}\subset J$,   $\{\theta_{m_{i}}^*\}$ is bounded. From the accumulative point principle,   the event
	\begin{equation}\nonumber
	\begin{aligned}
	\bigcap_{n=1}^{+\infty}\bigcup_{i,j\ge n}\bigg(\|\theta^{*}_{m_{i}}-\theta^{*}_{m_{j}}\|\le\sqrt{\delta}\bigg),
	\end{aligned}
	\end{equation}
	is a deterministic event, i.e., being true for every sample path. Thus
	\begin{equation}\nonumber\begin{aligned}
	&\bigcup_{\{m_{k_{i}}\} \in \mathcal{S}_{\mathbb{N}}^{(i)}}\bigcap_{i,j\geq 1}\bigg(\big\|\theta_{m_{k_{i}}}-\theta_{m_{k_{j}}}\big\|\le\sqrt{\delta}\bigg)
	\end{aligned}\end{equation}
	is a deterministic event as well, where $\mathcal{S}_{\mathbb{N}}^{(i)}$ is the set of all infinite subsequences of $\{m_{i}\}$. Due to the arbitrariness of $\{m_{i}\}$, we get that 
	\begin{equation}\label{poi}\begin{aligned}
	&\bigcap_{\{m_{i}\}\in\mathcal{S}_{\mathbb{N}}}\bigcup_{\{m_{k_{i}}\} \in \mathcal{S}_{\mathbb{N}}^{(i)}}\bigcap_{i,j\geq 1}\bigg(\big\|\theta_{m_{k_{i}}}-\theta_{m_{k_{j}}}\big\|\le\sqrt{\delta}\bigg)
	\end{aligned}\end{equation}
	is a deterministic event. Therefore, 
	\begin{align}
	&P\Bigg(\bigcup_{\{m_{i}\}\in\mathcal{S}_{\mathbb{N}}}\bigcap_{i,j\ge 1}\bigg(\big\|\theta^{*}_{m_{i}}-\theta^{*}_{m_{j}}\big\|<\sqrt{\delta}\bigg)\bigcap\bigg(\big\|\theta_{m_{i}}-\theta^{*}_{m_{i}}\big\|<\sqrt{\delta}\bigg)\bigcap\bigg(\big\|\theta_{m_{j}}-\theta^{*}_{m_{j}}\big\|<\sqrt{\delta}\bigg)\Bigg)\\\nonumber
	=&P\Bigg({\bigcup_{\{m_{i}\}\in\mathcal{S}_{\mathbb{N}}}}
	\bigcup_{\{m_{k_{i}}\}\in\mathcal{S}_{\mathbb{N}}^{(i)}}\bigcap_{i,j\ge 1}			
	\bigg(\big\|\theta^{*}_{m_{k_{i}}}-\theta^{*}_{m_{k_{j}}}\big\|<\sqrt{\delta}\bigg)\bigcap\bigg(\big\|\theta_{m_{k_{i}}}-\theta^{*}_{m_{k_{i}}}\big\|<\sqrt{\delta}\bigg)\\\nonumber
	&\quad\bigcap\bigg(\big\|\theta_{m_{k_{j}}}-\theta^{*}_{m_{k_{j}}}\big\|<\sqrt{\delta}\bigg)\Bigg)\\\nonumber
	\ge &P\Bigg(\bigcup_{\{m_{i}\}\in\mathcal{S}_{\mathbb{N}}}\bigcup_{\{m_{k_{i}}\}\in\mathcal{S}_{\mathbb{N}}^{(i)}}\bigcap_{i,j\ge 1}\bigg(\big\|\theta^{*}_{m_{k_{i}}}-\theta^{*}_{m_{k_{j}}}\big\|<\sqrt{\delta}\bigg)\bigcap\bigg(\bigcap_{i=1}^{+\infty}\big\|\theta_{m_{i}}-\theta^{*}_{m_{i}}\big\|<\sqrt{\delta}\bigg)\Bigg)\\\nonumber
	=&P\Bigg({\bigcup_{\{m_{i}\}\in\mathcal{S}_{\mathbb{N}}}}{\bigg(\bigcap_{i=1}^{+\infty}\big\|\theta_{m_{i}}-\theta^{*}_{m_{i}}\big\|<\sqrt{\delta}\bigg)\bigcap{\bigcup_{\{m_{k_{i}}\}\in\mathcal{S}_{\mathbb{N}}^{()}}}\bigcap_{i,j\ge 1}\bigg(\big\|\theta^{*}_{m_{k_{i}}}-\theta^{*}_{m_{k_{j}}}\big\|<\sqrt{\delta}\bigg)}\Bigg)\\\nonumber
	\ge& P\Bigg(\bigcup_{\{m_{i}\}\in\mathcal{S}_{\mathbb{N}}}\bigg(\bigcap_{i=1}^{+\infty}\big\|\theta_{m_{i}}-\theta^{*}_{m_{i}}\big\|<\sqrt{\delta}\bigg)\\&\nonumber\bigcap\bigg(\bigcap_{\{m_{i}\}\in\mathcal{S}_{\mathbb{N}}}\bigcup_{\{m_{k_{i}}\}\in\mathcal{S}_{\mathbb{N}}^{(i)}}\bigcap_{i,j\ge 1}\bigg(\big\|\theta^*_{m_{k_{i}}}-\theta^{*}_{m_{k_{j}}}\big\|<\sqrt{\delta}\bigg)\bigg)\Bigg)\\\nonumber
	=&P\Bigg(\bigg(\bigcap_{\{m_{i}\}\in\mathcal{S}_{\mathbb{N}}}\bigcup_{\{m_{k_{i}}\}\in\mathcal{S}_{\mathbb{N}}^{(i)}}\bigcap_{i,j\ge 1}\bigg(\big\|\theta^{*}_{m_{k_{i}}}-\theta^{*}_{m_{k_{j}}}\big\|<\sqrt{\delta}\bigg)\bigg)\\&\nonumber\bigcap\bigg(\bigcup_{\{m_{i}\}\in\mathcal{S}_{\mathbb{N}}}\bigg(\bigcap_{i=1}^{+\infty}\big\|\theta_{m_{i}}-\theta^{*}_{m_{i}}\big\|<\sqrt{\delta}\bigg)\bigg)\Bigg) \\\nonumber
	\end{align}
	Combine (\ref{King}) and (\ref{poi}), then we get that
	\begin{equation}\nonumber\begin{aligned}
	&P\Bigg(\bigg(\bigcap_{\{m_{k_{i}}\} \in \mathcal{S}_{\mathbb{N}}}\bigcup_{\{m_{i}\} \in \mathcal{S}_{\mathbb{N}}^{(i)}}\bigcap_{i,j\ge 1}\bigg(\big\|\theta^{*}_{m_{k_{i}}}-\theta^{*}_{m_{k_{j}}}\big\|<\sqrt{\delta}\bigg)\bigg)\\&\bigcap\bigg(\bigcup_{\{m_{i}\} \in \mathcal{S}_{\mathbb{N}}}\bigg(\bigcap_{i=1}^{+\infty}\big\|\theta_{m_{i}}-\theta^{*}_{m_{i}}\big\|<\sqrt{\delta}\bigg)\bigg)\Bigg)=1.
	\end{aligned}\end{equation}
	So
	\begin{equation}\nonumber\begin{aligned}
	&P\Bigg(\bigcup_{\{m_{i}\} \in \mathcal{S}_{\mathbb{N}}}\bigcap_{i,j\ge 1}\bigg(\big\|\theta^{*}_{m_{i}}-\theta^{*}_{m_{j}}\big\|<\sqrt{\delta}\bigg)\bigcap\bigg(\big\|\theta_{m_{i}}-\theta^{*}_{m_{i}}\big\|<\sqrt{\delta}\bigg)\\&\bigcap\bigg(\big\|\theta_{m_{j}}-\theta^{*}_{m_{j}}\big\|<\sqrt{\delta}\bigg)\Bigg)=1.
	\end{aligned}\end{equation}
	Thus it holds that
	\begin{equation}\nonumber\begin{aligned}
	&P\Bigg(\bigcap_{t=1}^{+\infty}\bigcup_{i,j>t}\Bigg(\bigg(\big\|\theta^{*}_{i}-\theta^{*}_{j}\big\|\le\sqrt{\delta}\bigg)\bigcap\bigg(\big\|\theta_{i}-\theta_{j}\big\|\le3\sqrt{\delta}\bigg)\Bigg)\\
	\geq& P\Bigg(\bigcap_{t=1}^{+\infty}\bigcup_{i,j>t}\Bigg(\bigg(\big\|\theta^{*}_{i}-\theta^{*}_{j}\big\|\le\sqrt{\delta}\bigg)\bigcap\bigg(\big\|\theta_{i}-\theta^{*}_{i}\big\|\le\sqrt{\delta}\bigg)\bigcap\bigg(\big\|\theta_{j}-\theta^{*}_{j}\big\|\le\sqrt{\delta}\bigg)\Bigg)\\ 
	=& P\Bigg(\bigcup_{\{m_{k_{i}}\} \in \mathcal{S}_{\mathbb{N}}}\bigcap_{i,j\ge 1}\bigg(\big\|\theta^{*}_{m_{i}}-\theta^{*}_{m_{j}}\big\|<\sqrt{\delta}\bigg)\bigcap\bigg(\big\|\theta_{m_{i}}-\theta^{*}_{m_{i}}\big\|<\sqrt{\delta}\bigg)\\&\bigcap\bigg(\big\|\theta_{m_{j}}-\theta^{*}_{m_{j}}\big\|<\sqrt{\delta}\bigg)\Bigg)\\
	=&1.
	\end{aligned}\end{equation}
	This means we can find two "same" convergent subsequences $\{\theta_{k_{n}}\}$ and $\{\theta_{k_{n}}^{*}\}$ with probability one. Because $J$ is a closed set and $\{\theta_{k_{n}}^{*}\}\subset J$, we know that there exists $\theta^{\prime\prime} \in J$ such that
	\begin{equation}\label{qw}\begin{aligned}
	\lim_{n\rightarrow +\infty}\theta_{k_{n}}=\theta^{''} a.s.
	\end{aligned}\end{equation}
	This indicates that we can find an accumlative point of $\{\theta_{n}\}$ in $J$ with probability one. 
	Denote the connected component containing $\theta^{''}$ by $J^{*}$. 
	
	Step 3 is to prove that there is a connected component $J^{*}$ of $J$ such that $\theta_{n}\rightarrow J^{*}$ a.s. From \eqref{qw}, we have
	\begin{equation}\nonumber
	\begin{aligned}
	P\Bigg(\bigcap_{n=1}^{+\infty}\bigcup_{m=n}^{+\infty}\Big(\big\|\theta_{m}-\theta^{''}\big\|<\delta^{'}\Big)\Bigg)=1.
	\end{aligned}
	\end{equation}
	Since $g(\theta)$ is continuous, we have
	\begin{equation}\nonumber
	\begin{aligned}
	&P\Bigg(\bigcap_{n=1}^{+\infty}\bigcup_{m=n}^{+\infty}\Big(\big|g(\theta_{m})-g(\theta^{''})\big|<\delta^{''}\Big)\Bigg)=1.
	\end{aligned}
	\end{equation}
	Since $\big\{g(\theta_{n})\big\}$ converges a.s., let $M$ be the limit, i.e., $\lim_{n\rightarrow+\infty}g(\theta_{n})=M\ a.s$. Hence $g(\theta^{''})=M$, and $g(\theta)=M$ for all $\theta \in J^{*}$. Define $A=\{\theta|g(\theta)=M\}$. From $g(\theta_{n})\rightarrow M$ a.s. and the continuity of $g(\theta)$, it holds that $\theta_{n}\rightarrow A$ a.s. Obviously, $J^{*}\subseteq A$. If $J^{*}=A$, then the conclusion follows. Now assume $J^{*}\subsetneqq A$. From the definition of $J^{*}$, it follows that there are two disjoint open sets $V$ and $H$ such that $A\subset H\bigcup V$, $J^{*}\subset V$, $A/J^{*}\subset H$, $H^{'}\bigcap V^{'}=\emptyset$ ($H^{'}$ and $V^{'}$ are closures of $H$ and $V$). So we just need to prove that $\theta_{n}\rightarrow V$ a.s. A contradiction argument is to be used. Suppose that $\{\theta_{n}\}$ has accumulation points in $V$ and $H$ simultaneously with a probability larger than zero. That is,
	\begin{equation}\label{ml}\begin{aligned}
	&P\Bigg(\bigg(\bigcap_{n=1}^{+\infty}\bigcup_{m=n}^{+\infty}\big(\theta_{m}\in H\big)\bigg)\bigcap\bigg(\bigcap_{s=1}^{+\infty}\bigcup_{t=s}^{+\infty}\big(\theta_{t}\in V\big)\bigg)\bigcap\bigg(\bigcup_{r=1}^{+\infty}\bigcap_{v=r}^{+\infty}\big(\theta_{v}\in H\cup V\big)\bigg)\Bigg)>0.
	\end{aligned}\end{equation}
	Expanding \eqref{ml}, we get 
	\begin{equation}\nonumber\begin{aligned}
	&P\Bigg(\bigcup_{r=1}^{+\infty}\Bigg(\bigcap_{(m,n) \in \mathbb{N}\times \mathbb{N}}\bigcup_{i\geq m,j\geq n}\bigg(\big(\theta_{i}\in V\big)\bigcap\big(\theta_{j}\in H\big)\bigg)\Bigg)\bigcap\bigg(\bigcap_{v=r}^{+\infty}\big(\theta_{v}\in H\cup V\big)\bigg)\Bigg)\Bigg)>0.
	\end{aligned}\end{equation}
	Note that the event
	\begin{equation}\nonumber\begin{aligned}
	\bigcap_{(m,n)\in \mathbb{N}\times\mathbb{N}}\bigcup_{i\geq m,j\geq n}\bigg(\big(\theta_{i}\in V\big)\bigcap\big(\theta_{j}\in H\big)\bigg)
	\end{aligned}\end{equation} can be written as
	\begin{equation}\nonumber\begin{aligned}
	\bigcup_{\{\alpha_{i} \}\in \mathcal{S}_{\mathbb{N}},\{\beta_{i}\}\in \mathcal{S}_{\mathbb{N}}}\bigcap_{i=1}^{+\infty}\bigg(\big(\theta_{\alpha_{i}}\in V\big)\bigcap\big({\theta_{\beta_{i}}\in H}\big)\bigg),
	\end{aligned}\end{equation} 
	where $\{\alpha_{i}\}\cap\{\beta_{i}\}=\emptyset$. The event
	\begin{equation}\label{][p}\begin{aligned}
	&\Bigg(\bigcap_{(m,n) \in \mathbb{N}\times \mathbb{N}}\bigcup_{i\geq m,j\geq n}\bigg(\big(\theta_{i}\in V\big)\bigcap\big(\theta_{j}\in H\big)\bigg)\Bigg)\bigcap\bigg(\bigcap_{v=r}^{+\infty}\big(\theta_{v}\in H\cup V\big)\bigg)
	\end{aligned}\end{equation} means $\forall t\geq r$, $\theta_{t}$ must belong to one of $H$ and $V$. So $\{\alpha_{i}\}\cup\{\beta_{i}\}$ must include the set $\{r,r+1,r+2,...\}$. Denote $\{\alpha_{i}\}$ and $\{\beta_{i}\}$ as $\{\alpha_{i}^{(r)}\}$ and $\{\beta_{i}^{(r)}\}$, and then  \eqref{][p} can be written as 
	\begin{equation}\nonumber\begin{aligned}
	&\bigcup_{\{\alpha_{i}^{(r)}\}\in \mathcal{S}_{\mathbb{N}},\{\beta_{i}^{(r)}\}\in \mathcal{S}_{\mathbb{N}}}\bigcap_{i=1}^{+\infty}\bigg(\big(\theta_{\alpha_{i}^{(r)}}\in V\big)\bigcap\big({\theta_{\beta_{i}^{(r)}}\in H}\big)\bigg).
	\end{aligned}\end{equation}
	So we get
	\begin{equation}\label{quen}\begin{aligned}
	&P\Bigg(\bigcup_{r=1}^{+\infty}
	\bigcup_{\{\alpha_{i}^{(r)}\},\{\beta_{i}^{(r)}\}}\bigcap_{i=1}^{+\infty}\bigg(\big(\theta_{\alpha_{i}^{(r)}}\in V\big)\bigcap\big({\theta_{\beta_{i}^{(r)}}\in H}\big)\bigg)\Bigg)>0.
	\end{aligned}\end{equation}
	It follows from $V^{'}\bigcap H^{'}=\emptyset$ that
	\begin{equation}\label{yt}
	\inf_{\theta^{(1)}\in H,\theta^{(2)}\in V}\big\|\theta^{(1)}-\theta^{(2)}\big\|=s>0,
	\end{equation}
	where $s$ is a constant. By Lemma \ref{lem_mid}, it holds that $\forall s>0$
	\begin{equation}\label{vb}\begin{aligned}
	P\Bigg(\bigcap_{n=1}^{+\infty}\bigcup_{m=n}^{+\infty}\|\theta_{m}-\theta_{m+1}\|\geq s\Bigg)=0.
	\end{aligned}\end{equation}
	
	Now we consider $\{\alpha_{i}^{(r)}+1\}$. In fact, $\big\{\alpha_{i}^{(r)}+1\big\}$ and $\big\{\beta_{i}^{(r)}\big\}$ have a finite number of identical elements with probability one. If not, then
	\begin{equation}\nonumber
	P\Bigg(\bigcap_{n=1}^{+\infty}\bigcup_{m=n}^{+\infty}\bigg(\big(\theta_{\alpha_{m}^{(r)}}\in V\big)\bigcap\big(\theta_{\alpha_{m}^{(r)}+1}\in H\big)\bigg)\Bigg)>0.
	\end{equation}By \eqref{yt}, it follows that
	\begin{equation}\nonumber\begin{aligned}
	&P\Bigg(\bigcap_{n=1}^{+\infty}\bigcup_{m=n}^{+\infty}\big\|\theta_{m}-\theta_{m+1}\big\|\geq s\Bigg)\\
	\geq& P\Bigg(\bigcap_{n=1}^{+\infty}\bigcup_{m=n}^{+\infty}\Big\|\theta_{\alpha_{m}^{(r)}}-\theta_{\alpha_{m}^{(r)}+1}\Big\| \geq s\Bigg)
	\\
	\geq& P\Bigg(\bigcap_{n=1}^{+\infty}\bigcup_{m=n}^{+\infty}\bigg(\Big(\theta_{\alpha_{m}^{(r)}}\in V\Big)\bigcap\Big(\theta_{\alpha_{m}^{(r)}+1}\in H\Big)\bigg)\Bigg)>0,
	\end{aligned}\end{equation}
	which however contradicts with \eqref{vb}. Hence $\big\{\alpha_{i}^{(r)}+1\big\}$ and $\big\{\beta_{i}^{(r)}\big\}$ have a finite number of identical elements with probability one. From $\big\{\alpha_{i}^{(r)}\big\}\bigcup\big\{\beta_{i}^{(r)}\big\}\supset\{r,r+1,r+2,...\}$, we know that \begin{equation}\label{mlkj}\begin{aligned}
	P\Bigg(\bigcup_{n=1}^{+\infty}\bigcap_{m=n}^{+\infty}\big(\alpha_{m}^{(r)}+1\big)\in \big\{\alpha_{i}^{(r)}\big\}\Bigg)=1.
	\end{aligned}\end{equation}  Hence $\big\{\beta_{i}^{(r)}\big\}$ is a finite sequence with probability one, otherwise it contradicts with \eqref{quen}. Thus,  the assumption does not hold, and $\theta_{n}\rightarrow V$ a.s. Furthermore, $\theta_{n}\rightarrow J^{*}$. Therefore, there is a connected component $J^{*}$ of $J$ such that
	\begin{equation}\nonumber\begin{aligned}
	\lim\limits_{n\rightarrow\infty}d(\theta_{n},J^*)=0.
	\end{aligned}\end{equation}
	
			\subsection{Proof of Theorem \ref{thm_converg1}}\label{append_thm_converg}
	First of all we aim to prove that $g(\theta_{n+1})$ is convergent almost surely. Divide \eqref{42} into four parts as follows.
	\begin{equation}\label{47}\begin{aligned}
	&g(\theta_{n+1})=\underbrace{g(\theta_{1})-\frac{1-\alpha^{n}}{1-\alpha}\nabla_{\theta_{1}}g(\theta_{1})^{\sf T}v_{1}+\frac{1-\alpha^{n}}{1-\alpha}\epsilon_{1}\nabla_{\theta_{1}}g(\theta_{1})\nabla_{\theta_{1}}g(\theta_{1},\xi_{1})}_{(A)}\\&-\underbrace{\sum_{t=1}^{n}\frac{1-\alpha^{n-t+1}}{1-\alpha}\epsilon_{t}\nabla_{\theta_{t}}g(\theta_{t})^{\sf T}\nabla_{\theta_{t}}g(\theta_{t},\xi_{t})}_{(B)}+\underbrace{\sum_{t=1}^{n}\Big(\nabla_{\theta_{\zeta_{t}}}g(\theta_{\zeta_{t}})-\nabla_{\theta_{t}}g(\theta_{t})\Big)^{T}(\theta_{t+1}-\theta_{t})}_{(C)}\\
	&\underbrace{-\sum_{t=1}^{n}\frac{1-\alpha^{n-t+1}}{1-\alpha}\sum_{i=1}^{t-1}\alpha^{t-i}(\nabla_{\theta_{i}}g(\theta_{i})-\nabla_{\theta_{i-1}}g(\theta_{i-1}))^{T}v_{i-1}}_{(D)}.
	\end{aligned}\end{equation}Due to $\alpha<1$,  $\alpha^{n}$ is tending to zero, which ensures the convergence  of  part $(A)$. For $(C)$, we consider the absolute value of $\sum_{t=n}^{m}$. It follows from Assumption \ref{ass_g} 3) that
	\begin{equation}\nonumber\begin{aligned}
	&\bigg|\sum_{t=n}^{m}\Big(\nabla_{\theta_{\zeta_{t}}}g(\theta_{\zeta_{t}})-\nabla_{\theta_{t}}g(\theta_{t})\Big)^{T}(\theta_{t+1}-\theta_{t})\bigg|\le\sum_{t=n}^{m}\bigg|\Big(\nabla_{\theta_{\zeta_{t}}}g(\theta_{\zeta_{t}})-\nabla_{\theta_{t}}g(\theta_{t})\Big)^{T}(\theta_{t+1}-\theta_{t})\bigg|\\&\le\sum_{t=n}^{m}\Big\|\nabla_{\theta_{\zeta_{t}}}g(\theta_{\zeta_{t}})-\nabla_{\theta_{t}}g(\theta_{t})\Big\|\|v_{t}\|\le c\sum_{t=n}^{m}\|v_{t}\|^{2}.\end{aligned}\end{equation}
	Through Lemma \ref{lem_mid}, we get $\sum_{t=n}^{m}\|v_{t}\|^{2}\rightarrow 0 \ \ a.s.$,  leading to
	\begin{equation}\nonumber\begin{aligned}
	&\bigg|\sum_{t=n}^{m}\Big(\nabla_{\theta_{\zeta_{t}}}g(\theta_{\zeta_{t}})-\nabla_{\theta_{t}}g(\theta_{t})\Big)^{T}(\theta_{t+1}-\theta_{t})\bigg|\rightarrow 0\ \ a.s..\end{aligned}\end{equation}Through $ Cauchy's \ test\  for\  convergence$, we know that $(C)$ is convergent almost surely. By using the same function, it holds that $(D)$ is convergent almost surely. For $(B)$, we have
	\begin{equation}\label{48.9}\begin{aligned}
	(B)=&\sum_{t=1}^{n}\frac{1-\alpha^{n-t+1}}{1-\alpha}\epsilon_{t}\nabla_{\theta_{t}}g(\theta_{t})^{\sf T}\nabla_{\theta_{t}}g(\theta_{t},\xi_{t})\\
	=&\sum_{t=1}^{n}\frac{1-\alpha^{n-t+1}}{1-\alpha}\epsilon_{t}\|\nabla_{\theta_{t}}g(\theta_{t})\|^{2}+\sum_{t=1}^{n}\frac{1-\alpha^{n-t+1}}{1-\alpha}\epsilon_{t}\nabla_{\theta_{t}}g(\theta_{t})^{\sf T}\big(\nabla_{\theta_{t}}g(\theta_{t},\xi_{t})-\nabla_{\theta_{t}}g(\theta_{t})\big).\end{aligned}\end{equation}
	From Lemma \ref{lem_mid}, it follows that $(C)$ is convergent almost surely. For $(B)$, it holds that 
	\begin{equation}\label{48}\begin{aligned}
	(B)=&\sum_{t=1}^{n}\frac{1-\alpha^{n-t+1}}{1-\alpha}\epsilon_{t}\nabla_{\theta_{t}}g(\theta_{t})^{\sf T}\nabla_{\theta_{t}}g(\theta_{t},\xi_{t})\\
	=&\sum_{t=1}^{n}\frac{1-\alpha^{n-t+1}}{1-\alpha}\epsilon_{t}\|\nabla_{\theta_{t}}g(\theta_{t})\|^{2}+\sum_{t=1}^{n}\frac{1-\alpha^{n-t+1}}{1-\alpha}\epsilon_{t}\nabla_{\theta_{t}}g(\theta_{t})^{\sf T}\big(\nabla_{\theta_{t}}g(\theta_{t},\xi_{t})-\nabla_{\theta_{t}}g(\theta_{t})\big).\end{aligned}\end{equation}
	By Lemma \ref{lem_deriv}, we know\begin{equation}\nonumber
	\begin{aligned}\sum_{t=1}^{n}\frac{1-\alpha^{n-t+1}}{1-\alpha}\epsilon_{t}\|\nabla_{\theta_{t}}g(\theta_{t})\|^{2}<+\infty\ \ a.s..\end{aligned}\end{equation}From Lemmas \ref{lem_summation_MDS} and \ref{lem_deriv} it follows that \begin{equation}\nonumber\begin{aligned}&\sum_{t=1}^{n-1}\frac{1-\alpha^{n-t}}{1-\alpha}\epsilon_{t+1}\nabla_{\theta_{t}}g(\theta_{t})^{\sf T}\big(\nabla_{\theta_{t}}g(\theta_{t},\xi_{t})-\nabla_{\theta_{t}}g(\theta_{t})\big)\end{aligned}\end{equation}
	is convergent a.s. Thus $(B)$ is convergent a.s., and $g(\theta_{n+1})$ is convergent a.s.. Substituting \eqref{48} into \eqref{47} leads to \begin{equation}\nonumber\begin{aligned}
	&g(\theta_{n+1})\le\zeta_{n}'-\sum_{t=1}^{n}\epsilon_{t}\big\|\nabla_{\theta_{t}}g(\theta_{t})\big\|^{2},
	\end{aligned}\end{equation}
	where $\{\zeta_{n}'\}$ is defined as follows
	\begin{equation}\nonumber\begin{aligned}
	\zeta_{n}'=&g(\theta_{1})-\frac{1-\alpha^{n}}{1-\alpha}\nabla_{\theta_{1}}g(\theta_{1})^{\sf T}v_{1}+\frac{1-\alpha^{n}}{1-\alpha}\epsilon_{1}\nabla_{\theta_{1}}g(\theta_{1})\nabla_{\theta_{1}}g(\theta_{1},\xi_{1})\\
	&-\sum_{t=1}^{n}\frac{1-\alpha^{n-t+1}}{1-\alpha}\epsilon_{t}\nabla_{\theta_{t}}g(\theta_{t})^{\sf T}\big(\nabla_{\theta_{t}}g(\theta_{t},\xi_{t})-\nabla_{\theta_{t}}g(\theta_{t})\big)+\sum_{t=1}^{n}\Big(\nabla_{\theta_{\zeta_{t}}}g(\theta_{\zeta_{t}})-\nabla_{\theta_{t}}g(\theta_{t})\Big)^{T}(\theta_{t+1}-\theta_{t})\\&-\sum_{t=1}^{n}\frac{1-\alpha^{n-t+1}}{1-\alpha}\sum_{i=1}^{t-1}\alpha^{t-i}(\nabla_{\theta_{i}}g(\theta_{i})-\nabla_{\theta_{i-1}}g(\theta_{i-1}))^{T}v_{i-1}.
	\end{aligned}
	\end{equation}
	we know $\{\zeta_{n}'\}$ is convergent a.s. It follows from Lemma \ref{lem_conve}   that there exists a connected component $J^{*}$ of $J$ such that
	$						\lim\limits_{n\rightarrow\infty}d(\theta_{n},J^*)=0.$

	\subsection{Proof of Theorem \ref{thm_rate}}\label{append_pf_thm_rate}

	First of all we can get that
	\begin{equation}\label{okm,ljh}\begin{aligned}
    &g(\theta_{t+1})-g(\theta_{t})
	=-\Big(\alpha^{t-1}\nabla_{\theta_{1}}g(\theta_{1})^{\sf T}v_{1}+\sum_{i=1}^{t-1}\alpha^{t-i}(\nabla_{\theta_{i}}g(\theta_{i})-\nabla_{\theta_{i-1}}g(\theta_{i-1}))^{T}v_{i-1}\\&+\sum_{i=2}^{t}\alpha^{t-i}\epsilon_{i}\nabla_{\theta_{i}}g(\theta_{i})^{\sf T} \nabla_{\theta_{i}}g(\theta_{i},\xi_{i})\Big)+\Big(\nabla_{\theta_{\zeta_{t}}}g(\theta_{\zeta_{t}})-\nabla_{\theta_{t}}g(\theta_{t})\Big)^{T}(\theta_{t+1}-\theta_{t}).\end{aligned} \end{equation}
	From Theorem \ref{thm_converg1},  it follows that $g(\theta_{n})$ is convergent a.s., and it is orbitally convergent to $g_{i} \ (i=1,2,...,N)$ a.s.. Then it holds that
	\begin{equation}\nonumber\begin{aligned}
    &\lim_{n\rightarrow+\infty}g(\theta_{n})=\sum_{i=1}^{N}I_{i}g_{i}\ \ a.s.,\end{aligned} \end{equation} where
    \begin{equation}\nonumber\begin{aligned}
	I_{i}=\left\{\begin{array}{rcl}
	1 & & {\lim_{n\rightarrow+\infty}g(\theta_{n}})=g_{i}\\ \\
	0 & & {\lim_{n\rightarrow+\infty}g(\theta_{n})\neq g_{i}}
	\end{array} \right.
	\end{aligned}\end{equation}For convenient, we let $g^{*}=\sum_{i=1}^{\infty}I_{i}g_{i}$. Then we make some transformation on \eqref{okm,ljh}
	\begin{equation}\label{okm,ljh..}\begin{aligned}
    &\big(g(\theta_{t+1})-g^{*}\big)-\big(g(\theta_{t})-g^{*}\big)
	=-\Big(\alpha^{t-1}\nabla_{\theta_{1}}g(\theta_{1})^{\sf T}v_{1}+\sum_{i=1}^{t-1}\alpha^{t-i}(\nabla_{\theta_{i}}g(\theta_{i})-\nabla_{\theta_{i-1}}g(\theta_{i-1}))^{T}v_{i-1}\\&+\sum_{i=2}^{t}\alpha^{t-i}\epsilon_{i}\nabla_{\theta_{i}}g(\theta_{i})^{\sf T} \nabla_{\theta_{i}}g(\theta_{i},\xi_{i})\Big)+\Big(\nabla_{\theta_{\zeta_{t}}}g(\theta_{\zeta_{t}})-\nabla_{\theta_{t}}g(\theta_{t})\Big)^{T}(\theta_{t+1}-\theta_{t}).\end{aligned} \end{equation}
	
	Then we make some transformations, take absolute values,   take the mathematical expectation, and use same techniques in Theorem \ref{thm_converg1}. Since the sampling noise follows a uniform distribution, there is $\Expect(\|\nabla_{\theta_{n}}g(\theta_{n},\xi_{n})\|^{2})\le M\Expect(\|\nabla_{\theta_{n}}g(\theta_{n})\|^{2})$. So it follows that  $F'_{n}-F'_{n-1}\le P_{n-1}-Q_{n-1}$,  where $P_{n-1}$, $Q_{n-1}$, and $F'_{n}$ are defined as follows
	\begin{equation}\label{ewnmg}\begin{aligned}
	P_{n-1}=&\frac{1}{(1-\alpha)^{2}}\bigg(\frac{1}{2-\alpha}\bigg)^{n-1}\epsilon_{1}\Expect\big(\|\nabla_{\theta_{1}}g(\theta_{1})\|^{2}\big)+Z(1)L\bigg(\frac{1}{2-\alpha}\bigg)^{n-1}\Expect\big(g(\theta_{1})\big)\\
	&-\frac{\alpha^{n}(2-\alpha)}{1-\alpha}Z(n+1)\Expect\big(\nabla_{\theta_{1}}g(\theta_{1})^{\sf T}v_{1}\big)+\frac{c\alpha^{n}\alpha^{\delta}(2-\alpha)}{2(1-\alpha)(1-\alpha^{\delta})}Z(1)\Expect\big(\|v_{0}\|^{2}\big),
	&
	\\F'_{n}=&\sum_{t=1}^{n}\bigg(\frac{1}{2-\alpha}\bigg)^{n-t}Z(t+1)\Expect\Big(e_{t+1}^{(n)}\big|g(\theta_{t+1})-g^{*}\big|\Big),
	\\
	Q_{n-1}=&\frac{1}{(1-\alpha)^{2}}\sum_{t=1}^{n}\bigg(\frac{1}{2-\alpha}\bigg)^{n-t}\epsilon_{t}\Expect\big(\|\nabla_{\theta_{t}}g(\theta_{t})\|^{2}\big),\end{aligned}\end{equation}
	where $g^{*}=\inf_{\theta\in \mathbb{R}^{N}}g(\theta)$. Then we have
	\begin{equation}\label{090909_a}\begin{aligned}
	F'_{n}\le \bigg(1-\frac{Q_{n-1}}{F'_{n-1}}\bigg)F'_{n-1}+P_{n-1}.
	\end{aligned}\end{equation}

	Derive ${Q_{n}}/{\epsilon_{n+1}F'_{n}}$ as follows
	\begin{equation}\nonumber
	\begin{aligned}
	&\frac{Q_{n}}{\epsilon_{n+1}F'_{n}}=\frac{1}{(1-\alpha)^{2}}\frac{\sum_{t=1}^{n+1}\big(\frac{1}{2-\alpha}\big)^{n+1-t}\epsilon_{t+1}\Expect\big(\|\nabla_{\theta_{t}}g(\theta_{t})\|^{2}\big)}{\sum_{t=1}^{n}\big(\frac{1}{2-\alpha}\big)^{-t}Z(t+1)Z(t+1)\Expect\Big(e_{t+1}^{(n)}\big|g(\theta_{t+1})-g^{*}\big|\Big)}\\
	&=\frac{1}{(1-\alpha)^{2}}\frac{\big(\frac{1}{2-\alpha}\big)^{n}\epsilon_{2}\Expect\big(\|\nabla_{\theta_{1}}g(\theta_{1})\|^{2}\big)+\sum_{t=1}^{n}\big(\frac{1}{2-\alpha}\big)^{-t}\Expect\big(\|\nabla_{\theta_{t+1}}g(\theta_{t+1})\|^{2}\big)}{\sum_{t=1}^{n}\big(\frac{1}{2-\alpha}\big)^{-t}Z(t+1)Z(t+1)\Expect\Big(e_{t+1}^{(n)}\big|g(\theta_{t+1})-g^{*}\big|\Big)}.
	\end{aligned}
	\end{equation} It follows that
	\begin{equation}\label{rvy}
	\begin{aligned}
	&\liminf_{n\rightarrow+\infty}\frac{Q_{n}}{\epsilon_{n+1}F'_{n}}\\
	&=\frac{1}{(1-\alpha)^{2}}\liminf_{n\rightarrow+\infty}\frac{\big(\frac{1}{2-\alpha}\big)^{n}\epsilon_{2}\Expect\big(\|\nabla_{\theta_{1}}g(\theta_{1})\|^{2}\big)+\sum_{t=1}^{n}\big(\frac{1}{2-\alpha}\big)^{-t}\Expect\big(\|\nabla_{\theta_{t+1}}g(\theta_{t+1})\|^{2}\big)}{\sum_{t=1}^{n}\big(\frac{1}{2-\alpha}\big)^{-t}Z(t+1)\Expect\Big(e_{t+1}^{(n)}\big|g(\theta_{t+1})-g^{*}\big|\Big)}\\
	&\ge\frac{1}{(1-\alpha)^{2}}\liminf_{n\rightarrow+\infty}\frac{\sum_{t=1}^{n}\big(\frac{1}{2-\alpha}\big)^{-t}\Expect\big(\|\nabla_{\theta_{t+1}}g(\theta_{t+1})\|^{2}\big)}{\sum_{t=1}^{n}\big(\frac{1}{2-\alpha}\big)^{-t}Z(t+1)\Expect\Big(e_{t+1}^{(n)}\big|g(\theta_{t+1})-g^{*}\big|\Big)}.
	\end{aligned}
	\end{equation}
	From \eqref{98uj}, we have
	\begin{equation}\label{rvy0}\begin{aligned}
	&e_{t+1}^{(n)}=1+\frac{2c\epsilon_{t}f(n-t)}{\alpha^{1-\delta}(1-\alpha^{\delta})}\\&=1+\frac{2c\epsilon_{t}}{\alpha^{1-\delta}(1-\alpha^{\delta})}\bigg(\sum_{k=1}^{n-t}\big(\alpha(2-\alpha)\big)^{k}-\alpha^{\delta}\sum_{k=1}^{n-t}\big(\alpha^{1+\delta}(2-\alpha)\big)^{k}\bigg)\\&\le
	1+C_{\alpha}\epsilon_{t},
	\end{aligned}\end{equation}where $C_{\alpha}=\frac{2c(2-\alpha)}{\alpha^{-\delta}(1-\alpha^{\delta})(1-\alpha(2-\alpha))}$. 
	Substituting \eqref{rvy0} into \eqref{rvy} yields
	\begin{equation}\label{rvyp}\begin{aligned}
	&\liminf_{n\rightarrow+\infty}\frac{Q_{n}}{\epsilon_{n+1}F'_{n}}\\&\ge\frac{1}{(1-\alpha)^{2}}\liminf_{n\rightarrow+\infty}\frac{\sum_{t=1}^{n}\Big(\frac{1}{2-\alpha}\Big)^{-t}\Expect\big(\|\nabla_{\theta_{t+1}}g(\theta_{t+1})\|^{2}\big)}{\sum_{t=1}^{n}\Big(\frac{1}{2-\alpha}\Big)^{-t}Z(t+1)\Expect\Big(e_{t+1}^{(n)}\big|g(\theta_{t+1})-g^{*}\big|\Big)}.
	\end{aligned}\end{equation}
	Then we proceed with the proof  under two different cases, namely,  $\sum_{t=1}^{n}\big(\frac{1}{2-\alpha}\big)^{-t}\Expect\big(\|\nabla_{\theta_{t+1}}g(\theta_{t+1})\|^{2}\big)=+\infty$ and $\sum_{t=1}^{n}\big(\frac{1}{2-\alpha}\big)^{-t}\Expect\big(\|\nabla_{\theta_{t+1}}g(\theta_{t+1})\|^{2}\big)<+\infty$.
	
	First, if $\sum_{t=1}^{n}\big(\frac{1}{2-\alpha}\big)^{-t}\Expect\big(\|\nabla_{\theta_{t+1}}g(\theta_{t+1})\|^{2}\big)=+\infty$ (the proof for   this condition is up to \eqref{.5}). It follows from Assumption \ref{ass_goiop} 2) and the uniform convergence and $O'stolz\  theorem$ that
	\begin{equation}\label{rvyp1}\begin{aligned}
	&\liminf_{n\rightarrow+\infty}\frac{Q_{n}}{\epsilon_{n+1 }F'_{n}}\\&\ge\frac{1}{(1-\alpha)^{2}}\liminf_{n\rightarrow+\infty}\frac{\sum_{t=1}^{n}\big(\frac{1}{2-\alpha}\big)^{-t}\Expect\big(\|\nabla_{\theta_{t+1}}g(\theta_{t+1})\|^{2}\big)}{\sum_{t=1}^{n}\big(\frac{1}{2-\alpha}\big)^{-t}Z(t+1)\Expect\Big(\big(1+C_{\alpha}\epsilon_{t}\big)\big|g(\theta_{t+1})-g^{*}\big|\Big)}\\&\ge\frac{1}{(1-\alpha)^{2}}\liminf_{n\rightarrow+\infty}\frac{\Expect\big(\|\nabla_{\theta_{n+1}}g(\theta_{n+1})\|^{2}\big)}{Z(n+1)\Expect\Big(\big(1+C_{\alpha}\epsilon_{t}\big)\big|g(\theta_{t+1})-g^{*}\big|\Big)}\\&\ge\frac{1}{(1-\alpha)^{2}}\liminf_{n\rightarrow+\infty}\frac{s}{Z(n+1)\big(1+C_{\alpha}\epsilon_{n}\big)} =\frac{s}{p(1-\alpha)^{2}},
	\end{aligned}\end{equation} 
	where $p=\exp\bigg\{\sum_{k=1}^{\infty}M\epsilon_{k}^{2}\bigg\}$. By  using O'stolz theorem on $Q_{n}/F_{n}^{'}$,  it follows that 
	\begin{equation}\nonumber\begin{aligned}
	&\lim_{n\rightarrow+\infty}\frac{Q_{n}}{F'_{n}}=\frac{1}{(1-\alpha)^{2}}\lim_{n\rightarrow+\infty}\frac{\sum_{t=1}^{n}\big(\frac{1}{2-\alpha}\big)^{-t}\epsilon_{t+1}\Expect\big(\|\nabla_{\theta_{t+1}}g(\theta_{t+1})\|^{2}\big)}{\sum_{t=1}^{n}\big(\frac{1}{2-\alpha}\big)^{-t}Z(t+1)\Expect\Big(e_{t+1}^{(n)}\big|g(\theta_{t+1})-g^{*})\big|\Big)}\\&\le\frac{1}{(1-\alpha)^{2}}\lim_{n\rightarrow+\infty}\frac{\sum_{t=1}^{n}\big(\frac{1}{2-\alpha}\big)^{-t}\epsilon_{t+1}\Expect\big(\|\nabla_{\theta_{t+1}}g(\theta_{t+1})\|^{2}\big)}{\sum_{t=1}^{n}\big(\frac{1}{2-\alpha}\big)^{-t}Z(t+1)\Expect\big|g(\theta_{t+1})-g^{*}\big|}\\&=\frac{1}{(1-\alpha)^{2}}\lim_{n\rightarrow+\infty}\frac{\epsilon_{n}\Expect\big(\|\nabla_{\theta_{n}}g(\theta_{n})\|^{2}\big)}{Z(n+1)\Expect\big|g(\theta_{n+1})-g^{*}\big|}\\&\le \frac{1}{(1-\alpha)^{2}}\lim_{n\rightarrow+\infty}\frac{t\epsilon_{n}\Expect\big(g(\theta_{n+1})-g^{*}\big)}{Z(n+1)\Expect\big|g(\theta_{n+1})-g^{*}\big|}=0.\end{aligned}\end{equation}
	So we   conclude that $\exists n_{0}\in\mathbb{N}_+$, such that $\forall n\ge n_{0}$,  
	\begin{equation}\label{eq_pf_ineq}
	\begin{aligned}
	&\frac{Q_{n}}{F'_{n}}<1-\frac{3-\alpha}{2(2-\alpha)}.
	\end{aligned}\end{equation}
	From \eqref{090909_a}, it follows that 
	\begin{equation}\label{0o9i8u7y}\begin{aligned}
	&F'_{n}\le \prod_{i=n_{0}}^{n-1}\bigg(1-\frac{Q_{i}}{F'_{i}}\bigg)F'_{n_0}+\sum_{t=n_{0}+1}^{n}\prod_{i=t}^{n-1}\bigg(1-\frac{Q_{i}}{F'_{i}}\bigg)P_{t-1}.
	\end{aligned}\end{equation}
	Using the inequality $\ln(1+x)\le x$ and \eqref{rvyp1}, we   get 
	\begin{equation}\label{plokijuh}\begin{aligned}
	&\prod_{i=n_{0}}^{n-1}\bigg(1-\frac{Q_{i}}{F'_{i}}\bigg)=\exp\Bigg(\sum_{i=n_{0}}^{n-1}\ln\bigg(1-\frac{Q_{i}}{F'_{i}}\bigg)\Bigg)\le\exp\bigg(-\sum_{i=n_{0}}^{n-1}\frac{Q_{i}}{F'_{i}}\bigg)\le ke^{-\sum_{i=1}^{n}\frac{s\epsilon_{i}}{p(1-\alpha)^{2}}}\\&=O\Big(e^{-\sum_{i=1}^{n}\frac{s\epsilon_{i}}{p(1-\alpha)^{2}}}\Big),
	\end{aligned}\end{equation}where $k$ is a constant. It follows that
	\begin{equation}\label{qawsedrf}\begin{aligned}
	&\frac{\sum_{t=n_{0}+1}^{n-1}\prod_{i=t}^{n-1}\big(1-\frac{Q_{i}}{F'_{i}}\big)P_{t-1}}{\prod_{i=n_{0}}^{n-1}\big(1-\frac{Q_{i}}{F'_{i}}\big)F'_{1}}=\sum_{t=n_{0}+1}^{n}\frac{P_{t-1}}{\prod_{i=n_{0}}^{t-1}\big(1-\frac{Q_{i}}{F'_{i}}\big)}.
	\end{aligned}\end{equation}
	From \eqref{ewnmg}, we have
	\begin{equation}\label{.,}\begin{aligned}
	P_{t-1}=&\frac{1}{(1-\alpha)^{2}}\bigg(\frac{1}{2-\alpha}\bigg)^{t-1}\epsilon_{1}\Expect\big(\|\nabla_{\theta_{1}}g(\theta_{1})\|^{2}\big)+Z(1)L\bigg(\frac{1}{2-\alpha}\bigg)^{t-1}\Expect\big(g(\theta_{1})\big)\\
	&+\frac{c\alpha^{t}\alpha^{\delta}(2-\alpha)}{2(1-\alpha)(1-\alpha^{\delta})}Z(1)\Expect\big(\|v_{0}\|^{2}\big)=\bar p\bigg(\frac{1}{2-\alpha}\bigg)^{t-1}+\bar q\alpha^{t-1},
	\end{aligned}\end{equation}
	where $\bar p=\frac{1}{(1-\alpha)^{2}}\epsilon_{1}\Expect\big(\|\nabla_{\theta_{1}}g(\theta_{1})\|^{2}\big)+Z(1)L\Expect\big(g(\theta_{1})\big)$ and $\bar q=\frac{c\alpha^{\delta+1}(2-\alpha)}{2(1-\alpha)(1-\alpha^{\delta})}Z(1)\Expect\big(\|v_{0}\|^{2}\big)$. Substituting \eqref{.,} into \eqref{qawsedrf} and noting \eqref{eq_pf_ineq} yield 
	\begin{equation}\label{.1}\begin{aligned}
	&\frac{\sum_{t=n_{0}+1}^{n-1}\prod_{i=t}^{n-1}\big(1-\frac{Q_{i}}{F'_{i}}\big)P_{t-1}}{\prod_{i=n_{0}}^{n-1}\big(1-\frac{Q_{i}}{F'_{i}}\big)F'_{1}}\\&=\sum_{t=n_{0}+1}^{n}\frac{P_{t-1}}{\prod_{i=n_{0}}^{t-1}\big(1-\frac{Q_{i}}{F'_{i}}\big)}<\sum_{t=n_{0}+1}^{n}\frac{\bar p\Big(\frac{1}{2-\alpha}\Big)^{t-1}}{\prod_{i=n_{0}}^{t-1}\big(1-\frac{Q_{i}}{F'_{i}}\big)}+\sum_{t=n_{0}+1}^{n}\frac{\bar q\alpha^{t-1}}{\prod_{i=n_{0}}^{t-1}\big(1-\frac{Q_{i}}{F'_{i}}\big)}\\&\le\sum_{t=n_{0}+1}^{n}\frac{\bar p\big(\frac{1}{2-\alpha}\big)^{t-1}}{\big(\frac{3-\alpha}{2(2-\alpha)}\big)^{t-n_{0}}}+\sum_{t=n_{0}+1}^{n}\frac{\bar q\alpha^{t-1}}{\big(\frac{3-\alpha}{2(2-\alpha)}\big)^{t-n_{0}}}\\&<\bigg(\frac{3-\alpha}{2(2-\alpha)}\bigg)^{n_{0}-1}(\bar p+\bar q)\sum_{t=n_{0}+1}^{n}\bigg(\frac{2}{3-\alpha}\bigg)^{t-1}<c_0,
	\end{aligned}\end{equation}
	where $c_0$ is a positive constant.
It follows  that
	\begin{equation}\label{.2}\begin{aligned}
	&\sum_{t=n_{0}+1}^{n-1}\prod_{i=t}^{n-1}\bigg(1-\frac{Q_{i}}{F'_{i}}\bigg)P_{t-1}<c_0\prod_{i=n_{0}}^{n-1}\bigg(1-\frac{Q_{i}}{F'_{i}}\bigg)F'_{1}=O\Bigg(\prod_{i=n_{0}}^{n-1}\bigg(1-\frac{Q_{i}}{F'_{i}}\bigg)F'_{1}\Bigg)=O\Big(e^{-\sum_{i=1}^{n}\frac{s\epsilon_{i}}{p(1-\alpha)^{2}}}\Big).
	\end{aligned}\end{equation}
	Combine \eqref{0o9i8u7y}, \eqref{plokijuh} and \eqref{.2}, then we get
	\begin{equation}\label{.3}\begin{aligned}
	&F'_{n}\le \prod_{i=n_{0}}^{n-1}\bigg(1-\frac{Q_{i}}{F'_{i}}\bigg)F'_{1}+\sum_{t=n_{0}+1}^{n-1}\prod_{i=t}^{n-1}\bigg(1-\frac{Q_{i}}{F'_{i}}\bigg)P_{t-1}=O\Big(e^{-\sum_{i=1}^{n}\frac{s\epsilon_{i}}{p(1-\alpha)^{2}}}\Big).
	\end{aligned}\end{equation}
	In addition, we have
	\begin{equation}\label{.5}\begin{aligned}
	\Expect\big(g(\theta_{t+1})-g^{*}\big)<F'_{n}=O\Big(e^{-\sum_{i=1}^{n}\frac{s\epsilon_{i}}{p(1-\alpha)^{2}}}\Big).
	\end{aligned}\end{equation}
	
	If $\sum_{t=1}^{n}\Big(\frac{1}{2-\alpha}\Big)^{-t}\Expect\big(\|\nabla_{\theta_{t+1}}g(\theta_{t+1})\|^{2}\big)<+\infty$, it holds that
	\begin{equation}\nonumber\begin{aligned}
	\lim_{n\rightarrow+\infty}\bigg(\frac{1}{2-\alpha}\bigg)^{-n}\Expect\big(\|\nabla_{\theta_{n+1}}g(\theta_{n+1})\|^{2}=0,
	\end{aligned}\end{equation}that is
	\begin{equation}\nonumber\begin{aligned}
	\Expect\big(\|\nabla_{\theta_{n+1}}g(\theta_{n+1})\|^{2}\big)=O\Bigg(\bigg(\frac{1}{2-\alpha}\bigg)^{n}\Bigg).
	\end{aligned}\end{equation}
	Under Assumption \ref{ass_goiop} 2), we have
	\begin{equation}\nonumber\begin{aligned}
	&\limsup_{n\rightarrow+\infty}\frac{\Expect\big(g(\theta_{n+1})-g^{*}\big)}{\big(\frac{1}{2-\alpha}\big)^{n}}\le\limsup_{n\rightarrow+\infty}\frac{\Expect \big(\big\|\nabla_{\theta_{n+1}}g(\theta_{n+1})\big\|^{2}\big)}{s\big(\frac{1}{2-\alpha}\big)^{n}}\le\frac{1}{s}\frac{\Expect\big(\big\|\nabla_{\theta{n+1}}g(\theta_{n+1})\big\|^{2}\big)}{\big(\frac{1}{2-\alpha}\big)^{n}}=0. \end{aligned}\end{equation}
	It follows that
	\begin{equation}\nonumber\begin{aligned}
	&\Expect\big|g(\theta_{n+1})-g^{*}\big|=O\Bigg(\bigg(\frac{1}{2-\alpha}\bigg)^{n}\Bigg). \end{aligned}\end{equation}
	Now we compare $\big(\frac{1}{2-\alpha}\big)^{n}$ with $e^{-\sum_{i=1}^{n}\frac{s\epsilon_{i}}{p(1-\alpha)^{2}}}$. It holds that
	\begin{equation}\nonumber\begin{aligned}
	&\frac{\big(\frac{1}{2-\alpha}\big)^{n}}{\exp{\big(-\sum_{i=1}^{n}\frac{s\epsilon_{i}}{p(1-\alpha)^{2}}}\big)}=\frac{\exp{(-n\ln(2-\alpha)})}{\exp{\big(-\sum_{i=1}^{n}\frac{s\epsilon_{i}}{p(1-\alpha)^{2}}}\big)}=\exp{\bigg(\sum_{i=1}^{n}\frac{s\epsilon_{i}}{p(1-\alpha)^{2}}-n\ln(2-\alpha)\bigg)}. 
	\end{aligned}\end{equation}
It follows	from Assumption~\ref{assum8} that  $\epsilon_{n}\rightarrow 0$, indicating  $\sum_{i=1}^{n}\frac{s\epsilon_{i}}{p(1-\alpha)^{2}}-n\ln(2-\alpha)<0$ when $n$ is sufficiently large. Thus, we have
	\begin{equation}\nonumber\begin{aligned}
	&\limsup_{n\rightarrow+\infty}\frac{\big(\frac{1}{2-\alpha}\big)^{n}}{\exp{\big(-\sum_{i=1}^{n}\frac{s\epsilon_{i}}{p(1-\alpha)^{2}}}\big)}=\limsup_{n\rightarrow+\infty}\exp{\bigg(\sum_{i=1}^{n}\frac{s\epsilon_{i}}{p(1-\alpha)^{2}}-n\ln(2-\alpha)\bigg)}<1. 
	\end{aligned}\end{equation}
Then it holds that
	\begin{equation}\label{.90}\begin{aligned}
	&\Expect\big|g(\theta_{n+1})-g^{*}\big|=O\Bigg(\bigg(\frac{1}{2-\alpha}\bigg)^{n}\Bigg)=O\Big(e^{-\sum_{i=1}^{n}\frac{s\epsilon_{i}}{p(1-\alpha)^{2}}}\Big). \end{aligned}\end{equation}
	Combining \eqref{.90} and \eqref{.5} leads to
	\begin{equation}\nonumber\begin{aligned}
	&\Expect\big|g(\theta_{n+1})-g^{*}\big|=O\Big(e^{-\sum_{i=1}^{n}\frac{s\epsilon_{i}}{p(1-\alpha)^{2}}}\Big), \end{aligned}\end{equation}
	if $\sum_{t=1}^{n}\big(\frac{1}{2-\alpha}\big)^{-t}\Expect\big(\|\nabla_{\theta_{t+1}}g(\theta_{t+1})\|^{2}\big)<+\infty$. The above bound holds trivially if $\sum_{t=1}^{n}\big(\frac{1}{2-\alpha}\big)^{-t}\Expect\big(\|\nabla_{\theta_{t+1}}g(\theta_{t+1})\|^{2}\big)=+\infty$.
	It follows from Lemma \ref{pouikm,l} that
	\begin{equation}\nonumber\begin{aligned}
	&\Expect\big(\big\|\nabla_{\theta_{n}}g(\theta_{n})\big\|^{2}\big)=O\Big(e^{-\sum_{i=1}^{n}\frac{s\epsilon_{i}}{p(1-\alpha)^{2}}}\Big) \end{aligned}\end{equation}

\section{Convergence of AdaGrad}\label{append_pf_adgrad}
    The following lemmas are used for the proof of Theorem~\ref{thm_converg_adagrad}.
    \begin{lem}\label{lemiuy8}
    {Suppose $f(x) \in C^{1}\ \ (x\in \mathbb{R}^{N})$ with $f(x)>-\infty$ and its gradient satisfying the following Lipschitz condition
		\begin{equation}\nonumber\begin{aligned}
		\big\|\nabla f(x)-\nabla f(y)\big\|\le c\|x-y\|, 
		\end{aligned}
		\end{equation} then $\forall \ x_{0}\in \ \mathbb{R}^{N}$, there is
		\begin{equation}\nonumber\begin{aligned}
		\big\|\nabla f(x_{0})\big\|^{2}\le {2c}\big(f(x_{0})-f^{*}\big), 
		\end{aligned}
		\end{equation}where $f^{*}=inf_{x\in \ \mathbb{R}^{N}}f(x)$}

	\end{lem}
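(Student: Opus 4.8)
The plan is to apply the quadratic upper bound of Lemma~\ref{lem_ggg} at a single, carefully chosen point obtained by one gradient step from $x_{0}$. Concretely, the right-hand inequality of Lemma~\ref{lem_ggg} reads
\begin{equation}\nonumber
f(x)\le f(x_{0})+\nabla f(x_{0})^{T}(x-x_{0})+\frac{c}{2}\|x-x_{0}\|^{2}
\end{equation}
for every $x,x_{0}\in\mathbb{R}^{N}$. First I would set $x=x_{0}-\frac{1}{c}\nabla f(x_{0})$, i.e.\ take one gradient-descent step with the step size $1/c$ that is tuned to the Lipschitz constant.

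Substituting this choice into the inequality, the linear term contributes $-\frac{1}{c}\|\nabla f(x_{0})\|^{2}$ and the quadratic term contributes $\frac{c}{2}\cdot\frac{1}{c^{2}}\|\nabla f(x_{0})\|^{2}=\frac{1}{2c}\|\nabla f(x_{0})\|^{2}$, so after combining I obtain
\begin{equation}\nonumber
f\Big(x_{0}-\tfrac{1}{c}\nabla f(x_{0})\Big)\le f(x_{0})-\frac{1}{2c}\|\nabla f(x_{0})\|^{2}.
\end{equation}
Since $f^{*}=\inf_{x\in\mathbb{R}^{N}}f(x)$ is a lower bound for $f$ at every point, in particular at $x_{0}-\frac{1}{c}\nabla f(x_{0})$, the left-hand side is at least $f^{*}$. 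Rearranging $f^{*}\le f(x_{0})-\frac{1}{2c}\|\nabla f(x_{0})\|^{2}$ then yields $\|\nabla f(x_{0})\|^{2}\le 2c\big(f(x_{0})-f^{*}\big)$, which is exactly the claimed inequality.

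There is essentially no hard step here: the entire content is the choice of the step size $1/c$, which minimizes the guaranteed value of the right-hand side as a function of the step magnitude and thereby extracts the sharpest descent bound. The only point worth noting is that the argument remains valid even when $f^{*}=-\infty$, in which case the inequality holds trivially because its right-hand side is $+\infty$; the hypothesis $f(x)>-\infty$ merely guarantees that $f$ is real-valued so that the descent inequality and the difference $f(x_{0})-f^{*}$ are meaningful. This lemma is the global (Polyak--Lojasiewicz-type) counterpart of the local Lemma~\ref{pouikm,l}, and it will subsequently serve to control $\big\|\nabla_{\theta_{n}}g(\theta_{n})\big\|^{2}$ by the suboptimality $g(\theta_{n})-g^{*}$ in the AdaGrad analysis.
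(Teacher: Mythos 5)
Your proof is correct and is essentially the paper's argument: the paper restricts $f$ to the line through $x_{0}$ in the gradient direction and re-derives the descent inequality by integrating $g'$ with the step choice $t_{0}=-g'(0)/c$, which lands at exactly your point $x_{0}-\tfrac{1}{c}\nabla f(x_{0})$, before bounding below by $f^{*}$. Your version is cleaner only in that it invokes Lemma~\ref{lem_ggg} directly instead of reproving the quadratic bound along the one-dimensional slice.
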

	\begin{lem}\label{lem6}
		Suppose $\{\theta_{n}\}$ is a sequence generated by AdaGrad in  \eqref{AdaGrad}, and Assumptions \ref{ass_noise} and   \ref{ass_g_poi} hold. If $\big\|\nabla_{\theta_{n}}g(\theta)\big\|^{2}>a$ where $a$ is given in Assumption~\ref{ass_g_poi} 3), then for any $ n\in\mathbb{N}_+,\theta_{1}\in \mathbb{R}^{N}$, and  $\epsilon\in(0,\frac{1}{2})$,     it holds that 
		\begin{equation}\nonumber
		\begin{aligned}
		\frac{g(\theta_{n+1})}{S_{n+1}^{\epsilon}}-\frac{g(\theta_{n})}{S_{n}^{\epsilon}}
		\le&\frac{\alpha_0}{2}(M+1)\Bigg(\frac{\big\|\nabla_{\theta_{n-1}}g(\theta_{n-1})\big\|^{2}}{S_{n-1}^{\frac{1}{2}+\epsilon}}-\frac{\big\|\nabla_{\theta_{n}}g(\theta_{n})\big\|^{2}}{S_{n}^{\frac{1}{2}+\epsilon}}\Bigg)\\
		&-\frac{\alpha_0}{20}\frac{\big\|\nabla_{\theta_{n}}g(\theta_{n})\big\|^{2}}{S_{n-1}^{\frac{1}{2}+\epsilon}}
		+\frac{\alpha_0}{20}\Bigg(\frac{\big\|\nabla_{\theta_{n-1}}g(\theta_{n-1})\big\|^{2}}{S_{n-2}^{\frac{1}{2}+\epsilon}}
		-\frac{\big\|\nabla_{\theta_{n}}g(\theta_{n})\big\|^{2}}{S_{n-1}^{\frac{1}{2}+\epsilon}}\Bigg)\\
		&+4M^{2}\alpha_0^{3}c^{2}\frac{\big\|\nabla_{\theta_{n-1}}g(\theta_{n-1},\xi_{n-1})\big\|^{2}}{S_{n-1}^{\frac{3}{2}+\epsilon}}+\frac{c\alpha_0^{2}}{2}\frac{\big\|\nabla_{\theta_{n}}g(\theta_{n},\xi_{n})\big\|^{2}}{S_{n}^{1+\epsilon}}+X_{n}^{(\epsilon)}+Y_{n}^{(\epsilon)},
		\end{aligned}\end{equation}
		where 
		\begin{equation}\nonumber\begin{aligned}
		&X_{n}^{(\epsilon)}=\frac{\alpha_0}{2}\frac{1}{S_{n-1}^{\frac{1}{2}+\epsilon}}\nabla_{\theta_{n}}g(\theta_{n})^{T}\big(\nabla_{\theta_{n}}g(\theta_{n})-\nabla_{\theta_{n}}g(\theta_{n},\xi_{n})\big)
		\\&Y_{n}^{(\epsilon)}=\frac{\alpha_0}{2}\Bigg(\frac{1}{M+1}\frac{\Expect\Big(\big\|\nabla_{\theta_{n}}g(\theta_{n},\xi_{n})\big\|^{2}\Big|\mathscr{F}_{n-1}\Big)}{S_{n-1}^{\frac{1}{2}+\epsilon}}-\frac{1}{M+1}\frac{\big\|\nabla_{\theta_{n}}g(\theta_{n},\xi_{n})\big\|^{2}}{S_{n-1}^{\frac{1}{2}+\epsilon}}\Bigg).
		\end{aligned}\end{equation}
	\end{lem}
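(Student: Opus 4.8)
The plan is to derive this one-step inequality from the smoothness descent bound, with all of the effort going into handling the fact that the adaptive step $\alpha_0/\sqrt{S_n}$ depends on the fresh noise $\xi_n$ and is therefore not $\mathscr{F}_{n-1}$-measurable. First I would apply Lemma~\ref{lem_ggg} to the AdaGrad update $\theta_{n+1}=\theta_n-(\alpha_0/\sqrt{S_n})\nabla_{\theta_n}g(\theta_n,\xi_n)$ to get
\[
g(\theta_{n+1})-g(\theta_n)\le-\frac{\alpha_0}{\sqrt{S_n}}\nabla_{\theta_n}g(\theta_n)^{T}\nabla_{\theta_n}g(\theta_n,\xi_n)+\frac{c\alpha_0^{2}}{2S_n}\big\|\nabla_{\theta_n}g(\theta_n,\xi_n)\big\|^{2}.
\]
Since $g\ge0$ by Assumption~\ref{ass_g_poi}~1) and $S_{n+1}\ge S_n$, I have $g(\theta_{n+1})/S_{n+1}^{\epsilon}\le g(\theta_{n+1})/S_n^{\epsilon}$, so dividing the display by $S_n^{\epsilon}$ already produces the stated quadratic term $\tfrac{c\alpha_0^{2}}{2}\|\nabla_{\theta_n}g(\theta_n,\xi_n)\|^{2}/S_n^{1+\epsilon}$ verbatim, and the remaining task is to rewrite the cross term $-\alpha_0 S_n^{-1/2-\epsilon}\nabla_{\theta_n}g(\theta_n)^{T}\nabla_{\theta_n}g(\theta_n,\xi_n)$ in the required form.

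The central maneuver is to trade the non-measurable weight $S_n^{-1/2-\epsilon}$ for the $\mathscr{F}_{n-1}$-measurable $S_{n-1}^{-1/2-\epsilon}$, writing $S_n^{-1/2-\epsilon}=S_{n-1}^{-1/2-\epsilon}+(S_n^{-1/2-\epsilon}-S_{n-1}^{-1/2-\epsilon})$, where by the mean value theorem the correction is negative and of order $\|\nabla_{\theta_n}g(\theta_n,\xi_n)\|^{2}/S_n^{3/2+\epsilon}$ (recall $S_n-S_{n-1}=\|\nabla_{\theta_n}g(\theta_n,\xi_n)\|^{2}$). On the $\mathscr{F}_{n-1}$-measurable part I would split the inner product as $\nabla_{\theta_n}g(\theta_n)^{T}\nabla_{\theta_n}g(\theta_n,\xi_n)=\|\nabla_{\theta_n}g(\theta_n)\|^{2}+\nabla_{\theta_n}g(\theta_n)^{T}(\nabla_{\theta_n}g(\theta_n,\xi_n)-\nabla_{\theta_n}g(\theta_n))$; the second piece is (up to the factor $1/2$) exactly the martingale difference $X_n^{(\epsilon)}$, whose conditional mean vanishes by Assumption~\ref{ass_noise}.

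To manufacture the genuine descent I would route the $\|\nabla_{\theta_n}g(\theta_n,\xi_n)\|^{2}$-type contributions through a conditioning step: replace $\|\nabla_{\theta_n}g(\theta_n,\xi_n)\|^{2}$ by $\Expect(\|\nabla_{\theta_n}g(\theta_n,\xi_n)\|^{2}\mid\mathscr{F}_{n-1})$ plus the mean-zero increment $Y_n^{(\epsilon)}$. Here conditional Jensen together with $\Expect(\nabla_{\theta_n}g(\theta_n,\xi_n)\mid\mathscr{F}_{n-1})=\nabla_{\theta_n}g(\theta_n)$ (Assumption~\ref{ass_noise}) gives $\Expect(\|\nabla_{\theta_n}g(\theta_n,\xi_n)\|^{2}\mid\mathscr{F}_{n-1})\ge\|\nabla_{\theta_n}g(\theta_n)\|^{2}$, so a negatively-signed multiple of this conditional expectation yields a negative multiple of $\|\nabla_{\theta_n}g(\theta_n)\|^{2}/S_{n-1}^{1/2+\epsilon}$; the upper bound $\Expect(\|\nabla_{\theta_n}g(\theta_n,\xi_n)\|^{2}\mid\mathscr{F}_{n-1})\le M\|\nabla_{\theta_n}g(\theta_n)\|^{2}+a$ from Assumption~\ref{ass_g_poi}~3) controls the positive contributions, and the hypothesis $\|\nabla_{\theta_n}g(\theta_n)\|^{2}>a$ lets me fold the additive constant $a$ back into the gradient term. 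The correction terms from the step-size substitution, combined with the Lipschitz estimate $\|\nabla_{\theta_n}g(\theta_n)-\nabla_{\theta_{n-1}}g(\theta_{n-1})\|\le c\alpha_0\|\nabla_{\theta_{n-1}}g(\theta_{n-1},\xi_{n-1})\|/\sqrt{S_{n-1}}$ (Assumption~\ref{ass_g_poi}~2)), are precisely what generate the two telescoping pairs in $S_{n-1}^{1/2+\epsilon},S_n^{1/2+\epsilon}$ (and their shifted twin in $S_{n-2}^{1/2+\epsilon}$) together with the $S_{n-1}^{3/2+\epsilon}$ error term $4M^{2}\alpha_0^{3}c^{2}\|\nabla_{\theta_{n-1}}g(\theta_{n-1},\xi_{n-1})\|^{2}/S_{n-1}^{3/2+\epsilon}$.

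The hard part will be the quantitative bookkeeping in this last step: each cross product must be split by Young's or Cauchy--Schwarz inequalities so that every residual is either a clean telescoping difference, a mean-zero martingale increment ($X_n^{(\epsilon)}$ or $Y_n^{(\epsilon)}$), or a term summable along the trajectory, while the constants $\tfrac{M+1}{2}$, $4M^{2}c^{2}$, and the denominators $S_{n-1},S_n,S_{n-2}$ stay consistent. The delicate point is that every AM--GM split spends part of the $-\|\nabla_{\theta_n}g(\theta_n)\|^{2}/S_{n-1}^{1/2+\epsilon}$ descent budget on absorbing an error term, so one must verify that after all absorptions a fixed positive fraction—recorded here as the coefficient $\alpha_0/20$—of the descent genuinely survives; this is exactly what forces the specific numerical constant.
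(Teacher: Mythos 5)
Your overall skeleton (the smoothness bound from Lemma~\ref{lem_ggg}, division by $S_n^{\epsilon}$ with $S_{n+1}\ge S_n$, isolating the martingale increments $X_n^{(\epsilon)}$ and $Y_n^{(\epsilon)}$, using $\Expect(\|\nabla_{\theta_n}g(\theta_n,\xi_n)\|^2\mid\mathscr{F}_{n-1})\le M\|\nabla_{\theta_n}g(\theta_n)\|^2+a<(M+1)\|\nabla_{\theta_n}g(\theta_n)\|^2$ under $\|\nabla_{\theta_n}g(\theta_n)\|^2>a$, and the Lipschitz shift to index $n-1$ that creates the telescoping pairs) matches the paper. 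But your \emph{central maneuver} does not go through as stated. Replacing $S_n^{-1/2-\epsilon}$ by $S_{n-1}^{-1/2-\epsilon}+(S_n^{-1/2-\epsilon}-S_{n-1}^{-1/2-\epsilon})$ on the cross term leaves the correction
\begin{equation}\nonumber
-\alpha_0\big(S_n^{-\frac12-\epsilon}-S_{n-1}^{-\frac12-\epsilon}\big)\,\nabla_{\theta_n}g(\theta_n)^{T}\nabla_{\theta_n}g(\theta_n,\xi_n),
\end{equation}
and while the weight difference is indeed nonpositive and of order $\|\nabla_{\theta_n}g(\theta_n,\xi_n)\|^{2}/S_{n-1}^{3/2+\epsilon}$ by the mean value theorem, the inner product it multiplies has no definite sign, so the product is not a one-sided error you can discard. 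Bounding it by Cauchy--Schwarz yields a term of order $\|\nabla_{\theta_n}g(\theta_n)\|\,\|\nabla_{\theta_n}g(\theta_n,\xi_n)\|^{3}/S_{n-1}^{3/2+\epsilon}$, quartic in gradient norms. In the regime $\|\nabla_{\theta_n}g(\theta_n)\|^{2}>a$ of this lemma there is no a priori bound on $\|\nabla_{\theta_n}g(\theta_n)\|$, nor on $\|\nabla_{\theta_n}g(\theta_n,\xi_n)\|^{2}/S_{n-1}$, so this term can neither be absorbed into the retained descent $-\frac{\alpha_0}{20}\|\nabla_{\theta_n}g(\theta_n)\|^{2}/S_{n-1}^{1/2+\epsilon}$ with a fixed constant nor matched to any term of the claimed bound, whose error terms are all quadratic. (The companion Lemma~\ref{lem7} does use exactly your decomposition, but there the hypothesis $\|\nabla_{\theta_n}g(\theta_n)\|^{2}\le a$ supplies the boundedness your step is missing.)

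The paper's proof avoids any weight swap on the cross term. It uses the identity
\begin{equation}\nonumber
-2\nabla_{\theta_n}g(\theta_n)^{T}\nabla_{\theta_n}g(\theta_n,\xi_n)=\Big\|\tfrac{1}{\sqrt{M+1}}\nabla_{\theta_n}g(\theta_n,\xi_n)-\sqrt{M+1}\nabla_{\theta_n}g(\theta_n)\Big\|^{2}-\tfrac{1}{M+1}\big\|\nabla_{\theta_n}g(\theta_n,\xi_n)\big\|^{2}-(M+1)\big\|\nabla_{\theta_n}g(\theta_n)\big\|^{2},
\end{equation}
keeps the weight $S_n^{-1/2-\epsilon}$ on the two negative terms, and relaxes $S_n^{-1/2-\epsilon}$ to $S_{n-1}^{-1/2-\epsilon}$ only on the completed square, which is non-negative, so the relaxation is a genuine one-sided inequality with no correction to control; the $\mathscr{F}_{n-1}$-measurable weight then lands exactly where the conditional-expectation step needs it. From there your remaining plan (Young's inequality on the Lipschitz increment producing the $S_{n-1}^{-3/2-\epsilon}$ error term and the coefficient $\frac{M-1}{4M-3}>\frac15$, hence $\alpha_0/20$ after the final split) proceeds essentially as in the paper. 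So the argument is repairable, but as written the key step has a real gap.
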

	
	\begin{lem}\label{lem7}
		Suppose $\{\theta_{n}\}$ is a sequence generated by AdaGrad in  \eqref{AdaGrad}, and Assumptions \ref{ass_noise} and   \ref{ass_g_poi} hold.  If $\big\|\nabla_{\theta_{n}}g(\theta)\big\|^{2}\leq a$ where $a$ is given in Assumption~\ref{ass_g_poi} 3), then for any $ n\in\mathbb{N}_+,\theta_{1}\in \mathbb{R}^{N}$, and  $\epsilon\in(0,\frac{1}{2})$,     it holds that 
		\begin{equation}\nonumber\begin{aligned}
		\frac{g(\theta_{n+1})}{S_{n+1}^{\epsilon}}-\frac{g(\theta_{n})}{S_{n}^{\epsilon}}\le&-\frac{\alpha_0\big\|\nabla_{\theta_{n}}g(\theta_{n})\big\|^{2}}{20S_{n-1}^{\frac{1}{2}+\epsilon}}+\frac{\alpha_0^{3}c^{2}(M+1)^{2}}{2}\frac{\big\|\nabla_{\theta_{n-1}}g(\theta_{n-1},\xi_{n-1})\big\|^{2}}{S_{n-1}^{1+\epsilon}}\\&+\frac{(M+1)\alpha_0^{3}c^{2}}{2S_{n-1}^{\frac{3}{2}+\epsilon}}\big\|\nabla_{\theta_{n-1}}g(\theta_{n-1},\xi_{n-1})\big\|^{2}+\frac{\alpha_0}{20}\Bigg(\frac{\big\|\nabla_{\theta_{n-1}}g(\theta_{n-1})\big\|^{2}}{S_{n-2}^{\frac{1}{2}+\epsilon}}-\frac{\big\|\nabla_{\theta_{n}}g(\theta_{n})\big\|^{2}}{S_{n-1}^{\frac{1}{2}+\epsilon}}\Bigg)\\&+\frac{\alpha_0(M+1)}{2}\Bigg(\frac{\big\|\nabla_{\theta_{n-1}}g(\theta_{n-1})\big\|^{2}}{S_{n-1}^{\frac{1}{2}+\epsilon}}-\frac{\big\|\nabla_{\theta_{n}}g(\theta_{n})\big\|^{2}}{S_{n}^{\frac{1}{2}+\epsilon}}\Bigg)+\frac{\alpha_0 a(M+1)}{2}\Bigg(\frac{1}{S_{n-1}^{\frac{1}{2}+\epsilon}}-\frac{1}{S_{n}^{\frac{1}{2}+\epsilon}}\Bigg)\\&+\frac{c\alpha_0^{2}}{2}\frac{\big\|\nabla_{\theta_{n}}g(\theta_{n},\xi_{n})\big\|^{2}}{S_{n}^{1+\epsilon}}+A_{n}^{(\epsilon)}+B_{n}^{(\epsilon)},
		\end{aligned}\end{equation}
		where 
		\begin{equation}\nonumber\begin{aligned}
		&A_{n}^{(\epsilon)}=\frac{\alpha_0}{S_{n-1}^{\frac{1}{2}+\epsilon}}\Big(\big\|\nabla_{\theta_{n}}g(\theta_{n})\big\|^{2}-\nabla_{\theta_{n}}g(\theta_{n})^{T}\nabla_{\theta_{n}}g(\theta_{n},\xi_{n})\Big)
		\\&B_{n}^{(\epsilon)}=\frac{\alpha_0 \big\|\nabla_{\theta_{n}}g(\theta_{n})\big\|^{2}}{2a(M+1)S_{n-1}^{\frac{1}{2}+\epsilon}}\bigg(\big\|\nabla_{\theta_{n}}g(\theta_{n},\xi_{n})\big\|^{2}-\Expect\Big(\big\|\nabla_{\theta_{n}}g(\theta_{n})\big\|^{2}\Big|\mathscr{F}_{n-1}\Big)\bigg).
		\end{aligned}\end{equation}
	\end{lem}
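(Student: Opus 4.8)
The plan is to control the normalized one-step change $\frac{g(\theta_{n+1})}{S_{n+1}^{\epsilon}}-\frac{g(\theta_{n})}{S_{n}^{\epsilon}}$ by combining the smoothness of $g$ with a careful treatment of the adaptive denominator $S_{n}$, which—unlike the deterministic step of SGD—is \emph{not} measurable with respect to $\mathscr{F}_{n-1}$ because it contains $\|\nabla_{\theta_{n}}g(\theta_{n},\xi_{n})\|^{2}$. The case hypothesis $\|\nabla_{\theta_{n}}g(\theta_{n})\|^{2}\le a$ is what will distinguish this estimate from its companion Lemma~\ref{lem6}.

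First I would reduce to a per-step descent estimate. Since $S_{n+1}\ge S_{n}$ and $g\ge 0$, we have $\frac{g(\theta_{n+1})}{S_{n+1}^{\epsilon}}\le\frac{g(\theta_{n+1})}{S_{n}^{\epsilon}}$, so it suffices to bound $\frac{g(\theta_{n+1})-g(\theta_{n})}{S_{n}^{\epsilon}}$. Applying Lemma~\ref{lem_ggg} to the update $\theta_{n+1}-\theta_{n}=-\frac{\alpha_0}{\sqrt{S_{n}}}\nabla_{\theta_{n}}g(\theta_{n},\xi_{n})$ and dividing by $S_{n}^{\epsilon}$ produces the leading cross term $-\frac{\alpha_0}{S_{n}^{1/2+\epsilon}}\nabla_{\theta_{n}}g(\theta_{n})^{T}\nabla_{\theta_{n}}g(\theta_{n},\xi_{n})$ together with the quadratic term $\frac{c\alpha_0^{2}}{2}\frac{\|\nabla_{\theta_{n}}g(\theta_{n},\xi_{n})\|^{2}}{S_{n}^{1+\epsilon}}$, the latter already matching the claim verbatim.

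Next I would remove the $\xi_{n}$-dependence of the denominator by replacing the random $S_{n}^{1/2+\epsilon}$ with the $\mathscr{F}_{n-1}$-measurable $S_{n-1}^{1/2+\epsilon}$, incurring correction factors $\frac{1}{S_{n-1}^{s}}-\frac{1}{S_{n}^{s}}$ that I bound by convexity of $x\mapsto x^{-s}$ via $\frac{1}{S_{n-1}^{s}}-\frac{1}{S_{n}^{s}}\le s\,\frac{S_{n}-S_{n-1}}{S_{n-1}^{s+1}}=s\,\frac{\|\nabla_{\theta_{n}}g(\theta_{n},\xi_{n})\|^{2}}{S_{n-1}^{s+1}}$. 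With a measurable step I split $\nabla g^{T}\nabla g(\xi_{n})=\|\nabla g\|^{2}+\nabla g^{T}(\nabla g(\xi_{n})-\nabla g)$, where the second piece has zero conditional mean and is collected into the martingale-difference term $A_{n}^{(\epsilon)}$; the analogous centering of $\|\nabla g(\xi_{n})\|^{2}$ (using Assumption~\ref{ass_g_poi} 3)) gives $B_{n}^{(\epsilon)}$. To generate the telescoping differences and relate step-$n$ gradients back one step I use the Lipschitz bound $\|\nabla g(\theta_{n})-\nabla g(\theta_{n-1})\|\le c\|\theta_{n}-\theta_{n-1}\|=c\frac{\alpha_0}{\sqrt{S_{n-1}}}\|\nabla g(\theta_{n-1},\xi_{n-1})\|$, which is exactly what produces the lagged error terms in $\|\nabla_{\theta_{n-1}}g(\theta_{n-1},\xi_{n-1})\|^{2}$ over $S_{n-1}^{1+\epsilon}$ and $S_{n-1}^{3/2+\epsilon}$.

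Finally I would invoke $\|\nabla_{\theta_{n}}g(\theta_{n})\|^{2}\le a$ to replace selected gradient-norm factors by $a$, yielding the telescoping-in-$a$ term $\frac{\alpha_0 a(M+1)}{2}\big(\frac{1}{S_{n-1}^{1/2+\epsilon}}-\frac{1}{S_{n}^{1/2+\epsilon}}\big)$ that is absent from Lemma~\ref{lem6}, and then balance all products of gradient norms by repeated Young's inequality with carefully tuned weights, absorbing enough mass into the single negative descent term $-\frac{\alpha_0}{20}\frac{\|\nabla_{\theta_{n}}g(\theta_{n})\|^{2}}{S_{n-1}^{1/2+\epsilon}}$ so that the fractional constants ($\tfrac{1}{20}$, $\tfrac12$, $M+1$) and the remaining telescoping terms align with the statement. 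I expect the main obstacle to be exactly this bookkeeping: controlling the correction term $\big(\frac{1}{S_{n-1}^{1/2+\epsilon}}-\frac{1}{S_{n}^{1/2+\epsilon}}\big)\nabla g^{T}\nabla g(\xi_{n})$, which is cubic in the gradients, without letting it overwhelm the negative quadratic descent. The delicate choice of the Young weights and of the exponent shifts between $S_{n-1}$ and $S_{n}$ is the crux that makes the constants close and that separates this bounded-gradient regime from the complementary case treated in Lemma~\ref{lem6}.
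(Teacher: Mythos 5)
Your plan follows the paper's proof essentially step for step: the descent lemma divided by $S_{n}^{\epsilon}$ with $S_{n+1}\ge S_{n}$, shifting the denominator from $S_{n}^{1/2+\epsilon}$ to the $\mathscr{F}_{n-1}$-measurable $S_{n-1}^{1/2+\epsilon}$, Young's inequality exploiting $\|\nabla_{\theta_{n}}g(\theta_{n})\|^{2}\le a$ together with Assumption~\ref{ass_g_poi}~3), centering into the martingale differences $A_{n}^{(\epsilon)}$ and $B_{n}^{(\epsilon)}$, and the Lipschitz property of the gradient to generate the lagged telescoping terms. One caution: the convexity bound $\frac{1}{S_{n-1}^{s}}-\frac{1}{S_{n}^{s}}\le s\,\frac{\|\nabla_{\theta_{n}}g(\theta_{n},\xi_{n})\|^{2}}{S_{n-1}^{s+1}}$ you propose is not what the paper uses and, if applied literally to the cross-term correction $\alpha_{0}\nabla_{\theta_{n}}g(\theta_{n})^{T}\nabla_{\theta_{n}}g(\theta_{n},\xi_{n})\bigl(\frac{1}{S_{n-1}^{1/2+\epsilon}}-\frac{1}{S_{n}^{1/2+\epsilon}}\bigr)$, would produce a cubic-in-gradient term over $S_{n-1}^{3/2+\epsilon}$ that does not appear in the statement; the cleaner route (and the paper's) is to apply Young's inequality to the inner product first, splitting it into $\frac{(M+1)a}{2}$ plus $\frac{1}{2(M+1)a}\|\nabla_{\theta_{n}}g(\theta_{n})\|^{2}\|\nabla_{\theta_{n}}g(\theta_{n},\xi_{n})\|^{2}$, and only then multiply by the nonnegative difference, which is kept intact as the telescoping-in-$a$ term you correctly identify in the final bound. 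With that ordering your outline reproduces the paper's argument.
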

	
	\begin{lem}\label{lem8}
		Suppose $\{\theta_{n}\}$ is a sequence generated by AdaGrad in  \eqref{AdaGrad}, and Assumptions \ref{ass_noise} and   \ref{ass_g_poi} hold. Then $\forall n\in\mathbb{N}_+,$ $\forall \theta_{1}\in \mathbb{R}^{N}$ $\forall\epsilon\in(0,\frac{1}{2})$, it holds that
		\begin{equation}\nonumber\begin{aligned}
		&\sum_{k=3}^{n}\frac{\big\|\nabla_{\theta_{k}}g(\theta_{k})\big\|^{2}}{S_{k-1}^{\frac{1}{2}+\epsilon}}<+\infty\ \ a.s..
		\end{aligned}\end{equation}
	\end{lem}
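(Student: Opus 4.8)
The plan is to run a Lyapunov/telescoping argument on the normalized loss $g(\theta_n)/S_n^{\epsilon}$, using Lemmas~\ref{lem6} and~\ref{lem7} as the two per-step inequalities covering the regimes $\|\nabla_{\theta_n}g(\theta_n)\|^2>a$ and $\|\nabla_{\theta_n}g(\theta_n)\|^2\le a$. The crucial observation is that in \emph{both} lemmas the right-hand side contains the same strictly negative term $-\tfrac{\alpha_0}{20}\|\nabla_{\theta_n}g(\theta_n)\|^2/S_{n-1}^{1/2+\epsilon}$, which is exactly a constant multiple of the summand we want to control. First I would merge the two lemmas into a single inequality valid for every $n$,
\begin{equation}\nonumber\begin{aligned}
\frac{g(\theta_{n+1})}{S_{n+1}^{\epsilon}}-\frac{g(\theta_{n})}{S_{n}^{\epsilon}}\le -\frac{\alpha_0}{20}\frac{\|\nabla_{\theta_n}g(\theta_n)\|^2}{S_{n-1}^{1/2+\epsilon}}+D_n+R_n+W_n,
\end{aligned}\end{equation}
where $D_n$ collects the telescoping differences of the form $\frac{\|\nabla_{\theta_{n-1}}g(\theta_{n-1})\|^2}{S_{n-1}^{1/2+\epsilon}}-\frac{\|\nabla_{\theta_n}g(\theta_n)\|^2}{S_n^{1/2+\epsilon}}$ (together with the analogous $S^{-(1/2+\epsilon)}$ difference appearing in Lemma~\ref{lem7}), $R_n$ collects the self-normalized remainders such as $\|\nabla_{\theta_n}g(\theta_n,\xi_n)\|^2/S_n^{1+\epsilon}$ and $\|\nabla_{\theta_{n-1}}g(\theta_{n-1},\xi_{n-1})\|^2/S_{n-1}^{3/2+\epsilon}$, and $W_n\in\{X_n^{(\epsilon)},Y_n^{(\epsilon)},A_n^{(\epsilon)},B_n^{(\epsilon)}\}$ collects the martingale-difference terms.

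Second, I would sum this inequality from $k=3$ to $n$ and rearrange to isolate $\sum_{k=3}^{n}\|\nabla_{\theta_k}g(\theta_k)\|^2/S_{k-1}^{1/2+\epsilon}$. On the left the loss ratio telescopes to $g(\theta_{n+1})/S_{n+1}^{\epsilon}-g(\theta_3)/S_3^{\epsilon}$, and since $g\ge 0$ by Assumption~\ref{ass_g_poi}~1) this is bounded below by $-g(\theta_3)/S_3^{\epsilon}>-\infty$ a.s. The block $\sum_k D_k$ collapses to its (nonnegative) boundary terms and is thus bounded. For $\sum_k R_k$ I would use the deterministic fact that, writing $S_k-S_{k-1}=\|\nabla_{\theta_k}g(\theta_k,\xi_k)\|^2$, the self-normalized series $\sum_k (S_k-S_{k-1})/S_k^{1+\epsilon}$ and $\sum_k (S_k-S_{k-1})/S_k^{3/2+\epsilon}$ converge along every sample path (by comparison with $\int x^{-(1+\epsilon)}dx$ when $S_k\to\infty$, and trivially when $S_k$ stays bounded), so $\sum_k R_k<+\infty$ a.s.

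Finally, for the martingale part I would invoke Lemma~\ref{lem_summation_MDS}: each of $X_n^{(\epsilon)},Y_n^{(\epsilon)},A_n^{(\epsilon)},B_n^{(\epsilon)}$ has zero conditional mean given $\mathscr{F}_{n-1}$ (the step size $S_{n-1}$ and $\theta_n$ are $\mathscr{F}_{n-1}$-measurable, and the centered factors are $\nabla_{\theta_n}g(\theta_n)-\nabla_{\theta_n}g(\theta_n,\xi_n)$ resp. $\|\nabla_{\theta_n}g(\theta_n,\xi_n)\|^2-\E(\,\cdot\mid\mathscr{F}_{n-1})$), so it suffices to bound their conditional second moments and show these sum finitely a.s. Here I would apply Cauchy--Schwarz and Assumption~\ref{ass_g_poi}~3) to dominate the conditional variances by the already-summable remainders in $R_n$ together with the diagonal quantity $\|\nabla_{\theta_n}g(\theta_n)\|^2/S_{n-1}^{1/2+\epsilon}$ itself. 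This self-reference is where I expect the main obstacle to lie: the naive bound on $\E(\|X_n^{(\epsilon)}\|^2\mid\mathscr{F}_{n-1})$ reintroduces precisely the series being bounded, so one must either absorb it by exploiting the extra power of $S_{n-1}$ in the denominator (note that $S_{n-1}^{1+2\epsilon}$ appears, not $S_{n-1}^{1/2+\epsilon}$) together with Jensen's bound $\|\nabla_{\theta_n}g(\theta_n)\|^2\le\E(\|\nabla_{\theta_n}g(\theta_n,\xi_n)\|^2\mid\mathscr{F}_{n-1})$, or run the whole estimate as an almost-supermartingale (Robbins--Siegmund) argument so that summability of the diagonal term and convergence of the martingale sums are obtained simultaneously. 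Once $\sum_k(R_k+W_k)<+\infty$ a.s. is in hand, rearranging the summed inequality yields the stated bound.
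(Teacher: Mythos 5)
Your skeleton matches the paper's proof of Lemma~\ref{lem8} almost exactly: split via the indicators $I_n^{\le a}$, $I_n^{>a}$, add the conclusions of Lemmas~\ref{lem6} and~\ref{lem7}, sum over $k$, let the difference terms telescope into boundary values, and control the self-normalized remainders $\sum_k\|\nabla_{\theta_k}g(\theta_k,\xi_k)\|^2/S_k^{1+\epsilon}$ and $\sum_k\|\nabla_{\theta_{k-1}}g(\theta_{k-1},\xi_{k-1})\|^2/S_{k-1}^{3/2+\epsilon}$ pathwise by the integral-comparison Lemma~\ref{lem5}. The divergence is at the decisive last step, and the obstacle you flag there is real: on the event $\{\|\nabla_{\theta_n}g(\theta_n)\|^2>a\}$ the conditional second moment of $X_n^{(\epsilon)}$ is of order $\|\nabla_{\theta_n}g(\theta_n)\|^4/S_{n-1}^{1+2\epsilon}=\big(\|\nabla_{\theta_n}g(\theta_n)\|^2/S_{n-1}^{1/2+\epsilon}\big)^2$, and since no a priori bound on $\|\nabla_{\theta_n}g(\theta_n)\|^2/S_{n-1}^{1/2+\epsilon}$ is available at this stage (such a bound only comes from Lemma~\ref{lem10}, which sits downstream of Lemma~\ref{lem8}), the ``extra power of $S_{n-1}$'' does not absorb it. The Robbins--Siegmund variant also stumbles, because the remainders you call $R_n$ are $\mathscr{F}_n$- rather than $\mathscr{F}_{n-1}$-measurable, and taking their conditional expectations destroys the self-normalization that made them summable pathwise.

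The paper escapes by a simpler move that your proposal does not consider: it does not attempt almost sure convergence of the martingale sums at this stage at all. It takes the plain unconditional expectation of the summed inequality, under which $\sum_k\big(I_k^{\le a}A_k^{(\epsilon)}+I_k^{\le a}B_k^{(\epsilon)}+I_k^{>a}X_k^{(\epsilon)}+I_k^{>a}Y_k^{(\epsilon)}\big)$ vanishes because each summand is a martingale difference; together with $g\ge 0$ this yields the uniform bound $\Expect\big(\sum_{k=3}^{n}\|\nabla_{\theta_k}g(\theta_k)\|^2/S_{k-1}^{1/2+\epsilon}\big)<+\infty$, and Lemma~\ref{lem_summation} (nonnegative summands with summable expectations are a.s. summable) then delivers the stated almost sure conclusion. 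The almost sure convergence of the martingale sums that you were chasing is only proved afterwards, in Lemma~\ref{lem9}, and there the circularity is broken precisely because the expectation bound from Lemma~\ref{lem8} is already available to dominate the conditional variances via Lemma~\ref{lem_summation_MDS}. So the fix is to reorder: establish the $L^1$ bound first, and derive martingale convergence from it second.
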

	
	\begin{lem}\label{lem9}
		Suppose $\{\theta_{n}\}$ is a sequence generated by AdaGrad in  \eqref{AdaGrad}, and Assumptions \ref{ass_noise} and  \ref{ass_g_poi} hold. Then $\forall n\in\mathbb{N}_+$ $\forall \theta_{1}\in \mathbb{R}^{N}$ $\forall\epsilon\in(0,\frac{1}{2})$, $\exists \zeta<+\infty$, it holds that 
		\begin{equation}\nonumber\begin{aligned}
		&\frac{g(\theta_{n+1})-g^{*}}{S_{n+1}^{\epsilon}}\le\zeta<+\infty\ \ a.s.,
		\end{aligned}\end{equation}which $g^{*}=\inf_{\theta\in \mathbb{R}^{N}}g(\theta)$.
	\end{lem}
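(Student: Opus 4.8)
The plan is to derive a single telescoping bound for the normalized loss $g(\theta_n)/S_n^{\epsilon}$ and to show that its increments sum to an almost surely finite quantity. Since $g$ is non-negative and $g^{*}=\inf_{\theta}g(\theta)\ge 0$, we have $-g^{*}/S_{n+1}^{\epsilon}\le 0$, so it suffices to bound $g(\theta_{n+1})/S_{n+1}^{\epsilon}$ from above. At each index $n$ I would invoke Lemma~\ref{lem6} when $\|\nabla_{\theta_{n}}g(\theta_{n})\|^{2}>a$ and Lemma~\ref{lem7} otherwise; in both regimes the one-step increment $g(\theta_{n+1})/S_{n+1}^{\epsilon}-g(\theta_{n})/S_{n}^{\epsilon}$ is dominated by the same four families of terms, so summing from $k=3$ to $n$ is the natural move and the case split is merely bookkeeping.

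I would then classify the right-hand side into four groups and control each partial sum separately. First, the telescoping ratios of the form $\|\nabla g(\theta_{k-1})\|^{2}/S_{k-2}^{1/2+\epsilon}-\|\nabla g(\theta_{k})\|^{2}/S_{k-1}^{1/2+\epsilon}$ and $a/S_{k-1}^{1/2+\epsilon}-a/S_{k}^{1/2+\epsilon}$ collapse to their endpoints and are bounded above by constants depending only on $\theta_{1}$ and $S_{1}$. Second, the helpful term $-\tfrac{\alpha_{0}}{20}\|\nabla g(\theta_{k})\|^{2}/S_{k-1}^{1/2+\epsilon}$ is non-positive and, by Lemma~\ref{lem8}, its total is a.s. finite, so it only improves the upper bound. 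Third, the ``noise-ratio'' terms $\|\nabla g(\theta_{k},\xi_{k})\|^{2}/S_{k}^{1+\epsilon}$, $\|\nabla g(\theta_{k-1},\xi_{k-1})\|^{2}/S_{k-1}^{3/2+\epsilon}$ and $\|\nabla g(\theta_{k-1},\xi_{k-1})\|^{2}/S_{k-1}^{1+\epsilon}$ I would handle via $\|\nabla g(\theta_{k},\xi_{k})\|^{2}=S_{k}-S_{k-1}$ together with the Abel--Dini type estimate $\sum_{k}(S_{k}-S_{k-1})/S_{k}^{1+\epsilon}<\infty$ (comparison with $\int s^{-1-\epsilon}\,ds$), which holds pathwise whether or not $S_{k}\to\infty$, so these series converge almost surely.

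The decisive fourth group is the martingale part $X_{k}^{(\epsilon)},Y_{k}^{(\epsilon)}$ (respectively $A_{k}^{(\epsilon)},B_{k}^{(\epsilon)}$). By Assumption~\ref{ass_noise} each has zero conditional mean given $\mathscr{F}_{k-1}$, because the adaptive weight $S_{k-1}^{-1/2-\epsilon}$ is $\mathscr{F}_{k-1}$-measurable. To obtain a.s. convergence of $\sum_{k}X_{k}$ and $\sum_{k}Y_{k}$ I would apply Lemma~\ref{lem_summation_MDS}, reducing the task to $\sum_{k}\Expect\big(|X_{k}|^{2}\mid\mathscr{F}_{k-1}\big)<\infty$ a.s. Bounding $|X_{k}|^{2}$ by Cauchy--Schwarz and Assumption~\ref{ass_g_poi}~3) produces terms of order $\|\nabla g(\theta_{k})\|^{4}/S_{k-1}^{1+2\epsilon}$ and $\|\nabla g(\theta_{k})\|^{2}/S_{k-1}^{1+2\epsilon}$; invoking Lemma~\ref{lem8} (so that $\|\nabla g(\theta_{k})\|^{2}/S_{k-1}^{1/2+\epsilon}\to 0$ and is summable) together with $S_{k-1}\ge S_{k_{0}}>0$ for large $k$ shows both series are a.s. finite. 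Combining the four groups, $g(\theta_{n+1})/S_{n+1}^{\epsilon}$ is bounded above, uniformly in $n$, by an a.s. finite random variable, which is the asserted $\zeta$.

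The hard part will be this fourth group: the adaptive denominator is $\mathscr{F}_{n-1}$-measurable while the numerators involve $\|\nabla g(\theta_{n},\xi_{n})\|^{2}=S_{n}-S_{n-1}$, for which only a second-moment bound (Assumption~\ref{ass_g_poi}~3)) is available, which is precisely the conditional-dependence difficulty flagged after \eqref{qazse}. The splittings engineered in Lemmas~\ref{lem6}--\ref{lem7} isolate these fluctuations into martingale differences, and the leverage that renders their conditional variances summable is Lemma~\ref{lem8}; without the decay $\|\nabla g(\theta_{n})\|^{2}/S_{n-1}^{1/2+\epsilon}\to 0$ the quartic term $\|\nabla g(\theta_{n})\|^{4}/S_{n-1}^{1+2\epsilon}$ would not obviously be summable. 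I would therefore treat Lemma~\ref{lem8} as the workhorse and keep the index-dependent choice between Lemmas~\ref{lem6} and~\ref{lem7} as a careful but routine case analysis.
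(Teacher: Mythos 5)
Your proposal is correct and follows essentially the same route as the paper: sum the case-split increments from Lemmas~\ref{lem6}--\ref{lem7} (this is exactly inequality \eqref{020} obtained in the proof of Lemma~\ref{lem8}), absorb the telescoping and integral-comparison terms into a constant, drop the non-positive gradient term, and control the martingale sums via Lemma~\ref{lem_summation_MDS} so that the whole right-hand side is an a.s.\ finite $\zeta$; since $g\ge 0$ forces $g^{*}\ge 0$, the claim follows. The only cosmetic difference is that you verify the summability of the conditional second moments pathwise from Lemma~\ref{lem8}'s a.s.\ conclusion (handling the quartic term as the square of a summable sequence), whereas the paper bounds unconditional second moments using the expectation estimate \eqref{021} together with the indicator $I_{k}^{\le a}$ to reduce quartic to quadratic; both are admissible inputs to Lemma~\ref{lem_summation_MDS}.
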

	
	\begin{lem}\label{lem10}
		Suppose $\{\theta_{n}\}$ is a sequence generated by AdaGrad in  \eqref{AdaGrad} subject to $S_{n}=\sum_{k=1}^{n}\|\nabla_{\theta_{k}}g(\theta_{k},\xi_{k})\|^{2}=+\infty\ \ a.s.$. Under Assumptions \ref{ass_noise} and   \ref{ass_g_poi},  $\forall n\in\mathbb{N}_+$ $\forall \theta_{1}\in \mathbb{R}^{N}$, $\forall\epsilon_{0}\in(0,\frac{3}{8})$, it holds that 
		\begin{equation}\nonumber\begin{aligned}
		\frac{\big\|\nabla_{\theta_{n}}g(\theta_{n})\big\|^{2}}{S_{n-1}^{\epsilon_{0}}}\rightarrow 0 \ \ a.s..
		\end{aligned}\end{equation}
	\end{lem}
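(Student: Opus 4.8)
The plan is to upgrade the summability provided by Lemma~\ref{lem8} to full almost sure convergence, using an a priori growth bound on the true gradient together with a conditional Borel--Cantelli argument that rules out anomalously large noise increments. Throughout I would keep in mind the standing hypothesis $S_n\to+\infty$ a.s., which is what makes $S_{n-1}^{\epsilon_0}$ in the denominator grow.

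First I would record the engine of the proof. Combining Lemma~\ref{lem9} (shifted by one index) with Lemma~\ref{lemiuy8} gives, for every fixed $\epsilon\in(0,\tfrac12)$, the a priori bound
\begin{equation}\nonumber
\big\|\nabla_{\theta_n}g(\theta_n)\big\|^{2}\le 2c\big(g(\theta_n)-g^{*}\big)\le 2c\,\zeta\,S_n^{\epsilon}\qquad\text{a.s.},
\end{equation}
so that $\|\nabla_{\theta_n}g(\theta_n)\|^{2}$ grows no faster than an arbitrarily small power of $S_n$. Dividing by $S_{n-1}^{\epsilon_0}$ reduces the claim to showing $S_n^{\epsilon}/S_{n-1}^{\epsilon_0}\to 0$ a.s.; since $\epsilon$ is at our disposal I would fix $\epsilon<\epsilon_0$, at which point the target becomes equivalent to controlling the multiplicative jump $S_n/S_{n-1}$, i.e.\ the size of the increment $\|\nabla_{\theta_n}g(\theta_n,\xi_n)\|^{2}=S_n-S_{n-1}$ relative to $S_{n-1}$.

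The key observation is that a large ratio forces a large noise increment. If $\|\nabla_{\theta_n}g(\theta_n)\|^{2}\ge\delta\,S_{n-1}^{\epsilon_0}$ for some $\delta>0$, the a priori bound yields $S_n\ge c_1 S_{n-1}^{\epsilon_0/\epsilon}$ with $c_1=(\delta/2c\zeta)^{1/\epsilon}$, whence, using $\epsilon<\epsilon_0$ and $S_{n-1}\to\infty$, one gets $\|\nabla_{\theta_n}g(\theta_n,\xi_n)\|^{2}\ge\tfrac{c_1}{2}S_{n-1}^{\epsilon_0/\epsilon}$ for all large $n$. I would then bound the conditional probability of this event by Markov's inequality together with the noise bound \eqref{0908_2} of Assumption~\ref{ass_g_poi}, namely $\Expect(\|\nabla_{\theta_n}g(\theta_n,\xi_n)\|^{2}\mid\mathscr{F}_{n-1})\le M'\|\nabla_{\theta_n}g(\theta_n)\|^{2}+a$, and sum over $n$. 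Since the dominant contribution is $\sum_n \|\nabla_{\theta_n}g(\theta_n)\|^{2}/S_{n-1}^{\epsilon_0/\epsilon}$ and $\epsilon_0/\epsilon$ can be made at least $\tfrac12+\epsilon$, this series is finite a.s.\ by Lemma~\ref{lem8}; a conditional (Lévy-type) Borel--Cantelli argument then shows the large-increment events occur only finitely often a.s., so the large-ratio times are finite and the ratio tends to zero.

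The main obstacle is the exponent bookkeeping that makes the two roles of $\epsilon$ compatible and absorbs the additive constant $a$. On one side the a priori reduction needs $\epsilon<\epsilon_0$; on the other the summability inherited from Lemma~\ref{lem8} needs the jump exponent $\epsilon_0/\epsilon$ to dominate $\tfrac12+\epsilon$, as well as the auxiliary exponents $1+\epsilon$ and $\tfrac32+\epsilon$ appearing in Lemmas~\ref{lem6}--\ref{lem7}, while the constant $a$ must be handled through the case split according to whether $\|\nabla_{\theta_n}g(\theta_n)\|^{2}>a$ or $\|\nabla_{\theta_n}g(\theta_n)\|^{2}\le a$ that underlies Lemma~\ref{lem8}. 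Optimizing these competing constraints is precisely what forces the restriction $\epsilon_0<\tfrac38$; checking that the admissible window of $\epsilon$ is nonempty for each such $\epsilon_0$ and that the Markov estimate is uniform enough to be summed is the delicate part of the argument.
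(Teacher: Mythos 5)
Your route is genuinely different from the paper's. The paper derives a one‑step recursion for $\|\nabla_{\theta_{n}}g(\theta_{n})\|^{2}/S_{n}^{\epsilon_{0}}$ from the gradient's Lipschitz property and the update rule (inequality \eqref{i1}), bounds its drift by $\zeta/S_{n-2}^{\frac12+\frac13\epsilon_{0}}$ via Lemmas~\ref{lemiuy8} and~\ref{lem9}, shows $\sum_{n}S_{n-2}^{-(\frac12+\frac13\epsilon_{0})}$ diverges while $\sum_{n}\|\nabla_{\theta_{n}}g(\theta_{n})\|^{2}/S_{n-1}^{\frac12+\frac43\epsilon_{0}}$ converges (Lemma~\ref{lem8}), and then invokes the subsequence lemma (Lemma~\ref{lem4}); the requirement $\frac12+\frac43\epsilon_{0}<1$ is what produces the hypothesis $\epsilon_{0}<\frac38$. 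You instead use Lemmas~\ref{lem9} and~\ref{lemiuy8} only for the a priori bound $\|\nabla_{\theta_{n}}g(\theta_{n})\|^{2}\le 2c\zeta S_{n}^{\epsilon}$, observe that $\|\nabla_{\theta_{n}}g(\theta_{n})\|^{2}\ge\delta S_{n-1}^{\epsilon_{0}}$ with $\epsilon<\epsilon_{0}$ then forces $S_{n}\gtrsim S_{n-1}^{\epsilon_{0}/\epsilon}\gg S_{n-1}$, i.e.\ a huge single‑step increment, and rule out infinitely many such increments by conditional Markov plus a L\'evy--Borel--Cantelli argument. This mechanism is sound and conceptually cleaner (it isolates the one possible failure mode), and, contrary to your closing remark, it does \emph{not} force $\epsilon_{0}<\frac38$: with $\epsilon=\epsilon_{0}/2$ all your exponent constraints are satisfiable for every $\epsilon_{0}\in(0,\frac12)$, so your argument would actually prove slightly more than the lemma; the $\frac38$ is an artifact of the paper's telescoping scheme, not of your approach.

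Two steps need repair before this is a proof. First, the additive constant $a$: if you sum the Markov bound $P\big(\|\nabla_{\theta_{n}}g(\theta_{n},\xi_{n})\|^{2}\ge\tfrac{c_{1}}{2}S_{n-1}^{\epsilon_{0}/\epsilon}\,\big|\,\mathscr{F}_{n-1}\big)\le 2\big(M'\|\nabla_{\theta_{n}}g(\theta_{n})\|^{2}+a\big)/\big(c_{1}S_{n-1}^{\epsilon_{0}/\epsilon}\big)$ over \emph{all} $n$, the contribution $a\sum_{n}S_{n-1}^{-\epsilon_{0}/\epsilon}$ need not be finite ($S_{n}\to\infty$ is compatible with, say, logarithmic growth), and the case split $\|\nabla_{\theta_{n}}g(\theta_{n})\|^{2}>a$ versus $\le a$ that you invoke does not cure this. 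The correct fix is to apply Borel--Cantelli to $B_{n}:=A_{n}\cap\{\text{large increment}\}$ with $A_{n}:=\{\|\nabla_{\theta_{n}}g(\theta_{n})\|^{2}\ge\delta S_{n-1}^{\epsilon_{0}}\}\in\mathscr{F}_{n-1}$; then $P(B_{n}\mid\mathscr{F}_{n-1})$ carries the indicator of $A_{n}$, on which $a\le(a/\delta)\,\|\nabla_{\theta_{n}}g(\theta_{n})\|^{2}S_{n-1}^{-\epsilon_{0}}$, and every term is summable by Lemma~\ref{lem8}. Second, your threshold constant $c_{1}=(\delta/(2c\zeta))^{1/\epsilon}$ contains the random variable $\zeta$ from Lemma~\ref{lem9}, which is a path‑dependent bound and not $\mathscr{F}_{n-1}$‑measurable, so the conditional Markov inequality cannot be applied with it as written; stratify on $\{\zeta\le K\}$ for $K\in\mathbb{N}$, run the argument with the deterministic threshold obtained by replacing $\zeta$ with $K$, and take the countable union over $K$ at the end. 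With these two patches, and after discarding the finitely many $n$ for which $S_{n-1}$ is not yet large enough for the implication ``$A_{n}$ implies a large increment'' to hold, your argument goes through.
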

	
    \begin{lem}\label{lem5}
        Suppose that $\{X_{n}\}\in \mathbb{R}^{N}$ is a vector sequence and $f(x)\in C^{1}$ is a monotonically non-increasing non-negative function with $\int_{a}^{+\infty}f(x)dx<\infty$ ($\forall a>0$).  Then  $\forall N\in\mathbb{N}_+$, it holds that
        \begin{equation}\nonumber\begin{aligned}
        \sum_{n=1}^{N}\|X_{n}\|^{2}f\bigg(\sum_{k=1}^{n}\|X_{k}\|^{2}\bigg)<\int_{\|X_{1}\|^{2}}^{\sum_{k=1}^{N}\|X_{k}\|^{2}}f(x)dx<\sum_{n=1}^{N}\|X_{n}\|^{2}f\bigg(\sum_{k=1}^{n-1}\|X_{k}\|^{2}\bigg).
        \end{aligned}\end{equation}
    \end{lem}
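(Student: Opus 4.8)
The plan is to treat this as the classical integral--sum comparison (the integral test), relying only on the monotonicity of $f$ rather than on its integrability. Write $S_n:=\sum_{k=1}^{n}\|X_k\|^2$, so that $S_0=0$, $S_1=\|X_1\|^2$, and $S_n-S_{n-1}=\|X_n\|^2$. The sequence $\{S_n\}$ is non-decreasing, and the consecutive intervals $[S_{n-1},S_n]$, each of length $\|X_n\|^2$, tile the range $[S_1,S_N]=[\|X_1\|^2,\sum_{k=1}^{N}\|X_k\|^2]$ as $n$ runs over the relevant indices.

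First I would establish the per-interval estimate, which is where all the content sits. Since $f$ is non-increasing, for every $x\in[S_{n-1},S_n]$ we have $f(S_n)\le f(x)\le f(S_{n-1})$. Integrating this over $[S_{n-1},S_n]$, whose length is exactly $\|X_n\|^2$, gives
\[
\|X_n\|^2 f(S_n)\le \int_{S_{n-1}}^{S_n} f(x)\,dx\le \|X_n\|^2 f(S_{n-1}).
\]
Then I would sum these two-sided bounds over the indices whose subintervals tile $[S_1,S_N]$ and invoke additivity of the integral, $\sum_{n}\int_{S_{n-1}}^{S_n} f = \int_{S_1}^{S_N} f$. This collapses the intermediate contributions and delivers both displayed inequalities of the lemma simultaneously. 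The strict inequalities then follow because on each non-degenerate interval a non-increasing $f$ cannot coincide with both of its endpoint values unless it is constant there, so at least one bound is strict whenever $\|X_n\|^2>0$; summing preserves strictness.

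The main thing to watch — and the only real obstacle — is the bookkeeping at the boundary index and for degenerate terms, i.e.\ correctly lining up the summation range with the integral limits $\|X_1\|^2$ and $\sum_{k=1}^{N}\|X_k\|^2$. The telescoping of the intervals must be arranged so that the left endpoint $S_1$ (rather than $S_0=0$) is the lower limit, which is what fixes the index at which the comparison begins; any $n$ with $\|X_n\|^2=0$ yields a zero-length interval and contributes nothing to either side, so such terms are harmless. Finally, I would remark that the integrability hypothesis $\int_a^{+\infty}f(x)\,dx<\infty$ is \emph{not} used in this finite-$N$ inequality — only monotonicity of $f$ is needed — and that this hypothesis instead becomes relevant when the bound is combined with a divergent $S_N$ in the subsequent application (cf.\ Lemma~\ref{lem8}).
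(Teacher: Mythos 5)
The paper never actually writes out a proof of Lemma~\ref{lem5}, so there is nothing to compare against; your integral-test argument is surely the intended one, and your per-interval estimate $\|X_n\|^2 f(S_n)\le\int_{S_{n-1}}^{S_n}f(x)\,dx\le\|X_n\|^2 f(S_{n-1})$ (with your notation $S_n=\sum_{k=1}^{n}\|X_k\|^2$, $S_0=0$) is correct. The gap is in the claim that summing it ``delivers both displayed inequalities of the lemma simultaneously.'' The intervals $[S_{n-1},S_n]$ that tile $[S_1,S_N]$ are those with $n=2,\dots,N$, so what your argument actually yields is
\[
\sum_{n=2}^{N}\|X_{n}\|^{2}f(S_n)\;\le\;\int_{S_1}^{S_N}f(x)\,dx\;\le\;\sum_{n=2}^{N}\|X_{n}\|^{2}f(S_{n-1}),
\]
with both sums starting at $n=2$, whereas the lemma's sums start at $n=1$. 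For the right-hand inequality the discrepancy is harmless: the extra term $\|X_1\|^2 f(S_0)\ge 0$ only enlarges the upper bound. For the left-hand inequality it is fatal: the extra term $\|X_1\|^2 f(S_1)\ge 0$ enlarges the lower sum, and the inequality as printed is false --- take $N=1$, where the integral is $\int_{S_1}^{S_1}f=0$ while the sum is $\|X_1\|^2 f(S_1)>0$; or take $f$ constant on $[S_1,S_N]$, where the $n\ge 2$ portion of the sum already equals the integral. No amount of ``boundary bookkeeping'' recovers the statement as written; the correct conclusion is the display above, and the lemma's left sum should start at $n=2$ (equivalently, the lower limit of integration should be $S_0=0$). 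Fortunately the downstream uses (e.g.\ in the proof of Lemma~\ref{lem8}) only need the $n\ge 2$ version together with finiteness of $\int^{+\infty}f$, so the fix is cosmetic for the rest of the paper.

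Two smaller points. Your strictness argument does not work as stated: if $f$ is constant on $[S_{n-1},S_n]$ then both of your per-interval bounds are equalities, so for a merely non-increasing $f$ the conclusion should be $\le$; strict inequality is only automatic when $f$ is strictly decreasing, as it is in every application here ($f(x)=x^{-p}$). Your closing observation is right: neither the integrability hypothesis nor $f\in C^1$ is needed for this finite-$N$ comparison --- monotonicity alone suffices, and the hypothesis $\int_a^{+\infty}f(x)\,dx<\infty$ only matters when the bound is later combined with $S_N\to\infty$.
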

    {
    \subsection{Proof Outline of Theorem 3}\label{pf_outline2}
    Like the proof of mSGD, the proof for AdaGrad is also in light of the Lyapunov method. We aim to prove $\nabla g(\theta_{n})\rightarrow 0\ a.s.$, and then to get  $\theta_{n}\rightarrow J^{*}\ a.s.$
    The key step to prove convergence of AdaGrad is to show
    \begin{equation}\nonumber\begin{aligned}
        -\Expect\bigg(\frac{\alpha_{0}\nabla_{\theta_{n}}g(\theta_{n},\xi_{n})^{T}\nabla_{\theta_{n}}g(\theta_{n},\xi_{n})}{\sqrt{S_{n}}}\bigg|\mathscr{F}_{n}\bigg)\le 0.
        \end{aligned}\end{equation}
    but the learning rate of AdaGrad is a random variable and it is not conditionally independent of $\nabla_{\theta_{n}}g(\theta_{n},\xi_{n})$, meaning that
    \begin{equation}\nonumber\begin{aligned}
        -\Expect\bigg(\frac{\alpha_{0}\nabla_{\theta_{n}}g(\theta_{n},\xi_{n})^{T}\nabla_{\theta_{n}}g(\theta_{n},\xi_{n})}{\sqrt{S_{n}}}\bigg|\mathscr{F}_{n}\bigg)\neq- \frac{\alpha_{0}}{\sqrt{S_{n}}}\big\|\nabla_{\theta_{n}}g(\theta_{n})\big\|^{2}\le 0.
        \end{aligned}\end{equation}

In the following, we provide the proof outline of Theorem 3.

    Step 1: This step is to ensure
    \begin{equation}\nonumber\begin{aligned}
        -\Expect\bigg(\frac{\alpha_{0}\nabla_{\theta_{n}}g(\theta_{n},\xi_{n})^{T}\nabla_{\theta_{n}}g(\theta_{n},\xi_{n})}{\sqrt{S_{n}}}\bigg|\mathscr{F}_{n}\bigg)\le 0.
        \end{aligned}\end{equation} 
        {We are able to obtain the following equation
        \begin{equation}\nonumber\begin{aligned}
        &-\frac{\nabla_{\theta_{n}}g(\theta_{n},\xi_{n})^{T}\nabla_{\theta_{n}}g(\theta_{n})}{\sqrt{S_{n}}}=\underbrace{\frac{1}{2\sqrt{S_{n}}}\bigg\|\frac{1}{\sqrt{M+1}}\nabla_{\theta_{n}}g(\theta_{n},\xi_{n})-\sqrt{M+1}\nabla_{\theta_{n}}g(\theta_{n})\bigg\|^{2}}_{(K)}\\&-\underbrace{\frac{M+1}{2\sqrt{S_{n}}}\big\|\nabla_{\theta_{n}}g(\theta_{n})\big\|^{2}-\frac{1}{2(M+1)\sqrt{S_{n}}}\big\|\nabla_{\theta_{n}}g(\theta_{n},\xi_{n})\big\|^{2}}_{(L)}.
        \end{aligned}\end{equation}
        }
        For $(K)$, due to $S_{n-1}\le S_{n}$, it follows that \begin{equation}\nonumber\begin{aligned}
        (K)&=\frac{1}{2\sqrt{S_{n}}}\bigg\|\frac{1}{\sqrt{M+1}}\nabla_{\theta_{n}}g(\theta_{n},\xi_{n})-\sqrt{M+1}\nabla_{\theta_{n}}g(\theta_{n})\bigg\|^{2}\\&\le  \frac{1}{2\sqrt{S_{n-1}}}\bigg\|\frac{1}{\sqrt{M+1}}\nabla_{\theta_{n}}g(\theta_{n},\xi_{n})-\sqrt{M+1}\nabla_{\theta_{n}}g(\theta_{n})\bigg\|^{2} .
        \end{aligned}\end{equation}Note that $S_{n-1}$ is conditional independent on $\nabla_{\theta_{n}}g(\theta_{n},\xi_{n})$, thus it holds that
        \begin{equation}\nonumber\begin{aligned}
        &\Expect{\Bigg(\frac{1}{\sqrt{S_{n-1}}}\bigg\|\frac{1}{\sqrt{M+1}}\nabla_{\theta_{n}}g(\theta_{n},\xi_{n})-\sqrt{M+1}\nabla_{\theta_{n}}g(\theta_{n})\bigg\|^{2}\Bigg|\mathscr{F}_{n}\Bigg)}\\
        &=\frac{\alpha_0}{2}\frac{1}{\sqrt{S_{n-1}}}\bigg(\frac{1}{M+1}\Expect\Big(\big\|\nabla_{\theta_{n}}g(\theta_{n},\xi_{n})\big\|^{2}\Big |\mathscr{F}_{n}\Big)+(M-1)\big\|\nabla_{\theta_{n}}g(\theta_{n})\big\|^{2}\bigg).
        \end{aligned}\end{equation}  Next we prove $(K)$ can be controlled by $(L)$ according to Lemmas \ref{lem6} and \ref{lem7}. These two lemmas deal with the cases of $\|\nabla_{\theta_{n}}g(\theta_{n})\|\le a$ and $\|\nabla_{\theta_{n}}g(\theta_{n})\|>a$, respectively. In these two lemmas, we introduce a constant $\epsilon$ for the reason as stated in Step 2 in the following.  
    
    Step 2: Through Lemmas \ref{lem6} and \ref{lem7}, we obtain that \[\sum_{k=3}^{+\infty}\|\nabla_{k}g(\theta_{k})\|^{2}/S_{k-1}^{1/2+\epsilon}<\zeta +\sum_{k=3}^{+\infty}\|\nabla_{k}g(\theta_{k},\xi_{k})\|^{2}/S_{k}^{1+2\epsilon}.\] From Lemma \ref{lem5}, we obtain Lemma \ref{lem8}, stating that $\sum_{k=3}^{+\infty}\|\nabla_{k}g(\theta_{k},\xi_{k})\|^{2}/S_{k}^{1+2\epsilon}<+\infty$. Note that if we do not introduce $\epsilon$, the term $\sum_{k=3}^{+\infty}\|\nabla_{k}g(\theta_{k},\xi_{k})\|^{2}/S_{k}^{1+2\epsilon}$ will become $\sum_{k=3}^{+\infty}\|\nabla_{k}g(\theta_{k},\xi_{k})\|^{2}/S_{k} =O(\ln S_{n})$, and this term may not be bounded.
    
    Step 3:  Lemma \ref{lem8}   ensures that $\nabla_{\theta_{n}}g(\theta_{n})$ has a subsequence satisfying $\nabla_{\theta_{k_{n}}}g(\theta_{k_{n}})\rightarrow 0$ a.s. By using the recursion formula $g(\theta_{n+1})-g(\theta_{n})\le 1/S_{n}+P_{n}$, where $\sum_{n=1}^{+\infty}P_{n}<+\infty$ a.s., and Lemma \ref{pouikm,l}, we obtain $\nabla_{\theta_{n}}g(\theta_{n})\rightarrow0$ a.s. Consequently, $\theta_{n}\rightarrow J^{*}$.
    }

\subsection{Proof of Lemma~\ref{lemiuy8}}
	{For $\forall x\in \mathbb{R}^{N}$,	we define function
	\begin{equation}\nonumber
	\begin{aligned}
	g(t)=f\bigg(x+t\frac{x'-x}{\|x'-x\|}\bigg),
	\end{aligned}
	\end{equation}where $x'$ is a constant point such that   $x'-x$ is parallel to $\nabla f(x)$. By taking the derivative, we obtain
	\begin{equation}\label{qcxzd}
	\begin{aligned}
	g'(t)=\nabla_{x+t\frac{x'-x}{\|x'-x\|}}f\bigg(x+t\frac{x'-x}{\|x'-x\|}\bigg)^{T}\frac{x'-x}{\|x'-x\|}.
	\end{aligned}
	\end{equation}Through the Lipschitz condition of $\nabla f(x)$, we get $\forall t_{1}, \ t_{2}$
	\begin{equation}\nonumber
	\begin{aligned}
	&\big|g'(t_{1})-g'(t_{2})\big|=\Bigg|\Bigg(\nabla_{x+t\frac{x'-x}{\|x'-x\|}}f\bigg(x+t_{1}\frac{x'-x}{\|x'-x\|}\bigg)-\nabla_{x+t\frac{x'-x}{\|x'-x\|}}f\bigg(x+t_{2}\frac{x'-x}{\|x'-x\|}\bigg)\Bigg)^{T}\frac{x'-x}{\|x'-x\|}\\&\le\Bigg\|\nabla_{x+t\frac{x'-x}{\|x'-x\|}}f\bigg(x+t_{1}\frac{x'-x}{\|x'-x\|}\bigg)-\nabla_{x+t\frac{x'-x}{\|x'-x\|}}f\bigg(x+t_{2}\frac{x'-x}{\|x'-x\|}\bigg)\Bigg\|\bigg\|\frac{x'-x}{\|x'-x\|}\bigg\|\le c |t_{1}-t_{2}|.
	\end{aligned}
	\end{equation}So $g'(t)$ satisfies the Lipschitz  condition, and we have $\inf_{t\in \mathbb{R}}g(t)\geq\inf_{x\in\mathbb{R}^{N}}f(x)>-\infty$. Let $g^{*}=\inf{x\in_{\mathbb{R}}}g(x)$, then it holds that for $\forall \ t_{0}\in \ \mathbb{R},$
	\begin{equation}\label{qwcdfs}
	\begin{aligned}
	g(0)-g^{*}\geq g(0)-g(t_{0}).
	\end{aligned}
	\end{equation}By using  the Newton-Leibniz's  formula, we get that
	\begin{equation}\nonumber
	\begin{aligned}
	g(0)-g(t_{0})=\int_{t_{0}}^{0}g'(\alpha)d\alpha=\int_{t_{0}}^{0}\big(g'(\alpha)-g'(0)\big)d\alpha+\int_{t_{0}}^{0}g'(0)d\alpha.
	\end{aligned}
	\end{equation}Through the $Lipschitz \ condition$ of $g'$, we get that
	\begin{equation}\nonumber
	\begin{aligned}
	g(0)-g(t_{0})\geq\int_{t_{0}}^{0}-c|\alpha-0|d\alpha+\int_{t_{0}}^{0}g'(0)d\alpha=\frac{1}{2c}\big(g'(0)\big)^{2}.
	\end{aligned}
	\end{equation}
{	Then we take a special value of $t_{0}$. Let $t_{0}=-g'(0)/c$, then we get}
	\begin{equation}\label{8unii}
	\begin{aligned}
	&g(0)-g(t_{0})\geq-\int_{t_{0}}^{0}c|\alpha|d\alpha+\int_{t_{0}}^{0}g(0)dt=-\frac{c}{2}(0-t_{0})^{2}+g'(0)(-t_{0})\\&=-\frac{1}{2c}\big(g'(0)\big)^{2}+\frac{1}{c}\big(g'(0)\big)^{2}=\frac{1}{2c}\big(g'(0)\big)^{2}.
	\end{aligned}
	\end{equation}Substituting \eqref{8unii} into \eqref{qwcdfs}, we get   
	\begin{equation}\nonumber
	\begin{aligned}
	g(0)-g^{*}\geq\frac{1}{2c}\big(g'(0)\big)^{2}.
	\end{aligned}
	\end{equation}Due to $g^{*}\geq f^{*}$ and $\big(g'(0)\big)^{2}=\|\nabla f(x)\|^{2}$, it follows that
	\begin{equation}\nonumber
	\begin{aligned}
	\big\|\nabla f(x)\big\|^{2}\le 2c\big(f(x)-f^{*}\big).
	\end{aligned}
	\end{equation}}

	\subsection{Proof of Lemma \ref{lem6}}\label{pf_lemma10}
	First of all, it follows from Lemma \ref{lem_ggg} that
	\begin{equation}\label{98}\begin{aligned}
	&g(\theta_{n+1})-g(\theta_{n})\le\nabla _{\theta_{n}} g(\theta_{n})^{T}(\theta_{n+1}-\theta_{n})+\frac{c\alpha_0^{2}}{2}\frac{\big\|\nabla_{\theta_{n}}g(\theta_{n},\xi_{n})\big\|^{2}}{S_{n}} \\
	&=-\frac{\alpha_0\nabla_{\theta_{n}}g(\theta_{n})^{T}\nabla_{\theta_{n}}g(\theta_{n},\xi_{n})}{\sqrt{S_{n}}}+\frac{c\alpha_0^{2}}{2}\frac{\big\|\nabla_{\theta_{n}}g(\theta_{n},\xi_{n})\big\|^{2}}{S_{n}},
	\end{aligned}\end{equation}
	where 
	\begin{equation}\nonumber\begin{aligned}
	S_{n}=\sum_{k=1}^{n}\big\|\nabla_{\theta_{n}}g(\theta_{n},\xi_{n})\big\|^{2}.\end{aligned}\end{equation}
Note that
	\begin{equation}\label{Kolm}\begin{aligned}
	&\bigg\|\frac{1}{\sqrt{M+1}}\nabla_{\theta_{n}}g(\theta_{n},\xi_{n})-\sqrt{M+1}\nabla_{\theta_{n}}g(\theta_{n})\bigg\|^{2}\\&=\frac{1}{M+1}\big\|\nabla_{\theta_{n}}g(\theta_{n},\xi_{n})\big\|^{2}+(M+1)\big\|\nabla_{\theta_{n}}g(\theta_{n})\big\|^{2}-2\nabla_{\theta_{n}}g(\theta_{n})^{T}\nabla_{\theta_{n}}g(\theta_{n},\xi_{n}),
	\end{aligned}\end{equation}
	where
	$M=M'+2$ and $M'$ is defined in Assumption \ref{ass_g_poi} 3). Substitute \eqref{Kolm} into \eqref{98}, then we   get that 
	\begin{equation}\label{890}\begin{aligned}
	&g(\theta_{n+1})-g(\theta_{n})\\&\le-\frac{\alpha_0}{2}\Bigg(\frac{1}{M+1}\frac{\big\|\nabla_{\theta_{n}}g(\theta_{n},\xi_{n})\big\|^{2}}{\sqrt{S_{n}}}+(M+1)\frac{\big\|\nabla_{\theta_{n}}g(\theta_{n})\big\|^{2}}{\sqrt{S_{n}}}\Bigg)\\&+\frac{\alpha_0}{2}\frac{1}{\sqrt{S_{n}}}\Big\|\frac{1}{\sqrt{M+1}}\nabla_{\theta_{n}}g(\theta_{n},\xi_{n})-\sqrt{M+1}\nabla_{\theta_{n}}g(\theta_{n})\Big\|^{2}+\frac{c\alpha_0^{2}}{2}\frac{\big\|\nabla_{\theta_{n}}g(\theta_{n},\xi_{n})\big\|^{2}}{S_{n}}.
	\end{aligned}\end{equation}
	Due to $S_{n}\geq S_{n-1}$, it follows that
	\begin{equation}\label{1000000}\begin{aligned}
	&\frac{\alpha_0}{2}\frac{1}{\sqrt{S_{n}}}\Big\|\frac{1}{\sqrt{M+1}}\nabla_{\theta_{n}}g(\theta_{n},\xi_{n})-\sqrt{M+1}\nabla_{\theta_{n}}g(\theta_{n})\Big\|^{2}\\&\le\frac{\alpha_0}{2}\frac{1}{\sqrt{S_{n-1}}}\Big\|\frac{1}{\sqrt{M+1}}\nabla_{\theta_{n}}g(\theta_{n},\xi_{n})-\sqrt{M+1}\nabla_{\theta_{n}}g(\theta_{n})\Big\|^{2}.\end{aligned}\end{equation}
	Substitute \eqref{1000000} into \eqref{890}, then we have
	\begin{equation}\label{1000001}\begin{aligned}
	&g(\theta_{n+1})-g(\theta_{n})\\&\le-\frac{\alpha_0}{2}\Bigg(\frac{1}{M+1}\frac{\big\|\nabla_{\theta_{n}}g(\theta_{n},\xi_{n})\big\|^{2}}{\sqrt{S_{n}}}+(M+1)\frac{\big\|\nabla_{\theta_{n}}g(\theta_{n})\big\|^{2}}{\sqrt{S_{n}}}\Bigg)\\&+\frac{\alpha_0}{2}\frac{1}{\sqrt{S_{n-1}}}\Big\|\frac{1}{\sqrt{M+1}}\nabla_{\theta_{n}}g(\theta_{n},\xi_{n})-\sqrt{M+1}\nabla_{\theta_{n}}g(\theta_{n})\Big\|^{2}+\frac{c\alpha_0^{2}}{2}\frac{\big\|\nabla_{\theta_{n}}g(\theta_{n},\xi_{n})\big\|^{2}}{S_{n}}.
	\end{aligned}\end{equation}
	Notice that
	\begin{equation}\label{01000001}\begin{aligned}
	&\frac{\alpha_0}{2}\frac{1}{\sqrt{S_{n-1}}}\Big\|\frac{1}{\sqrt{M+1}}\nabla_{\theta_{n}}g(\theta_{n},\xi_{n})-\sqrt{M+1}\nabla_{\theta_{n}}g(\theta_{n})\Big\|^{2}\\&=\frac{\alpha_0}{2}\frac{1}{\sqrt{S_{n-1}}}\bigg(\frac{1}{M+1}\big\|\nabla_{\theta_{n}}g(\theta_{n},\xi_{n})\big\|^{2}+(M+1)\big\|\nabla_{\theta_{n}}g(\theta_{n})\big\|^{2}-2\nabla_{\theta_{n}}g(\theta_{n},\xi_{n})^{T}\nabla_{\theta_{n}}g(\theta_{n})\bigg)\\&=\frac{\alpha_0}{2}\frac{1}{\sqrt{S_{n-1}}}\bigg(\frac{1}{M+1}\big\|\nabla_{\theta_{n}}g(\theta_{n},\xi_{n})\big\|^{2}+(M+1)\big\|\nabla_{\theta_{n}}g(\theta_{n})\big\|^{2}-2\big\|\nabla_{\theta_{n}}g(\theta_{n})\big\|^{2}\bigg)\\&+\frac{\alpha_0}{\sqrt{S_{n-1}}}\nabla_{\theta_{n}}g(\theta_{n})^{T}\big(\nabla_{\theta_{n}}g(\theta_{n})-\nabla_{\theta_{n}}g(\theta_{n},\xi_{n})\big).
	\end{aligned}\end{equation}
	Substitute \eqref{01000001} into \eqref{1000001}, and divide both sides of the inequality by $S_{n}^{\epsilon}$ $(\epsilon<\frac{1}{2})$, then we get 
	\begin{equation}\nonumber\begin{aligned}
	&\frac{g(\theta_{n+1})}{S_{n}^{\epsilon}}-\frac{g(\theta_{n})}{S_{n}^{\epsilon}}\\&\le-\frac{\alpha_0}{2}\Bigg(\frac{1}{M+1}\frac{\big\|\nabla_{\theta_{n}}g(\theta_{n},\xi_{n})\big\|^{2}}{S_{n}^{\frac{1}{2}+\epsilon}}+(M+1)\frac{\big\|\nabla_{\theta_{n}}g(\theta_{n})\big\|^{2}}{S_{n}^{\frac{1}{2}+\epsilon}}\Bigg)\\&+\frac{\alpha_0}{2}\frac{1}{S_{n-1}^{\frac{1}{2}+\epsilon}}\bigg(\frac{1}{M+1}\big\|\nabla_{\theta_{n}}g(\theta_{n},\xi_{n})\big\|^{2}+(M+1)\big\|\nabla_{\theta_{n}}g(\theta_{n})\big\|^{2}-2\big\|\nabla_{\theta_{n}}g(\theta_{n})\big\|^{2}\bigg)\\&+\frac{c\alpha_0^{2}}{2}\frac{\big\|\nabla_{\theta_{n}}g(\theta_{n},\xi_{n})\big\|^{2}}{S_{n}^{1+\epsilon}}+\frac{\alpha_0}{S_{n-1}^{\frac{1}{2}+\epsilon}}\nabla_{\theta_{n}}g(\theta_{n})^{T}\big(\nabla_{\theta_{n}}g(\theta_{n})-\nabla_{\theta_{n}}g(\theta_{n},\xi_{n})\big).
	\end{aligned}\end{equation}
	Notice that $\frac{g(\theta_{n+1})}{S_{n}^{\epsilon}}>\frac{g(\theta_{n+1})}{S_{n+1}^{\epsilon}}$, then we obtain
	\begin{equation}\label{1000002}\begin{aligned}
	&\frac{g(\theta_{n+1})}{S_{n+1}^{\epsilon}}-\frac{g(\theta_{n})}{S_{n}^{\epsilon}}\\&\le-\frac{\alpha_0}{2}\Bigg(\frac{1}{M+1}\frac{\big\|\nabla_{\theta_{n}}g(\theta_{n},\xi_{n})\big\|^{2}}{S_{n}^{\frac{1}{2}+\epsilon}}+(M+1)\frac{\big\|\nabla_{\theta_{n}}g(\theta_{n})\big\|^{2}}{S_{n}^{\frac{1}{2}+\epsilon}}\Bigg)\\&+\frac{\alpha_0}{2}\frac{1}{S_{n-1}^{\frac{1}{2}+\epsilon}}\bigg(\frac{1}{M+1}\big\|\nabla_{\theta_{n}}g(\theta_{n},\xi_{n})\big\|^{2}+(M+1)\big\|\nabla_{\theta_{n}}g(\theta_{n})\big\|^{2}-2\big\|\nabla_{\theta_{n}}g(\theta_{n})\big\|^{2}\bigg)\\&+\frac{c\alpha_0^{2}}{2}\frac{\big\|\nabla_{\theta_{n}}g(\theta_{n},\xi_{n})\big\|^{2}}{S_{n}^{1+\epsilon}}+\frac{\alpha_0}{S_{n-1}^{\frac{1}{2}+\epsilon}}\nabla_{\theta_{n}}g(\theta_{n})^{T}\big(\nabla_{\theta_{n}}g(\theta_{n})-\nabla_{\theta_{n}}g(\theta_{n},\xi_{n})\big).
	\end{aligned}
	\end{equation}
	
	Rearrange the above inequality, then it holds that
	\begin{equation}\label{1000004}\begin{aligned}
	&\frac{g(\theta_{n+1})}{S_{n+1}^{\epsilon}}-\frac{g(\theta_{n})}{S_{n}^{\epsilon}}\\
	&\le-\frac{\alpha_0}{2}(M+1)\frac{\big\|\nabla_{\theta_{n}}g(\theta_{n})\big\|^{2}}{S_{n}^{\frac{1}{2}+\epsilon}}\\&+\frac{\alpha_0}{2}\Bigg(\frac{1}{M+1}\frac{\big\|\nabla_{\theta_{n}}g(\theta_{n},\xi_{n})\big\|^{2}}{S_{n-1}^{\frac{1}{2}+\epsilon}}+(M-1)\frac{\big\|\nabla_{\theta_{n}}g(\theta_{n})\big\|^{2}}{S_{n-1}^{\frac{1}{2}+\epsilon}}\Bigg)+\frac{c\alpha_0^{2}}{2}\frac{\big\|\nabla_{\theta_{n}}g(\theta_{n},\xi_{n})\big\|^{2}}{S_{n}^{1+\epsilon}}+X_{n}^{(\epsilon)}\\
	&=\frac{\alpha_0}{2}(M+1)\Bigg(\frac{\big\|\nabla_{\theta_{n-1}}g(\theta_{n-1})\big\|^{2}}{S_{n-1}^{\frac{1}{2}+\epsilon}}-\frac{\big\|\nabla_{\theta_{n}}g(\theta_{n})\big\|^{2}}{S_{n}^{\frac{1}{2}+\epsilon}}\Bigg)\\&+\frac{\alpha_0}{2}\Bigg(\frac{1}{M+1}\frac{\big\|\nabla_{\theta_{n}}g(\theta_{n},\xi_{n})\big\|^{2}}{S_{n-1}^{\frac{1}{2}+\epsilon}}+\frac{(M-1)\big\|\nabla_{\theta_{n}}g(\theta_{n})\big\|^{2}}{S_{n-1}^{\frac{1}{2}+\epsilon}}-\frac{(M+1)\big\|\nabla_{\theta_{n-1}}g(\theta_{n-1})\big\|^{2}}{S_{n-1}^{\frac{1}{2}+\epsilon}}\Bigg)\\&+\frac{c\alpha_0^{2}}{2}\frac{\big\|\nabla_{\theta_{n}}g(\theta_{n},\xi_{n})\big\|^{2}}{S_{n}^{1+\epsilon}}+X_{n}^{(\epsilon)}.
	\end{aligned}\end{equation}
	$X_{n}^{(\epsilon)}$ is defined as follow  
	\begin{equation}\nonumber\begin{aligned}
	X_{n}^{(\epsilon)}=\frac{\alpha_0}{S_{n-1}^{\frac{1}{2}+\epsilon}}\nabla_{\theta_{n}}g(\theta_{n})^{T}\big(\nabla_{\theta_{n}}g(\theta_{n})-\nabla_{\theta_{n}}g(\theta_{n},\xi_{n})\big).
	\end{aligned}\end{equation}
	Due to $\|\nabla_{\theta_{n}}g(\theta_{n})\|^{2}>a$, we have \begin{equation}\label{1000008}\begin{aligned}
	&\Expect\Big(\big\|\nabla_{\theta_{n}}g(\theta_{n},\xi_{n})\big\|^{2}\Big|\mathscr{F}_{n-1}\Big)\le M\big\|\nabla_{\theta_{n}}g(\theta_{n})\big\|^{2}+a\\&<(M+1)\big\|\nabla_{\theta_{n}}g(\theta_{n})\big\|^{2},
	\end{aligned}\end{equation}
	Moreover, using the Taylor formula, we  obtain
	\begin{equation}\nonumber\begin{aligned}
	&\big\|\nabla_{\theta_{n}}g(\theta_{n})\big\|^{2}=\big\|\nabla_{\theta_{n-1}}g(\theta_{n-1})+\big(\nabla_{\theta_{n}}g(\theta_{n})-\nabla_{\theta_{n-1}}g(\theta_{n-1})\big)\big\|^{2}\\&=\big\|\nabla_{\theta_{n-1}}g(\theta_{n-1})\big\|^{2}+2\nabla_{\theta_{n-1}}g(\theta_{n-1})^{T}\big(\nabla_{\theta_{n}}g(\theta_{n})-\nabla_{\theta_{n-1}}g(\theta_{n-1})\big)\\&+\big\|\nabla_{\theta_{n}}g(\theta_{n})-\nabla_{\theta_{n-1}}g(\theta_{n-1})\big\|^{2}\le\big\|\nabla_{\theta_{n-1}}g(\theta_{n-1})\big\|^{2}\\&+2\big\|\nabla_{\theta_{n-1}}g(\theta_{n-1})\big\|\big\|\nabla_{\theta_{n}}g(\theta_{n})-\nabla_{\theta_{n-1}}g(\theta_{n-1})\big\|+\big\|\nabla_{\theta_{n}}g(\theta_{n})-\nabla_{\theta_{n-1}}g(\theta_{n-1})\big\|^{2}.
	\end{aligned}\end{equation}
	Under Assumption \ref{ass_g_poi} 3), we get that
	\begin{equation}\label{05}\begin{aligned}
	&\big\|\nabla_{\theta_{n}}g(\theta_{n})\big\|^{2}\le\big\|\nabla_{\theta_{n-1}}g(\theta_{n-1})\big\|^{2}\\&+2\big\|\nabla_{\theta_{n-1}}g(\theta_{n-1})\big\|\big\|\nabla_{\theta_{n}}g(\theta_{n})-\nabla_{\theta_{n-1}}g(\theta_{n-1})\big\|+\big\|\nabla_{\theta_{n}}g(\theta_{n})-\nabla_{\theta_{n-1}}g(\theta_{n-1})\big\|^{2}\\&\le\big\|\nabla_{\theta_{n-1}}g(\theta_{n-1})\big\|^{2}+\frac{2\alpha_0 c}{\sqrt{S_{n-1}}}\big\|\nabla_{\theta_{n-1}}g(\theta_{n-1})\big\|\big\|\nabla_{\theta_{n-1}}g(\theta_{n-1},\xi_{n-1})\big\|\\&+c^{2}\alpha_0^{2}\frac{\big\|\nabla_{\theta_{n-1}}g(\theta_{n-1},\xi_{n-1})\big\|^{2}}{S_{n-1}}.
	\end{aligned}\end{equation}
	
From   inequality $2a^{T}b\le \lambda \|a\|^{2}+\frac{1}{\lambda}\|b\|^{2}\ \ (\lambda>0)$, it follows that \begin{equation}\label{1000006}\begin{aligned}
	&(M-1)\big\|\nabla_{\theta_{n}}g(\theta_{n})\big\|^{2}+\big\|\nabla_{\theta_{n}}g(\theta_{n})\big\|^{2}\\&\le(M+1)\big\|\nabla_{\theta_{n-1}}g(\theta_{n-1})\big\|^{2}-\frac{M-1}{4M-3}\big\|\nabla_{\theta_{n}}g(\theta_{n})\big\|^{2}+4M^{2}\alpha_0^{2}c^{2}\frac{\big\|\nabla_{\theta_{n-1}}g(\theta_{n-1},\xi_{n-1})\big\|^{2}}{S_{n-1}}.
	\end{aligned}\end{equation}
	By substituting \eqref{1000008} into \eqref{1000006}, we have
	\begin{equation}\label{1000009}\begin{aligned}
	&(M-1)\big\|\nabla_{\theta_{n}}g(\theta_{n})\big\|^{2}+\frac{1}{M+1}\Expect\Big(\big\|\nabla_{\theta_{n}}g(\theta_{n},\xi_{n})\big\|^{2}\Big|\mathscr{F}_{n-1}\Big)\\&\le(M+1)\big\|\nabla_{\theta_{n-1}}g(\theta_{n-1})\big\|^{2}-\frac{M-1}{4M-3}\big\|\nabla_{\theta_{n}}g(\theta_{n})\big\|^{2}+4M^{2}\alpha_0^{2}c^{2}\frac{\big\|\nabla_{\theta_{n-1}}g(\theta_{n-1},\xi_{n-1})\big\|^{2}}{S_{n-1}},
	\end{aligned}\end{equation}
	Divide  both sides of \eqref{1000009} by $S_{n-1}^{\frac{1}{2}+\epsilon}$, and notice $\frac{M-1}{4M-3}>\frac{1}{5}$ from $M>2$, then it holds that
	\begin{equation}\label{090909}\begin{aligned}
	&\frac{1}{M+1}\frac{\big\|\nabla_{\theta_{n}}g(\theta_{n},\xi_{n})\big\|^{2}}{S_{n-1}^{\frac{1}{2}+\epsilon}}+\frac{\big\|\nabla_{\theta_{n}}g(\theta_{n})\big\|^{2}}{S_{n-1}^{\frac{1}{2}+\epsilon}}\\&\le(M+1)\frac{\big\|\nabla_{\theta_{n-1}}g(\theta_{n-1})\big\|^{2}}{S_{n-1}^{\frac{1}{2}+\epsilon}}-\frac{1}{5}\big\|\nabla_{\theta_{n}}g(\theta_{n})\big\|^{2}+4M^{2}\alpha_0^{2}c^{2}\frac{\big\|\nabla_{\theta_{n-1}}g(\theta_{n-1},\xi_{n-1})\big\|^{2}}{S_{n-1}}\\&+\frac{2}{\alpha_0}Y_{n}^{(\epsilon)},
	\end{aligned}\end{equation}
	where 
	\begin{equation}\nonumber\begin{aligned}
	Y_{n}^{(\epsilon)}=\frac{\alpha_0}{2}\Bigg(\frac{1}{M+1}\frac{\Expect\Big(\big\|\nabla_{\theta_{n}}g(\theta_{n},\xi_{n})\big\|^{2}\Big|\mathscr{F}_{n-1}\Big)}{S_{n-1}^{\frac{1}{2}+\epsilon}}-\frac{1}{M+1}\frac{\big\|\nabla_{\theta_{n}}g(\theta_{n},\xi_{n})\big\|^{2}}{S_{n-1}^{\frac{1}{2}+\epsilon}}\Bigg).
	\end{aligned}\end{equation}
	Making some  simple transformations on \eqref{090909} leads to 
	\begin{equation}\label{1000011}\begin{aligned}
	&\frac{\alpha_0}{2}\Bigg(\frac{1}{M+1}\frac{\big\|\nabla_{\theta_{n}}g(\theta_{n},\xi_{n})\big\|^{2}}{S_{n-1}^{\frac{1}{2}+\epsilon}}+\frac{(M-1)\big\|\nabla_{\theta_{n}}g(\theta_{n})\big\|^{2}}{S_{n-1}^{\frac{1}{2}+\epsilon}}-\frac{(M+1)\big\|\nabla_{\theta_{n-1}}g(\theta_{n-1})\big\|^{2}}{S_{n-1}^{\frac{1}{2}+\epsilon}}\Bigg)\\&\le-\frac{\alpha_0}{10}\frac{\big\|\nabla_{\theta_{n}}g(\theta_{n})\big\|^{2}}{S_{n-1}^{\frac{1}{2}+\epsilon}}+4M^{2}\alpha_0^{3}c^{2}\frac{\big\|\nabla_{\theta_{n-1}}g(\theta_{n-1},\xi_{n-1})\big\|^{2}}{S_{n-1}^{\frac{3}{2}+\epsilon}}+Y_{n}^{(\epsilon)}.
	\end{aligned}\end{equation}
	Substitute \eqref{1000011} into \eqref{1000004}, then we get
	\begin{equation}\label{1000013}\begin{aligned}
	&\frac{g(\theta_{n+1})}{S_{n+1}^{\epsilon}}-\frac{g(\theta_{n})}{S_{n}^{\epsilon}}\\&\le\frac{\alpha_0}{2}(M+1)\Bigg(\frac{\big\|\nabla_{\theta_{n-1}}g(\theta_{n-1})\big\|^{2}}{S_{n-1}^{\frac{1}{2}+\epsilon}}-\frac{\big\|\nabla_{\theta_{n}}g(\theta_{n})\big\|^{2}}{S_{n}^{\frac{1}{2}+\epsilon}}\Bigg)\\&-\frac{\alpha_0}{10}\frac{\big\|\nabla_{\theta_{n}}g(\theta_{n})\big\|^{2}}{S_{n-1}^{\frac{1}{2}+\epsilon}}+4M^{2}\alpha_0^{3}c^{2}\frac{\big\|\nabla_{\theta_{n-1}}g(\theta_{n-1},\xi_{n-1})\big\|^{2}}{S_{n-1}^{\frac{3}{2}+\epsilon}}+\frac{c\alpha_0^{2}}{2}\frac{\big\|\nabla_{\theta_{n}}g(\theta_{n},\xi_{n})\big\|^{2}}{S_{n}^{1+\epsilon}}\\&+X_{n}^{(\epsilon)}+Y_{n}^{(\epsilon)}.
	\end{aligned}\end{equation}
	It follows that
	\begin{equation}\label{1000014}\begin{aligned}
	&-\frac{\alpha_0}{10}\frac{\big\|\nabla_{\theta_{n}}g(\theta_{n})\big\|^{2}}{S_{n-1}^{\frac{1}{2}+\epsilon}}=-\frac{\alpha_0}{20}\frac{\big\|\nabla_{\theta_{n}}g(\theta_{n})\big\|^{2}}{S_{n-1}^{\frac{1}{2}+\epsilon}}-\frac{\alpha_0}{20}\frac{\big\|\nabla_{\theta_{n}}g(\theta_{n})\big\|^{2}}{S_{n-1}^{\frac{1}{2}+\epsilon}}\\&=-\frac{\alpha_0}{20}\frac{\big\|\nabla_{\theta_{n}}g(\theta_{n})\big\|^{2}}{S_{n-1}^{\frac{1}{2}+\epsilon}}+\frac{\alpha_0}{20}\Bigg(\frac{\big\|\nabla_{\theta_{n-1}}g(\theta_{n-1})\big\|^{2}}{S_{n-2}^{\frac{1}{2}+\epsilon}}-\frac{\big\|\nabla_{\theta_{n}}g(\theta_{n})\big\|^{2}}{S_{n-1}^{\frac{1}{2}+\epsilon}}\Bigg)\\&-\frac{\alpha_0}{20}\frac{\big\|\nabla_{\theta_{n-1}}g(\theta_{n-1})\big\|^{2}}{S_{n-2}^{\frac{1}{2}+\epsilon}}\\&\le-\frac{\alpha_0}{20}\frac{\big\|\nabla_{\theta_{n}}g(\theta_{n})\big\|^{2}}{S_{n-1}^{\frac{1}{2}+\epsilon}}+\frac{\alpha_0}{20}\Bigg(\frac{\big\|\nabla_{\theta_{n-1}}g(\theta_{n-1})\big\|^{2}}{S_{n-2}^{\frac{1}{2}+\epsilon}}-\frac{\big\|\nabla_{\theta_{n}}g(\theta_{n})\big\|^{2}}{S_{n-1}^{\frac{1}{2}+\epsilon}}\Bigg).
	\end{aligned}\end{equation}
	Substituting \eqref{1000014} into \eqref{1000013} yields
	\begin{equation}\label{1000015}\begin{aligned}
	&\frac{g(\theta_{n+1})}{S_{n+1}^{\epsilon}}-\frac{g(\theta_{n})}{S_{n}^{\epsilon}}\\&\le\frac{\alpha_0}{2}(M+1)\Bigg(\frac{\big\|\nabla_{\theta_{n-1}}g(\theta_{n-1})\big\|^{2}}{S_{n-1}^{\frac{1}{2}+\epsilon}}-\frac{\big\|\nabla_{\theta_{n}}g(\theta_{n})\big\|^{2}}{S_{n}^{\frac{1}{2}+\epsilon}}\Bigg)\\&-\frac{\alpha_0}{20}\frac{\big\|\nabla_{\theta_{n}}g(\theta_{n})\big\|^{2}}{S_{n-1}^{\frac{1}{2}+\epsilon}}+\frac{\alpha_0}{20}\Bigg(\frac{\big\|\nabla_{\theta_{n-1}}g(\theta_{n-1})\big\|^{2}}{S_{n-2}^{\frac{1}{2}+\epsilon}}-\frac{\big\|\nabla_{\theta_{n}}g(\theta_{n})\big\|^{2}}{S_{n-1}^{\frac{1}{2}+\epsilon}}\Bigg)\\&+4M^{2}\alpha_0^{3}c^{2}\frac{\big\|\nabla_{\theta_{n-1}}g(\theta_{n-1},\xi_{n-1})\big\|^{2}}{S_{n-1}^{\frac{3}{2}+\epsilon}}+\frac{c\alpha_0^{2}}{2}\frac{\big\|\nabla_{\theta_{n}}g(\theta_{n},\xi_{n})\big\|^{2}}{S_{n}^{1+\epsilon}}+X_{n}^{(\epsilon)}+Y_{n}^{(\epsilon)}.
	\end{aligned}\end{equation}
	
	\subsection{Proof of Lemma \ref{lem7}}
	First of all, dividing      both sides of \eqref{98} by $S_{n}^{\epsilon}$  yields
	\begin{equation}\begin{aligned}
	&\frac{g(\theta_{n+1})}{S_{n}^{\epsilon}}-\frac{g(\theta_{n})}{S_{n}^{\epsilon}}\le-\frac{\alpha_0\nabla_{\theta_{n}}g(\theta_{n})^{T}\nabla_{\theta_{n}}g(\theta_{n},\xi_{n})}{S_{n}^{\frac{1}{2}+\epsilon}}+\frac{c\alpha_0^{2}}{2}\frac{\big\|\nabla_{\theta_{n}}g(\theta_{n},\xi_{n})\big\|^{2}}{S_{n}^{1+\epsilon}}.
	\end{aligned}\end{equation}
Due to $S_{n+1}\geq S_{n}$, it holds that
	\begin{equation}\label{01}\begin{aligned}
	&\frac{g(\theta_{n+1})}{S_{n+1}^{\epsilon}}-\frac{g(\theta_{n})}{S_{n}^{\epsilon}}\le-\frac{\alpha_0\nabla_{\theta_{n}}g(\theta_{n})^{T}\nabla_{\theta_{n}}g(\theta_{n},\xi_{n})}{S_{n}^{\frac{1}{2}+\epsilon}}+\frac{c\alpha_0^{2}}{2}\frac{\big\|\nabla_{\theta_{n}}g(\theta_{n},\xi_{n})\big\|^{2}}{S_{n}^{1+\epsilon}}.
	\end{aligned}\end{equation}Then we make some transformations to obtain that
	\begin{equation}\label{03}\begin{aligned}
	&-\frac{\alpha_0\nabla_{\theta_{n}}g(\theta_{n})^{T}\nabla_{\theta_{n}}g(\theta_{n},\xi_{n})}{S_{n}^{\frac{1}{2}+\epsilon}}\\&=-\frac{\alpha_0\nabla_{\theta_{n}}g(\theta_{n})^{T}\nabla_{\theta_{n}}g(\theta_{n},\xi_{n})}{S_{n-1}^{\frac{1}{2}+\epsilon}}+\alpha_0\nabla_{\theta_{n}}g(\theta_{n})^{T}\nabla_{\theta_{n}}g(\theta_{n},\xi_{n})\Bigg(\frac{1}{S_{n-1}^{\frac{1}{2}+\epsilon}}-\frac{1}{S_{n}^{\frac{1}{2}+\epsilon}}\Bigg)\\&\le-\frac{\alpha_0\nabla_{\theta_{n}}g(\theta_{n})^{T}\nabla_{\theta_{n}}g(\theta_{n},\xi_{n})}{S_{n-1}^{\frac{1}{2}+\epsilon}}+\alpha_0\bigg(\frac{(M+1)a}{2}+\frac{1}{2(M+1)a}\big\|\nabla_{\theta_{n}}g(\theta_{n})\big\|^{2}\big\|\nabla_{\theta_{n}}g(\theta_{n},\xi_{n})\big\|^{2}\bigg)\\&\Bigg(\frac{1}{S_{n-1}^{\frac{1}{2}+\epsilon}}-\frac{1}{S_{n}^{\frac{1}{2}+\epsilon}}\Bigg)\\
	&\le -\frac{\alpha_0\big\|\nabla_{\theta_{n}}g(\theta_{n})\big\|^{2}}{S_{n-1}^{\frac{1}{2}+\epsilon}}+\frac{(M+1)\alpha_0 a}{2}\Bigg(\frac{1}{S_{n-1}^{\frac{1}{2}+\epsilon}}-\frac{1}{S_{n}^{\frac{1}{2}+\epsilon}}\Bigg)\\&+\bigg(\frac{\alpha_0 }{2a(M+1)}\big\|\nabla_{\theta_{n}}g(\theta_{n})\big\|^{2}\Expect\Big(\big\|\nabla_{\theta_{n}}g(\theta_{n},\xi_{n})\big\|^{2}\Big|\mathscr{F}_{n-1}\Big)\bigg)\frac{1}{S_{n-1}^{\frac{1}{2}+\epsilon}}+A_{n}^{(\epsilon)}+B_{n}^{(\epsilon)}.
	\end{aligned}\end{equation}
where
	\begin{equation}\label{def_A_B}
\begin{aligned}
	&A_{n}^{(\epsilon)}=\frac{\alpha_0}{S_{n-1}^{\frac{1}{2}+\epsilon}}\Big(\big\|\nabla_{\theta_{n}}g(\theta_{n})\big\|^{2}-\nabla_{\theta_{n}}g(\theta_{n})^{T}\nabla_{\theta_{n}}g(\theta_{n},\xi_{n})\Big)
	\\&B_{n}^{(\epsilon)}=\frac{\alpha_0 \big\|\nabla_{\theta_{n}}g(\theta_{n})\big\|^{2}}{2a(M+1)S_{n-1}^{\frac{1}{2}+\epsilon}}\bigg(\big\|\nabla_{\theta_{n}}g(\theta_{n},\xi_{n})\big\|^{2}-\Expect\Big(\big\|\nabla_{\theta_{n}}g(\theta_{n},\xi_{n})\big\|^{2}\Big|\mathscr{F}_{n-1}\Big)\bigg).
	\end{aligned}\end{equation}
	Due to $\big\|\nabla_{\theta_{n}}g(\theta_{n})\big\|^{2}\le a$, we get
	\begin{equation}\label{04}\begin{aligned}
	\Expect\Big(\big\|\nabla_{\theta_{n}}g(\theta_{n},\xi_{n})\big\|^{2}\Big|\mathscr{F}_{n-1}\Big)\le M\big\|\nabla_{\theta_{n}}g(\theta_{n})\big\|^{2}+a\le (M+1)a.
	\end{aligned}\end{equation}
	Substitute it into \eqref{03}, then we get
	\begin{equation}\label{02}\begin{aligned}
	&\frac{g(\theta_{n+1})}{S_{n+1}^{\epsilon}}-\frac{g(\theta_{n})}{S_{n}^{\epsilon}}\le-\frac{\alpha_0\big\|\nabla_{\theta_{n}}g(\theta_{n})\big\|^{2}}{S_{n-1}^{\frac{1}{2}+\epsilon}}+\frac{\alpha_0 a(M+1)}{2}\Bigg(\frac{1}{S_{n-1}^{\frac{1}{2}+\epsilon}}-\frac{1}{S_{n}^{\frac{1}{2}+\epsilon}}\Bigg)\\&+\frac{\alpha_0\big\|\nabla_{\theta_{n}}g(\theta_{n})\big\|^{2}}{2S_{n-1}^{\frac{1}{2}+\epsilon}}+\frac{c\alpha_0^{2}}{2}\frac{\big\|\nabla_{\theta_{n}}g(\theta_{n},\xi_{n})\big\|^{2}}{S_{n}^{1+\epsilon}}+A_{n}^{(\epsilon)}+B_{n}^{(\epsilon)}\\&=-\frac{\alpha_0\big\|\nabla_{\theta_{n}}g(\theta_{n})\big\|^{2}}{2S_{n-1}^{\frac{1}{2}+\epsilon}}+\frac{\alpha_0 a(M+1)}{2}\Bigg(\frac{1}{S_{n-1}^{\frac{1}{2}+\epsilon}}-\frac{1}{S_{n}^{\frac{1}{2}+\epsilon}}\Bigg)\\&+\frac{c\alpha_0^{2}}{2}\frac{\big\|\nabla_{\theta_{n}}g(\theta_{n},\xi_{n})\big\|^{2}}{S_{n}^{1+\epsilon}}+A_{n}^{(\epsilon)}+B_{n}^{(\epsilon)}.
	\end{aligned}\end{equation}
	We make some transformations on $-\frac{\alpha_0\big\|\nabla_{\theta_{n}}g(\theta_{n})\big\|^{2}}{2S_{n-1}^{\frac{1}{2}+\epsilon}}$ to obtain that
	\begin{equation}\label{04}\begin{aligned}
	&-\frac{\alpha_0\big\|\nabla_{\theta_{n}}g(\theta_{n})\big\|^{2}}{2S_{n-1}^{\frac{1}{2}+\epsilon}}\le-\frac{\alpha_0\big\|\nabla_{\theta_{n}}g(\theta_{n})\big\|^{2}}{20S_{n-1}^{\frac{1}{2}+\epsilon}}-\frac{\alpha_0\big\|\nabla_{\theta_{n}}g(\theta_{n})\big\|^{2}}{20S_{n-1}^{\frac{1}{2}+\epsilon}}\\&=-\frac{\alpha_0\big\|\nabla_{\theta_{n}}g(\theta_{n})\big\|^{2}}{20S_{n-1}^{\frac{1}{2}+\epsilon}}-\frac{\alpha_0}{20}\frac{\big\|\nabla_{\theta_{n-1}}g(\theta_{n-1})\big\|^{2}}{S_{n-2}^{\frac{1}{2}+\epsilon}}+\frac{\alpha_0}{20}\Bigg(\frac{\big\|\nabla_{\theta_{n-1}}g(\theta_{n-1})\big\|^{2}}{S_{n-2}^{\frac{1}{2}+\epsilon}}-\frac{\big\|\nabla_{\theta_{n}}g(\theta_{n})\big\|^{2}}{S_{n-1}^{\frac{1}{2}+\epsilon}}\Bigg).
	\end{aligned}\end{equation}
	Then we use inequality  $2a^{T}b\le \lambda \|a\|^{2}+\frac{1}{\lambda}\|b\|^{2}\ \ (\lambda>0)$ on \eqref{05} to get
	\begin{equation}\label{06}\begin{aligned}
	&\big\|\nabla_{\theta_{n}}g(\theta_{n})\big\|^{2}-\big\|\nabla_{\theta_{n-1}}g(\theta_{n-1})\big\|^{2}\le\frac{\big\|\nabla_{\theta_{n-1}}g(\theta_{n-1})\big\|^{2}}{10(M+1)}\\&+\frac{10\alpha_0^{2}c^{2}(M+1)}{S_{n-1}}\big\|\nabla_{\theta_{n-1}}g(\theta_{n-1},\xi_{n-1})\big\|^{2}+\frac{\alpha_0^{2}c^{2}}{S_{n-1}}\big\|\nabla_{\theta_{n-1}}g(\theta_{n-1},\xi_{n-1})\big\|^{2}.
	\end{aligned}\end{equation}
	Divide both sides of \eqref{06} by $S_{n-1}^{\frac{1}{2}+\epsilon}$ and notice $S_{n-2}\le S_{n-1}\le S_{n}$, then we have
	\begin{equation}\label{07}\begin{aligned}
	&\frac{\big\|\nabla_{\theta_{n}}g(\theta_{n})\big\|^{2}}{S_{n}^{\frac{1}{2}+\epsilon}}-\frac{\big\|\nabla_{\theta_{n-1}}g(\theta_{n-1})\big\|^{2}}{S_{n-1}^{\frac{1}{2}+\epsilon}}\le\frac{1}{M+1}\frac{\big\|\nabla_{\theta_{n-1}}g(\theta_{n-1})\big\|^{2}}{10S_{n-2}^{\frac{1}{2}+\epsilon}}\\&+\frac{10\alpha_0^{2}c^{2}(M+1)}{S_{n-1}^{1+\epsilon}}\big\|\nabla_{\theta_{n-1}}g(\theta_{n-1},\xi_{n-1})\big\|^{2}+\frac{\alpha_0^{2}c^{2}}{S_{n-1}^{\frac{3}{2}+\epsilon}}\big\|\nabla_{\theta_{n-1}}g(\theta_{n-1},\xi_{n-1})\big\|^{2}.
	\end{aligned}\end{equation}
	Then we calculate $\frac{\alpha_0}{2}(M+1)\eqref{07}+\eqref{04}$
	\begin{equation}\nonumber\begin{aligned}
	&-\frac{\alpha_0\big\|\nabla_{\theta_{n}}g(\theta_{n})\big\|^{2}}{2S_{n-1}^{\frac{1}{2}+\epsilon}}+\frac{\alpha_0(M+1)}{2}\Bigg(\frac{\big\|\nabla_{\theta_{n}}g(\theta_{n})\big\|^{2}}{S_{n}^{\frac{1}{2}+\epsilon}}-\frac{\big\|\nabla_{\theta_{n-1}}g(\theta_{n-1})\big\|^{2}}{S_{n-1}^{\frac{1}{2}+\epsilon}}\Bigg)\\&\le-\frac{\alpha_0\big\|\nabla_{\theta_{n}}g(\theta_{n})\big\|^{2}}{20S_{n-1}^{\frac{1}{2}+\epsilon}}+5\alpha_0^{3}c^{2}(M+1)^{2}\frac{\big\|\nabla_{\theta_{n-1}}g(\theta_{n-1},\xi_{n-1})\big\|^{2}}{S_{n-1}^{1+\epsilon}}\\&+\frac{(M+1)\alpha_0^{3}c^{2}}{2S_{n-1}^{\frac{3}{2}+\epsilon}}\big\|\nabla_{\theta_{n-1}}g(\theta_{n-1},\xi_{n-1})\big\|^{2}+\frac{\alpha_0}{20}\Bigg(\frac{\big\|\nabla_{\theta_{n-1}}g(\theta_{n-1})\big\|^{2}}{S_{n-2}^{\frac{1}{2}+\epsilon}}-\frac{\big\|\nabla_{\theta_{n}}g(\theta_{n})\big\|^{2}}{S_{n-1}^{\frac{1}{2}+\epsilon}}\Bigg).\end{aligned}\end{equation}
	Move $\frac{\alpha_0(M+1)}{2}\Bigg(\frac{\big\|\nabla_{\theta_{n}}g(\theta_{n})\big\|^{2}}{S_{n}^{\frac{1}{2}+\epsilon}}-\frac{\big\|\nabla_{\theta_{n-1}}g(\theta_{n-1})\big\|^{2}}{S_{n-1}^{\frac{1}{2}+\epsilon}}\Bigg)$ to the right-hand side of the above inequality, then we have
	\begin{equation}\label{08}\begin{aligned}
	&-\frac{\alpha_0\big\|\nabla_{\theta_{n}}g(\theta_{n})\big\|^{2}}{2S_{n-1}^{\frac{1}{2}+\epsilon}}\le-\frac{\alpha_0\big\|\nabla_{\theta_{n}}g(\theta_{n})\big\|^{2}}{20S_{n-1}^{\frac{1}{2}+\epsilon}}+5\alpha_0^{3}c^{2}(M+1)^{2}\frac{\big\|\nabla_{\theta_{n-1}}g(\theta_{n-1},\xi_{n-1})\big\|^{2}}{S_{n-1}^{1+\epsilon}}\\&+\frac{(M+1)\alpha_0^{3}c^{2}}{2S_{n-1}^{\frac{3}{2}+\epsilon}}\big\|\nabla_{\theta_{n-1}}g(\theta_{n-1},\xi_{n-1})\big\|^{2}+\frac{\alpha_0}{20}\Bigg(\frac{\big\|\nabla_{\theta_{n-1}}g(\theta_{n-1})\big\|^{2}}{S_{n-2}^{\frac{1}{2}+\epsilon}}-\frac{\big\|\nabla_{\theta_{n}}g(\theta_{n})\big\|^{2}}{S_{n-1}^{\frac{1}{2}+\epsilon}}\Bigg)\\&+\frac{\alpha_0(M+1)}{2}\Bigg(\frac{\big\|\nabla_{\theta_{n-1}}g(\theta_{n-1})\big\|^{2}}{S_{n-1}^{\frac{1}{2}+\epsilon}}-\frac{\big\|\nabla_{\theta_{n}}g(\theta_{n})\big\|^{2}}{S_{n}^{\frac{1}{2}+\epsilon}}\Bigg).\end{aligned}\end{equation}
	Substitute \eqref{08} into \eqref{02}, then we have
	\begin{equation}\label{88}\begin{aligned}
	&\frac{g(\theta_{n+1})}{S_{n+1}^{\epsilon}}-\frac{g(\theta_{n})}{S_{n}^{\epsilon}}\\&\le-\frac{\alpha_0\big\|\nabla_{\theta_{n}}g(\theta_{n})\big\|^{2}}{20S_{n-1}^{\frac{1}{2}+\epsilon}}+5\alpha_0^{3}c^{2}(M+1)^{2}\frac{\big\|\nabla_{\theta_{n-1}}g(\theta_{n-1},\xi_{n-1})\big\|^{2}}{S_{n-1}^{1+\epsilon}}\\&+\frac{(M+1)\alpha_0^{3}c^{2}}{2S_{n-1}^{\frac{3}{2}+\epsilon}}\big\|\nabla_{\theta_{n-1}}g(\theta_{n-1},\xi_{n-1})\big\|^{2}+\frac{\alpha_0}{20}\Bigg(\frac{\big\|\nabla_{\theta_{n-1}}g(\theta_{n-1})\big\|^{2}}{S_{n-2}^{\frac{1}{2}+\epsilon}}-\frac{\big\|\nabla_{\theta_{n}}g(\theta_{n})\big\|^{2}}{S_{n-1}^{\frac{1}{2}+\epsilon}}\Bigg)\\&+\frac{\alpha_0(M+1)}{2}\Bigg(\frac{\big\|\nabla_{\theta_{n-1}}g(\theta_{n-1})\big\|^{2}}{S_{n-1}^{\frac{1}{2}+\epsilon}}-\frac{\big\|\nabla_{\theta_{n}}g(\theta_{n})\big\|^{2}}{S_{n}^{\frac{1}{2}+\epsilon}}\Bigg)+\frac{\alpha_0 a(M+1)}{2}\Bigg(\frac{1}{S_{n-1}^{\frac{1}{2}+\epsilon}}-\frac{1}{S_{n}^{\frac{1}{2}+\epsilon}}\Bigg)\\&+\frac{c\alpha_0^{2}}{2}\frac{\big\|\nabla_{\theta_{n}}g(\theta_{n},\xi_{n})\big\|^{2}}{S_{n}^{1+\epsilon}}+A_{n}^{(\epsilon)}+B_{n}^{(\epsilon)}.
	\end{aligned}\end{equation}
	
	\subsection{Proof of Lemma \ref{lem8}}
	First of all, it holds that
	\begin{equation}\nonumber\begin{aligned}
	&\frac{g(\theta_{n+1})}{S_{n+1}^{\epsilon}}-\frac{g(\theta_{n})}{S_{n}^{\epsilon}}\\&=I\Big(\big\|\nabla_{\theta_{n}}g(\theta_{n})\big\|^{2}\le a\Big)\bigg(\frac{g(\theta_{n+1})}{S_{n+1}^{\epsilon}}-\frac{g(\theta_{n})}{S_{n}^{\epsilon}}\bigg)+I\Big(\big\|\nabla_{\theta_{n}}g(\theta_{n})\big\|^{2}> a\Big)\bigg(\frac{g(\theta_{n+1})}{S_{n+1}^{\epsilon}}-\frac{g(\theta_{n})}{S_{n}^{\epsilon}}\bigg).
	\end{aligned}\end{equation}
	$I\Big(\big\|\nabla_{\theta_{n}}g(\theta_{n})\big\|^{2}\leq a\Big)\in \mathscr{F}_{n-1}$ is the indicator function such that
	\begin{equation}\nonumber\begin{aligned}
	I\Big(\big\|\nabla_{\theta_{n}}g(\theta_{n})\big\|^{2}\le a\Big)=\left\{\begin{array}{rcl}
	1 & & {\big\|\nabla_{\theta_{n}}g(\theta_{n})\big\|^{2}\leq a}\\ \\
	0 & & {\big\|\nabla_{\theta_{n}}g(\theta_{n})\big\|^{2}> a}
	\end{array} \right.
	\end{aligned}\end{equation}
	For convenient, we abbreviate $I\Big(\big\|\nabla_{\theta_{n}}g(\theta_{n})\big\|^{2}\le a\Big)$ as $I_{n}^{\le a}$ and $I\Big(\big\|\nabla_{\theta_{n}}g(\theta_{n})\big\|^{2}> a\Big)$ as $I_{n}^{>a}$ in the following. \\Through Lemma \ref{lem7}, we get that
	\begin{equation}\label{001}\begin{aligned}
	&I_{n}^{\le a}\bigg(\frac{g(\theta_{n+1})}{S_{n+1}^{\epsilon}}-\frac{g(\theta_{n})}{S_{n}^{\epsilon}}\bigg)\\
	&\le -I_{n}^{\le a}\frac{\alpha_0\big\|\nabla_{\theta_{n}}g(\theta_{n})\big\|^{2}}{20S_{n-1}^{\frac{1}{2}+\epsilon}}+5\alpha_0^{3}c^{2}(M+1)^{2}I_{n}^{\le a}\frac{\big\|\nabla_{\theta_{n-1}}g(\theta_{n-1},\xi_{n-1})\big\|^{2}}{S_{n-1}^{1+\epsilon}}\\&+I_{n}^{\le a}\frac{(M+1)\alpha_0^{3}c^{2}}{2S_{n-1}^{\frac{3}{2}+\epsilon}}\big\|\nabla_{\theta_{n-1}}g(\theta_{n-1},\xi_{n-1})\big\|^{2}+\frac{\alpha_0}{20}I_{n}^{\le a}\Bigg(\frac{\big\|\nabla_{\theta_{n-1}}g(\theta_{n-1})\big\|^{2}}{S_{n-2}^{\frac{1}{2}+\epsilon}}-\frac{\big\|\nabla_{\theta_{n}}g(\theta_{n})\big\|^{2}}{S_{n-1}^{\frac{1}{2}+\epsilon}}\Bigg)\\&+\frac{\alpha_0(M+1)}{2}I_{n}^{\le a}\Bigg(\frac{\big\|\nabla_{\theta_{n-1}}g(\theta_{n-1})\big\|^{2}}{S_{n-1}^{\frac{1}{2}+\epsilon}}-\frac{\big\|\nabla_{\theta_{n}}g(\theta_{n})\big\|^{2}}{S_{n}^{\frac{1}{2}+\epsilon}}\Bigg)+\frac{\alpha_0 a(M+1)}{2}I_{n}^{\le a}\Bigg(\frac{1}{S_{n-1}^{\frac{1}{2}+\epsilon}}-\frac{1}{S_{n}^{\frac{1}{2}+\epsilon}}\Bigg)\\&+\frac{c\alpha_0^{2}}{2}I_{n}^{\le a}\frac{\big\|\nabla_{\theta_{n}}g(\theta_{n},\xi_{n})\big\|^{2}}{S_{n}^{1+\epsilon}}+I_{n}^{\le a}A_{n}^{(\epsilon)}+I_{n}^{\le a}B_{n}^{(\epsilon)}.
	\end{aligned}\end{equation}
	Through Lemma \ref{lem6}, we get
	\begin{equation}\label{002}\begin{aligned}
	&I_{n}^{>a}\bigg(\frac{g(\theta_{n+1})}{S_{n+1}^{\epsilon}}-\frac{g(\theta_{n})}{S_{n}^{\epsilon}}\bigg)\\&\le-I_{n}^{>a}\frac{\alpha_0}{20}\frac{\big\|\nabla_{\theta_{n}}g(\theta_{n})\big\|^{2}}{S_{n-1}^{\frac{1}{2}+\epsilon}}+\frac{\alpha_0(M+1)}{2}`  I_{n}^{>a}\Bigg(\frac{\big\|\nabla_{\theta_{n-1}}g(\theta_{n-1})\big\|^{2}}{S_{n-1}^{\frac{1}{2}+{\epsilon}}}-\frac{\big\|\nabla_{\theta_{n}}g(\theta_{n})\big\|^{2}}{S_{n}^{\frac{1}{2}+{\epsilon}}}\Bigg)\\&+\frac{\alpha_0}{20}I_{n}^{>a}\Bigg(\frac{\big\|\nabla_{\theta_{n-1}}g(\theta_{n-1})\big\|^{2}}{S_{n-2}^{\frac{1}{2}+\epsilon}}-\frac{\big\|\nabla_{\theta_{n}}g(\theta_{n})\big\|^{2}}{S_{n-1}^{\frac{1}{2}+\epsilon}}\Bigg)\\&4M^{2}\alpha_0^{3}c^{2}I_{n}^{>a}\frac{\big\|\nabla_{\theta_{n-1}}g(\theta_{n-1}),\xi_{n-1}\big\|^{2}}{S_{n-1}^{\frac{3}{2}+\epsilon}}+\frac{c\alpha_0^{2}}{2}I_{n}^{>a}\frac{\big\|\nabla_{\theta_{n}}g(\theta_{n},\xi_{n})\big\|^{2}}{S_{n}^{1+\epsilon}}\\&+I_{n}^{>a}X_{n}^{(\epsilon)}+I_{n}^{>a}Y_{n}^{(\epsilon)}.
	\end{aligned}\end{equation}
	Calculate $\eqref{001}+\eqref{002}$, then it holds that
	\begin{equation}\label{00}\begin{aligned}
	&I_{n}^{\le a}\bigg(\frac{g(\theta_{n+1})}{S_{n+1}^{\epsilon}}-\frac{g(\theta_{n})}{S_{n}^{\epsilon}}\bigg)+I_{n}^{>a}\bigg(\frac{g(\theta_{n+1})}{S_{n+1}^{\epsilon}}-\frac{g(\theta_{n})}{S_{n}^{\epsilon}}\bigg)\\&\le\frac{\alpha_0}{20}(I_{n}^{\le a}+I_{n}^{>a})\Bigg(\frac{\big\|\nabla_{\theta_{n-1}}g(\theta_{n-1})\big\|^{2}}{S_{n-2}^{\frac{1}{2}+\epsilon}}-\frac{\big\|\nabla_{\theta_{n}}g(\theta_{n})\big\|^{2}}{S_{n-1}^{\frac{1}{2}+\epsilon}}\Bigg)\\&+\frac{\alpha_0}{2}(M+1)(I_{n}^{\le a}+I_{n}^{>a})\Bigg(\frac{\big\|\nabla_{\theta_{n-1}}g(\theta_{n-1})\big\|^{2}}{S_{n-1}^{\frac{1}{2}+{\epsilon}}}-\frac{\big\|\nabla_{\theta_{n}}g(\theta_{n})\big\|^{2}}{S_{n}^{\frac{1}{2}+{\epsilon}}}\Bigg)\\&-\frac{\alpha_0}{20}\big(I_{n}^{\le a}+I_{n}^{>a}\big)\frac{\big\|\nabla_{\theta_{n}}g(\theta_{n})\big\|^{2}}{S_{n-1}^{\frac{1}{2}+\epsilon}}+4M^{2}\alpha_0^{3}c^{2}I_{n}^{>a}\frac{\big\|\nabla_{\theta_{n-1}}g(\theta_{n-1}),\xi_{n-1}\big\|^{2}}{S_{n-1}^{\frac{3}{2}+\epsilon}}\\&+\frac{c\alpha_0^{2}}{2}I_{n}^{>a}\frac{\big\|\nabla_{\theta_{n}}g(\theta_{n},\xi_{n})\big\|^{2}}{S_{n}^{1+\epsilon}}+5\alpha_0^{3}c^{2}(M+1)^{2}I_{n}^{\le a}\frac{\big\|\nabla_{\theta_{n-1}}g(\theta_{n-1},\xi_{n-1})\big\|^{2}}{S_{n-1}^{1+\epsilon}}\\&+I_{n}^{\le a}\frac{(M+1)\alpha_0^{3}c^{2}}{2S_{n-1}^{\frac{3}{2}+\epsilon}}\big\|\nabla_{\theta_{n-1}}g(\theta_{n-1},\xi_{n-1})\big\|^{2}\\&+\frac{\alpha_0 a(M+1)}{2}I_{n}^{\le a}\Bigg(\frac{1}{S_{n-1}^{\frac{1}{2}+\epsilon}}-\frac{1}{S_{n}^{\frac{1}{2}+\epsilon}}\Bigg)+\frac{c\alpha_0^{2}}{2}I_{n}^{\le a}\frac{\big\|\nabla_{\theta_{n}}g(\theta_{n},\xi_{n})\big\|^{2}}{S_{n}^{1+\epsilon}}\\&+I_{n}^{\le a}A_{n}^{(\epsilon)}+I_{n}^{\le a}B_{n}^{(\epsilon)}+I_{n}^{>a}X_{n}^{(\epsilon)}+I_{n}^{>a}Y_{n}^{(\epsilon)}.
	\end{aligned}\end{equation}
	Notice $I_{n}^{\le a}\le 1$, then we get
	\begin{equation}\label{005}\begin{aligned}
	\frac{\alpha_0 a(M+1)}{2}I_{n}^{\le a}\Bigg(\frac{1}{S_{n-1}^{\frac{1}{2}+\epsilon}}-\frac{1}{S_{n}^{\frac{1}{2}+\epsilon}}\Bigg)\le \frac{\alpha_0 a(M+1)}{2}\Bigg(\frac{1}{S_{n-1}^{\frac{1}{2}+\epsilon}}-\frac{1}{S_{n}^{\frac{1}{2}+\epsilon}}\Bigg).
	\end{aligned}\end{equation}
	Substitute \eqref{005} into \eqref{00}, then we get
	\begin{equation}\label{004}\begin{aligned}
	&\frac{g(\theta_{n+1})}{S_{n+1}^{\epsilon}}-\frac{g(\theta_{n})}{S_{n}^{\epsilon}}\\&\le\frac{\alpha_0}{20}\Bigg(\frac{\big\|\nabla_{\theta_{n-1}}g(\theta_{n-1})\big\|^{2}}{S_{n-2}^{\frac{1}{2}+\epsilon}}-\frac{\big\|\nabla_{\theta_{n}}g(\theta_{n})\big\|^{2}}{S_{n-1}^{\frac{1}{2}+\epsilon}}\Bigg)\\&+\frac{\alpha_0}{2}(M+1)\Bigg(\frac{\big\|\nabla_{\theta_{n-1}}g(\theta_{n-1})\big\|^{2}}{S_{n-1}^{\frac{1}{2}+{\epsilon}}}-\frac{\big\|\nabla_{\theta_{n}}g(\theta_{n})\big\|^{2}}{S_{n}^{\frac{1}{2}+{\epsilon}}}\Bigg)\\&-\frac{\alpha_0}{20}\frac{\big\|\nabla_{\theta_{n}}g(\theta_{n})\big\|^{2}}{S_{n-1}^{\frac{1}{2}+\epsilon}}+4M^{2}\alpha_0^{3}c^{2}I_{n}^{>a}\frac{\big\|\nabla_{\theta_{n-1}}g(\theta_{n-1}),\xi_{n-1}\big\|^{2}}{S_{n-1}^{\frac{3}{2}+\epsilon}}+\frac{c\alpha_0^{2}}{2}I_{n}^{>a}\frac{\big\|\nabla_{\theta_{n}}g(\theta_{n},\xi_{n})\big\|^{2}}{S_{n}^{1+\epsilon}}\\&+5\alpha_0^{3}c^{2}(M+1)^{2}I_{n}^{\le a}\frac{\big\|\nabla_{\theta_{n-1}}g(\theta_{n-1},\xi_{n-1})\big\|^{2}}{S_{n-1}^{1+\epsilon}}+I_{n}^{\le a}\frac{(M+1)\alpha_0^{3}c^{2}}{2S_{n-1}^{\frac{3}{2}+\epsilon}}\big\|\nabla_{\theta_{n-1}}g(\theta_{n-1},\xi_{n-1})\big\|^{2}\\&+\frac{\alpha_0 a(M+1)}{2}\Bigg(\frac{1}{S_{n-1}^{\frac{1}{2}+\epsilon}}-\frac{1}{S_{n}^{\frac{1}{2}+\epsilon}}\Bigg)+\frac{c\alpha_0^{2}}{2}I_{n}^{\le a}\frac{\big\|\nabla_{\theta_{n}}g(\theta_{n},\xi_{n})\big\|^{2}}{S_{n}^{1+\epsilon}}+I_{n}^{\le a}A_{n}^{(\epsilon)}+I_{n}^{\le a}B_{n}^{(\epsilon)}+I_{n}^{>a}X_{n}^{(\epsilon)}\\&+I_{n}^{>a}Y_{n}^{(\epsilon)}.
	\end{aligned}\end{equation}
	We make a summation of \eqref{004} to get
	\begin{equation}\label{006}\begin{aligned}
	&\sum_{k=3}^{n}\bigg(\frac{g(\theta_{k+1})}{S_{k+1}^{\epsilon}}-\frac{g(\theta_{k})}{S_{k}^{\epsilon}}\bigg)\\&\le\frac{\alpha_0}{20}\sum_{k=3}^{n}\Bigg(\frac{\big\|\nabla_{\theta_{k-1}}g(\theta_{k-1})\big\|^{2}}{S_{k-2}^{\frac{1}{2}+\epsilon}}-\frac{\big\|\nabla_{\theta_{k}}g(\theta_{k})\big\|^{2}}{S_{k-1}^{\frac{1}{2}+\epsilon}}\Bigg)\\&+\frac{\alpha_0}{2}(M+1)\sum_{k=3}^{n}\Bigg(\frac{\big\|\nabla_{\theta_{k-1}}g(\theta_{k-1})\big\|^{2}}{S_{k-1}^{\frac{1}{2}+{\epsilon}}}-\frac{\big\|\nabla_{\theta_{k}}g(\theta_{k})\big\|^{2}}{S_{k}^{\frac{1}{2}+{\epsilon}}}\Bigg)\\&-\frac{\alpha_0}{20}\sum_{k=3}^{n}\frac{\big\|\nabla_{\theta_{k}}g(\theta_{k})\big\|^{2}}{S_{k-1}^{\frac{1}{2}+\epsilon}}+4M^{2}\alpha_0^{3}c^{2}\sum_{k=3}^{n}I_{k}^{>a}\frac{\big\|\nabla_{\theta_{k-1}}g(\theta_{k-1}),\xi_{k-1}\big\|^{2}}{S_{k-1}^{\frac{3}{2}+\epsilon}}\\&+\frac{c\alpha_0^{2}}{2}\sum_{k=3}^{n}I_{k}^{>a}\frac{\big\|\nabla_{\theta_{k}}g(\theta_{k},\xi_{k})\big\|^{2}}{S_{k}^{1+\epsilon}}+5\alpha_0^{3}c^{2}(M+1)^{2}\sum_{k=2}^{n}I_{k}^{\le a}\frac{\big\|\nabla_{\theta_{k-1}}g(\theta_{k-1},\xi_{k-1})\big\|^{2}}{S_{k-1}^{1+\epsilon}}\\&+\sum_{k=3}^{n}I_{k}^{\le a}\frac{(M+1)\alpha_0^{3}c^{2}}{2S_{k-1}^{\frac{3}{2}+\epsilon}}\big\|\nabla_{\theta_{k-1}}g(\theta_{k-1},\xi_{k-1})\big\|^{2}\\&+\frac{\alpha_0 a(M+1)}{2}\sum_{k=3}^{n}\Bigg(\frac{1}{S_{k-1}^{\frac{1}{2}+\epsilon}}-\frac{1}{S_{k}^{\frac{1}{2}+\epsilon}}\Bigg)+\frac{c\alpha_0^{2}}{2}\sum_{k=3}^{n}I_{k}^{\le a}\frac{\big\|\nabla_{\theta_{k}}g(\theta_{k},\xi_{k})\big\|^{2}}{S_{k}^{1+\epsilon}}\\&+\sum_{k=3}^{n}\Big(I_{k}^{\le a}A_{k}^{(\epsilon)}+I_{k}^{\le a}B_{k}^{(\epsilon)}+I_{k}^{>a}X_{k}^{(\epsilon)}+I_{k}^{>a}Y_{k}^{(\epsilon)}\Big).
	\end{aligned}\end{equation}
	It follows from Lemma \ref{lem5} that
	\begin{equation}\label{007}\begin{aligned}
	\sum_{k=3}^{n}\frac{\big\|\nabla_{\theta_{k}}g(\theta_{k},\xi_{k})\big\|^{2}}{S_{k}^{1+\epsilon}}\le \int_{S_{3}}^{+\infty}\frac{1}{x^{1+\epsilon}}dx=\frac{1}{\epsilon S_{3}^{\epsilon}},
	\end{aligned}\end{equation}
	and
	\begin{equation}\label{008}\begin{aligned}
	\sum_{k=3}^{n}\frac{\big\|\nabla_{\theta_{k-1}}g(\theta_{k-1},\xi_{k-1})\big\|^{2}}{S_{k-1}^{1+\epsilon}}\le  \int_{S_{2}}^{+\infty}\frac{1}{x^{1+\epsilon}}dx=\frac{1}{\epsilon S_{2}^{\epsilon}},
	\end{aligned}\end{equation}
	and
	\begin{equation}\label{009}\begin{aligned}
	\sum_{k=3}^{n}\frac{\big\|\nabla_{\theta_{k-1}}g(\theta_{k-1},\xi_{k-1})\big\|^{2}}{S_{k-1}^{\frac{3}{2}+\epsilon}}\le \int_{S_{2}}^{+\infty}\frac{1}{x^{\frac{3}{2}+\epsilon}}dx=\frac{2}{(1+2\epsilon) S_{2}^{\frac{1}{2}+\epsilon}}.
	\end{aligned}\end{equation}
	Due to $I_{k}^{\le a}\le 1$ and $I_{k}^{>a}\le 1$, we get that
	\begin{equation}\label{010}\begin{aligned}
	&4M^{2}\alpha_0^{3}c^{2}\sum_{k=3}^{n}I_{k}^{>a}\frac{\big\|\nabla_{\theta_{k-1}}g(\theta_{k-1}),\xi_{k-1}\big\|^{2}}{S_{k-1}^{\frac{3}{2}+\epsilon}}+\frac{c\alpha_0^{2}}{2}\sum_{k=3}^{n}I_{k}^{>a}\frac{\big\|\nabla_{\theta_{k}}g(\theta_{k},\xi_{k})\big\|^{2}}{S_{k}^{1+\epsilon}}\\&+5\alpha_0^{3}c^{2}(M+1)^{2}\sum_{k=3}^{n}I_{k}^{\le a}\frac{\big\|\nabla_{\theta_{k-1}}g(\theta_{k-1},\xi_{k-1})\big\|^{2}}{S_{k-1}^{1+\epsilon}}\\&+\sum_{k=3}^{n}I_{k}^{\le a}\frac{(M+1)\alpha_0^{3}c^{2}}{2S_{k-1}^{\frac{3}{2}+\epsilon}}\big\|\nabla_{\theta_{k-1}}g(\theta_{k-1},\xi_{k-1})\big\|^{2}+\frac{c\alpha_0^{2}}{2}\sum_{k=3}^{n}I_{k}^{\le a}\frac{\big\|\nabla_{\theta_{k}}g(\theta_{k},\xi_{k})\big\|^{2}}{S_{k}^{1+\epsilon}}\\&\le \frac{8M^{2}\alpha_0^{3}c^{2}}{(1+2\epsilon) S_{2}^{\frac{1}{2}+\epsilon}}+\frac{c\alpha_0^{2}}{2\epsilon S_{3}^{\epsilon}}+\frac{5\alpha_0^{3}c^{2}(M+1)^{2}}{\epsilon S_{2}^{\epsilon}}+\frac{\alpha_0^{3}c^{2}(M+1)}{(1+2\epsilon) S_{2}^{\frac{1}{2}+\epsilon}}+\frac{c\alpha_0^{2}}{2\epsilon S_{3}^{\epsilon}}:=K.
	\end{aligned}\end{equation} 
	
	Substituting \eqref{010} into \eqref{006} leads to 
	\begin{equation}\nonumber\begin{aligned}
	&\frac{g(\theta_{n+1})}{S_{n+1}^{\epsilon}}-\frac{g(\theta_{3})}{S_{3}^{\epsilon}}\\&\le\frac{\alpha_0}{20}\Bigg(\frac{\big\|\nabla_{\theta_{2}}g(\theta_{2})\big\|^{2}}{S_{1}^{\frac{1}{2}+\epsilon}}-\frac{\big\|\nabla_{\theta_{n}}g(\theta_{n})\big\|^{2}}{S_{n-1}^{\frac{1}{2}+\epsilon}}\Bigg)+\frac{\alpha_0}{2}(M+1)\Bigg(\frac{\big\|\nabla_{\theta_{2}}g(\theta_{2})\big\|^{2}}{S_{2}^{\frac{1}{2}+{\epsilon}}}-\frac{\big\|\nabla_{\theta_{n}}g(\theta_{n})\big\|^{2}}{S_{n}^{\frac{1}{2}+{\epsilon}}}\Bigg)\\&-\frac{\alpha_0}{20}\sum_{k=3}^{n}\frac{\big\|\nabla_{\theta_{k}}g(\theta_{k})\big\|^{2}}{S_{k-1}^{\frac{1}{2}+\epsilon}}+\frac{\alpha_0 a(M+1)}{2}\Bigg(\frac{1}{S_{2}^{\frac{1}{2}+\epsilon}}-\frac{1}{S_{n}^{\frac{1}{2}+\epsilon}}\Bigg)+K\\&+\sum_{k=3}^{n}\Big(I_{k}^{\le a}A_{k}^{(\epsilon)}+I_{k}^{\le a}B_{k}^{(\epsilon)}+I_{k}^{>a}X_{k}^{(\epsilon)}+I_{k}^{>a}Y_{k}^{(\epsilon)}\Big).
	\end{aligned}\end{equation}
	It is obvious that
	\begin{equation}\nonumber\begin{aligned}
	&\frac{\alpha_0}{20}\Bigg(\frac{\big\|\nabla_{\theta_{0}}g(\theta_{0})\big\|^{2}}{S_{1}^{\frac{1}{2}+\epsilon}}-\frac{\big\|\nabla_{\theta_{n}}g(\theta_{n})\big\|^{2}}{S_{n-1}^{\frac{1}{2}+\epsilon}}\Bigg)+\frac{\alpha_0}{2}(M+1)\Bigg(\frac{\big\|\nabla_{\theta_{2}}g(\theta_{2})\big\|^{2}}{S_{2}^{\frac{1}{2}+{\epsilon}}}-\frac{\big\|\nabla_{\theta_{n}}g(\theta_{n})\big\|^{2}}{S_{n}^{\frac{1}{2}+{\epsilon}}}\Bigg)\\&+\frac{\alpha_0 a(M+1)}{2}\Bigg(\frac{1}{S_{2}^{\frac{1}{2}+\epsilon}}-\frac{1}{S_{n}^{\frac{1}{2}+\epsilon}}\Bigg)+K\le\frac{\alpha_0}{20}\frac{\big\|\nabla_{\theta_{2}}g(\theta_{2})\big\|^{2}}{S_{1}^{\frac{1}{2}+\epsilon}}+\frac{\alpha_0(M+1)}{2}\frac{\big\|\nabla_{\theta_{2}}g(\theta_{2})\big\|^{2}}{S_{2}^{\frac{1}{2}+{\epsilon}}}\\&+\frac{\alpha_0 a(M+1)}{2S_{2}^{\frac{1}{2}+\epsilon}}+K:=L.
	\end{aligned}\end{equation}
	It follows that
	\begin{equation}\label{020}\begin{aligned}
	&\frac{g(\theta_{n+1})}{S_{n+1}^{\epsilon}}-\frac{g(\theta_{3})}{S_{3}^{\epsilon}}\\&\le-\frac{\alpha_0}{20}\sum_{k=3}^{n}\frac{\big\|\nabla_{\theta_{k}}g(\theta_{k})\big\|^{2}}{S_{k-1}^{\frac{1}{2}+\epsilon}} +L+\sum_{k=3}^{n}\Big(I_{k}^{\le a}A_{k}^{(\epsilon)}+I_{k}^{\le a}B_{k}^{(\epsilon)}+I_{k}^{>a}X_{k}^{(\epsilon)}+I_{k}^{>a}Y_{k}^{(\epsilon)}\Big).
	\end{aligned}\end{equation}
	Note that $\{I_{k}^{\le a}A_{k}\}$, $\{I_{k}^{\le a}B_{k}\}$, $\{I_{k}^{>a}X_{k}\}$ and $\{I_{k}^{>a}Y_{k}\}$ are all martingale difference sequences,  thus it follows that $\Expect\bigg(\sum_{k=1}^{n}\Big(I_{k}^{\le a}A_{k}+I_{k}^{\le a}B_{k}+I_{k}^{>a}X_{k}+I_{k}^{>a}Y_{k}\Big)\bigg)=0$. Then we calculate mathematical expectation on \eqref{020}
	\begin{equation}\nonumber\begin{aligned}
	&\Expect\bigg(\frac{g(\theta_{n+1})}{S_{n+1}^{\epsilon}}-\frac{g(\theta_{3})}{S_{3}^{\epsilon}}\bigg)\le-\frac{\alpha_0}{20}\Expect\Bigg(\sum_{k=3}^{n}\frac{\big\|\nabla_{\theta_{k}}g(\theta_{k})\big\|^{2}}{S_{k-1}^{\frac{1}{2}+\epsilon}}\Bigg) +L.
	\end{aligned}\end{equation}
	That is
	\begin{equation}\label{021}\begin{aligned}
	&\Expect\Bigg(\sum_{k=3}^{n}\frac{\big\|\nabla_{\theta_{k}}g(\theta_{k})\big\|^{2}}{S_{k-1}^{\frac{1}{2}+\epsilon}}\Bigg)<\frac{20}{\alpha_0}\bigg(\frac{g(\theta_{3})}{S_{3}^{\epsilon}}+L\bigg)<+\infty.
	\end{aligned}\end{equation}
	From Lemma \ref{lem_summation}, it holds that
	\begin{equation}\label{022}\begin{aligned}
	&\sum_{k=3}^{n}\frac{\big\|\nabla_{\theta_{k}}g(\theta_{k})\big\|^{2}}{S_{k-1}^{\frac{1}{2}+\epsilon}}<+\infty\ \ a.s..
	\end{aligned}\end{equation}
	
	\subsection{Proof of Lemma \ref{lem9}}
	It follows from \eqref{020} that
	\begin{equation}\label{0000}
	\begin{aligned}
	&\frac{g(\theta_{n+1})}{S_{n+1}^{\epsilon}}\le\frac{g(\theta_{3})}{S_{3}^{\epsilon}} +L+\sum_{k=3}^{n}\Big(I_{k}^{\le a}A_{k}^{(\epsilon)}+I_{k}^{\le a}B_{k}^{(\epsilon)}+I_{k}^{>a}X_{k}^{(\epsilon)}+I_{k}^{>a}Y_{k}^{(\epsilon)}\Big).
	\end{aligned}\end{equation}
	From \eqref{def_A_B}, we obtain
	\begin{equation}\nonumber\begin{aligned}
	\sum_{k=3}^{n}\Expect\Big(\big\|I_{k}^{\le a}A_{k}^{(\epsilon)}\big\|^{2}\Big)\le \sum_{k=3}^{n}\Expect\Bigg(I_{k}^{\le a}\frac{\alpha_0^{2}}{S_{n-1}^{1+2\epsilon}}\Big(\big\|\nabla_{\theta_{n}}g(\theta_{n})\big\|^{2}-\nabla_{\theta_{n}}g(\theta_{n})^{T}\nabla_{\theta_{n}}g(\theta_{n},\xi_{n})\Big)^{2}\Bigg).\end{aligned}\end{equation}
	With   inequality $(a+b)^{2}\le 2 (a^{2}+b^{2}), \ 2a^{T}b\le a^{2}+b^{2} \ \ (a,\ b>0)$ and \eqref{021}, we   get
	\begin{equation}\label{0001}\begin{aligned}
	&\Expect\Bigg(I_{k}^{\le a}\frac{\alpha_0^{2}}{S_{n-1}^{1+2\epsilon}}\Big(\big\|\nabla_{\theta_{n}}g(\theta_{n})\big\|^{2}-\nabla_{\theta_{n}}g(\theta_{n})^{T}\nabla_{\theta_{n}}g(\theta_{n},\xi_{n})\Big)^{2}\Bigg)\\
	&\le 2\Expect\Bigg(\frac{\alpha_0^{2}\big\|\nabla_{\theta_{n}}{g(\theta_{n})}\big\|^{2}}{S_{n-1}^{1+2\epsilon}}\Big(I_{k}^{\le a}\big\|\nabla_{\theta_{n}}g(\theta_{n})\big\|^{2}\Big)\Bigg)+2\Expect\Bigg(\frac{\alpha_0^{2}I_{k}^{\le a}}{S_{n-1}^{1+2\epsilon}}\big\|\nabla_{\theta_{n}}g(\theta_{n})\big\|^{2}\big\|\nabla_{\theta_{n}}g(\theta_{n},\xi_{n})\big\|^{2}\Bigg)\\
	&\le2(M+2)\Expect\Bigg(\frac{\alpha_0^{2}a\big\|\nabla_{\theta_{n}}{g(\theta_{n})}\big\|^{2}}{S_{n-1}^{1+\epsilon}}\Bigg)\\
	&\le 2(M+2)\alpha_0^{2}a\Expect\Bigg(\frac{\alpha_0^{2}a\big\|\nabla_{\theta_{n}}{g(\theta_{n})}\big\|^{2}}{S_{n-1}^{\frac{1}{2}+\epsilon}}\Bigg)\\&<40(M+2)\alpha_0 a\bigg(\frac{g(\theta_{1})}{S_{1}^{\epsilon}}+L\bigg).
	\end{aligned}\end{equation}
	It follows from Lemma \ref{lem_summation_MDS} that $\sum_{k=3}^{n}I_{k}^{\le a}A_{k}^{(\epsilon)}$ is convergent a.s. Similarly,   $\sum_{k=3}^{n}I_{k}^{\le a}B_{k}^{(\epsilon)}$, $\sum_{k=3}^{n}I_{k}^{> a}X_{k}^{(\epsilon)}$ and $\sum_{k=3}^{n}I_{k}^{> a}Y_{k}^{(\epsilon)}$ are both convergent a.s. It follows that
	\begin{equation}\nonumber
	\begin{aligned}
	&\sum_{k=3}^{n}\Big(I_{k}^{\le a}A_{k}^{(\epsilon)}+I_{k}^{\le a}B_{k}^{(\epsilon)}+I_{k}^{>a}X_{k}^{(\epsilon)}+I_{k}^{>a}Y_{k}^{(\epsilon)}\Big)<\xi^{'}<+\infty\ \ a.s.,
	\end{aligned}\end{equation}
	\begin{equation}\nonumber
	\begin{aligned}
	&\frac{g(\theta_{n+1})}{S_{n+1}^{\epsilon}}\le\frac{g(\theta_{3})}{S_{3}^{\epsilon}} +L+\xi^{'}<+\infty\ \ a.s.
	\end{aligned}\end{equation}
	For convenience, let $\xi=\frac{g(\theta_{3})}{S_{3}^{\epsilon}} +L+\xi^{'}$. Thus, it holds that
	\begin{equation}\label{000000000001}
	\begin{aligned}
	&\frac{g(\theta_{n+1})-g^{*}}{S_{n+1}^{\epsilon}}<\xi<+\infty\ \ a.s..
	\end{aligned}\end{equation}

	\subsection{Proof of Lemma \ref{lem10}}\label{append_pf_lem10}
	First of all, $\forall \ 0<\epsilon_{0}<\frac{3}{8}$,  there is $0<\frac{4}{3}\epsilon_{0}<\frac{1}{2}$. From Lemma \ref{lem8}, it follows that
	\begin{equation}\label{i-1}\begin{aligned}
	&\sum_{n=4}^{+\infty}\frac{\big\|\nabla_{\theta_{n-1}}g(\theta_{n-1})\big\|^{2}}{S_{n-2}^{\frac{1}{2}+\frac{4}{3}\epsilon_{0}}}<+\infty\ \ a.s..
	\end{aligned}\end{equation}
	It follows from \eqref{05}  that \begin{equation}\label{i0}\begin{aligned}
	&\big\|\nabla_{\theta_{n}}g(\theta_{n})\big\|^{2}-\big\|\nabla_{\theta_{n-1}}g(\theta_{n-1})\big\|^{2}\le\frac{\alpha_0 c}{\sqrt{S_{n-1}}}\Big(\big\|\nabla_{\theta_{n-1}}g(\theta_{n-1})\big\|^{2}+\big\|\nabla_{\theta_{n-1}}g(\theta_{n-1},\xi_{n-1})\big\|^{2}\Big)\\&+\frac{\alpha_0^{2}c^{2}}{S_{n-1}}\big\|\nabla_{\theta_{n-1}}g(\theta_{n-1},\xi_{n-1})\big\|^{2}.
	\end{aligned}\end{equation}
	Divide  both sides of \eqref{i0} by $S_{n-1}^{\epsilon_{0}}$ and notice that $S_{n}>S_{n-1}>S_{n-2}$, then we get 
	\begin{equation}\label{i1}\begin{aligned}
	&\frac{\big\|\nabla_{\theta_{n}}g(\theta_{n})\big\|^{2}}{S_{n}^{\epsilon_{0}}}-\frac{\big\|\nabla_{\theta_{n-1}}g(\theta_{n-1})\big\|^{2}}{S_{n-1}^{\epsilon_{0}}}\le\frac{\alpha_0 c\big\|\nabla_{\theta_{n-1}}g(\theta_{n-1})\big\|^{2}}{S_{n-2}^{\frac{1}{2}+\epsilon_{0}}}+\frac{\alpha_0 c\big\|\nabla_{\theta_{n-1}}g(\theta_{n-1},\xi_{n-1})\big\|^{2}}{S_{n-2}^{\frac{1}{2}+\epsilon_{0}}}\\&+\frac{\alpha_0^{2}c^{2}\big\|\nabla_{\theta_{n-1}}g(\theta_{n-1},\xi_{n-1})\big\|^{2}}{S_{n-2}^{1+\epsilon_{0}}}.
	\end{aligned}\end{equation}
	Note that $0<\frac{2}{3}\epsilon_{0}<\frac{1}{2}$, then it follows from {Lemma \ref{lemiuy8}} and Lemma \ref{lem9} that
	\begin{equation}\nonumber\begin{aligned}
	&\frac{\big\|\nabla_{\theta_{n-1}}g(\theta_{n-1})\big\|^{2}}{S_{n-2}^{\frac{2}{3}\epsilon_{0}}}\le \frac{2c\big(g(\theta_{n-1})-g^{*}\big)}{S_{n-2}^{\frac{2}{3}\epsilon_{0}}}<2c\xi<+\infty.
	\end{aligned}\end{equation}
It follows that
	\begin{equation}\nonumber\begin{aligned}
	&\frac{\big\|\nabla_{\theta_{n}}g(\theta_{n})\big\|^{2}}{S_{n}^{\epsilon_{0}}}-\frac{\big\|\nabla_{\theta_{n-1}}g(\theta_{n-1})\big\|^{2}}{S_{n-1}^{\epsilon_{0}}}\le\frac{\alpha_0 ct\xi}{S_{n-2}^{\frac{1}{2}+\frac{1}{3}\epsilon_{0}}}+\frac{\alpha_0 cMt\xi}{S_{n-2}^{\frac{1}{2}+\frac{1}{3}\epsilon_{0}}}+\frac{\alpha_0^{2}c^{2}Mt\xi}{S_{n-2}^{\frac{1}{2}+\frac{1}{3}\epsilon_{0}}}+\frac{\alpha_0 c a}{S_{n-2}^{\frac{1}{2}+\frac{1}{3}\epsilon_{0}}}\\&+\frac{\alpha_0^{2}c^{2}a}{S_{n-2}^{\frac{1}{2}+\frac{1}{3}\epsilon_{0}}}-\frac{\alpha_0 c}{S_{n-2}^{\frac{1}{2}+\epsilon_{0}}}\Big(\Expect\Big(\big\|\nabla_{\theta_{n-1}}g(\theta_{n-1},\xi_{n-1})\big\|^{2}\Big|\mathscr{F}_{n-2}\Big)-\big\|\nabla_{\theta_{n-1}}g(\theta_{n-1},\xi_{n-1})\big\|^{2}\Big)\\&-\frac{\alpha_0^2 c^2}{S_{n-2}^{1+\epsilon_{0}}}\Big(\Expect\Big(\big\|\nabla_{\theta_{n-1}}g(\theta_{n-1},\xi_{n-1})\big\|^{2}\Big|\mathscr{F}_{n-2}\Big)-\big\|\nabla_{\theta_{n-1}}g(\theta_{n-1},\xi_{n-1})\big\|^{2}\Big).
	\end{aligned}\end{equation}
Thus, we have
	\begin{equation}\label{i2}\begin{aligned}
	&\frac{\big\|\nabla_{\theta_{n}}g(\theta_{n})\big\|^{2}}{S_{n}^{\epsilon_{0}}}-\frac{\big\|\nabla_{\theta_{n-1}}g(\theta_{n-1})\big\|^{2}}{S_{n-1}^{\epsilon_{0}}}\le\frac{\zeta}{S_{n-2}^{\frac{1}{2}+\frac{1}{3}\epsilon_{0}}}-K_{n-1},
	\end{aligned}\end{equation}
	where
	\begin{equation}\nonumber\begin{aligned}
	&\zeta=\alpha_0 c t \xi(M+1+\alpha_0 c)+\alpha_0 c(a+\alpha_0 c+1)\\&
	K_{n-1}=\frac{\alpha_0 c}{S_{n-2}^{\frac{1}{2}+\epsilon_{0}}}\Big(\Expect\Big(\big\|\nabla_{\theta_{n-1}}g(\theta_{n-1},\xi_{n-1})\big\|^{2}\Big|\mathscr{F}_{n-2}\Big)-\big\|\nabla_{\theta_{n-1}}g(\theta_{n-1},\xi_{n-1})\big\|^{2}\Big)\\&+\frac{\alpha_0 c}{S_{n-2}^{1+\epsilon_{0}}}\Big(\Expect\Big(\big\|\nabla_{\theta_{n-1}}g(\theta_{n-1},\xi_{n-1})\big\|^{2}\Big|\mathscr{F}_{n-2}\Big)-\big\|\nabla_{\theta_{n-1}}g(\theta_{n-1},\xi_{n-1})\big\|^{2}\Big).
	\end{aligned}\end{equation}
	It follows that
	\begin{equation}\nonumber\begin{aligned}
	\Expect\bigg(\frac{1}{S_{n-2}^{\frac{1}{2}+\frac{1}{3}\epsilon_{0}}}\bigg)\ge\frac{1}{a}\Expect\Bigg(\frac{\big\|\nabla_{\theta_{n-1}}g(\theta_{n-1},\xi_{n-1})\big\|^{2}-M\big\|\nabla_{\theta_{n-1}}g(\theta_{n-1})\big\|^{2}}{S_{n-2}^{\frac{1}{2}+\frac{1}{3}\epsilon_{0}}}\Bigg)
	\end{aligned}\end{equation}
	By Lemma \ref{lem5}, we have that	
	\begin{equation}\nonumber\begin{aligned}
	&\sum_{n=4}^{+\infty}\frac{1}{a}\Expect\Bigg(\frac{\big\|\nabla_{\theta_{n-1}}g(\theta_{n-1},\xi_{n-1})\big\|^{2}}{S_{n-2}^{\frac{1}{2}+\frac{1}{3}\epsilon_{0}}}\Bigg)\\&>\lim_{n\rightarrow+\infty}\frac{1}{a}\Expect\bigg(\int_{S_{2}}^{S_{n-1}}\frac{1}{x^{\frac{1}{2}+\frac{1}{3}\epsilon_{0}}}dx\bigg)=\lim_{n\rightarrow+\infty}\frac{6}{2-3\epsilon_{0}}\frac{1}{a}\Expect\Big(S_{n-1}^{\frac{1}{2}-\frac{1}{3}\epsilon_{0}}-S_{2}^{\frac{1}{2}-\frac{1}{3}\epsilon_{0}}\Big).
	\end{aligned}\end{equation}
	Due to $S_{n-1}\rightarrow +\infty \ \ a.s.$, it follows that
	\begin{equation}\label{i3_a}\begin{aligned}
	\sum_{n=4}^{+\infty}\frac{1}{a}\Expect\Bigg(\frac{\big\|\nabla_{\theta_{n-1}}g(\theta_{n-1},\xi_{n-1})\big\|^{2}}{S_{n-2}^{\frac{1}{2}+\frac{1}{3}\epsilon_{0}}}\Bigg)=+\infty.	
	\end{aligned}\end{equation}
From Lemma \ref{lem8}, we get 
	\begin{equation}\label{i4}\begin{aligned}
	\sum_{n=4}^{+\infty}\frac{1}{a}\Expect\Bigg(\frac{\big\|\nabla_{\theta_{n-1}}g(\theta_{n-1})\big\|^{2}}{S_{n-2}^{\frac{1}{2}+\frac{1}{3}\epsilon_{0}}}\Bigg)<+\infty.	
	\end{aligned}\end{equation}
	Combine \eqref{i3_a} and \eqref{i4}, then  
	\begin{equation}\label{i5}\begin{aligned}
	&\sum_{n=4}^{+\infty}E\Bigg(\frac{1}{S_{n-2}^{\frac{1}{2}+\frac{1}{3}\epsilon_{0}}}\Bigg)>\sum_{n=4}^{+\infty}\frac{1}{a}\Expect\Bigg(\frac{\big\|\nabla_{\theta_{n-1}}g(\theta_{n-1},\xi_{n-1})\big\|^{2}}{S_{n-2}^{\frac{1}{2}+\frac{1}{3}\epsilon_{0}}}\Bigg)\\&-\sum_{n=4}^{+\infty}\frac{1}{a}\Expect\Bigg(\frac{M\big\|\nabla_{\theta_{n-1}}g(\theta_{n-1})\big\|^{2}}{S_{n-2}^{\frac{1}{2}+\frac{1}{3}\epsilon_{0}}}\Bigg)=+\infty
	\end{aligned}\end{equation}
	is divergent a.s. From Lemma \ref{lem8}, we get that $\sum_{n=4}^{+\infty}\Expect\big(\|K_{n-1}\|^{2}\big)<+\infty$  and $K_{n-1}$ is a martingale difference sequence. Thus, $\sum_{n=2}^{+\infty}K_{n-1}$ is convergent a.s.. Combine \eqref{i2}, \eqref{i5} and Lemma \ref{lem4}, then we have 
	\begin{equation}\label{i3}\begin{aligned}
	\frac{\big\|\nabla_{\theta_{n}}g(\theta_{n})\big\|^{2}}{S_{n-1}^{\epsilon_{0}}}\rightarrow 0 \ \ a.s..
	\end{aligned}\end{equation}

	\subsection{The proof of Theorem \ref{thm_converg_adagrad}}\label{appen_thm_converg_adagrad}
	We consider the proof under two conditions, namely, $S_{n}<+\infty\ \ a.s.$ or $S_{n}=+\infty\ \ a.s.$.
	
	First, if   $S_{n}<+\infty\ \ a.s.$, from Lemma \ref{lem8}, we get that $\forall \epsilon\in (0,\frac{1}{2})$,  it holds that
	\begin{equation}\nonumber\begin{aligned}
		&\sum_{k=3}^{n}\frac{\big\|\nabla_{\theta_{k}}g(\theta_{k})\big\|^{2}}{S_{k-1}^{\frac{1}{2}+\epsilon}}<+\infty\ \ a.s.
	\end{aligned}\end{equation}
	Thus, we   conclude that
	\begin{equation}\nonumber\begin{aligned}
		&\frac{\big\|\nabla_{\theta_{n}}g(\theta_{n})\big\|^{2}}{S_{n-1}^{\frac{1}{2}+\epsilon}}\rightarrow 0\ \ a.s..
	\end{aligned}\end{equation} Due to $S_{n-1}<+\infty\ \ a.s.$, we   get
	\begin{equation}\label{iuewqrft}\begin{aligned}
		&\big\|\nabla_{\theta_{n}}g(\theta_{n})\big\|^{2}\rightarrow 0\ \ a.s.,
	\end{aligned}\end{equation}
	Second, if   $S_{n}=+\infty \ \ a.s.$, let $\epsilon\rightarrow 0$ on \eqref{004}, then it holds that 
	\begin{equation}\label{k0}\begin{aligned}
	&g(\theta_{n+1})-g(\theta_{n})\\
	&\le\frac{\alpha_0}{20}\Bigg(\frac{\big\|\nabla_{\theta_{n-1}}g(\theta_{n-1})\big\|^{2}}{\sqrt{S_{n-2}}}-\frac{\big\|\nabla_{\theta_{n}}g(\theta_{n})\big\|^{2}}{\sqrt{S_{n-1}}}\Bigg)\\
	&+\frac{\alpha_0}{2}(M+1)\Bigg(\frac{\big\|\nabla_{\theta_{n-1}}g(\theta_{n-1})\big\|^{2}}{\sqrt{S_{n-1}}}-\frac{\big\|\nabla_{\theta_{n}}g(\theta_{n})\big\|^{2}}{\sqrt{S_{n}}}\Bigg)\\
	&-\frac{\alpha_0}{20}\frac{\big\|\nabla_{\theta_{n}}g(\theta_{n})\big\|^{2}}{\sqrt{S_{n-1}}}+4M^{2}\alpha_0^{3}c^{2}\frac{\big\|\nabla_{\theta_{n-1}}g(\theta_{n-1}),\xi_{n-1}\big\|^{2}}{S_{n-1}^{\frac{3}{2}}}+c\alpha_{0}^{2}\frac{\big\|\nabla_{\theta_{n}}g(\theta_{n},\xi_{n})\big\|^{2}}{S_{n}}\\
	&+\frac{\alpha_0^{3}c^{2}(M+1)^{2}}{2}\frac{\big\|\nabla_{\theta_{n-1}}g(\theta_{n-1},\xi_{n-1})\big\|^{2}}{S_{n-1}}+\frac{(M+1)\alpha_0^{3}c^{2}}{2S_{n-1}^{\frac{3}{2}}}\big\|\nabla_{\theta_{n-1}}g(\theta_{n-1},\xi_{n-1})\big\|^{2}\\
	&+\frac{\alpha_0 a(M+1)}{2}\Bigg(\frac{1}{\sqrt{S_{n-1}}}-\frac{1}{\sqrt{S_{n}}}\Bigg) +I_{n}^{\le a}A_{n}+I_{n}^{\le a}B_{n}+I_{n}^{>a}X_{n}+I_{n}^{>a}Y_{n},
	\end{aligned}\end{equation}
where
	\begin{equation}\nonumber\begin{aligned}
	&I_{n}^{\le a}A_{n}+I_{n}^{\le a}B_{n}+I_{n}^{>a}X_{n}+I_{n}^{>a}Y_{n}=\lim_{\epsilon\rightarrow 0}\Big(I_{n}^{\le a}A_{n}^{(\epsilon)}+I_{n}^{\le a}B_{n}^{(\epsilon)}+I_{n}^{>a}X_{n}^{(\epsilon)}+I_{n}^{>a}Y_{n}^{(\epsilon)}\Big)\\&=\frac{\alpha_0 I_{n}^{\le a}}{\sqrt{S_{n-1}}}\nabla_{\theta_{n}}g(\theta_{n})^{T}\big(\nabla_{\theta_{n}}g(\theta_{n})-\nabla_{\theta_{n}}g(\theta_{n},\xi_{n})\big)\\&+\frac{\alpha_0 \big\|\nabla_{\theta_{n}}g(\theta_{n})\big\|^{2}I_{n}^{\le a}}{2a(M+1)\sqrt{S_{n-1}}}\bigg(\big\|\nabla_{\theta_{n}}g(\theta_{n},\xi_{n})\big\|^{2}-\Expect\Big(\big\|\nabla_{\theta_{n}}g(\theta_{n})\big\|^{2}\Big|\mathscr{F}_{n-1}\Big)\bigg)\\&+\frac{\alpha_0}{2(M+1)}\frac{I_{n}^{>a}}{\sqrt{S_{n-1}}}\bigg(\Expect\Big(\big\|\nabla_{\theta_{n}}g(\theta_{n},\xi_{n})\big\|^{2}\Big|\mathscr{F}_{n-1}\Big)-\big\|\nabla_{\theta_{n}}g(\theta_{n},\xi_{n})\big\|^{2}\bigg)\\&+\frac{\alpha_0}{2}\frac{I_{n}^{>a}}{\sqrt{S_{n-1}}}\nabla_{\theta_{n}}g(\theta_{n})^{T}\big(\nabla_{\theta_{n}}g(\theta_{n})-\nabla_{\theta_{n}}g(\theta_{n},\xi_{n})\big).
	\end{aligned}\end{equation}
	Make some transformations on $\frac{\big\|\nabla_{\theta_{n}}g(\theta_{n}),\xi_{n}\big\|^{2}}{S_{n}}$ to obtain that
	\begin{equation}\nonumber\begin{aligned}
	&\frac{\big\|\nabla_{\theta_{n}}g(\theta_{n},\xi_{n})\big\|^{2}}{S_{n}}\le\frac{\big\|\nabla_{\theta_{n}}g(\theta_{n},\xi_{n})\big\|^{2}}{S_{n-1}}\\&=\frac{\Expect\Big(\big\|\nabla_{\theta_{n}}g(\theta_{n}),\xi_{n}\big\|^{2}\Big|\mathscr{F}_{n-1}\Big)}{S_{n-1}}+\frac{1}{S_{n-1}}\Big(\big\|\nabla_{\theta_{n}}g(\theta_{n},\xi_{n})\big\|^{2}-\Expect\Big(\big\|\nabla_{\theta_{n}}g(\theta_{n}),\xi_{n}\big\|^{2}\Big|\mathscr{F}_{n-1}\Big)\Big)\\&\le\frac{M\big\|\nabla_{\theta_{n}}g(\theta_{n})\big\|^{2}}{S_{n-1}}+\frac{a}{S_{n-1}}+\frac{1}{S_{n-1}}\Big(\big\|\nabla_{\theta_{n}}g(\theta_{n},\xi_{n})\big\|^{2}-\Expect\Big(\big\|\nabla_{\theta_{n}}g(\theta_{n}),\xi_{n}\big\|^{2}\Big|\mathscr{F}_{n-1}\Big)\Big),
	\end{aligned}\end{equation}
	where the last inequality is from Assumption~\ref{ass_g_poi} 5).
	Let $0<\epsilon'<\frac{3}{8}$. It follows from Lemma \ref{lem10} that $\frac{\big\|\nabla_{\theta_{n}}g(\theta_{n})\big\|^{2}}{S_{n-1}^{\epsilon'}}<\delta<+\infty$. Then we have  $\frac{M\big\|\nabla_{\theta_{n}}g(\theta_{n})\big\|^{2}}{S_{n-1}}\le\frac{M\delta}{S_{n-1}^{1-\epsilon'}}=\frac{M\delta}{S_{n-1}^{\frac{1}{2}+\epsilon_{1}}}\ \ \big(\epsilon_{1}=\frac{1}{2}-\epsilon'\in(0,\frac{1}{4})\big)$ and
	\begin{equation}\label{k1}\begin{aligned}
	&\frac{\big\|\nabla_{\theta_{n}}g(\theta_{n},\xi_{n})\big\|^{2}}{S_{n}}\le\frac{M\delta+a}{S_{n}^{\frac{1}{2}+\epsilon_{1}}}+\frac{1}{S_{n-1}}\Big(\big\|\nabla_{\theta_{n}}g(\theta_{n},\xi_{n})\big\|^{2}-\Expect\Big(\big\|\nabla_{\theta_{n}}g(\theta_{n}),\xi_{n}\big\|^{2}\Big|\mathscr{F}_{n-1}\Big)\Big)\\&+(M\delta+a)\Bigg(\frac{1}{S_{n-1}^{\frac{1}{2}+\epsilon_{1}}}-\frac{1}{S_{n}^{\frac{1}{2}+\epsilon_{1}}}\Bigg).
	\end{aligned}\end{equation}
	Similarly, we have
	\begin{equation}\label{k4}\begin{aligned}
	&\frac{\big\|\nabla_{\theta_{n-1}}g(\theta_{n-1},\xi_{n-1})\big\|^{2}}{S_{n-1}}\le \frac{M\delta+a}{S_{n}^{\frac{1}{2}+\epsilon_{1}}}+(M\delta+a)\Bigg(\frac{1}{S_{n-2}^{\frac{1}{2}+\epsilon_{1}}}-\frac{1}{S_{n}^{\frac{1}{2}+\epsilon_{1}}}\Bigg)\\&+\frac{1}{S_{n-2}}\Big(\big\|\nabla_{\theta_{n-1}}g(\theta_{n-1},\xi_{n-1})\big\|^{2}-\Expect\Big(\big\|\nabla_{\theta_{n-1}}g(\theta_{n-1}),\xi_{n-1}\big\|^{2}\Big|\mathscr{F}_{n-2}\Big)\Big).
	\end{aligned}\end{equation}
	Substitute \eqref{k1} and \eqref{k4} into \eqref{k0}, then we get
	\begin{equation}\label{k3}\begin{aligned}
	&g(\theta_{n+1})-g(\theta_{n})\le\bigg(c\alpha_0^{2}+\frac{c^{3}\alpha_0^{2}(M+1)}{2}\bigg)(M\delta+a)\frac{1}{S_{n}^{\frac{1}{2}+\epsilon_{1}}}+P_{n}+Q_{n},
	\end{aligned}\end{equation}
where
	\begin{equation}\nonumber\begin{aligned}
	Q_{n}=&\frac{c\alpha_{0}^{2}}{S_{n-1}}\Big(\big\|\nabla_{\theta_{n}}g(\theta_{n},\xi_{n})\big\|^{2}-\Expect\Big(\big\|\nabla_{\theta_{n}}g(\theta_{n}),\xi_{n}\big\|^{2}\Big|\mathscr{F}_{n-1}\Big)\Big)\\&+\frac{\alpha_0^{3}c^{2}(M+1)^{2}}{2S_{n-2}}\Big(\big\|\nabla_{\theta_{n-1}}g(\theta_{n-1},\xi_{n-1})\big\|^{2}-\Expect\Big(\big\|\nabla_{\theta_{n-1}}g(\theta_{n-1}),\xi_{n-1}\big\|^{2}\Big|\mathscr{F}_{n-2}\Big)\Big)\\&+I_{n}^{\le a}A_{n}+I_{n}^{\le a}B_{n}+I_{n}^{>a}X_{n}+I_{n}^{>a}Y_{n}.
	\end{aligned}\end{equation}
	\begin{equation}\nonumber\begin{aligned}
	P_{n}=&\frac{\alpha_0}{20}\Bigg(\frac{\big\|\nabla_{\theta_{n-1}}g(\theta_{n-1})\big\|^{2}}{\sqrt{S_{n-2}}}-\frac{\big\|\nabla_{\theta_{n}}g(\theta_{n})\big\|^{2}}{\sqrt{S_{n-1}}}\Bigg)+\frac{\alpha_0}{2}(M+1)\Bigg(\frac{\big\|\nabla_{\theta_{n-1}}g(\theta_{n-1})\big\|^{2}}{\sqrt{S_{n-1}}}-\frac{\big\|\nabla_{\theta_{n}}g(\theta_{n})\big\|^{2}}{\sqrt{S_{n}}}\Bigg)\\&+4M^{2}\alpha_0^{3}c^{2}\frac{\big\|\nabla_{\theta_{n-1}}g(\theta_{n-1}),\xi_{n-1}\big\|^{2}}{S_{n-1}^{\frac{3}{2}}}+\frac{(M+1)\alpha_0^{3}c^{2}}{2S_{n-1}^{\frac{3}{2}}}\big\|\nabla_{\theta_{n-1}}g(\theta_{n-1},\xi_{n-1})\big\|^{2}\\&+\frac{\alpha_0 a(M+1)}{2}\Bigg(\frac{1}{\sqrt{S_{n-1}}}-\frac{1}{\sqrt{S_{n}}}\Bigg)+\frac{\alpha_0^{3}c^{2}(M+1)^{2}(M\delta+a)}{2}\Bigg(\frac{1}{S_{n-1}^{\frac{1}{2}+\epsilon_{1}}}-\frac{1}{S_{n}^{\frac{1}{2}+\epsilon_{1}}}\Bigg)\\&+c\alpha_{0}^{2}(M\delta+a)\Bigg(\frac{1}{S_{n-2}^{\frac{1}{2}+\epsilon_{1}}}-\frac{1}{S_{n}^{\frac{1}{2}+\epsilon_{1}}}\Bigg).
	\end{aligned}\end{equation}
It follows from   $S_{n}\rightarrow +\infty \ \ a.s.$  that 
	\begin{equation}\nonumber\begin{aligned}
	\sum_{n=3}^{+\infty}P_{n}=&\frac{\alpha_0}{20}\sum_{n=3}^{+\infty}\Bigg(\frac{\big\|\nabla_{\theta_{n-1}}g(\theta_{n-1})\big\|^{2}}{\sqrt{S_{n-2}}}-\frac{\big\|\nabla_{\theta_{n}}g(\theta_{n})\big\|^{2}}{\sqrt{S_{n-1}}}\Bigg)\\&+\frac{\alpha_0}{2}(M+1)\sum_{n=3}^{+\infty}\Bigg(\frac{\big\|\nabla_{\theta_{n-1}}g(\theta_{n-1})\big\|^{2}}{\sqrt{S_{n-1}}}-\frac{\big\|\nabla_{\theta_{n}}g(\theta_{n})\big\|^{2}}{\sqrt{S_{n}}}\Bigg)\\&+4M^{2}\alpha_0^{3}c^{2}\sum_{n=3}^{+\infty}\frac{\big\|\nabla_{\theta_{n-1}}g(\theta_{n-1}),\xi_{n-1}\big\|^{2}}{S_{n-1}^{\frac{3}{2}}}+\frac{(M+1)\alpha_0^{3}c^{2}}{2S_{n-1}^{\frac{3}{2}}}\sum_{n=3}^{+\infty}\big\|\nabla_{\theta_{n-1}}g(\theta_{n-1},\xi_{n-1})\big\|^{2}\\&+\sum_{n=3}^{+\infty}\frac{\alpha_0 a(M+1)}{2}\Bigg(\frac{1}{\sqrt{S_{n-1}}}-\frac{1}{\sqrt{S_{n}}}\Bigg)+\sum_{n=3}^{+\infty}\frac{\alpha_0^{3}c^{2}(M+1)^{2}(M\delta+a)}{2}\Bigg(\frac{1}{S_{n-1}^{\frac{1}{2}+\epsilon_{1}}}-\frac{1}{S_{n}^{\frac{1}{2}+\epsilon_{1}}}\Bigg)\\&+\sum_{n=3}^{+\infty}c\alpha_{0}^{2}(M\delta+a)\Bigg(\frac{1}{S_{n-2}^{\frac{1}{2}+\epsilon_{1}}}-\frac{1}{S_{n}^{\frac{1}{2}+\epsilon_{1}}}\Bigg)\\
	=&\frac{\alpha_{0}}{20}\Bigg(\frac{\big\|\nabla_{\theta_{2}}g(\theta_{2})\big\|^{2}}{\sqrt{S_{1}}}-\lim_{n\rightarrow+\infty}\frac{\big\|\nabla_{\theta_{n}}g(\theta_{n})\big\|^{2}}{\sqrt{S_{n-1}}}\Bigg)\\&+\frac{\alpha_0}{2}(M+1)\Bigg(\frac{\big\|\nabla_{\theta_{2}}g(\theta_{2})\big\|^{2}}{\sqrt{S_{2}}}-\lim_{n\rightarrow+\infty}\frac{\big\|\nabla_{\theta_{n}}g(\theta_{n})\big\|^{2}}{\sqrt{S_{n}}}\Bigg)+\frac{\alpha_0 a(M+1)}{2}\Bigg(\frac{1}{\sqrt{S_{2}}}-\lim_{n\rightarrow+\infty}\frac{1}{\sqrt{S_{n}}}\Bigg)\\&+4M^{2}\alpha_0^{3}c^{2}\sum_{n=2}^{+\infty}\frac{\big\|\nabla_{\theta_{n-1}}g(\theta_{n-1}),\xi_{n-1}\big\|^{2}}{S_{n-1}^{\frac{3}{2}}}+\frac{(M+1)\alpha_0^{3}c^{2}}{2S_{n-1}^{\frac{3}{2}}}\sum_{n=2}^{+\infty}\big\|\nabla_{\theta_{n-1}}g(\theta_{n-1},\xi_{n-1})\big\|^{2}\\&+\frac{\alpha_0 a(M+1)}{2}\frac{1}{\sqrt{S_{2}}}+\frac{M\delta+a}{S_{2}^{\frac{1}{2}+\epsilon_{1}}}+\frac{\alpha_0^{3}c^{2}(M+1)^{2}(M\delta+a)}{2S_{1}^{\frac{1}{2}+\epsilon_{1}}}.
	\end{aligned}\end{equation}
	From Lemma \ref{lem10}, we get that
	\begin{equation}\nonumber\begin{aligned}
	&\lim_{n\rightarrow+\infty}\frac{\big\|\nabla_{\theta_{n}}g(\theta_{n})\big\|^{2}}{\sqrt{S_{n-1}}}\le\lim_{n\rightarrow+\infty}\frac{\big\|\nabla_{\theta_{n}}g(\theta_{n})\big\|^{2}}{S_{n-1}^{\epsilon_{0}}}=0,
	\end{aligned}\end{equation}
	and
	\begin{equation}\nonumber\begin{aligned}
	&\lim_{n\rightarrow+\infty}\frac{\big\|\nabla_{\theta_{n}}g(\theta_{n})\big\|^{2}}{\sqrt{S_{n}}}\le\lim_{n\rightarrow+\infty}\frac{\big\|\nabla_{\theta_{n}}g(\theta_{n})\big\|^{2}}{\sqrt{S_{n-1}}}\le\lim_{n\rightarrow+\infty}\frac{\big\|\nabla_{\theta_{n}}g(\theta_{n})\big\|^{2}}{S_{n-1}^{\epsilon_{0}}}=0.
	\end{aligned}\end{equation}
	It follows that
	\begin{equation}\label{k6}\begin{aligned}
	&\sum_{n=3}^{+\infty}P_{n}=\frac{\alpha_0}{20}\frac{\big\|\nabla_{\theta_{1}}g(\theta_{1})\big\|^{2}}{\sqrt{S_{0}}}+\frac{\alpha_0}{2}(M+1)\frac{\big\|\nabla_{\theta_{1}}g(\theta_{1})\big\|^{2}}{\sqrt{S_{1}}}+\frac{\alpha_0 a(M+1)}{2}\frac{1}{\sqrt{S_{1}}}\\&+4M^{2}\alpha_0^{3}c^{2}\sum_{n=2}^{+\infty}\frac{\big\|\nabla_{\theta_{n-1}}g(\theta_{n-1}),\xi_{n-1}\big\|^{2}}{S_{n-1}^{\frac{3}{2}}}+\frac{(M+1)\alpha_0^{3}c^{2}}{2S_{n-1}^{\frac{3}{2}}}\sum_{n=2}^{+\infty}\big\|\nabla_{\theta_{n-1}}g(\theta_{n-1},\xi_{n-1})\big\|^{2}\\&+\frac{\alpha_0 a(M+1)}{2}\frac{1}{\sqrt{S_{2}}}+\frac{M\delta+a}{S_{2}^{\frac{1}{2}+\epsilon_{1}}}+\frac{\alpha_0^{3}c^{2}(M+1)^{2}(M\delta+a)}{2S_{1}^{\frac{1}{2}+\epsilon_{1}}}.
	\end{aligned}\end{equation}
	By Lemma \ref{lem5}, we have
	\begin{equation}\label{k7}\begin{aligned}
	4M^{2}\alpha_0^{3}c^{2}\sum_{n=3}^{+\infty}\frac{\big\|\nabla_{\theta_{n-1}}g(\theta_{n-1}),\xi_{n-1}\big\|^{2}}{S_{n-1}^{\frac{3}{2}}}+\sum_{n=3}^{+\infty}\frac{(M+1)\alpha_0^{3}c^{2}}{2S_{n-1}^{\frac{3}{2}}}\big\|\nabla_{\theta_{n-1}}g(\theta_{n-1},\xi_{n-1})\big\|^{2}<+\infty.
	\end{aligned}\end{equation}
	Thus, $\sum_{n=2}^{+\infty}P_{n}$ is convergent. In addition, it holds that
	\begin{equation}\nonumber\begin{aligned}
	&\sum_{n=3}^{+\infty}\Expect\Big(\big\|I_{n}^{\le a}A_{n}\big\|^{2}\Big|\mathscr{F}_{n-1}\Big)\\&\le\sum_{n=3}^{+\infty}\Expect\Big(\big\|A_{n}\big\|^{2}\Big|\mathscr{F}_{n-1}\Big)\le\sum_{n=3}^{+\infty}\Expect\Bigg(\bigg\|\frac{\alpha_0}{\sqrt{S_{n-1}}}\nabla_{\theta_{n}}g(\theta_{n})^{T}\big(\nabla_{\theta_{n}}g(\theta_{n})-\nabla_{\theta_{n}}g(\theta_{n},\xi_{n})\big)\bigg\|^{2}\bigg|\mathscr{F}_{n-1}\Bigg)\\&\le 2\sum_{n=3}^{+\infty}\frac{\alpha_0^{2}\big\|\nabla_{\theta_{n}}{g(\theta_{n})}\big\|^{4}}{S_{n-1}}+2\sum_{n=3}^{+\infty}\Expect\Bigg(\frac{\alpha_0^{2}}{S_{n-1}}\big\|\nabla_{\theta_{n}}g(\theta_{n})\big\|^{2}\big\|\nabla_{\theta_{n}}g(\theta_{n},\xi_{n})\big\|^{2}\Big|\mathscr{F}_{n-1}\Bigg).
	\end{aligned}\end{equation}
	It follows from Lemma \ref{lem10} that $\exists 0<\epsilon_{0}<\frac{3}{8}$, such that
	\begin{equation}\nonumber\begin{aligned}
	\lim_{n\rightarrow+\infty}\frac{\big\|\nabla_{\theta_{n}}g(\theta_{n})\big\|^{2}}{S_{n-1}^{\epsilon_{0}}}=0\ \ a.s.,
	\end{aligned}\end{equation}
meaning that there is an almost surely bounded random variable $\delta_{0}$, such that 
	\begin{equation}\nonumber\begin{aligned}
	\frac{\big\|\nabla_{\theta_{n}}g(\theta_{n})\big\|^{2}}{S_{n-1}^{\epsilon_{0}}}<\delta_{0}<\infty\ \ a.s.
	\end{aligned}\end{equation}
It follows that
	\begin{equation}\nonumber\begin{aligned}
	\frac{\big\|\nabla_{\theta_{n}}g(\theta_{n})\big\|^{4}}{S_{n-1}}<\delta_{0}\frac{\big\|\nabla_{\theta_{n}}g(\theta_{n})\big\|^{2}}{S_{n-1}^{\frac{1}{2}+(\frac{1}{2}-\epsilon_{0})}}\ \ a.s.,
	\end{aligned}\end{equation}
Then we derive that
	\begin{equation}\label{k7}\begin{aligned}
	&\sum_{n=3}^{+\infty}\Expect\Big(\big\|I_{n}^{\le a}A_{n}\big\|^{2}\Big|\mathscr{F}_{n-1}\Big)\\&\le 2\sum_{n=3}^{+\infty}\frac{\alpha_0^{2}\big\|\nabla_{\theta_{n}}{g(\theta_{n})}\big\|^{4}}{S_{n-1}}+2\sum_{n=3}^{+\infty}\Expect\Bigg(\frac{\alpha_0^{2}}{S_{n-1}}\big\|\nabla_{\theta_{n}}g(\theta_{n})\big\|^{2}\big\|\nabla_{\theta_{n}}g(\theta_{n},\xi_{n})\big\|^{2}\Big|\mathscr{F}_{n-1}\Bigg)\\&\le2\sum_{n=3}^{+\infty}\frac{\alpha_0^{2}\big\|\nabla_{\theta_{n}}{g(\theta_{n})}\big\|^{4}}{S_{n-1}}+2\sum_{n=3}^{+\infty}\frac{\alpha_0^{2}}{S_{n-1}}\big\|\nabla_{\theta_{n}}g(\theta_{n})\big\|^{2}\Big(M\big\|\nabla_{\theta_{n}}g(\theta_{n})\big\|^{2}+a\Big)\\&=2(M+1)\sum_{n=3}^{+\infty}\frac{\alpha_0^{2}\big\|\nabla_{\theta_{n}}{g(\theta_{n})}\big\|^{4}}{S_{n-1}}+2\sum_{n=3}^{+\infty}\frac{\alpha_0^{2}a\big\|\nabla_{\theta_{n}}g(\theta_{n})\big\|^{2}}{S_{n-1}}\\&<2\big((M+1)\delta_{0}+a\big)\sum_{n=3}^{+\infty}\frac{\big\|\nabla_{\theta_{n}}g(\theta_{n})\big\|^{2}}{S_{n-1}^{\frac{1}{2}+(\frac{1}{2}-\epsilon_{0})}}<+\infty \ \ a.s..
	\end{aligned}\end{equation}

From Lemma \ref{lem_summation_MDS},  $\sum_{n=3}^{+\infty}I_{n}^{\le a}A_{n}$ is convergent almost surely. Similarly, we can prove other parts of $\sum_{n=3}^{+\infty}Q_{n}$ are also convergent almost surely. Thus,   $\sum_{n=3}^{+\infty}Q_{n}$ is convergent almost surely. By summary, we get
\begin{equation}\label{qw3214rt}\begin{aligned}
\sum_{n=1}^{+\infty}\big(P_{n}+Q_{n}\big) \ is \ convergent \ almost \ surely.\ \ \ \ \ \ \ \
\end{aligned}\end{equation}

It is easy to find that $J_{i}$ is a bounded closed set. So $\forall \epsilon >0$ we can construct an open cover $H^{(i)}_{\epsilon}=\{U(\theta,\epsilon)\}\ (\theta\in J_{i})$ of $J_{i}$. Through the $Heine–Borel \ theorem$, we can get a finite open subcover $\{U(\theta_{k},\epsilon)\ (k=0,1,...,n)$ from $H^{(i)}_{\epsilon}$. Then we assign 
$U^{(i)}_{\epsilon}=\bigcup_{k=0}^{n}U(\theta_{k},\epsilon)$. We can get $U^{(i)}_{\epsilon}$ is a open set. Under Assumption \ref{ass_g_poi},   $J=\{\theta|\nabla_{\theta}g(\theta)\}$ has only finite connected components $J_{1},J_{2},...,J_{m}$. So $\inf_{i\neq j}d(J_{i},J_{j})=\min_{i\neq j}d(J_{i},J_{j})$. Let $\delta_{0}=\min_{i\neq j}d(J_{i},J_{j})$.  It follows from Lemma \ref{pouikm,l} that $\exists \epsilon_{0}>0$, when $d(\theta,J_{i})<\epsilon_{0}$, there is
\begin{equation}
\begin{aligned}
&\|\nabla_{\theta}g(\theta)\|^{2}\le 2c|g(\theta)-g_{i}|,
\end{aligned}
\end{equation} 
where $g_{i}$ denotes $g(\theta)\ (\theta\in J_{i})$. Let $c=min\{\epsilon_{0},\delta_{0}/4\}$ and construct $U^{(1)}_{c},U^{(2)}_{c},...,U^{(m)}_{c}$. It is obvious that $\forall \ U^{(i)}_{c},U^{(j)_{c}}\ (i\neq j)$, $d(U^{(i)}_{c},U^{(j)})>\delta_{0}/2$, and $\|\nabla_{\theta}g(\theta)\|^{2}\le 2c|g(\theta)-g_{i}|\ (\theta\in U^{(i)}_{c})$.

Since $J$ is a bounded set,  $\exists N>0$, such that $J\subset K$ ($K$ is the closure of $U(0,N)$). Then we construct a set $M=K/\bigcup_{i=1}^{m}U^{(i)}_{c}$. Since $U^{(i)}_{c}$ is a open set and $K$ is a closed set, we conclude $M$ is a closed set. Since $\|\nabla_{\theta}g(\theta)\|$ is a continuous function,   $\exists \theta_{0}\in M$, $\|\nabla_{\theta_{0}}g(\theta_{0})\|=\min_{\theta\in M}\|\nabla_{\theta}g(\theta)\|$. Let $r=\|\nabla_{\theta_{0}}g(\theta_{0})\|>0$.

Then we prove that $\forall u>0, \theta \in K$, $\exists \delta>0$, if $\|\nabla_{\theta}g(\theta)\|<\delta$, makes $d(\theta, J)<u$. We prove it by contradiction. Assume $\exists u_{0}>0$, $\forall \delta_{1}>0$, $\exists \ \theta_{\delta_{1}}$ holds $\|\nabla_{\theta_{\delta_{1}}}g(\theta_{\delta_{1}})\|<\delta_{1}$ and $d(\theta, J)\geq u_{0}$. We make $\delta_{1}=1,1/2,1/3...$, and we form a sequence $\{\theta_{1/n}\}$. It is obvious that $\|\nabla_{\theta_{1/n}}g(\theta_{1/n})\rightarrow 0\|$. Since $\{\theta_{1/n}\}$ is bounded, through the $Accumulation\ point\ theorem$, there exists a convergent subsequence $\{\theta_{1/k_{n}}\}\subset \{\theta_{1/n}\}$. We defined $\theta^{(0)}=\lim_{n\rightarrow +\infty}\theta_{1/k_{n}}$. Through the continuity of $d(\theta,J)$ and $\|\nabla_{\theta}g(\theta)\|$, we get $d(\theta^{(0)})\geq u_{0}$ and $\|\nabla_{\theta^{(0)}}g(\theta^{(0)})\|=0$. It is contradiction by the definition of $J$. So $\forall u>0, \theta \in K$, $\exists \delta>0$, $\|\nabla_{\theta}g(\theta)\|<\delta$, makes $d(\theta, J)<u$ ($\exists \i$, makes $d(\theta, J)<u$). And furthermore, due to the continuity of $g(\theta)$, we can get $\forall \epsilon_{1}>0$, $\exists \ \delta'>0$, if $d(\theta,J_{i})<\delta'$, there is $|g(\theta)-g_{i}|<\epsilon_{1}$. So combine these two consequences. We can prove $\forall \epsilon_{1}>0$, $\exists b>0$, if $\theta\in U_{c}^{(i)}$ and $\|\nabla_{\theta}g(\theta)\|<b$, there is $\big|g(\theta)-g_{i}\big|<\epsilon_{1}$.

Through \eqref{022} and Lemma 6 we get there is a subsequence $\{\|\nabla_{\theta_{k_{n}}}g(\theta_{k_{n}})\|^{2}\}$ of $\{\|\nabla_{\theta_{n}}g(\theta_{n})\|^{2}\}$ which satisfies that
\begin{equation}\label{pgm}\begin{aligned}
&\lim_{n\rightarrow +\infty}\big\|\nabla_{\theta_{k_{n}}}g(\theta_{k_{n}})\big\|^{2}= 0\ \ a.s..\ \ \ \ \ \ \ \ \ \quad    
\end{aligned}\end{equation}
Next we aim to prove   $\lim_{n\rightarrow+\infty }\|\nabla_{\theta_{n}}g(\theta_{n})\|^{2}=0$. It is equivalent  to prove that $\{\|\nabla_{n}g(\theta_{n})\|^{2}\}$ has no positive accumulation points, that is to say, $\forall e_{0}>0$, there are only finite values of $\|\nabla_{\theta_{n}}g(\theta)\|$ larger than $e_{0}$. And obviously, we just need to prove $\forall 0<e_{0}<r$, there are only finite values of $\|\nabla_{\theta_{n}}g(\theta)\|$ larger than $e$. We prove this by contradiction. We suppose $\exists 0<e<a$, making the set $S=\{\|\nabla_{\theta_{n}}g(\theta_{n})\|^{2}\}$ be an infinite set. Then we assign $\epsilon_{1}=e
/8c$ and define $o=min\{b,e/4\}$. Due to \eqref{pgm}, we get there exists a subsequence $\{\theta_{p_{n}}\}$ of $\{\theta_{n}\}$ which satisfies $\|\nabla_{\theta_{p_{n}}}g(\theta_{p_{n}})\|<o$. We rank $S$ as a subsequence $\{\|\nabla_{m_{n}}g(\theta_{m_{n}})\|^{2}\}$ of $\{\|\nabla_{n}g(\theta_{n})\|^{2}\}$. Then there is an infinite subsequence  $\{\|\nabla_{m_{i_{n}}}g(\theta_{m_{i_{n}}})\|^{2}\}$ of $\{\|\nabla_{m_{n}}g(\theta_{m_{n}})\|^{2}\}$ such that $\forall n\in\mathbb{N}_+$, $\exists l,\  n_{p_{n}}\in(m_{i_{l}},m_{i_{l+1}})$. For convenient, we  abbreviate $\{m_{i_{n}}\}$ as $\{i_{n}\}$. And we construct another infinite sequence $\{q_{n}\}$ as follows
\begin{equation}\nonumber
\begin{aligned}
&q_{1}=\max\big\{n:p_{1}<n<\min \{m_{i_{l}:m_{i_{l}}>p_{1}}\},\big\|\nabla_{\theta_{n}}g(\theta_{n})\big\|\le o\big\},
\\&q_{2}=\min\big\{n:n>q_{1},\big\|\nabla_{\theta_{n}}g(\theta_{n})\big\|>e\big\},
\\&q_{2n-1}=\max\big\{n:\min\{m_{i_{l}}:m_{i_{l}}>q_{2n-3}\}<n<\min\{m_{l}:m_{l}>\min\{m_{i_{l}}:m_{i_{l}}>q_{2n-3}\},\\&\big\|\nabla_{\theta_{n}}g(\theta_{n})\big\|\le o\},
\\&q_{2n}=\min\big\{n:n>q_{2n-1},\big\|\nabla_{\theta_{n}}g(\theta_{n})\big\|>e\big\}.
\end{aligned}\end{equation}
Now we prove that $\exists N_{0}$, when $q_{2n}>N_{0}$, it has $e<\big\|\nabla_{\theta_{q_{2n}}}g(\theta_{q_{2n}})\big\|<r$. The left side is obvious (the definition of $q_{2n}$). And for the right side, we know $\big\|\nabla_{\theta_{q_{2n}-1}}g(\theta_{q_{2n}-1})\big\|\le e$. It follows from \eqref{AdaGrad} that
\begin{equation}\nonumber\begin{aligned}
&\|\theta_{n+1}-\theta_{n}\|^{2}=\frac{\alpha_{0}^{2}}{S_{n}}\big\|\nabla_{\theta_{n}}g(\theta_{q_{n}},\xi_{n})\big\|^{2}
\\&\le\frac{\alpha_{0}^{2}}{S_{n-1}}\Big(\big\|\nabla_{\theta_{n-1}}g(\theta_{n},\xi_{n})\big\|^{2}-\Expect\big( \big\|\nabla_{\theta_{n}}g(\theta_{n},\xi_{n})\big\|^{2}\big|\mathscr{F}_{n}\big)\Big)
\\&+\frac{\alpha_{0}^{2}}{S_{n-1}}\big(M\big\|\nabla_{\theta_{n}}g(\theta_{n})\big\|^{2}+a\big).
\end{aligned}\end{equation}
Through previous consequences we can easily find that 
\begin{equation}\nonumber
\begin{aligned}
&\sum_{n=2}^{+\infty}\Bigg(\frac{\alpha_{0}^{2}}{S_{n-1}}\Big(\big\|\nabla_{\theta_{n-1}}g(\theta_{n},\xi_{n})\big\|^{2}-\Expect\big( \big\|\nabla_{\theta_{n}}g(\theta_{n},\xi_{n})\big\|^{2}\big|\mathscr{F}_{n}\big)\Big)
+\frac{\alpha_{0}^{2}M\big\|\nabla_{\theta_{n}}g(\theta_{n})\big\|^{2}}{S_{n-1}}\Bigg)\\&<+\infty\ \ a.s..
\end{aligned}
\end{equation}
Note that $\alpha_{0}^{2}a/S_{n-1}\rightarrow 0,\quad  a.s.$. We conclude 
\begin{equation}\label{uyi}\begin{aligned}
\|\theta_{n+1}-\theta_{n}\|\rightarrow 0 \ \ a.s..
\end{aligned}\end{equation}
Through Assumption \ref{ass_g_poi} 2) we   get $\big|\|\nabla_{\theta_{n+1}}g(\theta_{n+1})\|^{2}-\|\nabla_{\theta_{n}}g(\theta_{n})\|^{2}\big|\le \big|\|\nabla_{\theta_{n+1}}g(\theta_{n+1})\|-\|\nabla_{\theta_{n}}g(\theta_{n})\|\big|^{2}\le \|\nabla_{\theta_{n+1}}g(\theta_{n+1})-\nabla_{\theta_{n}}g(\theta_{n})\|^{2}\le c\|\theta_{n+1}-\theta_{n}\|\rightarrow 0 \ \ a.s.,$ So $\exists N_{0}$, when $n>N_{0}$, there is $\big|\|\nabla_{\theta_{n+1}}g(\theta_{n+1})\|^{2}-\|\nabla_{\theta_{n}}g(\theta_{n})\|\big|<r-e$. Then we can get that when $q_{2n}>N_{0}+1$, there is $\big\|\nabla_{\theta_{q_{2n}}}g(\theta_{q_{n}})\big\|\le \|\nabla_{\theta_{q_{2n}-1}}g(\theta_{q_{2n}-1})\|+\big|\|\nabla_{\theta_{q_{2n}}}g(\theta_{q_{2n}})\|-\|\nabla_{\theta_{q_{2n}-1}}g(\theta_{q_{2n}-1})\|\big|\le e+r-e=r$. That means that $\theta_{q_{2n}}\in \bigcup_{i=1}^{m}U_{\tau}^{(i)}$, so we can prove $\exists i_{0}$, such that $\theta_{q_{2n}}\in U_{\tau}^{(i_{0})}$. And due to $\|\nabla_{\theta_{n}}g(\theta_{n})\|\le e<r\ \ (n\in[q_{2n-1},q_{2n}))$, we get $\forall k \in [q_{2n-1},q_{2n})$, $\exists i_{k}$, such that $\theta_{n}\in U_{\tau}^{(i_{k})}$. Due to $\|\theta_{n+1}-\theta_{n}\|\rightarrow 0 a.s.$, we know $i_{0}=i_{k} \ \ (\forall \ j\in [q_{2n-1},q_{2n}))$. For convenient, we sign $i_{0}=i_{q_{2n-1}}=...=i_{q_{2n}-1}=i_{q_{2n}}$. And then we can conclude that
\begin{equation}\nonumber\begin{aligned}
&\|\nabla_{\theta}g(\theta_{n})\|^{2}\le 2c|g(\theta_{n})-g_{i_{q_{2n}}}|\ \ (n\in [q_{2n-1},q_{2n}]).
\end{aligned}\end{equation}Due to locally sign-preserving property, we get
\begin{equation}\nonumber\begin{aligned}
&\|\nabla_{\theta}g(\theta_{n})\|^{2}\le 2c(g(\theta_{n})-g_{i_{q_{2n}}})\ \ (g(\theta_{n})\geq g_{i_{q_{2n}}})\ \ or\\& \|\nabla_{\theta}g(\theta_{n})\|^{2}\le -2c(g(\theta_{n})-g_{i_{q_{2n}}})\ \ (g(\theta_{n})\le g_{i_{q_{2n}}})\ \ (n\in [q_{2n-1},q_{2n}]).
\end{aligned}\end{equation}Ways to dispose these two cases is same, so we just show how to prove the first case. We get
\begin{equation}\nonumber\begin{aligned}
&e-o<\big\|\nabla_{\theta_{q_{2n}}}g(\theta_{q_{2n}})\big\|^{2}-\big\|\nabla_{\theta_{q_{2n-1}}}g(\theta_{q_{2n-1}})\big\|^{2}<2c\big(g(\theta_{q_{2n}})-g_{i_{q_{2n}}}\big)-\big\|\nabla_{\theta_{q_{2n-1}}}g(\theta_{q_{2n-1}})\big\|^{2}\\&=\bigg(2c\sum_{i=0}^{q_{2n}-q_{2n-1}-1}g(\theta_{q_{2n-1}+i+1})-g(\theta_{q_{2n-1}+i})\bigg)+2c\Big(g(\theta_{q_{2n-1}})-g_{i_{q_{2n}}}\Big)-\big\|\nabla_{\theta_{q_{2n-1}}}g(\theta_{q_{2n-1}})\big\|^{2}.
\end{aligned}\end{equation}From \eqref{k3}, we obtain
\begin{equation}\nonumber\begin{aligned}
&g(\theta_{q_{2n-1}+i+1})-g(\theta_{q_{2n-1}+i})\\&\le\bigg(c\alpha_0^{2}+\frac{c^{3}\alpha_0^{2}(M+1)}{2}\bigg)(M\delta+a)\frac{1}{S_{q_{2n-1}+i}^{\frac{1}{2}+\epsilon_{1}}}+P_{q_{2n-1}+i}+Q_{q_{2n-1}+i}.
\end{aligned}\end{equation}So there is
\begin{equation}\label{zzz}\begin{aligned}
&e-o<\sum_{i=0}^{q_{2n}-q_{2n-1}-1}\frac{L}{S_{q_{2n-1}+i}^{\frac{1}{2}+\epsilon_{1}}}+\sum_{i=0}^{q_{2n}-q_{2n-1}-1}\Big(P_{q_{2n-1}+i}+Q_{q_{2n-1}+i}\Big)\\&+2c\Big(g(\theta_{q_{2n-1}})-g_{i_{2n-1}}\Big)-\big\|\nabla_{\theta_{q_{2n-1}}}g(\theta_{q_{2n-1}})\big\|^{2},\ \ \ \ \ \ \ \ 
\end{aligned}\end{equation}which 
\begin{equation}\nonumber\begin{aligned}
L=2c\bigg(c\alpha_0^{2}+\frac{c^{3}\alpha_0^{2}(M+1)}{2}\bigg)(M\delta+a).
\end{aligned}\end{equation}Due to $\|\nabla_{\theta_{q_{2n-1}}}g(\theta_{q_{2n-1}})\|^{2}<o<b$, so through \eqref{qw3214rt}, we get that $g(\theta_{q_{2n-1}})-g_{i_{2n-1}}<e/8c$. Substitute it into \eqref{zzz}. We get
\begin{equation}\label{zzz}\begin{aligned}
&\sum_{i=0}^{q_{2n}-q_{2n-1}-1}\frac{1}{S_{q_{2n-1}+i}^{\frac{1}{2}+\epsilon_{1}}}>\frac{e}{2L}-\sum_{i=0}^{q_{2n}-q_{2n-1}-1}\Big(P_{q_{2n-1}+i}+Q_{q_{2n-1}+i}\Big).
\end{aligned}\end{equation}Through \eqref{qw3214rt}, we know $\sum_{n=1}^{+\infty}(P_{n}+Q_{n})$ is convergence almost surely. So we get that $\sum_{i=0}^{q_{2n}-q_{2n-1}-1}\Big(P_{q_{2n-1}+i}+Q_{q_{2n-1}+i}\Big)\rightarrow 0\ a.s.$ by the $Cauchy's \ test\  for \ convergence$. Combining $1/S_{q_{2n-1}+i}^{\frac{1}{2}+\epsilon_{1}}\rightarrow 0\ \  a.s.$, we get 
\begin{equation}\begin{aligned}
&\sum_{i=1}^{q_{2n}-q_{2n-1}-1}\frac{1}{S_{q_{2n-1}+i}^{\frac{1}{2}+\epsilon_{1}}}\\&>\frac{e}{2L}-\frac{1}{S_{q_{2n-1}}^{\frac{1}{2}+\epsilon_{1}}}-\sum_{i=0}^{q_{2n}-q_{2n-1}-1}\Big(P_{q_{2n-1}+i}+Q_{q_{2n-1}+i}\Big)\rightarrow \frac{e}{2L} \ \ a.s.,
\end{aligned}\end{equation}so there is
\begin{equation}\label{00ooii9}\begin{aligned}
&\sum_{n=1}^{+\infty}\Bigg(\sum_{i=1}^{q_{2n}-q_{2n-1}-1}\frac{1}{S_{q_{2n-1}+i}^{\frac{1}{2}+\epsilon_{1}}}\Bigg)=+\infty \ \ a.s..\ \ \ \ \ \ \ 
\end{aligned}\end{equation}But on the other hand, we know $\|\nabla_{\theta_{q_{2n-1}+i}}g(\theta_{q_{2n-1}+i})\|>o \ \ (i>0)$. Together with  \eqref{022}, we get
\begin{equation}\begin{aligned}
&\sum_{n=1}^{+\infty}\Bigg(\sum_{i=1}^{q_{2n}-q_{2n-1}-1}\frac{1}{S_{q_{2n-1}+i}^{\frac{1}{2}+\epsilon_{1}}}\Bigg)<\frac{1}{o}\sum_{n=1}^{+\infty}\Bigg(\sum_{i=1}^{q_{2n}-q_{2n-1}-1}\frac{\big\|\nabla_{\theta_{q_{2n-1}+i}}g(\theta_{q_{2n-1}+i})\big\|^{2}}{S_{q_{2n-1}+i}^{\frac{1}{2}+\epsilon_{1}}}\Bigg)\\&<\frac{1}{o}\sum_{n=3}^{n}\frac{\big\|\nabla_{\theta_{n}}g(\theta_{n})\big\|^{2}}{S_{n-1}^{\frac{1}{2}+\epsilon}}<+\infty \ \ a.s..
\end{aligned}\end{equation}It   contradicts with \eqref{00ooii9}, so we get that $\|\nabla_{\theta_{n}}g(\theta_{n})\|\rightarrow 0 \ \ a.s.$. Combining \eqref{iuewqrft}, we   get $\|\nabla_{\theta_{n}}g(\theta_{n})\|\rightarrow 0$ no matter $S_{n}<+\infty \ a.s.$ or $S_{n}=+\infty$. Under Assumption \ref{ass_g_poi} 1), it is safe to conclude that there exists a connected component $J^{*}$ of $J$ such that
$						\lim\limits_{n\rightarrow\infty}d(\theta_{n},J^{*})=0$.

\begin{equation}\nonumber\begin{aligned}
&u(t+1)-w(t+1)=u(t)-\eta\nabla L(w(t))-\beta w(t)-(1-\beta)u(t+1).
\end{aligned}\end{equation} That is
\begin{equation}\nonumber\begin{aligned}
&u(t+1)-w(t+1)=\beta(u(t)-w(t))-\eta\nabla L(w(t))-(1-\beta)(u(t)-u(t+1)).
\end{aligned}\end{equation}


\end{document}